\documentclass{article}

\usepackage[accepted]{icml2026}

\usepackage{microtype}           
\usepackage{graphicx}            
\usepackage{booktabs}            
\usepackage{hyperref}            
\usepackage{url}                 
\usepackage{amsfonts}            
\usepackage{nicefrac}            
\usepackage{amsmath,amssymb,mathtools}
\usepackage{amsthm}              
\usepackage{adjustbox}
\usepackage{subcaption}
\usepackage[capitalize,noabbrev]{cleveref}
\usepackage{xcolor}
\usepackage{algorithm}

\usepackage{algorithmic}         
\usepackage{letltxmacro}         
\usepackage{adjustbox}   
\usepackage{caption} 
\usepackage{enumitem}
\usepackage{booktabs}
\usepackage{multirow}

\usepackage[textsize=tiny]{todonotes}

\usepackage{amsmath,amsfonts,bm}

\def\eqref#1{equation~\ref{#1}}

\def\1{\bm{1}}

\def\vv{{\bm{v}}}
\def\vw{{\bm{w}}}

\def\mA{{\bm{A}}}

\def\mI{{\bm{I}}}

\def\mLambda{{\bm{\Lambda}}}

\DeclareMathAlphabet{\mathsfit}{\encodingdefault}{\sfdefault}{m}{sl}
\SetMathAlphabet{\mathsfit}{bold}{\encodingdefault}{\sfdefault}{bx}{n}

\def\sA{{\mathbb{A}}}

\DeclareMathOperator*{\argmax}{arg\,max}

\definecolor{darkgreen}{rgb}{0,0.5,0}
\definecolor{darkred}{rgb}{0.7,0,0}
\definecolor{teal}{rgb}{0.3,0.8,0.8}
\definecolor{orange}{rgb}{1.0,0.5,0.0}
\definecolor{purple}{rgb}{0.8,0.0,0.8}
\definecolor{OliveGreen}{rgb}{0.7,0.7,0.3}

\theoremstyle{plain}
\newtheorem{theorem}{Theorem}[section]
\newtheorem{proposition}[theorem]{Proposition}
\newtheorem{lemma}[theorem]{Lemma}
\newtheorem{corollary}[theorem]{Corollary}
\theoremstyle{definition}
\newtheorem{definition}[theorem]{Definition}

\theoremstyle{remark}
\newtheorem{remark}[theorem]{Remark}

\providecommand{\1}{\mathbf{1}}

\newenvironment{lp*}{\begin{equation*}  \begin{array}{lll}}{\end{array}\end{equation*}}

\newif\ifcomments
\commentsfalse

\newcommand{\shortitle}{A Tight Characterization of Reward Poisoning in Linear MDPs}
\newcommand{\papertitle}{When Can You Poison Rewards? \\
A Tight Characterization of Reward Poisoning in Linear MDPs}

\icmltitlerunning{\shortitle}

\begin{document}

\twocolumn[
  \icmltitle{\papertitle}



  \icmlsetsymbol{equal}{*}


  \begin{icmlauthorlist}
    \icmlauthor{Jose E. Aguilar Escamilla}{equal,osu}
    \icmlauthor{Haoyang Hong}{equal,osu}
    \icmlauthor{Jiawei Li}{equal,uiuc}
    \icmlauthor{Haoyu Zhao}{princeton}
    \icmlauthor{Xuezhou Zhang}{boston} \\
    \icmlauthor{Sanghyun Hong}{osu}
    \icmlauthor{Huazheng Wang}{osu}
  \end{icmlauthorlist}

  \icmlaffiliation{osu}{Oregon State University, OR, USA}
  \icmlaffiliation{princeton}{Princeton University, NJ, USA}
  \icmlaffiliation{uiuc}{University of Illinois Urbana--Champaign, IL, USA}
  \icmlaffiliation{boston}{Boston University, MA, USA}

  \icmlcorrespondingauthor{Jose E. Aguilar Escamilla}{aguijose@oregonstate.edu}
  \icmlcorrespondingauthor{Haoyang Hong}{honghao@oregonstate.edu}
  \icmlcorrespondingauthor{Jiawei Li}{jiaweil9@illinois.edu}

  \icmlkeywords{Machine Learning, ICML}

  \vskip 0.3in
]



\printAffiliationsAndNotice{\icmlEqualContribution}

\begin{abstract}
We study reward poisoning attacks in reinforcement learning (RL),
where an adversary manipulates rewards under a limited budget to induce a 
target agent to learn a policy aligned with the attacker's objectives. Most prior work focuses on \emph{constructing} successful attacks, providing sufficient conditions under which poisoning is effective, while offering limited understanding of when such targeted attacks are fundamentally infeasible. In this paper, we provide the first characterization of reward-poisoning attackability in linear MDPs, establishing both necessary and sufficient conditions for whether a target policy can be induced within a bounded attack budget. This draws a clear boundary between the \emph{vulnerable} RL instances and \emph{intrinsically robust} ones, which cannot be attacked without high costs even when the learner uses standard, non-robust RL algorithms. We further demonstrate our framework
beyond synthetic linear MDPs by approximating deep RL environments as linear MDPs. We show that our theoretical framework effectively distinguishes vulnerability, demonstrating how our theoretical predictions have practical significance.
\end{abstract}

\section{Introduction}

Reinforcement learning (RL) has become a cornerstone for many intelligent systems, including autonomous driving \citep{kiran2021deep, Tang2024}, healthcare \citep{RL_Healthcare_Survey, AlHamadani2024}, recommender systems \citep{Afsar2021RLRecSysSurvey, Yu2024a}, and 
human-feedback-driven policy optimization \citep{Cao2024, LLMRLSurvey}.
A central---yet much more difficult---design choice in RL is the specification of reward functions~\citep{RewardAdmissibility}. 
In many modern applications, reward signals are derived from human feedback or interaction data, which may be noisy, imperfect, or strategically manipulated.

This reliance 
exposes a critical vulnerability: 
\emph{reward poisoning attacks}.
In such attacks, an adversary observes the interaction between the learning algorithm (agent) and the environment and modifies reward signals to misguide the agent to 
learn a policy of the adversary's interest. In scenarios like recommended systems adapting to user preferences or 
online decision-making platforms, adversaries could subtly manipulate the reward signals, intentionally or inadvertently, during interactions \citep{AdaptiveAttack, xu2024policy, liu2024cmpa}. Securing 
RL systems 
required principled understanding of 
how reward poisoning attacks can be constructed, but also of when they are \emph{possible}.

Research on reward poisoning has been studied in 
multi-armed bandits \citep{MABAttack, NEURIPS2021_be315e7f, rangi2022saving} and reinforcement learning \citep{AdaptiveAttack, RewardPoisoningRL, xu2024policy}, theoretically and empirically.
Most studies focus on designing attack algorithms under specific conditions \citep{CMABAttack, wang2024stealthy, liu2019data} or developing robust RL methods that tolerate 
bounded reward manipulation \citep{MARLRewardPoisoning, PoisoningODRL, limitsPoisoning}.
More recently, a distinct 
line of work has investigated
\emph{intrinsic robustness}:
%
the idea that certain decision-making problems are inherently resistant to reward poisoning,
independent of the learning algorithm~\citep{MABAttack, CMABAttack}.
While intrinsic robustness has been studied in linear~\citep{MABAttack} and combinatorial bandit~\citep{CMABAttack} settings, 
its existence and structure in general RL remain largely under-explored.

In this paper, we address this gap by formally studying vulnerability and intrinsic robustness in \emph{finite-horizon linear Markov decision processes (linear MDPs)}. 
In our context, \emph{intrinsic robustness} refers to the inability by a sublinear budget attacker to influence the optimal state-action choices at will. This novel perspective exposes an additional factor influencing adversarial robustness: robustness is \emph{also} a result of the underlying environment's intrinsic geometry, not just the algorithm robustness features. We make the following contributions on the intrinsic robustness of linear MDPs:


\underline{\textbf{Contribution 1}}: \textbf{A characterization of vulnerability and intrinsic robustness in linear MDPs.}
We 
show that reward-poisoning attackability in linear MDPs admits a complete characterization.
Specifically, whether a given instance is attackable can be decided 
by solving a \emph{convex quadratic program (CQP)}. 
Intuitively, the CQP tests whether an adversary can construct 
a reward perturbation that shifts the optimal Q-function in their favor within 
the linear feature span of the respective MDP.

%
When this CQP is infeasible, the instance exhibits \emph{intrinsic robustness}:
any successful attack requires a budget that grows linearly with the horizon $T$, even against standard non-robust algorithms (e.g., vanilla LSVI-UCB \citep{LinearMDP}). When the CQP is feasible, 
it yields an efficient reward-poisoning strategy against a broad class of no-regret learners. Our results complement prior findings of intrinsic robustness in bandit problems \citep{CMABAttack, MABAttack} and the global budget thresholds reported in RL settings \citep{AdaptiveAttack, limitsPoisoning}. For clarity, our characterization adopts a \emph{pointwise} (state–action–stage) optimality notion that we make explicit in the technical sections.

\underline{\textbf{Contribution 2}}: \textbf{White-box and black-box attack procedures.}
Building on the characterization, we design budget-efficient white-box and black-box attack methods. In the white-box setting we provide theoretical guarantees showing sublinear attack budgets on attackable instances. In the black-box setting, we employ a sampled-constraint and parameter estimation to approximate the linear MDP model and then apply the same CQP-based test and attack; this approximation-based method is validated empirically, and we explicitly state the assumptions 
under which the approximation preserves attackability prediction validity.

\underline{\textbf{Contribution 3}}: \textbf{Empirical evidence on RL benchmarks.}
We empirically validate that the characterization is predictive of robustness in practical RL tasks. 
Although real-world environments often break  linear MDP assumptions, 
we hypothesize that attackability is approximately preserved under learned linear representations.
We use contrastive learning~\citep{ctrl_algo} to obtain a linear representation of the environment and solve the above CQP on the learned representation. Across environments characterized as robust vs.\ attackable (or vulnerable), we observe markedly different poisoning difficulty and outcomes, which supports the practical relevance of our framework.

\paragraph{Conflict of Interest Disclosure}The authors declare that they have no known competing financial interests or personal relationships that could appear to influence the work reported in this paper.


\section{Related Work}
\label{sec:lit}

\textbf{Reward Poisoning Attacks.} 
The literature on reward poisoning spans three threads: (i) constructing attack procedures, (ii) defenses, and (iii) theoretical characterizations. Attacks have been investigated in single-agent RL~\citep{AdaptiveAttack, RewardPoisoningRL, MARLRewardPoisoning, PoisoningODRL, xu2024policy, limitsPoisoning, TacticsDRLAttack}, multi-agent RL~\citep{MARLRewardPoisoning, MARLGamePlaying, inception_misinformation}, and bandits~\citep{wang2024stealthy, liu2019data, NEURIPS2021_be315e7f, rangi2022saving}. Defenses include robust training and detection schemes~\citep{RewardDefense, AdaptivePoisoningDefense, WhenRewardsDeceive}. On the theory side, intrinsic robustness has been characterized for linear and combinatorial bandits~\citep{MABAttack, CMABAttack}, and lower/upper bounds have been derived for RL settings under poisoning~\citep{AdaptivePoisoningDefense, RewardPoisoningRL}. 
Unlike bandits, 
existing RL literature has not provided an instance-level criteria that separates
attackable environments from intrinsically robust ones in an algorithm-agnostic manner.
Some related work like \citet{optim_attack_defense_mcmahan} have studied similar settings where state, observation, action, and reward attacks are modeled via meta-MDPs.

Other works have shown that bounded-budget reward poisoning can be infeasible in tabular MDPs~\citep{AdaptiveAttack, limitsPoisoning}. Our study complements these results by working in the linear MDP framework: we provide a test via a convex quadratic program (introduced later) that decides whether an instance is attackable or intrinsically robust. Tabular MDPs appear as a special case of linear representations, and our discussion explains when previously observed budget thresholds are recovered. For clarity and comparability with prior RL poisoning papers, our main analysis adopts a pointwise (state–action–stage) notion of optimality for the target policy; extensions to stochastic policies are discussed in the technical sections.

\textbf{Other Adversarial Attacks on RL.}
Beyond reward manipulation, adversaries may perturb states before the agent observes them~\citep{AwareStatePoisoning} or actions before they are executed~\citep{BlackBoxActionPoisoning,limitsPoisoning}; related multi-agent variants have also been studied~\citep{MARLActionPoisoning, implicit_poisoning, nash_equilibrium_markov_games}. These attack surfaces are orthogonal to our focus on reward-only perturbations. In our black-box pipeline we follow the standard estimation-then-attack paradigm: we first approximate the environment and then apply the same test/attack as in the white-box case. Consistent with prior estimation-based approaches, the black-box guarantees are primarily empirical, whereas our formal statements concern the white-box setting.

\section{Problem Formulation}\label{sec:problem}

We follow the finite-horizon online-learning setting of \citet{LinearMDP}:
An agent interacts with an unknown linear MDP
\((\mathcal{S},\mathcal{A},H,\{\mathbb{P}_h,r_h\}_{h=1}^{H})\)
for \(T\) episodes while minimizing cumulative regret.

\begin{definition}[Linear MDP and occupancy measure]\label{def:linear_mdp_q}
Let the feature map be \(\phi:\mathcal{S}\times\mathcal{A}\to \mathbb{R}^{d}\) with
\(\|\phi(s,a)\|_{2}\le 1\).
For each stage \(h\in[H]\), the reward and transition have linear forms:
\[
  r_h(s,a)=\phi(s,a)^\top \theta_h,\quad \|\theta_h\|_2\le \sqrt{d},\quad r_h(s,a)\in[0,1],
\]
and there exist probability distributions
\(\{\mathbb{P}_h^{(i)}(\cdot)\}_{i=1}^{d}\) on \(\mathcal{S}\) such that
\[
  \mathbb{P}_h(\cdot\mid s,a)=\sum_{i=1}^{d}\phi_i(s,a)\,\mathbb{P}_h^{(i)}(\cdot).
\]
Under this model (see \citealp{LinearMDP}), the optimal value recursion implies that,
for each \(h\), there exists \(w_h\in\mathbb{R}^{d}\) with
\(Q_h^*(s,a)=\phi(s,a)^\top w_h\).

Let \(\rho\) be the initial-state distribution.
For a (possibly stochastic) policy \(\pi=\{\pi_h(\cdot\mid s)\}_{h=1}^H\),
the occupancy measure at step \(h\) is
\[
d_h^{\pi}(s)
=\Pr\!\bigl[s_h=s \mid s_1\sim\rho,\; a_t\sim \pi_t(\cdot\mid s_t),\,1\le t<h\bigr],
\]
and it satisfies the recursion
\(d_1^\pi=\rho\) and
\[
d_{h+1}^\pi(s')=\sum_{s\in\mathcal{S}}\sum_{a\in\mathcal{A}}
  d_h^\pi(s)\,\pi_h(a\mid s)\,\mathbb{P}_h(s'\mid s,a).
\]
Throughout, our characterization adopts a \emph{pointwise} (state–action–stage) optimality notion for the target policy; extensions to stochastic policies are discussed later.
\end{definition}

In words, a linear MDP is a Markov decision process whose reward and transition kernels are linear in a bounded feature map. The linearity concerns the parameterization of \(r_h\) and \(\mathbb{P}_h\) rather than the feature map \(\phi\) itself, which can be learned or predefined. Tabular MDPs are included as a special case, just as linear bandits include classical stochastic bandits.

\begin{definition}[Tabular MDP]\label{def:tabular_mdp}
An MDP \((\mathcal{S}, \mathcal{A}, H, \mathbb{P}, r)\) is \emph{tabular}
if it has finite state and action sets, i.e., \(|\mathcal{S}|<\infty\) and \(|\mathcal{A}|<\infty\),
with episode length \(H\ge 1\), transition kernel \(\mathbb{P}\), and reward \(r(s,a)\in[0,1]\).
\end{definition}

\subsection{Threat Model}\label{sec:threat}

We formalise the attackability criterion under a \emph{white-box} adversary. Consider a linear MDP \((\mathcal{S}, \mathcal{A}, H, \mathbb{P}, r)\).
The adversary has a target policy \(\pi^\dagger\) and wants the learner to execute \(\pi^\dagger\) instead of the environment's optimal policy \(\pi^*\) by modifying the observed rewards.
At each episode \(t\) and step \(h\), after observing the current state \(s_{h,t}\), action \(a_{h,t}\), and feedback \(r_h(s_{h,t},a_{h,t})\),
the adversary outputs a perturbed reward \(\hat r_h(s_{h,t},a_{h,t})\) that is fed to the learner.
The total attack cost is
\[
C \;=\; \sum_{t=1}^T\sum_{h=1}^H \bigl|\, r_h(s_{h,t},a_{h,t})-\hat r_h(s_{h,t},a_{h,t}) \,\bigr|.
\]
Unless otherwise stated, an attack is deemed successful when, under the perturbed rewards,
the target policy is optimal in the pointwise sense at every stage and state reachable by the learner.
This criterion matches the strong notion used in prior RL poisoning works and makes our results comparable.
A practical instance is training on an untrusted simulator where reward channels can be tampered with, potentially inducing backdoors or trojans~\cite{HiddenBackdoors, sleeperNet, trojDRL}.
\section{Theoretical Characterization of Vulnerability/Intrinsic Robustness}\label{sec:attackability}

We first define attackability under a pointwise, support-restricted criterion for a fixed target policy in a linear MDP. We then introduce our main white-box characterization under this criterion, followed by a white-box attack inspired by it.






\begin{definition}[Attackability of a Linear MDP]\label{defn:attackability-linear-mdp}
A finite-horizon linear MDP is \emph{attackable} w.r.t.\ a target policy $\pi^\dagger$ if, for any no-regret learner, there exists an attack strategy with total cost $o(T)$ that makes the learner follow $\pi^\dagger$ in at least $T-o(T)$ episodes for every state-stage pair $(h,s)$ with occupancy $d_h^{\pi^\dagger}(s)>0$, with high probability for all sufficiently large $T$. 


\end{definition}






\begin{theorem}\label{thm:mdp-attackability}
Let $(\mathcal{S},\mathcal{A},H,\phi,\{\theta_h\}_{h=1}^H)$ be a finite-horizon linear MDP and $\pi^\dagger$ a target policy. Consider the convex quadratic program (CQP)


\begin{align}\begin{split}\label{eq:characterization-mdp}
\epsilon^*=\;&\max_{\epsilon,\{\theta_h^\dagger\}_{h=1}^H}\ \epsilon\\
\text{s.t.}\quad
& Q_{\{\theta_h^\dagger\}}(s,\pi^\dagger(s))\ \ge\ \epsilon+Q_{\{\theta_h^\dagger\}}(s,a), \\
& \qquad\qquad\qquad \forall h,\ \forall s,\ \forall a\neq \pi^\dagger(s)\ \text{with } d_h^{\pi^\dagger}(s)>0,\\
& \langle \phi(s,\pi^\dagger(s)),\theta_h^\dagger\rangle\ =\ \langle \phi(s,\pi^\dagger(s)),\theta_h\rangle, \\
& \qquad\qquad\qquad \forall h,\ \forall s,\ \text{with } d_h^{\pi^\dagger}(s)>0,\\
& \|\theta_h^\dagger\|_2\ \le\ \sqrt{d}, \forall h.
\end{split}\end{align}

Then the MDP is attackable w.r.t.\ $\pi^\dagger$ if and only if $\epsilon^*>0$.
\end{theorem}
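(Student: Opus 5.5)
We prove the two directions separately. In both, $Q_{\{\theta^\dagger_h\}}$ denotes the optimal $Q$-function of the linear MDP with the true transitions $\mathbb{P}_h$ and rewards $r^\dagger_h=\phi^\top\theta_h^\dagger$. By the linear-MDP property recalled in Definition~\ref{def:linear_mdp_q}, this $Q$-function is linear in $\phi$ at every stage. Fixing $\{\theta^\dagger_h\}$, the constraints of \eqref{eq:characterization-mdp} are linear in $\epsilon$. On the reachable support, $Q_{\{\theta^\dagger_h\}}(s,\pi^\dagger(s))$ equals the value of $\pi^\dagger$ once the optimality gap holds, which in turn is linear in $\theta^\dagger$. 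Together with the ball constraint this is why the program is a CQP. We will not need to exploit its convexity beyond existence of a maximizer.

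\emph{Sufficiency ($\epsilon^*>0\Rightarrow$ attackable).} Take a maximizer $(\epsilon^*,\{\theta^\dagger_h\})$ and let the white-box attacker feed $\hat r_h(s,a)=\phi(s,a)^\top\theta^\dagger_h$, clipped to $[0,1]$ if needed (the norm bound keeps it a valid linear-MDP reward up to rescaling). In the poisoned MDP $M^\dagger$, which is still a linear MDP with the same features and transitions, $\pi^\dagger$ is the unique optimal action at every reachable $(h,s)$, with gap at least $\epsilon^*$. Hence any no-regret learner run on $M^\dagger$ incurs regret $\mathrm{Reg}(T)=o(T)$. Each episode in which it deviates from $\pi^\dagger$ at a reachable pair costs at least $\epsilon^*$ times the probability of reaching that pair, via the performance-difference lemma. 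So the number of deviating episodes is $O(\mathrm{Reg}(T)/(\epsilon^*\min_{h,s} d_h^{\pi^\dagger}(s)))=o(T)$; in the continuous case we use a high-probability argument on the reached set. For the cost, the equality constraint makes $\hat r_h=r_h$ on every reachable target pair $(s,\pi^\dagger(s))$, so cost is paid only on non-target actions or off-support states. Each such step is bounded by $1$, and the number of such steps is at most $H$ times the number of deviating episodes. The total cost is therefore $o(T)$ with high probability.

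\emph{Necessity (attackable $\Rightarrow\epsilon^*>0$).} We argue by contrapositive: assume $\epsilon^*\le 0$ and exhibit a specific no-regret learner, namely LSVI-UCB as in \citet{LinearMDP}, that no $o(T)$-cost attacker can force onto $\pi^\dagger$. Suppose an attack with cost $C=o(T)$ succeeds. Then in $T-o(T)$ episodes the learner plays $\pi^\dagger$ on the support, and there the observed rewards equal the true ones except on an $o(T)$ total-mass set. The ridge-regression estimates $\hat\theta_{h,t}$ therefore satisfy $\langle\phi(s,\pi^\dagger(s)),\hat\theta_{h,t}\rangle\to\langle\phi(s,\pi^\dagger(s)),\theta_h\rangle$ in the directions spanned by target features, with error $O(C/T+\text{confidence width})$. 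Because the learner keeps choosing $\pi^\dagger$ greedily with respect to optimistic $Q$'s, the optimistic estimates (projected onto the $\sqrt d$-ball) form a sequence of approximately feasible points of \eqref{eq:characterization-mdp} with margin $\epsilon_t\ge -o(1)$. The real content is a strict margin: the UCB bonus on non-target actions is played only $o(T)$ times, so it stays bounded away from zero along some direction. We would then show that persistent greedy selection of $\pi^\dagger$ under optimism forces a positive margin $\epsilon_t\ge c>0$ for large $t$. Taking a limit point by compactness of the ball gives a feasible solution with $\epsilon\ge c>0$, contradicting $\epsilon^*\le 0$.

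\emph{Main obstacle.} The hard step is the necessity direction's passage from ``the learner empirically prefers $\pi^\dagger$'' to ``there exist reward parameters with a \emph{strictly positive} gap satisfying the exact equality constraints.'' Both the $o(T)$ perturbation and the optimism bonus must be controlled uniformly over $(h,s)$ on the support. If the bonus argument does not yield strictness directly, the fallback is a quantitative version: when $\epsilon^*\le 0$, the attacker must shift target-action rewards or create non-target gaps of order $1/\sqrt{\text{visits}}$ at least $\Omega(T)$ times. We would establish this by a Farkas-type dual certificate of infeasibility of the linear system obtained by fixing the constraint structure, and then lower-bound cost by $\Omega(T)$.
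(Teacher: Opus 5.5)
Your sufficiency direction is the paper's argument: feed $\phi^\top\theta_h^\dagger$ off the target branch, pay nothing on target actions by the equality constraint, and charge each deviating episode $\epsilon^*$ of regret. Your performance-difference step with $\min d_h^{\pi^\dagger}(s)$ tidies up the paper's $N_{\mathrm{dev}}\le R_T/\epsilon^*$. Your clipping/rescaling aside is not harmless (rescaling breaks the equality constraint, clipping can shrink the gap), but the paper ignores admissibility too.

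Your necessity skeleton is also the paper's route: $\Omega(T)$ visits to every support pair, convergence of the components along target features, greedy choice giving margin $\ge -o(1)$, then a limit point. Your compactness limit is a cleaner way than the paper's $(\theta_h)^\parallel+\alpha_h(\hat\theta_h)^\perp$ construction to recover exact equality.

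The genuine gap is the parenthetical ``projected onto the $\sqrt d$-ball''. Your limit-point argument needs the learner's implied reward parameters to stay within $o(1)$ of the ball, and vanilla LSVI-UCB guarantees nothing of the kind. Otherwise projection is a $\Theta(1)$ move that breaks the equality constraint. In fact the learner you chose does not resist. Take $H=1$ (the linear-bandit case the paper says it recovers), $d=2$, target feature $(0.5,0)$, one non-target feature $(0.9,0.1)$, and $\theta=(1,0)$. The equality forces $\theta^\dagger_1=1$ and the ball forces $|\theta^\dagger_2|\le 1$, so $\epsilon^*=0.5-0.9+0.1=-0.3$. Yet overwriting the non-target reward by $0.4$ (cost $0.5$ per pull) makes the observed rewards linear with parameter $(1,-5)$, under which the target is better by $0.1$. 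Vanilla LSVI-UCB then plays the non-target only $O(\beta_T^2)$, i.e.\ logarithmically many, times, so the attack succeeds at logarithmic cost. The learner's estimate tends to $(1,-5)$, at distance $\Theta(1)$ from the feasible set, and no point of that set has margin above $-0.3$.

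So the missing ingredient has three parts. First, a resisting learner whose implied reward parameters are confined to the ball. Second, the Bellman inversion $\theta'_h=w'_h-\sum_{s'}\mu_h(s')\max_a\phi_{h+1}(s',a)^\top w'_{h+1}$, which turns its $Q$-parameters into CQP variables with a certified norm. LSVI-UCB never forms a $\hat\theta_{h,t}$; its regression targets contain $\widetilde Q_{h+1}$, which is why the paper needs the backward induction of Lemma~\ref{lem:estimation-err}. Third, a proof that this learner is still no-regret.

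The paper's device is the growing ridge $\lambda_t=4HS\sqrt{dt}$, argued via Lemma B.2 of \citet{LinearMDP} to give $\|w_h\|_2\le 1/(2S)$, with Proposition~\ref{prop:lsvi-big-lambda} for no-regret. Be careful adopting it. That lemma gives $\|w_h\|_2\le 2H\sqrt{dt/\lambda_t}$, which for $\lambda_t\propto\sqrt t$ is of order $t^{1/4}$, not $1/(2S)$. In the example above this learner already prefers the target after $\Theta(\lambda_t)$ non-target pulls, so the attack costs only $O(\sqrt T)$.

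Separately, your strict-margin step for $\epsilon^*=0$ fails as stated. The non-target bonus is of order $\beta_t/\sqrt{m+\lambda_t}$, and all you know about the number $m$ of non-target visits is $m=o(T)$. So the bonus tends to zero as soon as $m\gg\beta_t^2$. Even a positive bonus margin need not dominate the $C_T/T$ corruption in the target-span components. The paper does not close this case either: its appendix theorem assumes $\epsilon^*<0$ and only produces a feasible point with margin $\ge 0$, and $\epsilon^*=0$ is covered by a one-sentence assertion in the main-text sketch.
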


\paragraph{Gap $\epsilon$'s Intuition}The $\epsilon$ term in our CQP (also called a "gap") captures the reward separation between action $a^\dagger = \pi^\dagger(s)$ and any other competing action $a\neq\pi^\dagger(s)$ under the relevant support. When $\epsilon < 0$, the adversary is unable to make its actions appear strictly optimal over all other actions. If $\epsilon>0$, the adversary can make its target policy $\pi^\dagger$ dominate with a positive gap over all other prospective actions. For the case $\epsilon = 0$, no reward separation makes $\pi^\dagger$ optimal under the relevant support. Our CQP characterizes this setting as \textit{intrinsically robust} because $\pi^\dagger$ cannot be differentiated from other competing policies. This breaks the learner's no-regret property, which the adversary exploits to force the learner to select $\pi^\dagger$. 


\paragraph{Intuition of CQP Constraints}(1) The $Q$-margin constraint ensures a uniform positive gap $\epsilon$ that makes $\pi^\dagger(s)$ strictly better than any other $a\neq\pi^\dagger(s)$ at every visitable state.
(2) The equality on $\langle\phi(s,\pi^\dagger(s)),\theta_h^\dagger\rangle$ means the attacker pays zero cost whenever the learner takes the adversary's target action.
(3) The norm bound keeps per-stage perturbations bounded, matching the regularity of the linear-MDP class. 
Therefore, the variables $\{\theta_h^\dagger\}$ ask \textit{whether one can reshape rewards} so that $\pi^\dagger$ attains a uniform positive $Q$-margin on its occupancy support while leaving the target branch unchanged.

\paragraph{Intrinsic Robustness}An important consequence of this characterization is the existence of intrinsically robust linear MDPs under our adopted pointwise criterion. In particular, attackability is not only a property of the learner but also of the chosen environment geometry: some linear-MDP instances/representations do not admit sublinear-cost attacks within the shaping class captured by our CQP. Later, we demonstrate this phenomenon empirically using linear representations of the same underlying environment to investigate whether our theoretical predictions hold. While the characterization correctly predicts the intrinsic robustness/vulnerability of our tested linear MDP representations, we clarify that no guarantee might hold for the original underlying (often non-linear) environment.

\textbf{Continuous Spaces} Notably, this characterization carries over to continuous state-action spaces by replacing the discrete occupancy measure $d_h^{\pi}(s)$ with an occupancy density $f_h^{\pi}(s, a)$, under which the positivity conditions (e.g., $d_h^{\pi}(s) > 0$) become requirements that $f_h^{\pi}(s, a)$ is positive almost everywhere in the support of the policy. The linear constraints in the finite-horizon setting then generalize to integral constraints over continuous spaces, but the core attack-gap analysis remains fundamentally unchanged.






\begin{proof}[Proof Sketch of Theorem~\ref{thm:mdp-attackability}]

\textbf{Sufficiency (\(\epsilon^* > 0\) implies vulnerability).}
If the convex program in \Cref{eq:characterization-mdp} has a strictly positive solution \(\epsilon^*\), the attacker can create a persistent Q function value gap favoring \(\pi^\dagger\). 
A \textbf{white-box attack strategy} is designed based on the solution of the convex program (\ref{eq:characterization-mdp}), $\theta^\dagger$: The attacker will perturb the reward as $\hat r(s, a) = \langle\phi(s, a),\theta^\dagger\rangle$ whenever the action chosen deviates from $\pi^\dagger(s)$ or the state $s$ won't be visited by $\pi^\dagger$, so that \(\pi^\dagger\) appears strictly better by at least \(\epsilon^*\) with any no-regret algorithm. When the learner sticks to \(\pi^\dagger\), the reward gets no adjustment without any cost, but the reward can still be regarded as $\langle\phi(s, a),\theta^\dagger\rangle$, due to the equality constraint in \Cref{eq:characterization-mdp}. 
Any no-regret learning algorithm will deviate from the best policy by $o(T)$ times. Hence, the total attack cost remains \(o(T)\), proving the MDP attackable.

\textbf{Necessity ($\epsilon^*\le0$ implies intrinsic robustness).}
We first establish impossibility in the strict-negative regime \(\epsilon^*<0\). In this case, we need to identify at least one no-regret learner—vanilla LSVI-UCB—for which no sublinear-cost reward-poisoning strategy can force \(\pi^\dagger\) to be followed on \(T-o(T)\) episodes. The contradiction is that, if such an attack existed, then the learner's estimates on the repeatedly visited target support would allow us to construct a feasible point of \eqref{eq:characterization-mdp} with nonnegative gap, contradicting \(\epsilon^*<0\).

Under Definition~\ref{defn:attackability-linear-mdp}, a successful attack must still force strict pointwise optimality of \(\pi^\dagger\) on the relevant occupancy support for \(T-o(T)\) episodes. Within the white-box shaping class of \Cref{eq:characterization-mdp}, such forcing would again yield a feasible point with \(\epsilon>0\), contradicting \(\epsilon^*=0\). Hence the boundary case is non-attackable as well.
\end{proof}

\begin{remark}[Extension to Target Policy Sets]


The CQP formulation in \eqref{eq:characterization-mdp} admits a natural generalization from a single target policy $\pi^\dagger$ to a \emph{target policy set} $\Pi^\dagger$. This extension accommodates policies that are ``very close'' to the original target, allowing the adversary to enforce slight modifications if the exact target is strictly unattackable. Crucially, the set $\Pi^\dagger$ is constructed as the Cartesian product of permissible action sets $\mathcal{A}^\dagger(s)$ at each state (i.e., $\Pi^\dagger = \bigotimes_{s} \mathcal{A}^\dagger(s)$), restricted to policies that share the \emph{same occupancy support} $\{s: d_h^{\pi}(s)>0\}$. The modified CQP \eqref{eq:characterization-mdp-multi} identifies the specific policy $\pi^\dagger \in \Pi^\dagger$ that maximizes the attack margin $\epsilon^*$:

\begin{align}\begin{split}\label{eq:characterization-mdp-multi}
    \epsilon^*=\;&\max_{\epsilon,\{\theta_h^\dagger\}_{h=1}^H, \pi^\dagger \in \Pi^\dagger}\ \epsilon\\
    \text{s.t.}\quad 
    & Q_{\{\theta_h^\dagger\}}(s,\pi^\dagger(s))\ \ge\ \epsilon+Q_{\{\theta_h^\dagger\}}(s,a), \\
    & \qquad\qquad\qquad \forall h,\ \forall s,\ \forall a\notin \mathcal{A}^\dagger(s), \ \text{with } d_h^{\pi^\dagger}(s)>0,\\
    & \langle \phi(s,\pi^\dagger(s)),\theta_h^\dagger\rangle\ =\ \langle \phi(s,\pi^\dagger(s)),\theta_h\rangle, \\
    & \qquad\qquad\qquad \forall h,\ \forall s,\ \text{with } d_h^{\pi^\dagger}(s)>0, \\
    & \|\theta_h^\dagger\|_2\ \le\ \sqrt{d}, \ \forall h.
    \end{split}\end{align}

The assumption of shared occupancy support is necessary; if candidate policies imply distinct occupancy measures, they cannot be jointly optimized within a single CQP due to the learner's pursuit of expected return maximization. Consider a counter-example with two policies, $\pi^1$ and $\pi^2$. Suppose solving the CQP for $\pi^1$ yields a feasible attack $\theta_1$ with $\epsilon^* > 0$, while $\pi^2$ is intrinsically robust ($\epsilon^* \leq 0$) due to constraints at some state $s'$ (where $d^{\pi^1}(s')=0$). If, at the initial state $s_0$, the attacked Q-values satisfy $Q_{\theta_1}(s_0, \pi^2(s_0)) > Q_{\theta_1}(s_0, \pi^1(s_0))$, an algorithm like LSVI-UCB will select $\pi^2(s_0)$. If this action leads to that state $s'$ where $d^{\pi^1}(s')=0$, the learner deviates from the support of $\pi^1$ entirely, causing the attack to fail despite the theoretical feasibility of $\pi^1$. Therefore, for policies with distinct occupancy measures, one must enumerate candidate policies and solve the CQP \eqref{eq:characterization-mdp} for each individually.

\end{remark}

\begin{remark}[Relation to previous results]\label{rem:coverage_of_previous_work} 
For $H=1$ with reward poisoning, the attackability gap $\epsilon^*$ in \Cref{thm:mdp-attackability} recovers the linear bandit conditions established in \citep{MABAttack}, allowing $o(T)$-cost attacks when $\epsilon^*>0$. Under one-hot feature vectors $\phi(s, a) = \mathbf{1}_{(s, a)}$, our framework generalizes the attack feasibility results of \citet{AdaptiveAttack}.  Instead of bounding the scale of poisoned reward as the third condition shows in \Cref{eq:characterization-mdp}, \citet{AdaptiveAttack} considers the feasibility of bounded attack in tabular MDP with the constraint on the reward perturbation as follows,
\begin{align*}
    &
    |r(s,a) - r_{\theta_h^\dagger}(s,a) | \leq \Delta,  \quad \forall\, h,s,a\,:\, d_h^{\pi^\dagger}(s) > 0.
\end{align*}
Our proof still holds if our third condition is replaced by this constraint 
without loss of generality. Specifically, their Theorem~4 can be directly recovered by our gap $\epsilon^* = \frac{2}{1+\gamma}(\Delta - \Delta_3)$, where $\Delta_3$ is the critical threshold for reward perturbation. We put the proof of this part in \cref{proof_of_remark}. 
\citet{limitsPoisoning} also constructed an infeasible target in tabular MDP against rewarding poisoning: their counterexample (Appendix A.1 of \citep{limitsPoisoning}) is also covered by our \Cref{thm:mdp-attackability} as a special case with $\epsilon^* = -0.1$ when $H = 2$.

\begin{remark}[Technical challenge] Note that the main difference between linear bandit and linear MDP is that linear MDP observes states through  trajectory rollouts, leading to the dependencies among the feasible states and actions at the current step and historical trajectories. Consequently, it is necessary to investigate state-action pairs under the occupancy measure and analyze the confidence bounds on such pairs. Furthermore, instead of analyzing confidence bounds on rewards in linear bandits, we derive/analyze confidence bounds for the Q value functions. We do this because they are more directly related to the policy, and we examine the relationship between Q value functions and reward functions to bridge the gap in analysis.
\end{remark}

\end{remark}


\paragraph{Efficient computation of \eqref{eq:characterization-mdp}}
When $|\mathcal{S}|,|\mathcal{A}|<\infty$, the program can be solved efficiently by expressing $Q$ as an affine function of $\{\theta_h^\dagger\}$. Let $\mathbf{P}^{h,\dagger}$ be the transition matrix under $\pi^\dagger$, where $\mathbf{P}^{h,\dagger}_{ij}=\Pr_h\{s_{h+1}=j\,|\,s_h=i,\pi^\dagger(i)\}$. Stack $Q$-values at stage $h$ into a row vector $\mathbf{Q}^{h,\dagger}$ and collect $\{\phi(s,\pi^\dagger(s))\}_{s}$ as columns of a matrix $\Phi$. Then
\begin{align*}
\mathbf{Q}^{h,\dagger}
= &(\theta_h^\dagger)^\top \Phi + \mathbf{Q}^{h+1,\dagger}\,\mathbf{P}^{h,\dagger}\\
=& (\theta_h^\dagger)^\top \Phi + (\theta_{h+1}^\dagger)^\top \Phi\,\mathbf{P}^{h,\dagger}
+ \mathbf{Q}^{h+2,\dagger}\,\mathbf{P}^{h+1,\dagger}\mathbf{P}^{h,\dagger}\\
= &\cdots
\end{align*}
Thus each constraint in \eqref{eq:characterization-mdp} is linear or convex quadratic in the decision variables, enabling standard solvers to compute $\epsilon^*$.

\section{Black-box Attack}
\label{sec:blackbox}

In the black-box setting, the attacker does not know the
environment parameters. In each episode \(t\), after
observing
\((s_h^t,a_h^t,r_h(s_h^t,a_h^t),s_{h+1}^t)\),
the attacker may replace the learner's observed reward by
\(\hat r_h^t\in[0,1]\).
The goal is to force the learner to follow a deterministic
target policy \(\pi^\dagger\) in \(T-o(T)\) episodes while
paying only \(o(T)\) total shaping cost. Unlike the white-box characterization, the black-box result below is a conditional certified-support forcing theorem under explicit estimation and certification.

Our construction has two stages.
In Stage 1, we manipulate the target
branch so that, on the certified support, the learner is
driven to view the target policy as best.
Once Stage~1 has
produced a large clean target-policy history, we switch to a
support-restricted penalty design and check whether the
manipulated target policy can be kept from becoming
suboptimal.

Let \(\phi_h^\dagger(s):=\phi_h(s,\pi^\dagger(s))\).
The attack is restricted to a certified support
\(\{\mathcal C_h\}_{h=1}^H\).
We assume that the initial-state distribution is supported on
\(\mathcal C_1\), and that the certified support is
closed under all certified actions.
The learner is assumed to be a UCRL-type optimistic
model-based learner \citep{ayoub2020modelbased}.

\paragraph{Stage~1: making the target policy look best.}
Stage~1 begins by solving the relaxed geometric program
\begin{align}
\epsilon_{1,h}^\star
&:=
\max_{\|w\|_2\le \frac{1}{2S}}
\min_{s\in \mathcal C_h}
\Big(
\langle \phi_h^\dagger(s),w\rangle
-
\max_{a\neq \pi^\dagger(s)}
\langle \phi_h(s,a),w\rangle
\Big),
\notag\\
\epsilon_1^\star
&:=
\min_{h\in[H]}
\epsilon_{1,h}^\star .
\label{eq:bb:stage1-cqp}
\end{align}
If \(\epsilon_1^\star\le 0\), we return \textsc{Non-attackable}.
Otherwise, let \(w_h\) be an optimizer and define
\(q_h:=S w_h\), \(q_{H+1}:=0\).

The relaxed program is purely geometric: it only identifies
a direction that favors the target action over the certified
comparison actions.
To turn this direction into an actual steering reward,
define the surrogate value
\(v_h(s):=\langle \phi_h^\dagger(s),q_h\rangle\)
and the associated one-step continuation term
\(\mu_h(s,a):=\mathbb E\!\left[v_{h+1}(s_{h+1})\mid s_h=s,\ a_h=a\right]\).
Under the linear MDP model, this continuation term is
linear in \(\phi_h(s,a)\), so it can be estimated by ridge
regression using the observed targets
\(v_{h+1}(s_{h+1}^t)\).
This yields a provisional Stage~1 steering reward
\begin{equation}
\widetilde r_{h,t}^{(1)}(s,a)
=
\phi_h(s,a)^\top
\bigl(q_h-\widehat b_{h,t}\bigr),
\label{eq:bb:stage1-provisional}
\end{equation}
where \(\widehat b_{h,t}\) is the current estimate of the
continuation coefficient.

Then the attacker sets
\begin{equation}
\widehat\theta_h
:=
q_h-\widehat b_h^{\mathrm{frz}},
\qquad
\widetilde r_h^{(1)}(s,a)
:=
\phi_h(s,a)^\top \widehat\theta_h,
\label{eq:bb:theta1}
\end{equation}
where \(\widehat b_h^{\mathrm{frz}}\) denotes the
continuation-term estimate used at freezing.
From that point on, Stage~1 is a fixed attacked MDP.

\paragraph{Stage~2: checking and stabilizing the manipulated target policy.}
Using the clean Stage~1 history, we estimate the
target-policy \(Q\)-function.
Because \(Q^{\pi^\dagger}\) is linear in \(\phi_h(s,a)\), clean
Monte Carlo returns along \(\pi^\dagger\) give a ridge
estimate and a high-probability confidence interval.
This yields a conservative upper bound \(\widehat g_h(s,a)\)
on how much worse a certified comparison action must be
made relative to the target action.

Stage~2 then solves the support-restricted penalty design
\begin{align}
\epsilon_2^\star
&:=
\max_{\epsilon,\{u_h\}_{h=1}^H}
\epsilon
\label{eq:bb:cqp2-obj}
\\
\text{s.t.}\qquad
\langle \phi_h(s,a),u_h\rangle
&\ge
\widehat g_h(s,a)+\epsilon,
\quad
\forall s\in\mathcal C_h,\
\forall a\neq \pi^\dagger(s),
\label{eq:bb:cqp2-gap}
\\
\langle \phi_h^\dagger(s),u_h\rangle
&=
0,
\quad
\forall s\in\mathcal C_h,
\label{eq:bb:cqp2-target}
\\
\|u_h\|_2
&\le
\sqrt d,
\quad
\forall h\in[H].
\label{eq:bb:cqp2-norm}
\end{align}
If \(\epsilon_2^\star\le 0\), we return \textsc{Non-attackable}.
Otherwise, the resulting penalty leaves the target action
unchanged and uniformly suppresses certified non-target
actions on the certified support.

During Stage~1, target visits use the steering reward rather
than the clean reward, so the learner enters Stage~2 with a
finite target-side discrepancy.
We incorporate this through an integrated compensation
schedule.
Concretely, on target visits the attacker adds a predictable
correction \(c_h^t\), while on non-target visits it applies the
fixed Stage~2 penalty.
We assume an admissible predictable schedule can be chosen
with total mass at most \(H T_1\), so it remains sublinear
whenever \(T_1=o(T)\).
Thus Stage~2 starts immediately at episode \(T_1+1\):
its non-target part is stationary, and its target-side
correction is a finite-mass predictable adjustment.

At a high level, the methodology is therefore simple.
Stage~1 makes the learner discover the target policy as best
on the certified support.
Stage~2 checks whether this manipulated target policy can
be kept from becoming suboptimal after we switch to a
stationary support-restricted penalty.
If the Stage~2 program succeeds, then under a UCRL-type
learner certified non-target actions can still be selected only
sublinearly often, and the total shaping cost remains
sublinear.

\begin{algorithm}[t]
\caption{Black-box Two-Stage Attack}
\label{alg:bb:main}
\begin{algorithmic}[1]\small
\STATE \textbf{Inputs:}
horizon \(H\), total episodes \(T\), Stage~1 length \(T_1\),
budget \(S\), features \(\phi\), target policy \(\pi^\dagger\),
certified sets \(\{\mathcal C_h\}\).

\STATE Solve the relaxed Stage~1 CQP
\eqref{eq:bb:stage1-cqp}.
If \(\epsilon_1^\star\le 0\), \textbf{return}
\textsc{Non-attackable}.

\STATE Form the Stage~1 geometric direction
\(q_h=S w_h\)
and the surrogate values
\(v_h(s)=\langle \phi_h^\dagger(s),q_h\rangle\).

\FOR{\(t=1,\dots,T_1\)}
    \STATE Estimate the Stage~1 continuation term and
    update the provisional steering reward
    \(\widetilde r_{h,t}^{(1)}\).
    \IF{Stage~1 certifies}
        \STATE Set the Stage~1 reward to
        \(\widetilde r_h^{(1)}\).
    \ENDIF
\ENDFOR

\IF{Stage~1 does not certify by the end of episode \(T_1\)}
    \STATE \textbf{return} \textsc{Non-attackable}.
\ENDIF

\STATE Extract clean target episodes from the Stage~1
history.

\STATE Estimate \(Q^{\pi^\dagger}\), construct
\(\widehat g_h(s,a)\), and solve the Stage~2 penalty design
\eqref{eq:bb:cqp2-obj}--\eqref{eq:bb:cqp2-norm}.
If \(\epsilon_2^\star\le 0\), \textbf{return}
\textsc{Non-attackable}.

\STATE Compute the finite Stage~1 target-side discrepancy
and choose an admissible predictable compensation
schedule.

\FOR{\(t=T_1+1,\dots,T\)}
    \STATE On target visits, feed the clean reward plus
    compensation; on non-target visits, feed the fixed
    Stage~2 penalty reward.
\ENDFOR
\end{algorithmic}
\end{algorithm}

Next, the black-box result is stated under a collection of
standard support, excitation, and confidence conditions,
including bounded features, admissible attacked rewards,
initial support on \(\mathcal C_1\), support closure
on the certified support, sufficient excitation, and the
high-probability confidence event for the Stage~1
continuation-term regression. We refer to
Appendix~\ref{app:blackbox} for the precise formulation.

\begin{theorem}[Black-box Cost]
\label{thm:bb-clean}
Under the standing black-box conditions above, suppose
that Stage~1 succeeds over the first \(T_1\) episodes in the
sense that it certifies a fixed attacked MDP on the certified
support where \(\pi^\dagger\) is best with gap
\(\Theta(1/S)\), and leaves a clean target-policy history of
length \(T_1-\widetilde O(\sqrt{T_1})\). Suppose also that
the Stage~2 penalty design returns
\(\epsilon_2^\star=\Theta(1/S)\), and that the target-side
compensation schedule is admissible with total mass at most
\(H T_1\).

Then, with probability at least \(1-4\delta\),
\begin{equation}
\mathrm{Cost}(T,T_1)
\le
\widetilde O\!\left(
H T_1
+
d^{3/2}H^{5/2}S
\left(
\sqrt{T\,T_1}
+
\frac{T}{\sqrt{T_1}}
\right)
\right).
\label{eq:bb:cost-main}
\end{equation}
In particular, choosing \(T_1=\Theta(\sqrt T)\) gives
\begin{equation}
\mathrm{Cost}(T,\Theta(\sqrt T))
\le
\widetilde O\!\bigl(
d^{3/2}H^{5/2}S\,T^{3/4}
\bigr),
\label{eq:bb:cost-sqrt}
\end{equation}
and the learner follows \(\pi^\dagger\) in
\(T-\widetilde O(T^{3/4})\) episodes on the certified
support.
\end{theorem}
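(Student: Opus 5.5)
The proof decomposes the total cost into a Stage~1 part and a Stage~2 part, and shows each is bounded on a single high-probability event. That event intersects four events, each of probability at least $1-\delta$, which gives the $1-4\delta$ in the statement: (i) the ridge-regression confidence event for the Stage~1 continuation coefficient $\widehat b_{h,t}$; (ii) the Monte Carlo confidence event for the estimate of $Q^{\pi^\dagger}$ that defines $\widehat g_h$; (iii) the learner's own optimism/confidence event as a UCRL-type learner \citep{ayoub2020modelbased}, which yields its regret bound $\widetilde O(d^{3/2}H^{5/2}\sqrt{T})$ up to the reward scale; (iv) an Azuma--Hoeffding event converting expected visit counts of non-target actions into realized counts.

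\textbf{Stage~1.} Cost is at most $H T_1$ trivially, since every attacked reward lies in $[0,1]$ and at most $H$ steps per episode are modified. We still need the learner's behavior during Stage~1, because Stage~2 is built from its history. By assumption, Stage~1 certifies an attacked MDP in which $\pi^\dagger$ is best with gap $\Theta(1/S)$.

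\textbf{Stage~2.} For $t>T_1$ the learner faces a reward that is the clean reward plus a predictable compensation on target visits, and the fixed penalty $\langle\phi_h(s,a),u_h\rangle$ on non-target visits. On the certified support this gives a stationary attacked MDP, up to a finite-mass perturbation. I would argue this in three steps.

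\emph{Step 1.} On event (ii), $\widehat g_h$ upper-bounds the true required suppression up to an error of $\widetilde O(dH/\sqrt{T_1})$ per step. Because $\epsilon_2^\star=\Theta(1/S)$, every certified non-target action has Stage~2 suboptimality gap at least $\Theta(1/S)-\widetilde O(dH/\sqrt{T_1})$. Support closure and the initial support on $\mathcal C_1$ keep the learner inside the certified support.

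\emph{Step 2.} The learner's regret against the attacked MDP bounds the number of non-target episodes $N$ through $N\cdot \mathrm{gap}\le \mathrm{Regret}$. With rewards scaled by $S$, the regret carries a factor $S$. The compensation schedule adds at most $HT_1$ to the cumulative reward misspecification, which enters the regret additively.

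\emph{Step 3.} Each non-target episode costs $O(H)$, and target visits cost only the compensation. Hence
\[
\mathrm{Cost}\le HT_1+H\cdot N+HT_1 .
\]

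\textbf{Main obstacle.} The hard part is the coupling between the Stage~1 estimation error and Stage~2 regret. An error $\eta=\widetilde O(\sqrt{d}H/\sqrt{T_1})$ in $\widehat g_h$ is a per-step bias over the remaining $T$ episodes. It can make target-suboptimal behavior look acceptable to the learner, contributing $\widetilde O(d^{3/2}H^{5/2}S\,T/\sqrt{T_1})$. The learner's exploration cost over the horizon, inflated by the history of length $T_1$ that it must unlearn, contributes $\widetilde O(d^{3/2}H^{5/2}S\sqrt{T T_1})$.

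I would handle this with a misspecified-regret argument. Treat the compensated Stage~2 MDP as the true attacked MDP perturbed by a bias of size $\eta$, and invoke the UCRL-type regret bound under misspecification, whose extra term is $O(\eta H T)$. The $\sqrt{TT_1}$ term should come from a transient in which the confidence sets built on Stage~1 data, which was collected under different rewards, must be corrected. I would bound this transient by Cauchy--Schwarz over the elliptical potential of the $T_1$ pre-switch samples.

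Substituting $T_1=\Theta(\sqrt T)$ balances the terms: $\sqrt{TT_1}=T^{3/4}$, $T/\sqrt{T_1}=T^{3/4}$, and $HT_1=O(\sqrt T)$. This gives the $T^{3/4}$ cost bound, and $N=\widetilde O(T^{3/4})$ deviation episodes.
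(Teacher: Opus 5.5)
Your skeleton is the paper's. You bound $\mathrm{Cost}^{(1)}\le HT_1$, bound the compensation mass by $HT_1$, charge a bounded amount per Stage~2 bad episode, get $\epsilon_2^\star M^{(2)}_{\mathrm{bad}}\le R^{(2)}_T(\bar r)$ from the performance-difference identity at the first certified deviation, and take $T_1=\Theta(\sqrt T)$. Two small points in your favor. On $\mathcal E_2$ you do not even lose $\widetilde O(dH/\sqrt{T_1})$ in the gap, because $g_h$ already dominates the true disadvantage. Your Azuma event supplies an expected-to-realized conversion that the paper glosses over.

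The gap is in the one step that carries all the $T$-dependence: the Stage~2 regret bound of order $\sqrt{TT_1}+T/\sqrt{T_1}$. The paper does not derive this bound. It is one of the standing black-box conditions, the learner-side warm-start interface
\[
R^{(2)}_T(\bar r)\le C_{\mathrm{wr}}\,dH^{3/2}\sqrt T\,\bigl(1+S_0+\gamma_{\mathrm{tar}}\sqrt T\bigr).
\]
The proof only substitutes two quantities into it:
\begin{itemize}
\item $S_0\le H(\tau_{\mathrm{fix}}+M^{(1)}_{\mathrm{bad}})=\widetilde O(\sqrt{T_1})$, the number of certified non-target steps in Stage~1. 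This is where the hypothesis that all but $\widetilde O(\sqrt{T_1})$ Stage~1 episodes are clean is used.
\item $\gamma_{\mathrm{tar}}=\widetilde O(\sqrt d/\sqrt{T_1})$.
\end{itemize}

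Your misspecification argument does not supply this bound, for three reasons.
\begin{itemize}
\item The attacker's error in $\widehat g_h$ never enters the environment the learner faces. The learner sees $\bar r$ off-target and $r+c_h^t$ on target, and on $\mathcal E_2$ the target is genuinely $\epsilon_2^\star$-better under $\bar r$. So there is no per-step bias $\eta$ to which a misspecified UCRL bound with extra term $O(\eta HT)$ could apply.
\item The real obstruction is the stale Stage~1 data inside the learner's own regressions, generated under $\widetilde r^{(1)}$ on every visit, together with the non-stationary compensation. For an unmodified optimistic learner this can invalidate its confidence sets.
\item Your Cauchy--Schwarz/elliptical-potential step gives a $\sqrt{dTT_1}$ transient only if the learner's bonus is inflated by order $\sqrt{T_1}$ to stay optimistic. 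If you instead charge the whole length-$T_1$ history as corruption in a bound of the paper's form, you get $\sqrt T\cdot HT_1$, which is linear at $T_1=\sqrt T$. The construction avoids this only because compensation neutralizes the target side, so that just $S_0$ counts.
\end{itemize}

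So you must either take the warm-start interface as a hypothesis, as the paper does, or prove a new robustness lemma for the learner. As written, this step is asserted, not proved.

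Separately, your $d,H$ exponents do not close. The paper's $d^{3/2}H^{5/2}S$ is $dH^{3/2}$ from the interface, times $\Theta(S)$ from $1/\epsilon_2^\star$, times $HU_{\max}\le H\sqrt d$ per bad episode. Your asserted learner rate $d^{3/2}H^{5/2}\sqrt T$, combined with $O(H)$ per bad episode, gives $H^{7/2}$ instead.
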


\begin{remark}
Both CQP(1) and the Stage~2 penalty design remain
sufficient certificates on the chosen certified support.
Thus, if Stage~1 does not certify within the first \(T_1\)
episodes, or if the Stage~2 program returns
\(\epsilon_2^\star\le 0\), we can only conclude that our
current two-stage construction is not certified on this
support with the available data.
This does not rule out attackability under a different
support choice, a richer shaping class, or a non-stationary
design.
\end{remark}

\paragraph{Proof sketch.}
Stage~1 starts from the geometric direction given by the
relaxed CQP and estimates only the one-step continuation
term of that direction by value-targeted regression.
Rather than learning the full transition model, we estimate
only the value-relevant quantity needed to certify that the
target policy has become best on the certified support.
Once this certification occurs within the first \(T_1\) episodes,
Stage~1 yields a fixed attacked MDP together with a clean
target-policy history of length
\(T_1-\widetilde O(\sqrt{T_1})\).

From that point on, the learner interacts with a fixed
attacked MDP in which the target policy is best on the
certified support.
The UCRL regret guarantee therefore implies that certified
non-target actions can be selected only sublinearly often,
which yields the clean target-policy history needed for
Stage~2.

Stage~2 then uses this clean history to estimate the
target-policy \(Q\)-function, upper bound the clean
comparison gap, and solve the support-restricted penalty
design.
If this design succeeds, then the manipulated target policy
does not become suboptimal after the switch to Stage~2.
The appendix combines this certified margin with the
assumed warm-start regret interface to show that certified
non-target actions remain sublinear, and the total shaping
cost is therefore sublinear as well.

The main technical work, deferred to the appendix, is to
quantify the Stage~1 certification condition and the
Stage~2 warm-start regret tradeoff.
These yield the bound
\(\sqrt{T\,T_1}+T/\sqrt{T_1}\), and choosing
\(T_1=\Theta(\sqrt T)\) gives the final
\(\widetilde O(T^{3/4})\) cost.
\section{Empirical Validation}
\label{sec:exp}

To verify our theory, we present a set of experiments in which we implement our white-box, black-box, and baseline~\citep{DRLPoisoning} attack algorithms. We begin this section by describing three claims/predictions that our theoretical results make. Then, we introduce our empirical setup, shortly explaining how we implement our theory and experiments. Finally, we describe our results through plots and analyze the validity of our claims. We leave the rest of the details in this section to the appendix \ref{appdx:full_exps}.

\subsection{Empirical Objectives}\label{sec:empirical_obj}
We design our empirical demonstrations with the aim of testing/observing the predicted behavior of both the adversary and victim algorithms under environments characterized as ``robust" or ``vulnerable". Our theory predicts the following behaviors:
\begin{enumerate}[
    label=(\arabic*),
    topsep=0.1em,
    noitemsep,
    leftmargin=1.6em
]
    \item A ``\textit{vulnerable}" environment's intrinsic geometry allows attackers to only need \textbf{sub-linear} cumulative perturbations to succeed, while ``\textit{robust}" environments force \textbf{super-linear} cumulative perturbations.
    \item Adversaries acting in a ``vulnerable" environment will cause the victim's policy to converge towards the adversarial target, while adversaries in a ``robust" environment will not.
    \item Our characterization's CQP program predicts the \textit{success/failure} of carrying out targeted reward poisoning attacks.
\end{enumerate}
An important behavior central to our discussion is \textbf{sub/super-linearity}. In the context of cumulative perturbations, sub-linearity refers to the asymptotic "tapering" effect of the growth of a function, often visualized as a ``flattening" of the curve over time. This behavior results from the reduction of perturbations by the adversary over time, indicating the victim agent's actions align with the adversary's. On the other hand, super-linearity refers to an apparent ``acceleration" over time, often visualized as a ``curving-up" of the curve over time. This often indicates a continual disagreement between the victim agent's and adversary's actions.

\subsection{Experimental Setup}
\label{subsec:exp-setup}

Linear MDPs, as described in our assumptions, are often hard to construct because most realistic problem settings are non-linear. We address this challenge by using \textbf{C}on\textbf{T}rastive \textbf{R}epresentation \textbf{L}earning (CTRL)~\citep{ctrl_algo}. 

\noindent
\textbf{Linear MDP Representation} We use CTRL, a soft actor-critic (SAC)~\cite{soft_AC} algorithm, to learn a linear MDP representation from the MuJoCo benchmark environments. CTRL uses noise-contrastive estimation (NCE) to learn $\pi^*$, $\phi(\cdot, \cdot)$, $\mu(\cdot)$, and $\theta$/$\omega$ through interaction with the environments. Despite the restrictions imposed by linear MDPs, CTRL achieves state-of-the-art performance~\cite{ctrl_algo} on comparable benchmark environments despite using the linearity assumption. 

\noindent
\textbf{Black-box vs White-box Attacks.} The core difference between our white-box and black-box (see algorithm~\ref{alg:bb:main}) attacks lies in the type of environment access given. In the white-box case, we allow knowledge of $\phi$, $\mu$, and $\theta$/$\omega$. For black-box, the adversary must estimate these functions and variables on their own. Both attack algorithms solve our CQP program to characterize the environment and obtain an attack strategy. We tackle the \emph{theoretical-practical} implementation gap by solving for each variable needed in our CQP through a combination of optimization and approximation, and then using the popular library CVXPY to solve the CQP. We leave implementation details in appendix~\ref{appdx:full_exps}.

\textbf{Baseline Attack.} The AT attack~\cite{DRLPoisoning} strategy computes a series of perturbations $\Delta^t$ for all timesteps $t$ such that the attack is $(\epsilon, C, B)$-Efficient: $ \frac{1}{T}\mathbb{E}\left[ \sum^T_{t=1} d(a^t, \pi^\dagger(s^t)) \right] = \epsilon$, $\mathbb{E}\left[ \sum^T_{t=1} |\Delta^t| \right] = C$, and $\max_t |\Delta^t| = B$. That is, an attack that uses at most $C$ perturbation budget, where the single largest individual perturbation is at most $B$, and where the average action distance between the victim and adversarial policy is at most $\epsilon$.

\noindent
\textbf{Benchmarking tasks.}
We test our results on linear MDP representations of three continuous MDP problems from the MuJoCo library. We use ``\emph{Mountain-Car}", ``\emph{Half-Cheetah}", and ``\emph{Pendulum}". Mountain-Car challenges an agent to reach the top of a hill as fast as possible. Half-Cheetah tests the ability of an agent to control a 2D legged robot to run as fast as possible. Lastly, the Pendulum requires the agent to flip the pendulum upright and balance it. We include further details in appendix \ref{appdx:full_exps}.

\noindent
\textbf{CQP Accuracy} Our experiments also investigate directly whether our characterization prediction is accurate. This is important in assessing whether our theoretical characterizations hold in practical scenarios. Due to the intrinsic stochasticity of CTRL, the learned representation will diverge across trials, leading to different characterizations for the same environment. While this is expected, we are interested in observing whether our theory accurately predicts the victim-adversary behavior in these environments despite CTRL's approximation variability. We collect 100 environments for each characterization and allow up to $T=20,000$ episodes per environment. We then use \emph{two metrics} to assess attack success/failure. The first metric (\textbf{M1}) looks at the sub/super-linearity of cumulative perturbations, while the second metric (\textbf{M2}) looks directly at the learned behavior of the victim's policies (see~\ref{sec:adv_target_pi}).  We leave the rest of our experiment details in the appendix.


\subsection{Adversary's Target Policy}\label{sec:adv_target_pi}

Since manually defining the adversarial target policy $\pi^\dagger$ is a complex endeavor, we instead learn the adversarial target policies through environment design. We accomplish this by modifying the reward function of the environments to incentivize actions aligning with our targeted behavior:

\textbf{MountainCar} The reward is modified such that the target mountain to climb is opposite to the original target. Because of this, the adversarial policy needs to learn to swing back and forth to reach the top of the left mountain, using the right mountain for propulsion. 

\textbf{HalfCheetah} We modify the environment such that the optimal policy is instead the policy that makes the robot run backwards (to the left) with high stability. 

\textbf{Pendulum} Our modification to the pendulum environment incentivizes the agent to maintain a 90-degree angle to the right. In doing this, the adversarial policy is pressed to reach the right angle and output enough force to keep the pendulum stationary at 90 degrees.

Then, we use a learning algorithm (e.g., CTRL or LSVI-UCB) to learn $\pi^\dagger$ from these "hacked" environments. We avoid any modifications to other parts of the environment, such as the state-action spaces and transition dynamics. We discard the modified environments after learning $\pi^\dagger$.

\begin{figure}[ht]
    \centering
    \begin{subfigure}[b]{0.48\columnwidth}
        \includegraphics[width=\linewidth]{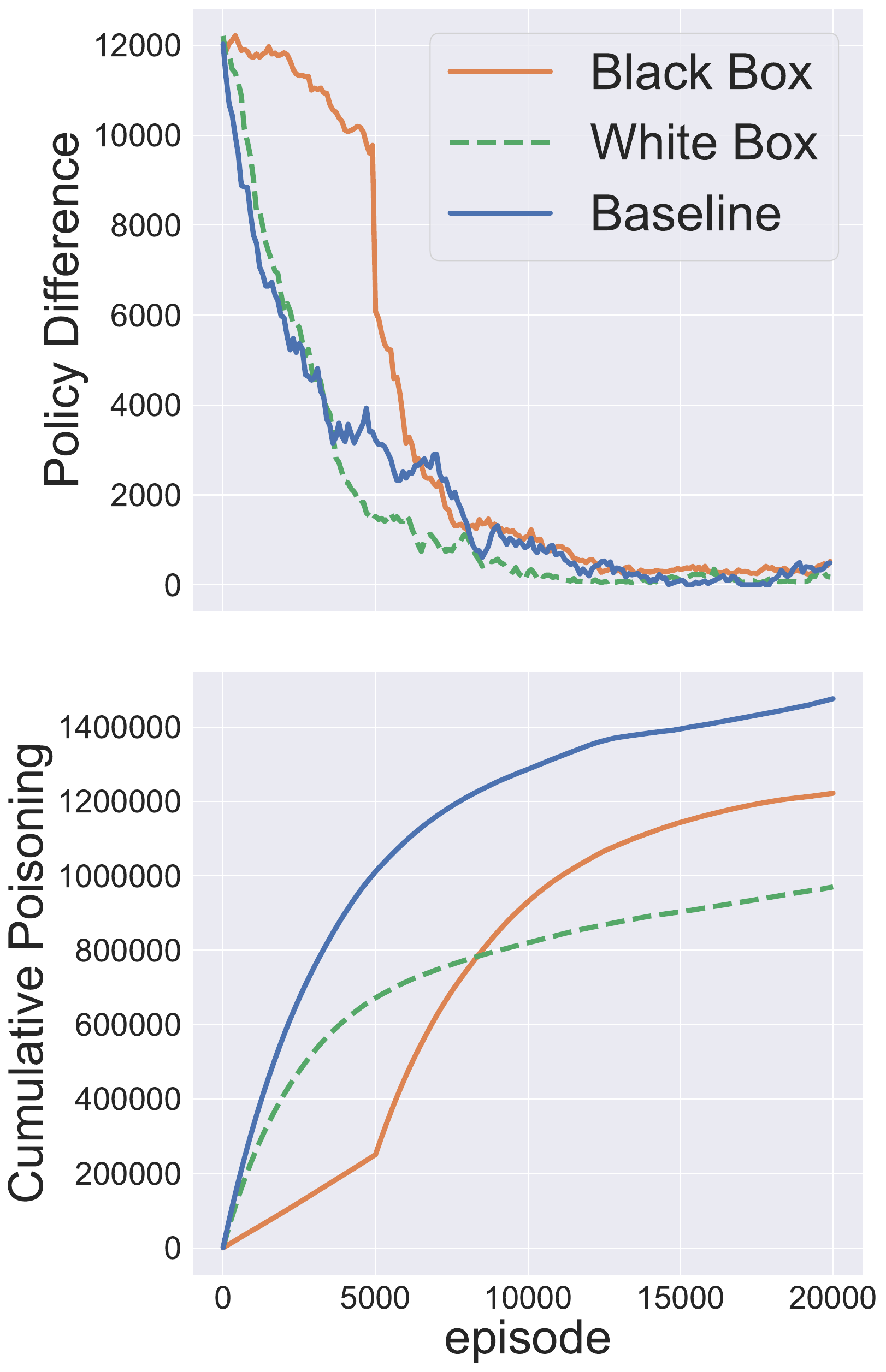}
        \caption{Vulnerable ($\epsilon^* > 0$).}
        \label{fig:attackable_hc_main}
    \end{subfigure}
    ~
    \begin{subfigure}[b]{0.48\columnwidth}
        \includegraphics[width=\linewidth]{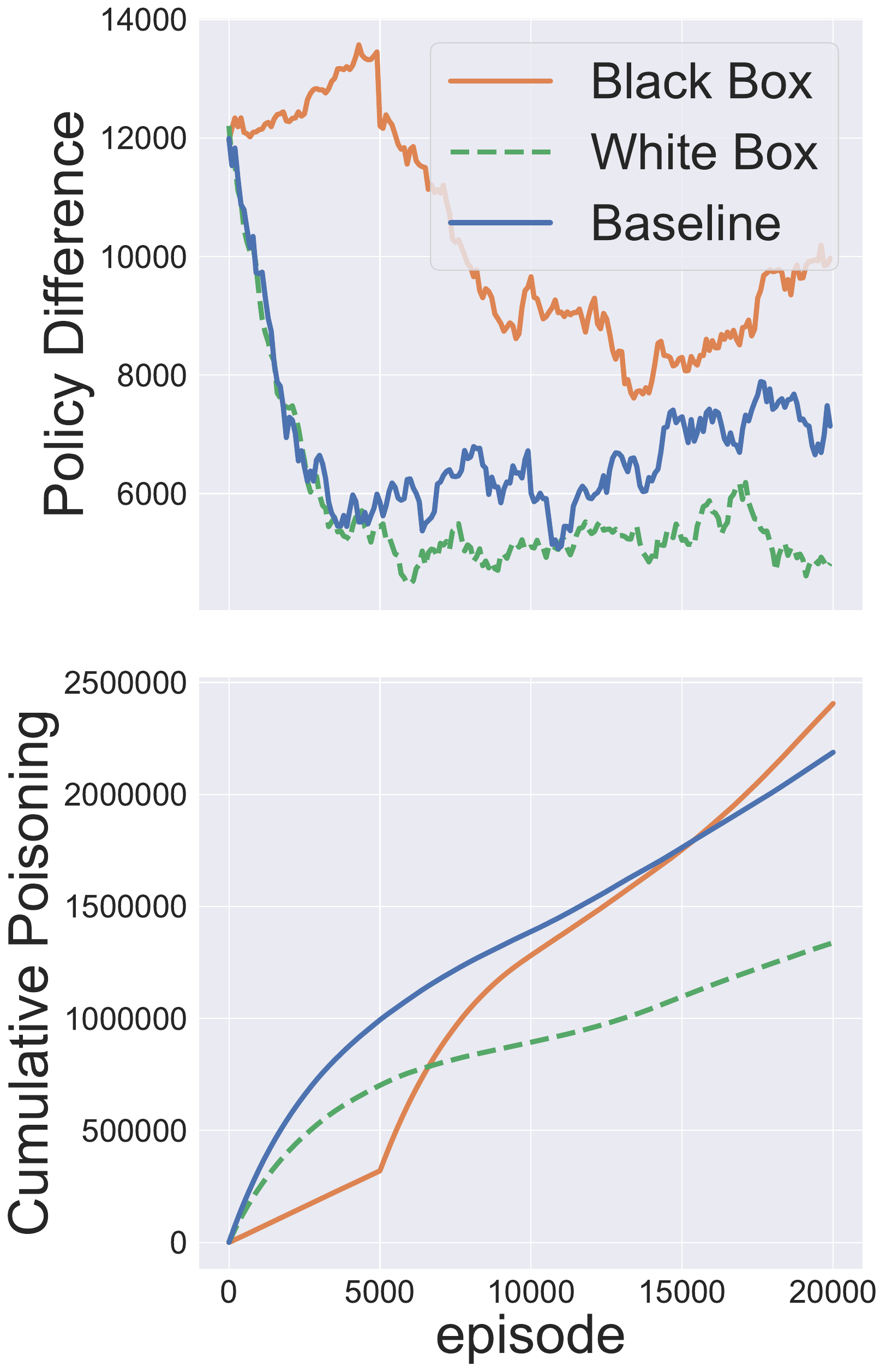}
        \caption{Robust ($\epsilon^* \leq 0$).}
        \label{fig:unattackable_hc_main}
    \end{subfigure}
    \caption{Average policy difference (\textit{top row}) and cumulative perturbations (\textit{bottom row}) per episode for vulnerable (\ref{fig:attackable_hc_main}) or robust (\ref{fig:unattackable_hc_main}) "Half-Cheetah" environments against LSVI-UCB.}
    \label{fig:hc_main_plots}
\end{figure}

\begin{table}[ht]
    \centering
    \caption{Percentage of environments successfully attacked according to metrics $M1$ and $M2$.}
    \label{tab:table_attackable}
    \begin{tabular}{@{}r|c|c@{}}
        \toprule
        \multicolumn{1}{r|}{\textbf{Environment}}& Sub-lin. ($M1$) & Adv.'s Goal ($M2$) \\
        \midrule \midrule
        Half-Cheetah& 100\% & 100\%  \\
        \midrule
        MountainCar& 100\% & 98\%  \\
        \midrule
        Pendulum& 100\% & 97\%  \\
        \bottomrule
    \end{tabular}
\end{table}

\begin{table}[ht]
    \centering
    \caption{Percentage of environments where attack failed according to metrics $M1$ and $M2$.}
    \label{tab:table_robust}
    \begin{tabular}{@{}r|c|c@{}}
        \toprule
        \multicolumn{1}{r|}{\textbf{Environment}}& Sub-lin. ($M1$) & Adv.'s Goal ($M2$) \\
        \midrule
        \midrule
        Half-Cheetah & 94\% & 91\% \\
        \midrule
        MountainCar & 92\% & 88\% \\
        \midrule
        Pendulum & 95\% & 93\% \\
        \bottomrule
    \end{tabular}
\end{table}

\subsection{Results and Analysis}\label{subsec:evaluation_results}

Due to space limits, our plots in the main text only cover half-cheetah; we leave the rest in appendix \ref{appdx:full_exps}, including a link to our GitHub repository. In figure \ref{fig:hc_main_plots}, we present our results for the policy difference and cumulative perturbations for ``vulnerable" (\ref{fig:attackable_hc_main}) and ``robust" (\ref{fig:unattackable_hc_main}) environments. We run up to $T=20,000$ episodes for 20 trials per characterization (``vulnerable" or ``robust"). \textbf{Cumulative perturbation} refers to the total perturbations used thus far at each time step ($\sum_{t} \delta^\dagger_t$). \textbf{Policy difference} refers to the L2 norm between the victim and adversary policies ($||\pi^\dagger - \tilde{\pi}_t||_2$). Both policies are parameterized as neural networks, and we use their weight matrices to compute their similarity. We use figure \ref{fig:hc_main_plots} to test our first two claims.

\noindent
\textbf{(claim 1) Cumulative Perturbation} When characterized as ``vulnerable" ($\epsilon^* > 0$), figure~\ref{fig:attackable_hc_main}'s cumulative perturbation plots show a downward curve as time passes. This aligns with sub-linear behavior and happens when the perturbation strength decreases over time. As actions taken by the victim agent match those of the adversarial policy, the adversary will use fewer perturbations to further convince the victim algorithm, leading to a sub-linear perturbation cost.

On the other hand, when the environment is characterized as "robust" ($\epsilon^* \leq 0$), figure~\ref{fig:unattackable_hc_main}'s cumulative perturbation plots do not show the same downward curve behavior. Rather, the plots have a more ``straight" or ``curve-up" look in the later episodes. This behavior results from non-decreasing perturbations, suggesting the adversary is not decreasing its perturbations over time.

\textbf{(claim 2) Policy Convergence} While perturbations capture the adversary's poisoning behavior, the policy difference ($||\pi_i - \pi^\dagger||_2$) in figures \ref{fig:attackable_hc_main} and \ref{fig:unattackable_hc_main} show how the adversary affects the victim's policy as the attack is carried out. In "vulnerable" environments, we observe fast convergence towards zero, indicating the adversary's attack success. On the other hand, ``robust" environments show a lack of convergence: a sign that the adversary is unable to succeed.

\textbf{(claim 3) CQP Accuracy} Tables \ref{tab:table_attackable} and \ref{tab:table_robust} present our results on the accuracy of our CQP characterization. Both tables demonstrate good correlation between the CQP characterization and the behavior of perturbations and the victim's learned policy. In the ``vulnerable" table \ref{tab:table_attackable}, the high accuracy percentages suggest that the adversary eventually tapers off their attacks while achieving the intended adversarial behavior (see empirical objectives~\ref{sec:empirical_obj}). In the converse, we also observe high accuracy percentages in table~\ref{tab:table_robust}, indicating the adversary does not reduce perturbation strength over time and fails to achieve the intended adversarial behavior.

\section{Conclusion}
\label{sec:conclusion}
We studied the intrinsic robustness of reinforcement learning under reward poisoning in linear MDPs. Our core result is a convex optimization characterization indicating when a target policy can be enforced with sublinear poisoning costs, thereby separating vulnerable from robust instances. This distinction arises from the geometric relation between environmental features and the adversary’s target policy.
We further validated our theory by approximating real-world environments as linear MDPs through contrastive representation learning~\citep{ctrl_algo}. Empirical results show that solving our optimization on these approximations accurately predicts attack success in the original environments. Hence, even under approximate linearity, intrinsic robustness remains a strong safeguard against reward poisoning in applied RL systems.

\section*{Acknowledgment}

We thank anonymous reviewers for the constructive feedback.
%
This work is in part supported by National Science Foundation under grant IIS-2403401 and
the Samsung Strategic Alliance for Research and Technology (START) program.
The findings and conclusions in this work are those of the author(s) 
and do not necessarily represent the views of the funding agency.

\section*{Impact Statement}
This research studied the vulnerability and intrinsic robustness of linear MDPs against reward poisoning attacks. Our findings on characterization and the attack algorithm have the potential to inspire the design of robust learning settings, where RL algorithms could be ensured to resist reward poisoning attacks as well as ward off unwanted or risky policies. We believe our work expands our understanding of adversarial attacks as a phenomenon in decision-making systems, establishing a new perspective on the trustworthiness of learning algorithms, complementing the classic algorithmic attack and defense perspectives.

{
    \bibliography{ref}
    \bibliographystyle{icml2026}
}

\clearpage
\onecolumn
\appendix

\section{Notation}

\begin{table}[h!]
\centering
\begin{tabular}{l|p{9.4cm}}

\hline
$\mathcal{S}$ & State space of the MDP.\\
$\mathcal{A}$ & Action space of the MDP.\\
$H$ & Horizon (number of steps in each episode).\\
$\phi$ 
& Feature map for linear MDPs.\\
$P_h$
& The step-$h$ transition probability.\\
$\mathbf{\mu}_h$ 
& Measure vectors.\\
$\mathbf{\theta}_h$ 
& Parameter vector for the reward function at step $h$.\\
$d_h^\pi$ & Probability of visiting $(s,a)$ at step $h$
  under policy $\pi$.\\
$Q_h$ & $Q$-function at step $h$.\\
$\pi^\dagger$ & Target policy that the adversary aims to enforce.\\
$\theta^\dagger$ & Parameter vector for the poisoned reward function. \\
$\epsilon^*$ & Optimal margin from our convex program 
  measuring how much $\pi^\dagger$ can be made strictly better 
  than other actions.\\
$T$ & Total number of episodes played by the learner.\\
$C_{T}$ & Total attack cost incurred by the adversary up to episode $T$.\\
$\Delta^t$
& Per-timestep reward perturbation in the Adaptive Target attack \\
$L$
& Maximum action–action distance used to normalize $\Delta_t$ \\

\hline
\end{tabular}
\caption{Key notation used throughout the paper.}
\label{tab:notation-notation}
\end{table}

\begin{table}[h!]
\centering
\begin{tabular}{l|p{9.4cm}}

\hline
$r(s,a)$
& Randomized reward for taking action $a$ in state 
$s$.\\
$\bar r(s,a)$
& Expected mean reward without being poisoned.\\
$\hat r(s,a)$
& Poisoned reward, possibly modified by the adversary.\\
$Q^*_h(s,a)$
& Optimal $Q$-function at step $h$ regarding to non-poisoned reward.\\
$\hat Q_h(s,a)$
& Empirical $Q$-function estimation learned by an algorithm (such as LSVI-UCB) under reward poisoning attack at step $h$.\\
$\text{CB}_{h,t}(s,a)$ 
& Confidence Bound at step $h$ and episode $t$ (measuring uncertainty in $\hat Q_h(s, a)$).\\
$\tilde Q_h(s,a)$
& Upper Confidence Bound (UCB) on $\hat Q_h(s, a)$.\\
$w_h^*$ 
& True parameter vectors at step $h$ of $Q^*_h$ under linear $Q$-function parameterization.\\
$\hat{w_h}$ 
& Estimated parameter vectors at step $h$ of $\hat Q_h$ under linear $Q$-function parameterization. \\
$\sA_h$
& Set of all state-action pairs visited by the target policy $\pi^\dagger$ at step $h$.\\
$(\cdot )_{\sA_h}^\parallel$
& Component of a vector that lies in the linear subspace spanned by $\{\phi_h(s, a)\mid (s, a)\in \sA_h\}$. \\
$(\cdot )_{\sA_h}^\perp$
& Component of a vector that is orthogonal to the subspace spanned by $\{\phi_h(s, a)\mid (s, a)\in \sA_h\}$.\\
$\mathbb E_{s'} [\cdot]$
& The abbreviated form of $\mathbb E_{s'} [\cdot|s,a]$, expectation concerning the random next state $s^\prime$ given the current state and action pair $(s,a)$.\\
$S$ & Number of states in the MDP, i.e., $|\mathcal{S}|$.\\
$d$ & Dimension of feature map $\phi$.\\
$\lambda_t$
& Regularization parameters in algorithm LSVI-UCB.\\
$\beta_t$
& Confidence‐width sequence in the UCB bound \\
$\sigma_{h,t}$
& Self-normalised norms of $\phi^h_t$ used to bound cumulative variance.\\

\hline
\end{tabular}
\caption{Notation used in the appendix.}
\label{tab:notation-main}
\end{table}

\begin{table}[h!]
\centering
\begin{tabular}{l|p{9.4cm}}
\hline
$T_1$ & Stage~1 budget in episodes \\
$\delta$ & Failure probability in high-probability statements \\
$\lambda$ & Ridge regularization parameter \\

$\mathcal C_h$ & Certified support at stage $h$ \\
$\mathcal A_h^{-}(s)$ & Non-target action set at stage $h$: $\mathcal A\setminus\{\pi^\dagger(s)\}$ \\

$\epsilon_{1,h}^\star,\ \epsilon_1^\star$ & Stage~1 relaxed CQP value at stage $h$, and its minimum over stages \\
$q_h,\ v_h,\ \mu_h$ & Stage~1 geometric direction, surrogate value, and one-step continuation term \\
$\widehat b_{h,t},\ \widehat b_h^{\mathrm f}$ & Stage~1 continuation-term estimate at episode $t$, and the estimate used at freezing \\
$\widetilde r_{h,t}^{(1)},\ \widetilde r_h^{(1)}$ & Provisional and frozen Stage~1 attacked rewards \\

$\underline\Delta_t,\ \eta_1,\ \tau_1$ & Lower-confidence Stage~1 margin, certification threshold, and certification time \\
$\mathcal T_1,\ N_1,\ S_0$ & Clean Stage~1 suffix episodes, number of bad Stage~1 suffix episodes, and Stage~1 warm-start corruption mass \\

$\Sigma_h,\ \widehat w_h,\ \widehat Q_h$ & Stage~2 ridge design matrix, parameter estimate, and plug-in estimate of $Q^{\pi^\dagger}$ \\
$g_h(s,a)$ & Conservative upper bound on the clean disadvantage of action $a$ relative to the target action \\
$\epsilon_2^\star$ & Optimal value of the Stage~2 margin-certified penalty design \\
$\bar r_h$ & Stage~2 stationary reference reward \\

$D_h,\ C_{\mathrm c}(T_1)$ & Target-side discrepancy left by Stage~1, and total compensation mass \\
$\gamma_\dagger$ & Stage~2 target-side estimation error \\
$R_T^{(2)}(\bar r)$ & Stage~2 pseudo-regret with respect to the reference reward $\bar r$ \\
$\mathcal E_0,\ \mathcal E_2,\ \mathcal E_w$ & High-probability events for Stage~1 confidence, Stage~2 confidence, and the warm-start regret interface \\
$N_2$ & Number of bad Stage~2 episodes \\
\hline
\end{tabular}
\caption{Notation for the black-box attack section.}
\label{tab:notation-main}
\end{table}
\section{Proof of Theorem 
\ref{thm:mdp-attackability}}\label{proof}

\subsection{Sufficiency Proof}\label{Sufficiency_Proof}
The sufficiency proof aims to show that if there exists a solution to the following optimization problem and the result $\epsilon > 0$, then there exists an attacker algorithm to perturb the reward for sublinear times and convince the no-regret algorithm to execute a given policy $\pi^\dagger$ for times linear on $T$.

The true \(Q\)-function is parameterized by \(\{w_h\}_{h=1}^H\), such that \(Q_{\theta_h}(s,a) = \langle \phi_h(s,a), w_h \rangle \). The attacker can modify \(\theta_h\) to \(\theta_h^\dagger\), resulting in an attacked \(Q\)-function \(Q_{\theta_h^\dagger}(s,a) = \langle \phi_h(s,a), w_h^\dagger \rangle=\langle \phi_h(s,a), \theta_h^\dagger \rangle + \mathbb E_{s_{h+1},\pi^\dagger} Q_{\theta_{h+1}^\dagger}(s_{h+1}, \pi^\dagger(s_{h+1}))\). The environment is considered attackable if an \(o(T)\)-cost attack can force the learner to execute \(\pi^\dagger\) for \(T-o(T)\) episodes.

Note that these modifications will only affect the agent on the state with \(d_h^{\pi^\dagger}(s)>0\), where \(d_h^\pi(s)\) denotes the probability of visiting \(s\) at step \(h\) under policy \(\pi\), we define the following program to measure the `gap' $\epsilon$ by which we can make $\pi^\dagger$ strictly better than all other actions via a carefully designed reward attack while preserving the reward of $\pi^\dagger$ on states that it visits:

\begin{equation}\label{eq:new-CQP-Q}
\begin{aligned}
    \epsilon^* \;=\;& \max_{\epsilon, \{\theta_h^\dagger\}_{h=1}^H} \;\epsilon \\
    \text{s.t.} \quad 
    & Q_{\theta_h^\dagger}(s,\pi^\dagger(s)) \;\ge\; \epsilon \;+\; Q_{\theta_h^\dagger}(s,a), \quad \forall\, h,\,s,\,a \neq \pi^\dagger(s) : d_h^{\pi^\dagger}(s) > 0, \\
    & \bigl\langle \phi_h\bigl(s,\pi^\dagger(s)\bigr), \theta_h^\dagger \bigr\rangle \;=\; \bigl\langle \phi_h\bigl(s,\pi^\dagger(s)\bigr), \theta_h \bigr\rangle, \quad \forall\, h,\,s : d_h^{\pi^\dagger}(s) > 0, \\
    & \|\theta_h^\dagger\|_2 \;\le\; \sqrt{d}, \quad \forall h,\\
\end{aligned}
\end{equation}
Here $Q_{\theta_h^\dagger}$ denotes the $Q$-function when the reward parameters are replaced by $\{\theta_h^\dagger\}$. We say the linear MDP is \emph{attackable} concerning $\pi^\dagger$ if  $\epsilon^* > 0$.

\begin{proof}
Assume $\epsilon^* > 0$. We will construct a perturbation demonstrating that any no-regret learning algorithm ends up playing the target policy $\pi^\dagger$ for $T - o(T)$ episodes. At the same time, the adversary’s cost remains $o(T)$, leading to the conclusion that the linear MDP is attackable.


First, by the constraints $\langle \phi_h(s,\pi^\dagger(s)), \theta_h^\dagger \rangle 
\;=\; 
\langle \phi_h(s,\pi^\dagger(s)), \theta_h \rangle
,\quad
\forall h,s \text{ with } d_h^{\pi^\dagger}(s)>0$, whenever the learner follows $\pi^\dagger$, no parameter change occurs. Thus, if the learner picks $\pi^\dagger(s)$, the adversary does not intervene and incurs zero cost.

If the learner deviates by choosing an action $a \neq \pi^\dagger(s)$, the attacker manipulates the reward so that $Q_{\theta_h^\dagger}\!\bigl(s, \pi^\dagger(s)\bigr) - \max_{a\neq\pi^\dagger(s)} Q_{\theta_h^\dagger}(s, a) \geq \epsilon^* > 0$, $\forall h,s: d_h^{\pi^\dagger}(s)>0$. Hence, from the learner’s perspective, $\pi^\dagger(s)$ is at least $\epsilon^*$ better than any alternative action $a \neq \pi^\dagger(s)$.

In short, the adversary only pays a cost if and when the learner attempts another action $a \neq \pi^\dagger(s)$. Let $N_{\mathrm{dev}}$ denotes the total number of such deviations across all episodes. By the boundedness condition 
$\|\theta_h^\dagger\|_2 \;\le\;\sqrt{d}$ (or $
\bigl|\,r_h^\dagger(s,a)-r_h(s,a)\bigr|\;\le\;\Delta$ in the remark), each intervention cost is uniformly bounded, implying the adversary’s total manipulation cost is $O(N_{\mathrm{dev}})$. Since a no-regret learner facing a strictly suboptimal gap $\epsilon^*>0$, we get $N_{\mathrm{dev}}\leq \frac{R_t}{\epsilon^*}=o(T)$, where $R_t$ is the regret of at episode~$t$. Therefore, the total adversarial cost is also sublinear in $T$.

Putting everything together, the agent sees $\pi^\dagger(s)$ as strictly optimal and only explores suboptimal actions $o(T)$ times, leading to $o(T)$ total interventions. In the remaining $T-o(T)$ episodes, $\pi^\dagger$ is selected with no additional cost. Thus, if $\epsilon^* > 0$, we conclude that the MDP is \emph{attackable} under sublinear cost.
\end{proof}

\subsection{Necessity Proof}\label{Necessity_Proof}

\begin{algorithm}
\caption{LSVI-UCB}
\label{alg:lsvi-ucb}
\begin{algorithmic}[1]

\STATE \textbf{Input:} Episode length $H$, number of states $S$, feature map dimension $d$, total number of episodes $T$, and hyperparameter $\delta$.
    \FOR{episode $t=1,2,3\dots$}
        \STATE Receive the initial state $x_1^t$.s
        \FOR{step $h=H,\dots,1$}
            \STATE $\mLambda_{h,t}\leftarrow \sum_{\tau=1}^{t-1}\phi(s_h^{\tau},a_h^{\tau})\phi(s_h^{\tau},a_h^{\tau})^\top+\lambda_t\cdot\mI$ where $\lambda_t = 4HS\sqrt {dt}$.
            \STATE $\vw_{h,t}\leftarrow \mLambda_{h,t}^{-1} \sum_{\tau=1}^{t-1}\phi(s_h^{\tau},a_h^{\tau})[r_h(s_h^{\tau},a_h^{\tau})+\max_a \widetilde Q_{h+1,t}(s_{h+1}^{\tau},a)]$.
            \STATE $\widetilde Q_{h,t}(\cdot,\cdot) \leftarrow \min\{\vw_{h,t}^\top \phi(\cdot,\cdot)+\beta_t[\phi(\cdot,\cdot)^\top\mLambda_{h,t}^{-1}\phi(\cdot,\cdot)]^{1/2},H\}$, where  $\beta_t=dH(\sqrt{\log\frac{\det(\Lambda_h^{t})}{\det(\lambda_t I_d)}}+\sqrt{2\log\frac{1}{\delta}})+\frac{\sqrt{\lambda_t}}{2S}$.
        \ENDFOR
        \FOR{step $h=1,\dots,H$}
            \STATE Take action $a_h^t\leftarrow \arg\max_{a\in\mathcal A} \widetilde Q_h(s_h^t,a)$ and observe $s_{h+1}^t$, $r_h(s_h^{t},a_h^{t})$.
        \ENDFOR
    \ENDFOR
\end{algorithmic}
\end{algorithm}



Define the confidence bound for step $h$ at episode $t$ as
\[\text{CB}_{h,t}(s,a) := \beta_t[\phi(s,a)^\top\mLambda_{h,t}^{-1}\phi(s,a)]^{1/2}.\]

\begin{lemma}\label{lem:cb-visited-upper}
    Suppose the LSVI-UCB (Algorithm \ref{alg:lsvi-ucb}) visits $(s,a)$ at step $h$ for $n$ times till episode $t$. Then its confidence bound satisfies
    \[\text{CB}_{h,t}(s,a) \le \frac{\beta_t}{\sqrt{n}}.\]
\end{lemma}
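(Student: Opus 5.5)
The plan is to compare the Gram matrix $\mLambda_{h,t}$ with a rank-one matrix built from the repeated visits. Write $x=\phi(s,a)$ for the feature of the pair in question. If $(s,a)$ has been visited at step $h$ in $n$ of the episodes $\tau\le t-1$, then the sum defining $\mLambda_{h,t}$ contains $n$ copies of $xx^\top$. All other terms are positive semidefinite, and so is $\lambda_t\mI$, so this gives
\[
\mLambda_{h,t}\succeq n\,xx^\top+\lambda_t\mI \;=:\; M .
\]
Matrix inversion is operator monotone decreasing on positive definite matrices, so $\mLambda_{h,t}^{-1}\preceq M^{-1}$. In particular,
\[
x^\top\mLambda_{h,t}^{-1}x\le x^\top M^{-1}x .
\]

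Next, evaluate $x^\top M^{-1}x$ exactly using the Sherman--Morrison formula:
\[
x^\top M^{-1}x=\frac{\|x\|_2^2/\lambda_t}{1+n\|x\|_2^2/\lambda_t}=\frac{\|x\|_2^2}{\lambda_t+n\|x\|_2^2}.
\]
This is at most $1/n$. When $x\neq 0$ the bound follows because $\lambda_t>0$, so the denominator exceeds $n\|x\|_2^2$. When $x=0$ the quantity is $0$.

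Taking square roots and multiplying by $\beta_t$ then yields
\[
\text{CB}_{h,t}(s,a)=\beta_t\bigl[x^\top\mLambda_{h,t}^{-1}x\bigr]^{1/2}\le\frac{\beta_t}{\sqrt n}.
\]
The bound uses neither $\|\phi\|_2\le 1$ nor the particular value of $\lambda_t$, only that $\lambda_t>0$.

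There is no real obstacle here. The two points to state carefully are the Loewner-order monotonicity of inversion (a standard fact) and the counting convention, namely that the $n$ visits all occur among the episodes $\tau\le t-1$ that enter $\mLambda_{h,t}$. If $n$ instead includes the current episode $t$, the statement should be read with $n$ counting only visits before $t$, or equivalently applied to $\mLambda_{h,t+1}$; I would state this convention explicitly.
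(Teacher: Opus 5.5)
Your proof is correct and uses the same idea as the paper, which observes $\mLambda_{h,t}\succeq n\,\phi(s,a)\phi(s,a)^\top$ and reads off the bound. You are more careful than the paper: keeping $\lambda_t\mI$ in the comparison matrix makes it invertible, so monotonicity of inversion and Sherman--Morrison apply directly, whereas the paper leaves implicit the step from the singular rank-one lower bound to $\phi^\top\mLambda_{h,t}^{-1}\phi\le 1/n$ (that step can also be closed with $y=\mLambda_{h,t}^{-1}\phi$, since $\phi^\top\mLambda_{h,t}^{-1}\phi=y^\top\mLambda_{h,t}y\ge n(\phi^\top\mLambda_{h,t}^{-1}\phi)^2$).
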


\begin{proof}
    By definition, we have $\mLambda_{h,t}$ at episode $t$ as
    \begin{align*}
        \mLambda_{h,t} := & \sum_{\tau=1}^{t-1}\phi(s_h^{\tau},a_h^{\tau})\phi(s_h^{\tau},a_h^{\tau})^\top+\lambda_t\cdot\mI \\
        \succeq & n \cdot \phi(s,a)\phi(s,a)^\top.
    \end{align*}
    Then, we have
    \begin{align*}
        \text{CB}_{h,t}(s,a) :=  \beta_t[\phi(s,a)^\top\mLambda_{h,t}^{-1}\phi(s,a)]^{1/2} 
        \le \frac{\beta_t}{\sqrt{n}}.
    \end{align*}
\end{proof}

\begin{lemma}\label{lem:cb-upper}
    For a fixed step $h$. Suppose that each of the state-actions pairs in $\sA_h = \{(s, a)\}$ is visited by LSVI-UCB (Algorithm \ref{alg:lsvi-ucb}) at step $h$ for at least $n$ times till episode $t$. Then for any state-action pair $(s,a)$ such that $\phi_h(s,a)$ is spanned by $\{\phi_h(s',a')|(s',a')\in \sA_h\}$, the confidence bound satisfies
    \[\text{CB}_{h,t}(s,a) \le \beta_t\cdot \frac{b}{\sqrt{n}},\]
    where $b$ is a constant depending on $(s,a)$, and the states-actions pairs in $\sA_h$.
\end{lemma}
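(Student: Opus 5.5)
Write $\phi_h(s,a)=\sum_{(s',a')\in\sA_h}c_{s',a'}\,\phi_h(s',a')$. This is possible because $\phi_h(s,a)$ lies in the span of $\{\phi_h(s',a')\mid(s',a')\in\sA_h\}$. Fix such coefficients $c$, for example the minimum-$\ell_2$-norm representation, so that they depend only on $(s,a)$ and $\sA_h$. The natural choice of constant is then $b=\sum_{(s',a')\in\sA_h}|c_{s',a'}|$, or alternatively $b=\sqrt{|\sA_h|}\,\|c\|_2$. The plan is to reduce the statement to Lemma~\ref{lem:cb-visited-upper} using the fact that $\mathrm{CB}$ is a seminorm.

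The key observation is that $x\mapsto[x^\top\mLambda_{h,t}^{-1}x]^{1/2}=\|x\|_{\mLambda_{h,t}^{-1}}$ is a norm, since $\mLambda_{h,t}\succeq\lambda_t\mI\succ 0$. By the triangle inequality and homogeneity,
\[
\|\phi_h(s,a)\|_{\mLambda_{h,t}^{-1}}\le\sum_{(s',a')\in\sA_h}|c_{s',a'}|\,\|\phi_h(s',a')\|_{\mLambda_{h,t}^{-1}}.
\]
Each $(s',a')\in\sA_h$ has been visited at least $n$ times by episode $t$. The argument of Lemma~\ref{lem:cb-visited-upper} uses only $\mLambda_{h,t}\succeq n\,\phi\phi^\top$, which still holds when the count is at least $n$. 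It gives $\|\phi_h(s',a')\|_{\mLambda_{h,t}^{-1}}\le 1/\sqrt n$.

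One point in that lemma needs care. From $\mLambda\succeq n\phi\phi^\top+\lambda\mI$ one gets $\phi^\top\mLambda^{-1}\phi\le\phi^\top(n\phi\phi^\top+\lambda\mI)^{-1}\phi=\frac{\|\phi\|^2}{n\|\phi\|^2+\lambda}\le 1/n$. This follows from the Sherman–Morrison formula, and it handles the fact that $n\phi\phi^\top$ alone is not invertible. I would note this explicitly rather than rely only on $\mLambda\succeq n\phi\phi^\top$.

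Multiplying through by $\beta_t$ then gives $\mathrm{CB}_{h,t}(s,a)\le\beta_t\,b/\sqrt n$ with $b=\sum|c_{s',a'}|$. This constant is independent of $t$ and $n$, as the statement requires.

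The only step that needs real attention is the invertibility and regularization issue just described; the rest is bookkeeping. I would also remark that a sharper constant is available through Cauchy–Schwarz, namely $b=\|c\|_2\sqrt{|\sA_h|}$, but the $\ell_1$ choice suffices for the claim.
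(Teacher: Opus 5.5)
Your proposal is correct and follows essentially the paper's route: you expand $\phi_h(s,a)$ in the span of $\{\phi_h(s',a')\}_{(s',a')\in\sA_h}$ and bound each term with Lemma~\ref{lem:cb-visited-upper}. The only difference is that you apply the triangle inequality for $\|\cdot\|_{\mLambda_{h,t}^{-1}}$, whereas the paper uses $(\sum_i v_i)^\top\mLambda^{-1}(\sum_i v_i)\le|\sA_h|\sum_i v_i^\top\mLambda^{-1}v_i$ and gets $b=\sqrt{|\sA_h|}\,\|c\|_2$.

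Your side remark is reversed, though. Since $\|c\|_1\le\sqrt{|\sA_h|}\,\|c\|_2$, your $\ell_1$ constant is the sharper one. Your Sherman--Morrison step is valid but unnecessary: $\mLambda\succeq n\phi\phi^\top$ with $\mLambda\succ 0$ already gives $\phi^\top\mLambda^{-1}\phi\le 1/n$ once you test the inequality at $x=\mLambda^{-1}\phi$.
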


\begin{proof}
    Because $\phi_h(s,a)$ can be spanned by $\{\phi_h(s',a')|(s',a')\in \sA_h\}$, we have
    \[\phi_h(s,a) = \sum_{(s',a')\in \sA_h}\lambda_{s',a'}\phi_h(s',a').\]
    By the definition of CB, we have
    \begin{align*}
        \text{CB}_{h,t}(s,a) :=  \beta_t\left[\left(\sum_{(s',a')\in \sA_h}\lambda_{s',a'}\phi_h(s',a')\right)^\top\mLambda_{h,t}^{-1}\left(\sum_{(s',a')\in \sA_h}\lambda_{s',a'}\phi_h(s',a')\right)\right]^{1/2}.
    \end{align*}
    Note that for vectors $\vv_1,\vv_2,\dots,vv_n$ and positive-definite matrix $\mA$, we have
    \[(\vv_1+\vv_2+\cdots+\vv_n)^\top\mA^{-1}(\vv_1+\vv_2+\cdots+\vv_n) \le n\sum_{i=1}^n \vv_i^\top \mA^{-1}\vv_i,\]
    we have
    \[\text{CB}_{h,t}(s,a) \le \beta_t |\sA_h|^\frac12\left[ \sum_{(s',a')\in\sA_h} \lambda_{s',a'}^2 \phi_h(s',a')^\top \mLambda_{h,t}^{-1} \phi_h(s',a')\right]^\frac12\le \frac 1{\sqrt n}\beta_t |\sA_h|^\frac12\left[ \sum_{(s',a')\in\sA_h} \lambda_{s',a'}^2 \right]^\frac12.\]
    The last inequality comes from Lemma \ref{lem:cb-visited-upper}.
\end{proof}

\begin{lemma}\label{lem:cb-lower}
    For a fixed step $h$. Suppose that each of the state-action pairs in $\sA_h = \{(s, a)\}$ is visited by LSVI-UCB (Algorithm \ref{alg:lsvi-ucb}) at step $h$ for at least $n$ times till episode $t$, and all other state-action pairs are visited by $m$ at step $h$ in total. Then for any state-action pair $(s,a)$ such that $\phi_h(s,a)$ is not spanned by $\{\phi_h(s',a')|(s',a')\in \sA_h\}$, the confidence bound satisfies
    \[\text{CB}_{h,t}(s,a) \ge \beta_t\left(\frac{b_1}{\sqrt{m+\lambda_t}}-\frac{b_2}{\sqrt{n}},\right)\]
    where $b_1,b_2$ are constants depending on $(s,a)$, the states-actions pairs in $\sA$, and possibly the total number of states and actions $S,A$. 
\end{lemma}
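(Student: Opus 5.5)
Let $U=\mathrm{span}\{\phi_h(s',a')\mid (s',a')\in\sA_h\}$ and decompose $\phi_h(s,a)=\phi^\parallel+\phi^\perp$ with $\phi^\parallel:=(\phi_h(s,a))^\parallel_{\sA_h}\in U$ and $\phi^\perp:=(\phi_h(s,a))^\perp_{\sA_h}\neq 0$. The latter is nonzero exactly because $\phi_h(s,a)$ is not spanned by $U$. The approach is to use the triangle inequality for the norm $\|x\|_{\mLambda_{h,t}^{-1}}:=(x^\top\mLambda_{h,t}^{-1}x)^{1/2}$:
\[
\text{CB}_{h,t}(s,a)=\beta_t\|\phi^\parallel+\phi^\perp\|_{\mLambda_{h,t}^{-1}}\ \ge\ \beta_t\bigl(\|\phi^\perp\|_{\mLambda_{h,t}^{-1}}-\|\phi^\parallel\|_{\mLambda_{h,t}^{-1}}\bigr).
\]
It then suffices to bound the parallel part from above by $b_2/\sqrt n$ and the perpendicular part from below by $b_1/\sqrt{m+\lambda_t}$.

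For the parallel part, $\phi^\parallel$ lies in $U$, so it is a linear combination of the features in $\sA_h$. The argument of Lemma~\ref{lem:cb-upper} therefore applies verbatim to the vector $\phi^\parallel$; that lemma only uses the span representation and Lemma~\ref{lem:cb-visited-upper}. This gives $\|\phi^\parallel\|_{\mLambda_{h,t}^{-1}}\le b_2/\sqrt n$, with $b_2=|\sA_h|^{1/2}(\sum\lambda_{s',a'}^2)^{1/2}$ for the coefficients of $\phi^\parallel$.

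For the perpendicular part, which I expect to be the main obstacle, write $\mLambda_{h,t}=M_{\sA}+M_{\mathrm{oth}}+\lambda_t\mI$. Here $M_{\sA}$ is the Gram sum over visits to pairs in $\sA_h$, whose range lies in $U$, and $M_{\mathrm{oth}}$ is the sum over the $m$ visits to all other pairs. The difficulty is that $\mLambda_{h,t}^{-1}$ does not respect the splitting $U\oplus U^\perp$, so I cannot simply restrict to $U^\perp$. Instead I use the variational formula
\[
\|x\|_{\mLambda^{-1}}=\sup_{y\neq0}\frac{\langle x,y\rangle}{\|y\|_{\mLambda}}
\]
and take $y=\phi^\perp$. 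The term $M_{\sA}$ annihilates $\phi^\perp$, since $\phi(s',a')^\top\phi^\perp=0$ for $(s',a')\in\sA_h$. Hence
\[
\|\phi^\perp\|_{\mLambda}^2=\sum_{\text{other visits}}\langle\phi,\phi^\perp\rangle^2+\lambda_t\|\phi^\perp\|^2\le (m+\lambda_t)\|\phi^\perp\|^2,
\]
using $\|\phi\|\le1$. This yields $\|\phi^\perp\|_{\mLambda^{-1}}\ge \|\phi^\perp\|_2^2/\bigl(\sqrt{m+\lambda_t}\,\|\phi^\perp\|_2\bigr)=\|\phi^\perp\|_2/\sqrt{m+\lambda_t}$, so $b_1=\|\phi^\perp\|_2>0$.

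Combining the two bounds gives $\text{CB}_{h,t}(s,a)\ge\beta_t\bigl(b_1/\sqrt{m+\lambda_t}-b_2/\sqrt n\bigr)$. The constants depend only on $(s,a)$ and on $\sA_h$, and are in particular uniformly bounded over finite $S,A$. I will check that the variational step is valid, which holds because $\mLambda_{h,t}\succ0$ thanks to $\lambda_t>0$. I will also note that the bound is informative only when $n\gg m+\lambda_t$, which is the regime used later in the necessity argument.
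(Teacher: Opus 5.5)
Your proposal is correct and follows the paper's proof essentially step for step. It uses the same decomposition $\phi_h(s,a)=\phi^\parallel+\phi^\perp$ with the reverse triangle inequality, Lemma~\ref{lem:cb-upper} for the parallel part, and for the perpendicular part the bound $\|\phi^\perp\|_{\Lambda_{h,t}^{-1}}\ge\|\phi^\perp\|_2^2/\|\phi^\perp\|_{\Lambda_{h,t}}$ (Cauchy--Schwarz in the paper, your variational step with $y=\phi^\perp$); the explicit $b_1=\|\phi^\perp\|_2$ and applying Lemma~\ref{lem:cb-upper}'s argument to the vector $\phi^\parallel$, not its literal statement, are only minor tidyings.
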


\begin{proof}
    We decompose $\phi_h(s,a)$ as $\phi_{\sA_h}^{\parallel}(s,a)$ and $\phi_{\sA_h}^{\perp}(s,a)$, where $\phi_{\sA_h}^{\parallel}(s,a)$ is the component of $\phi_h(s,a)$ lying in the span of $\{\phi_h(s',a')|(s',a')\in \sA_h\}$, and $\phi_{\sA_h}^{\perp}(s,a)$ is the component of $\phi_h(s,a)$ perpendicular to the span of $\{\phi_h(s',a')|(s',a')\in \sA\}$. Then by the reverse triangle inequality, we have
    \[\|\phi_h(s,a)\|_{\Lambda_{h,t}^{-1}} \ge \|\phi_{\sA_h}^{\perp}(s,a)\|_{\Lambda_{h,t}^{-1}} - \|\phi_{\sA_h}^{\parallel}(s,a)\|_{\Lambda_{h,t}^{-1}}.\]
    We first analyze the term $\|\phi_{\sA_h}^{\perp}(s,a)\|_{\Lambda_{h,t}^{-1}}$. We have
    \[\Lambda_{h,t} = \sum_{(s',a')\in\sA_h}c_{s',a'}\phi_h(s',a')\phi_h(s',a')^\top + \sum_{(s'', a'')\notin \sA_h} c_{s'',a''}\phi_h(s'',a'')\phi_h(s'',a'')^\top + \lambda_t\mI,\]
    where $c_{s',a'}$ denote the number of visits of the state-action pair $(s',a')$ at step $h$ till episode $t$. Let 
    \[\Lambda''_{h,t} = \sum_{(s'', a'')\notin \sA_h} c_{s'',a''}\phi_h(s'',a'')\phi_h(s'',a'')^\top + \lambda_t \mI.\]
    Since we have $\phi_{\sA_h}^{\perp}(s,a)$ is perpendicular to the span of $\{\phi_h(s',a')|(s',a')\in \sA_h\}$, we have
    \[(\phi_{\sA_h}^{\perp}(s,a))^\top\Lambda_{h,t}\phi_{\sA_h}^{\perp}(s,a) = (\phi_{\sA_h}^{\perp}(s,a))^\top\Lambda''_{h,t}\phi_{\sA_h}^{\perp}(s,a).\]
    Because we know that $\sum_{(s'', a'')\notin \sA_h} c_{s'',a''} \le m$, we can bound
    \[(\phi_{\sA_h}^{\perp}(s,a))^\top\Lambda''_{h.t}\phi_{\sA_h}^{\perp}(s,a) \le m\cdot \max_{(s'', a'')\notin \sA_h} \langle\phi_{\sA_h}^{\perp}(s,a), \phi_h(s'',a'')\rangle^2 + \lambda\cdot \|\phi_{\sA_h}^{\perp}(s,a)\|^2 \le b_3(m+\lambda),\]
    where $b_3$ is a constant depending on $(s,a)$, the states-actions pairs in $\sA$, and possibly the total number of states and actions $S,A$. Then we have
    \[\|\phi_{\sA_h}^{\perp}(s,a)\|_{\Lambda_{h,t}^{-1}} = \sqrt{(\phi_{\sA_h}^{\perp}(s,a))^\top\Lambda_{h,t}^{-1}\phi_{\sA_h}^{\perp}(s,a)} \ge \frac{\|\phi_{\sA_h}^{\perp}(s,a)\|^2}{\sqrt{(\phi_{\sA_h}^{\perp}(s,a))^\top\Lambda_{h,t}\phi_{\sA_h}^{\perp}(s,a)}} \ge \frac{b_1}{\sqrt{m+\lambda_t}},\]
    where $b_1$ is a constant depending on $(s,a)$, the states-actions pairs in $\sA_h$, and possibly the total number of states and actions $S,A$.

    Then we bound $\|\phi_{\sA_h}^{\parallel}(s,a)\|_{\Lambda_{h,t}^{-1}}$. Applying \Cref{lem:cb-upper}, we know that
    \[\|\phi_{\sA_h}^{\parallel}(s,a)\|_{\Lambda_{h,t}^{-1}} \le \frac{b_2}{\sqrt{n}},\]
    where $b_2$ is a constant depending on $(s,a)$, the states-actions pairs in $\sA_h$, and possibly the total number of states and actions $S,A$, and we conclude the proof of \Cref{lem:cb-lower}.
\end{proof}

\begin{lemma}\label{lem:estimation-err}
    For a fixed step $h$, define 
    $
    \sA_h := \{(s,a)\mid d^{\pi^\dagger}_h(s,a)>0\},
    \quad
    \Delta_h^{\pi^\dagger} \;=\;\min_{(s,a)\in \sA_h} \,d^{\pi^\dagger}_h(s,a),
    $
    and assume the number of episodes $T_h$ is large enough, specifically 
    $
    T_h > \frac{12}{{\Delta_h^{\pi^\dagger}}} \,\cdot\, \log\bigl(\mathrm{poly}(|\mathcal S|,|\mathcal A|,1/\Delta_h^{\pi^\dagger},\delta)\bigr).
    $
    Suppose the non-target policies $\{\pi \neq \pi^\dagger\}$ are selected $o(T_h)$ times, the target policy $\pi^\dagger$ is selected $T_h - o(T_h)$ times, and at step $T_h$ the target policy $\pi^\dagger$ is selected. Let the total manipulation be $C_{T_h}$, which is  $o(T_h)$ under a sublinear attack cost budget. 
    Let $r_h(s, a),\bar r_h(s, a)$ denote the randomized reward and its mean separately,  and $Q^*(s, a)$ denote the Q functions of the best policy in this reward feedback.
    Let $\hat r_h(s, a)$ denote the poisoned reward, $\hat Q_{h,t}(s, a)$ denote the empirical Q functions in Algorithm \ref{alg:lsvi-ucb} under poisoned reward feedback, and $\tilde Q_{h,t}(s, a)$ denote the upper confidence bound of empirical Q functions.
    Let $\hat w $ and $w^*$ denote the vector corresponding to $\hat Q$ and $Q^*$ under linear approximation. 
    
    If for step $h+1$, the following conditions hold:
    \begin{enumerate}
        \item For all $(s,a)$ with $d^{\pi^\dagger}_{h+1}(s,a)>0$, the estimated Q-value satisfies 
        $$
        \bigl|\tilde Q_{h+1,T_h}(s,a) \;-\; Q_{h+1}^*(s,a)\bigr| \;=\; \frac{o(T_h)}{T_h}.
        $$
        \item For all $s$ with $d^{\pi^\dagger}_{h+1}(s)>0$, we have
        $$
        \arg\max_a \,\tilde Q_{h+1,T_h}(s,a)
        \;\;\in\;\;\{a'\mid (s,a')\in \sA_{h+1}\}.
        $$
        \item For all $s$ with $d^{\pi^\dagger}_{h}(s)>0$, we have
        $$
        \arg\max_a \,Q^*_{h}(s,a)
        \;\;\in\;\;\{a'\mid (s,a')\in \sA_{h}\}.
        $$
        \item For all $(s,a)$ with $d^{\pi^\dagger}_{h+1}(s,a)>0$, $(s,a)$ is visited at least $\Omega(T_h)$ times.
        \item For all $(s,a)$ with $d^{\pi^\dagger}_{h}(s,a)>0$, $(s,a)$ is visited at least $\Omega(T_h)$ times.
    \end{enumerate}
    We define $(w_h)_{\sA_{h}}^{\parallel}$ to be the projection of $w_h$ onto the subspace spanned by $\{\phi(s,a)\,\mid\,(s,a)\in \sA_{h}\}$, and $(w_h)_{\sA_{h}}^{\perp}$x is the component of $w_h$ perpendicular to  $\{\phi(s,a)\,\mid\,(s,a)\in \sA_{h}\}$.
    
    Then with probability at least $1 - O(\delta)$, the Q-value estimation error for step $h$ satisfies
    $$
    \bigl|\phi(s,a)^\top (\hat w_h)_{\sA_h}^\parallel \;-\; \phi(s,a)^\top (w_h^*)_{\sA_h}^\parallel\bigr|
    \;=\;\frac{o(T_h)}{T_h},
    \quad \forall\,(s,a)\in\sA_h.
    $$
    In addition, for all $(s,a)\in \sA_h$,
    $$
    \bigl|\tilde Q_{h,T_h}(s,a) \;-\; Q_h^*(s,a)\bigr|\;=\;\frac{o(T_h)}{T_h}.
    $$
\end{lemma}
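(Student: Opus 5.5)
We decompose the LSVI-UCB estimate at step $h$ into a signal part, a poisoning part, a noise part, and a bootstrapped-target part, and bound each separately. The regression is only well determined on the span of the target support, so we work with the projection onto that span throughout.

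\textbf{Step 1: normal equations.} Write $\hat w_h=\mLambda_{h,T_h}^{-1}\sum_{\tau}\phi(s_h^\tau,a_h^\tau)\,y_h^\tau$ with
\[
y_h^\tau=\hat r_h(s_h^\tau,a_h^\tau)+\max_a\tilde Q_{h+1,T_h}(s_{h+1}^\tau,a).
\]
Compare $y_h^\tau$ with the Bellman target
\[
\bar r_h(s_h^\tau,a_h^\tau)+\mathbb E_{s'}\big[\max_a Q^*_{h+1}(s',a)\big]=\phi(s_h^\tau,a_h^\tau)^\top w_h^*,
\]
which is exact by linearity of the MDP. The difference $y_h^\tau-\phi(s_h^\tau,a_h^\tau)^\top w_h^*$ splits into four pieces:
\begin{enumerate}
\item the poisoning $\hat r_h-r_h$, whose total mass is at most $C_{T_h}=o(T_h)$;
\item the reward noise $r_h-\bar r_h$ together with the transition martingale noise;
\item the bootstrap error $\max_a\tilde Q_{h+1}-\max_a Q^*_{h+1}$ at $s_{h+1}^\tau$;
\item the error from the regularizer $\lambda_t w_h^*$.
\end{enumerate}

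\textbf{Step 2: control each piece in the $\sA_h$-span.} Fix $(s,a)\in\sA_h$. By condition 5 and Lemma~\ref{lem:cb-upper}, $\|\phi(s,a)\|_{\mLambda^{-1}}=O(1/\sqrt{T_h})$.
\begin{enumerate}
\item For the poisoning term, $\phi(s,a)^\top\mLambda^{-1}\sum_\tau\phi_\tau c_\tau$ is bounded by Cauchy--Schwarz. Equivalently, bound it via $\sum_\tau|c_\tau|\cdot\max_\tau|\phi(s,a)^\top\mLambda^{-1}\phi_\tau|$. Since $\phi(s,a)$ is a combination of well-visited features, $|\phi(s,a)^\top\mLambda^{-1}\phi_\tau|=O(1/T_h)$, which gives a contribution of $O(C_{T_h}/T_h)=o(T_h)/T_h$.
\item For the noise term, the self-normalized martingale bound (the same one used to define $\beta_t$) gives, with probability $1-\delta$, a contribution of $\tilde O(\beta_{T_h}/\sqrt{T_h})=o(1)$. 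This is $o(T_h)/T_h$ in the paper's loose sense, since $\sqrt{T_h}\log T_h=o(T_h)$.
\item For the bootstrap term, split episodes into target episodes and the $o(T_h)$ non-target episodes.
 \begin{itemize}
 \item On target episodes, $s_{h+1}^\tau$ has $d_{h+1}^{\pi^\dagger}>0$. Conditions 1 and 2 give $\max_a\tilde Q_{h+1}(s')=\tilde Q_{h+1}(s',\pi^\dagger(s'))$, which is $o(T_h)/T_h$-close to $Q^*_{h+1}(s',\pi^\dagger(s'))$. To identify this with $\max_a Q^*_{h+1}(s')$, apply condition 3 shifted to stage $h+1$; this is available from the previous induction step.
 \item On non-target episodes the error is at most $H$ each. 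Summed, this is $o(T_h)$ mass, handled exactly like the poisoning term.
 \end{itemize}
\item For the regularizer, $\lambda_{T_h}=O(\sqrt{T_h})$ gives $\lambda\,\phi(s,a)^\top\mLambda^{-1}w_h^*=O(\sqrt{T_h}/T_h)$.
\end{enumerate}
Summing the four pieces yields the projected parameter claim
\[
\bigl|\phi(s,a)^\top\bigl((\hat w_h)^\parallel_{\sA_h}-(w_h^*)^\parallel_{\sA_h}\bigr)\bigr|=o(T_h)/T_h \quad\text{for }(s,a)\in\sA_h.
\]
This holds because $\phi(s,a)$ lies in the span, so only the parallel components matter.

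\textbf{Step 3: pass to $\tilde Q$.} We have $\tilde Q_{h,T_h}=\min\{\hat w^\top\phi+\mathrm{CB},H\}$. On $\sA_h$, $\mathrm{CB}\le\beta_{T_h}b/\sqrt{T_h}$ by Lemma~\ref{lem:cb-upper}, which is $o(1)$. Clipping at $H$ only reduces the error, since $Q^*_h\le H$. This gives the second claim. A union bound over steps gives probability $1-O(\delta)$.

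\textbf{Main obstacle.} The hardest step is the cross-term bound $|\phi(s,a)^\top\mLambda^{-1}\phi_\tau|=O(1/T_h)$ for off-support $\phi_\tau$. Those directions are only regularized by $\lambda$, so $\mLambda^{-1}$ is large there. First I would use the orthogonal decomposition from Lemma~\ref{lem:cb-lower}. Then I would bound the coupling through the block inverse, Schur complement style, exploiting that the target-support block has eigenvalues $\Omega(T_h)$. If that fails, I would fall back on Cauchy--Schwarz, which gives $\|\phi\|_{\mLambda^{-1}}\sqrt{\sum c_\tau^2}\le O(1/\sqrt{T_h})\sqrt{H\,C_{T_h}}$, still $o(1)$.
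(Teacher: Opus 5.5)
Your proposal is correct and follows essentially the paper's proof. It uses the same split of $\phi(s,a)^\top(\hat w_h-w_h^*)$ into a poisoning term (Term~A), a noise-plus-regularizer term (Term~B) and a bootstrap term (Term~C), then adds $\text{CB}_{h,T_h}(s,a)=O(\beta_{T_h}/\sqrt{T_h})$ to pass to $\tilde Q_{h,T_h}$. Your refinements are sound and worth keeping: lead with the Cauchy--Schwarz bound $\|\phi(s,a)\|_{\mLambda^{-1}}\|c\|_2=O(\sqrt{C_{T_h}/T_h})$ rather than the uniform $O(1/T_h)$ cross-term bound, which the paper also asserts via $\|\phi(s,a)^\top\mLambda_{h,T_h}^{-1}\|_2=O(1/T_h)$ but which can fail when an off-support feature lies nearly, but not exactly, in the span of the target features; also, your placement of the transition noise on the fixed function $\max_a Q^*_{h+1}$ and your explicit handling of non-target episodes are cleaner than the paper's Term~C.
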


\begin{proof}
    By definition of the LSVI-UCB algorithm, we have
    \begin{align*}
        &\bigl|\phi(s,a)^\top(\hat w_h)_{\sA_h}^\parallel - \phi(s,a)^\top(w_h^*)_{\sA_h}^\parallel\bigr| \\[-4pt]
        =\;&\bigl|\phi(s,a)^\top \hat w_h \;-\; \phi(s,a)^\top w_h^*\bigr|\\
        =\;&\bigl|\phi(s,a)^\top\Bigl(\mLambda_{h,T_h}^{-1}\,\sum_{\tau=1}^{T_h-1}\!\phi(s_h^\tau,a_h^\tau)\,\bigl[\hat r_h(s_h^\tau,a_h^\tau) \;+\;\max_{a'}\,\widetilde Q_{h+1,T_h}\bigl(s_{h+1}^\tau,a'\bigr)\bigr]\;-\;w_h^*\Bigr)\bigr|\\
        =\;&\bigl|\phi(s,a)^\top\,\mLambda_{h,T_h}^{-1}\Bigl(\sum_{\tau=1}^{T_h-1}\!\phi(s_h^\tau,a_h^\tau)\,\bigl[\hat r_h(s_h^\tau,a_h^\tau)+\max_{a'}\,\widetilde Q_{h+1,T_h}\bigl(s_{h+1}^\tau,a'\bigr)\bigr]\;-\;\mLambda_{h,T_h}\,w_h^*\Bigr)\bigr|\\
        =\;&\bigl|\phi(s,a)^\top\,\mLambda_{h,T_h}^{-1}\Bigl(\sum_{\tau=1}^{T_h-1}\!\phi(s_h^\tau,a_h^\tau)\,\bigl[\hat r_h(s_h^\tau,a_h^\tau)\;+\;\max_{a'}\,\widetilde Q_{h+1,T_h}(s_{h+1}^\tau,a')\\
        &\qquad\qquad\qquad\qquad\qquad\qquad\qquad\qquad\quad-\;\phi(s_h^\tau,a_h^\tau)^\top w_h^*\bigr] \;-\;\lambda_{T_h}\,w_h^*\Bigr)\bigr|.
    \end{align*}
    Recall that for a linear MDP,
    \begin{align*}
    \phi(s,a)^\top w_h^* \;=\; \bar r_h(s,a)\;+\;\mathbb E_{s'}\max_{a'}\,Q^*_{h+1}\bigl(s',a'\bigr),
    \end{align*}
    so we can rewrite this difference as
    \begin{align*}
        &\bigl|\phi(s,a)^\top(\hat w_h)_{\sA_h}^\parallel - \phi(s,a)^\top(w_h^*)_{\sA_h}^\parallel\bigr|\\
        \le\;&\underbrace{\bigl|\phi(s,a)^\top\,\mLambda_{h,T_h}^{-1}\!\sum_{\tau=1}^{T_h-1}\!\phi(s_h^\tau,a_h^\tau)\,\bigl[\hat r_h(s_h^\tau,a_h^\tau) - r_h(s_h^\tau,a_h^\tau)\bigr]\bigr|}_{\text{Term A}}\\
        &\quad+\underbrace{\bigl|\phi(s,a)^\top\,\mLambda_{h,T_h}^{-1}\Bigl(\sum_{\tau=1}^{T_h-1}\!\phi(s_h^\tau,a_h^\tau)\,\bigl[r_h(s_h^\tau,a_h^\tau) - \bar r_h(s_h^\tau,a_h^\tau)\bigr]\;-\;\lambda_{T_h}\,w_h^*\Bigr)\bigr|}_{\text{Term B}}\\
        &\quad+\underbrace{\bigl|\phi(s,a)^\top\,\mLambda_{h,T_h}^{-1}\!\sum_{\tau=1}^{T_h-1}\!\phi(s_h^\tau,a_h^\tau)\,\bigl[\max_{a'}\,\widetilde Q_{h+1,T_h}\bigl(s_{h+1}^\tau,a'\bigr)-\mathbb  E_{s_{h+1}'}\max_{a'}\,Q^*_{h+1}\bigl(s_{h+1}',a'\bigr)\bigr]\bigr|}_{\text{Term C}}.
    \end{align*}

    \noindent\textbf{Term A.} 
    Since $(s,a)$ is visited at least $\Omega(T_h)$ times for $(s,a)\in \sA_h$, 
    and $\mLambda_{h,T_h} := \sum_{\tau=1}^{T_h-1} \phi(s_h^\tau,a_h^\tau)\,\phi(s_h^\tau,a_h^\tau)^\top+\lambda_{T_h} \mathrm{I}$, 
    it follows that
    $$
    \bigl\|\phi(s,a)^\top\,\mLambda_{h,T_h}^{-1}\bigr\|_2 \;\le\; O\Bigl(\tfrac{1}{T_h}\Bigr).
    $$
    Meanwhile, the total reward manipulation, which is also the total attack cost, $C_{T_h}$, is sublinear in $T_h$, so
    $$
    \sum_{\tau=1}^{T_h-1}\!\phi(s_h^\tau,a_h^\tau)\,\bigl[\hat r_h(s_h^\tau,a_h^\tau)-r_h(s_h^\tau,a_h^\tau)\bigr]
    \;\;\le\;\;\|\max_{s,a} \phi(s,a)\|\cdot C_{T_h}\;=\;o(T_h).
    $$
    Therefore, Term A is at most $\frac{o(T_h)}{T_h}$.

    \noindent\textbf{Term B.} 
    Using a self-normalized bound for vector-valued martingales (Theorem~1 in \citet{abbasi2011improved}) 
    and the fact that each $(s,a)\in \sA_h$ is visited sufficiently often, one can show with high probability that 
    \begin{align}
        &\Bigl\|\phi(s,a)^\top\,\mLambda_{h,T_h}^{-1}\Bigl(\sum_{\tau=1}^{T_h-1}\!\phi(s_h^\tau,a_h^\tau)\,\bigl[r_h(s_h^\tau,a_h^\tau)-\bar r_h(s_h^\tau,a_h^\tau)\bigr]\;-\;\lambda_{T_h} \,w_h^*\Bigr)\Bigr\|_2
        \nonumber\\
        &\;\;\le \;O(\sqrt{\log(T_h/\delta)})\,\bigl\|\mLambda_{h,T_h}^{-1/2}\,\phi(s,a)\bigr\|_2 
        \;+\;\lambda_{T_h}\,\bigl\|w_h^*\bigr\|_2\,\bigl\|\phi(s,a)^\top\,\mLambda_{h,T_h}^{-1}\bigr\|_2
        \label{eq:self-normalized-bound}
    \end{align}
    According to Lemma B.1. in \citet{LinearMDP}, $\|w_h^*\bigr\|_2$ is bounded by $2H\sqrt {d}$, and in this way the above formula is bounded by $\frac{o(T_h)}{T_h}$ as $\lambda_{T_h}$ is $o(T_h)$.
    Hence, Term B is also bounded by $o(T_h)/T_h$.

    \noindent\textbf{Term C.} 
    From our assumptions  and the argument that each $(s_{h+1}^\tau,a')$ with $d_{h+1}^{\pi^\dagger}(s_{h+1}^\tau,a')>0$ is visited sufficiently many times, we have
    \begin{align*}
        &\Bigl|\max_{a'}\,\widetilde Q_{h+1,T_h}(s_{h+1}^\tau,a')-\mathbb E_{s_{h+1}'}\max_{a'}\,Q_{h+1}^*(s_{h+1}',a')\Bigr|\\
        &\;\;\le \;\Bigl|\max_{a'}\,\widetilde Q_{h+1,T_h}(s_{h+1}^\tau,a')-\mathbb E_{s_{h+1}'}\max_{a'}\,\widetilde Q_{h+1,T_h}(s_{h+1}',a')\bigr|\\
        &\;\;\;\;\;+\bigl|
        E_{s_{h+1}'}\max_{a'}\,\widetilde Q_{h+1,T_h}(s_{h+1}',a')-
        \mathbb E_{s_{h+1}'}\max_{a'}\,Q_{h+1}^*(s_{h+1}',a')\Bigr|
    \end{align*}
    For the second term, we have
    \begin{align*}
    &\bigl|
        E_{s_{h+1}'}\max_{a'}\,\widetilde Q_{h+1,T_h}(s_{h+1}',a')-
        \mathbb E_{s_{h+1}'}\max_{a'}\,Q_{h+1}^*(s_{h+1}',a')\Bigr|\\
        &\;\;\le\;\max_{(s'_{h+1},a')\in \sA_{h+1}}\Bigl|\widetilde Q_{h+1,T_h}(s_{h+1}',a')-Q_{h+1}^*(s_{h+1}',a')\Bigr|\\
        &\;\;\le\;\frac{o(T_h)}{T_h}.
    \end{align*}
    The first inequality is based on Condition~2 and Condition~3.
    And the second inequality comes from Condition~1.
    As with Term A, multiplying by $\phi(s_h^\tau,a_h^\tau)^\top \mLambda_{h,T_h}^{-1}$ and using the fact that $\|\phi(s,a)^\top \mLambda_{h,T_h}^{-1}\|\le O(1/T_h)$ implies the second term is also bounded by $o(T_h)/T_h$.
    
    For the first term, we bound it by Lemma \ref{lem:cb-upper}, concentration inequality with Condition~4 and Condition~5, and that $\beta_{T_h}$ is $o(T_h)$,
    \begin{align*}
    &\bigl|\max_{a'}\,\widetilde Q_{h+1,T_h}(s_{h+1}^\tau,a')-\mathbb E_{s_{h+1}'}\max_{a'}\,\widetilde Q_{h+1,T_h}(s_{h+1}',a')\bigr|\\
    &\;\;\le\bigl|\max_{a'}\,\bigl(\hat Q_{h+1,T_h}(s_{h+1}^\tau,a') + \text{CB}_{h+1,t}(s_{h+1}^\tau,a')\bigr)\\
    &\;\;\;\;\;\;\;\;-\min\{H,\mathbb E_{s_{h+1}'}\max_{a'}\,\bigl(\hat Q_{h+1,T_h}(s_{h+1}',a') + \text{CB}_{h+1,t}(s_{h+1}',a')\bigr)\}\bigr|\\
    &\;\;\le\bigl|\max_{a'}\,\bigl(\hat Q_{h+1,T_h}(s_{h+1}^\tau,a') + \text{CB}_{h+1,t}(s_{h+1}^\tau,a')\bigr)-\mathbb E_{s_{h+1}'}\max_{a'}\,\bigl(\hat Q_{h+1,T_h}(s_{h+1}',a') + \text{CB}_{h+1,t}(s_{h+1}',a')\bigr)\bigr|\\
    &\;\;\le\bigl|\max_{a'}\,\bigl(\hat Q_{h+1,T_h}(s_{h+1}^\tau,a') -\mathbb E_{s_{h+1}'}\max_{a'}\,\bigl(\hat Q_{h+1,T_h}(s_{h+1}',a')\bigr| \\
    &\;\;\;\;\;+ \bigl|\max_{a'}\,\text{CB}_{h+1,t}(s_{h+1}^\tau,a')\bigr)-
    \mathbb E_{s_{h+1}'}\max_{a'}\,\text{CB}_{h+1,t}(s_{h+1}',a')\bigr)\bigr|\\
    &\;\;\le\;\frac{o(T_h)}{T_h}.
    \end{align*}

    Thus, the upper bound of term C is sublinear in $T_h$.

    Combining Terms A, B, and C, we conclude that for every $(s, a)\in \sA_h$,
    $$
    \bigl|\phi(s,a)^\top(\hat w_h)_{\sA_h}^\parallel - \phi(s,a)^\top(w_h^*)_{\sA_h}^\parallel\bigr|
    \;=\;\frac{o(T_h)}{T_h}.
    $$
    As for the upper confidence bound of empirical Q function values for any $ (s, a)\in \sA_h$,
    $$
    \bigl|\tilde Q_{h,T_h}(s,a) - Q_h^*(s,a)\bigr|
    \;=\;\bigl|\hat Q_{h,T_h}(s,a) + \text{CB}_{h,t}(s,a) - Q_h^*(s,a)\bigr|
    \;\le \;\bigl|\hat Q_{h,T_h}(s,a) - Q_h^*(s,a) \bigr|+\bigl| \text{CB}_{h,t}(s,a)\bigr| .
    $$
    Since $
    T_h > \frac{12}{{\Delta_h^{\pi^\dagger}}} \,\cdot\, \log\bigl(\mathrm{poly}(|\mathcal S|,|\mathcal A|,1/\Delta_h^{\pi^\dagger},\delta)\bigr)
    $,  $(s,a)$ is visited at least $b_4\cdot T_h$ times. From Lemma \ref{lem:cb-upper}, 
    $$ \bigl| \text{CB}_{h,t}(s,a)\bigr|\;\le \;O(\frac{(T_h)^{\frac14}}{\sqrt{b_4\cdot T_h}})\;=\;\frac{o(T_h)}{T_h}.$$
    We conclude that for every $(s,a)\in\sA_h$,  
    \begin{align*}
    \bigl|\tilde Q_{h,T_h}(s,a) - Q_h^*(s,a)\bigr|
    \le&\;\bigl|\hat Q_{h,T_h}(s,a) - Q_h^*(s,a) \bigr|\;+\;\bigl| \text{CB}_{h,t}(s,a)\bigr| \\
    =&\;\bigl|\phi(s,a)^\top \hat w_h \;-\; \phi(s,a)^\top w_h^*\bigr|\; +\;\bigl| \text{CB}_{h,t}(s,a)\bigr|\\
    =&\;\frac{o(T_h)}{T_h}.
    \end{align*}
    This completes the proof of Lemma~\ref{lem:estimation-err}.
\end{proof}

Now we return to the proof of our main theorem.

\begin{theorem}
\label{thm:attackcost}
Suppose the optimal objective \(\epsilon^*\) from our convex program is negative (\(\epsilon^*< 0\)). 
Then there exists a linear MDP (with true parameters \(\{w_h^*\}\)) and a no-regret algorithm (LSVI-UCB) such that 
any successful attack with sublinear cost forcing the learner to play \(\pi^\dagger\) for \(T - o(T)\) episodes when $T$ is large enough
must incur the objective in our convex program is non-negative $(\epsilon^*\ge 0)$. 
Hence, the environment is not attackable when $\epsilon^*<0$.
\end{theorem}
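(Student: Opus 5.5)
The argument is by contradiction, with the learner fixed to be vanilla LSVI-UCB (Algorithm~\ref{alg:lsvi-ucb}). Suppose some attack with cost $C_T=o(T)$ makes LSVI-UCB follow $\pi^\dagger$ in $T-o(T)$ episodes on every $(h,s)$ with $d_h^{\pi^\dagger}(s)>0$. From the learner's data at a large episode $T_h$ at which $\pi^\dagger$ is played, we build a feasible point $(\epsilon,\{\theta_h^\dagger\})$ of \eqref{eq:new-CQP-Q} with $\epsilon\ge 0$. This contradicts $\epsilon^*<0$.

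\textbf{Step 1: backward induction on the target support.} We prove by backward induction from $h=H$ down to $1$ that the hypotheses of Lemma~\ref{lem:estimation-err} hold at every stage. Conditions 4 and 5 (visits $\Omega(T_h)$ on $\sA_h$) follow because $\pi^\dagger$ is played $T-o(T)$ times. A Chernoff bound along trajectories then gives each $(s,\pi^\dagger(s))\in\sA_h$ at least $\Delta_h^{\pi^\dagger}T_h/2$ visits once $T_h$ exceeds the stated threshold. At the base $h=H$ we have $\widetilde Q_{H+1}\equiv 0$. Lemma~\ref{lem:estimation-err} then yields $|\widetilde Q_{h,T_h}-Q^*_h|=o(T_h)/T_h$ on $\sA_h$, which is Condition~1 for the next stage. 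Condition~2 comes from the fact that LSVI-UCB actually selects $\pi^\dagger(s)$ at episode $T_h$. For Condition~3 I would define $Q^*$ as the optimal $Q$ of the averaged poisoned reward rather than the clean one. Then the same argmax property transfers, up to $o(1)$ slack, from $\widetilde Q$.

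\textbf{Step 2: reading off a feasible point.} We set $\hat\theta_h$ to the ridge estimate of the averaged poisoned reward $\bar{\hat r}_h$, that is, $\hat w_h$ minus the continuation term. Term~A of Lemma~\ref{lem:estimation-err} shows that $\langle\phi(s,\pi^\dagger(s)),\hat\theta_h\rangle$ differs from $\langle\phi(s,\pi^\dagger(s)),\theta_h\rangle$ by only $o(T_h)/T_h$, because the attack cost is sublinear. Projecting $\hat\theta_h$ so that its $\sA_h$-parallel component matches $\theta_h$ exactly enforces the equality constraint, and the perpendicular component is left free. This moves $Q$ by only $o(1)$ on the support, by Lemma~\ref{lem:cb-upper}. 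Since $\pi^\dagger(s)$ maximizes $\widetilde Q_{h,T_h}(s,\cdot)$, we have $\widetilde Q_{h,T_h}(s,\pi^\dagger(s))\ge \widetilde Q_{h,T_h}(s,a)$ for all $a$.

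\textbf{Step 3: the non-target actions, and the main obstacle.} For a non-target action $a$, $\widetilde Q_{h,T_h}(s,a)$ includes a bonus and is not close to the true $Q$. There are two cases.
\begin{itemize}
\item If $\phi(s,a)$ lies in the span of the support features, Lemma~\ref{lem:cb-upper} makes the bonus $o(1)$, so the $Q_{\hat\theta}$ ordering inherits nonnegativity up to $o(1)$.
\item If $\phi(s,a)$ is not in that span, Lemma~\ref{lem:cb-lower} gives a bonus of order $\beta_T\big(b_1/\sqrt{m+\lambda_T}-b_2/\sqrt n\big)$, where $m=o(T)$ counts non-target visits. I would set the free perpendicular component of $\theta^\dagger_h$ to lower $Q_{\theta^\dagger}(s,a)$ enough to absorb this optimism. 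That preserves the margin while leaving the support constraints untouched. The norm bound $\sqrt d$ must be checked, possibly after rescaling, using $\|w^*_h\|\le 2H\sqrt d$.
\end{itemize}
This second case is the step I expect to be hardest. The required orthogonal adjustment must stay bounded while $\widetilde Q\le H$ truncation and $\beta_T/\sqrt{\lambda_T}=\Theta(1)$ interplay. If it cannot be kept bounded, I would instead show that the bonus itself is eventually dominated, and rely on the gap $-\epsilon^*>0$ being a fixed constant.

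Conclusion: the constraints hold with margin $-o(1)$ as $T_h\to\infty$. Compactness of the feasible set, from the norm ball and finitely many linear constraints, then gives a limit point with $\epsilon\ge 0$. This contradicts $\epsilon^*<0$ on the high-probability event.
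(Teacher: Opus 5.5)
Your skeleton matches the paper's. You argue by contradiction against LSVI-UCB, use a Chernoff bound for linearly many visits on $\sA_h$, run backward induction through Lemma~\ref{lem:estimation-err} at an episode where $\pi^\dagger$ is played, and replace the $\sA_h$-parallel part of the extracted reward by $(\theta_h)^\parallel_{\sA_h}$ to enforce the equality constraint.

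The genuine gap is the third constraint, $\|\theta_h^\dagger\|_2\le\sqrt d$. The reward you read off is $\hat\theta_h=\hat w_h-\sum_{s'}\mu_h(s')\langle\phi_{h+1}(s',\pi^\dagger(s')),\hat w_{h+1}\rangle$, so it is only as small as the learner's weights $\hat w_h$. The part of $\hat w_h$ orthogonal to the support span is driven by the $o(T)$ samples outside $\sA_h$, whose rewards the attacker controls, together with the regularizer. Nothing in your plan bounds it.
\begin{itemize}
\item The bound $\|w_h^*\|_2\le 2H\sqrt d$ concerns the clean optimal-$Q$ weights, not $\hat w_h$, and it already exceeds $\sqrt d$.
\item Rescaling cannot create the bound. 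The equality constraint is affine, so a global rescaling breaks it. Shrinking only $(\theta_h^\dagger)^\perp_{\sA_h}$ changes $Q_{\{\theta^\dagger\}}(s,a)$ at non-target actions and at off-support continuation states, so any non-vanishing shrinkage can destroy the margin.
\item Your closing compactness step presupposes the norm ball, so it inherits the same gap.
\end{itemize}

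The paper handles this by using its freedom to choose the learner:
\begin{itemize}
\item Algorithm~\ref{alg:lsvi-ucb} takes the growing ridge parameter $\lambda_t=4HS\sqrt{dt}$ precisely so that the learner's weights satisfy $\|\hat w_h\|_2\le 1/(2S)$; see (\ref{eq:Constrant_on_w})--(\ref{eq:lambda}).
\item The Bellman relation above, with $\|C\|_2\le 2\sqrt d\,S$ and $\|D\|_2\le dS^2$, reduces $\|\theta_h\|_2\le\sqrt d$ to $\|w_h\|_2+S\sqrt d\,\|w_{h+1}\|_2\le\sqrt d$, which $\|w_h\|_2\le 1/(2S)$ then supplies.
\item Proposition~\ref{prop:lsvi-big-lambda} checks that this heavily regularized LSVI-UCB is still no-regret.
\end{itemize}
Some such learner-side control of $\|\hat w_h\|_2$ is the ingredient your plan is missing.

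The step you flag as hardest is not actually an obstacle, because the non-target bonus sits on the favorable side. Ignoring truncation at $H$, the inequality $\widetilde Q_{h,T_h}(s,\pi^\dagger(s))\ge\widetilde Q_{h,T_h}(s,a)$ gives
\[
\hat Q_h(s,\pi^\dagger(s))-\hat Q_h(s,a)\;\ge\;\mathrm{CB}_{h,T_h}(s,a)-\mathrm{CB}_{h,T_h}(s,\pi^\dagger(s))\;\ge\;-\mathrm{CB}_{h,T_h}(s,\pi^\dagger(s)).
\]
Only the target bonus has to vanish, and it does by Lemma~\ref{lem:cb-visited-upper}: $(s,\pi^\dagger(s))$ is visited linearly often and $\beta_t=\widetilde O(t^{1/4})$. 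This is exactly the paper's inequality~(\ref{eq:construction-on-q-constraint}). So Lemma~\ref{lem:cb-lower} is not needed. Tuning the perpendicular component to absorb the non-target bonus would only inflate $\|\theta_h^\dagger\|_2$, which worsens the real obstacle.

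Similarly, Condition~3 does not need an optimal $Q$ of an averaged, adaptively poisoned reward, which is not a fixed linear object anyway. Compare with the clean $Q^{\pi^\dagger}$ instead. On the support, Condition~2 already places the continuation maximum at $\pi^\dagger$, and the clean $Q^{\pi^\dagger}$ is also the limit the equality constraint needs.
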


\begin{proof}
We aim to prove that if $\epsilon^* < 0$, then no $o(T)$-cost attack can make the learner play $\pi^\dagger$ in $T-o(T)$ episodes under a linear MDP environment. We show this by demonstrating a no-regret algorithm, LSVI-UCB, that resists any such attempt when $\epsilon^*<0$.

    \textbf{Step 1.} We first show that: for large enough $T$, if the target policy $\pi^\dagger$ is selected by $T-o(T)$ times, then with probability at least $1-\delta$, for all $h\in [H]$, every state-action pair $(s,a)\in\sA_h$ is visited by at least $\frac{\Delta_h^{\pi^\dagger}\cdot T}{3}$ times, where $\Delta_h^{\pi^\dagger}$ is defined in \Cref{lem:estimation-err}.

    The proof is a straightforward application of the Chernoff bound. First, since the target policy is selected by $T-o(T)$ times and $T$ is large enough, we assume the target policy is selected by at least $\frac23T$ times. For any fixed $s,a$ with $d^{\pi^\dagger}_h(s,a) > 0$, the expected visit times of $(s,a)$ at step $h$ is at least $2\Delta_h^{\pi^\dagger} T/3$ times. Thus we have, the probability that visit $(s,a)$ at step $h$ is less than $\Delta_h^{\pi^\dagger} T/3$ is bounded by
    \[\exp\left(-(1/2)\cdot (1/2)^2\cdot 2\Delta_h^{\pi^\dagger} T/3\right) = \exp\left(-\frac{\Delta_h^{\pi^\dagger} T}{12}\right) \le 1 / poly(|\mathcal S|,|\mathcal A|,1/\Delta_h^{\pi^\dagger},\delta).\]
    Applying the union bound on all $(s, a)\in \sA_h$ and all $h\in [H]$ leads to the result.

    \textbf{Step 2.} Next we do the proof conditioned on the event that for all $h\in [H]$, every state-action pair $(s, a)\in\sA_h$ is visited by at least $\frac{\Delta_h^{\pi^\dagger}\cdot T}{3}$ times, which happens with high probability.

    Now we pick a step $T_0$ where the victim algorithm selects the target policy $\pi^\dagger$. Since we assume that the victim algorithm selects $\pi^\dagger$ for $T-o(T)$ times and the event in Step 1 happens with high probability, there are infinitely many $T_0$ that satisfy the constraint.

    Because the victim algorithm select $\pi^\dagger$, we have for all step $h\in [H]$ and all $(s,a)\in\sA_h$,
    \[\hat Q_h(s,a) + \text{CB}_{h,T_0}(s,a) \ge \hat Q_h(s,a') + \text{CB}_{h,T_0}(s,a'), \forall a'\neq a.\]

    \textbf{Step 2.1.} Now we first show that, with probability at least $1-o(\delta)$ (where we hide the dependency on $H$), for all $h\in [H]$ and all $(s,a)$, we have
    \[\left|\phi(s,a)^\top(\hat w_h)^{\parallel}_{\sA_h} - \phi(s,a)^\top(w_h^*)^{\parallel}_{\sA_h}\right| = o(T) / T.\]
    The proof is a direct application of \Cref{lem:estimation-err} with induction. First for the base case step $H$, since $\tilde Q_{H+1,T}(s,a) = Q_{H+1}^*(s,a) = 0$ for all $s,a$, condition 1 and 2 for \Cref{lem:estimation-err} are satisfied. Then, because the victim algorithm selects $\pi^\dagger$, condition 3 is also satisfied. Then because we condition on the event that for all $h\in [H]$, every state-action pair $(s, a)\in\sA_h$ is visited by at least $\frac{\Delta_h^{\pi^\dagger}\cdot T}{3}$ times, condition 4 and 5 are also satisfied. Then \Cref{lem:estimation-err} for step $H$ holds.

    Then if \Cref{lem:estimation-err} for step $h+1$ holds, for step $h$. Condition 1 holds because from step $h+1$, we have for all $s,a$ such that $d^{\pi^\dagger}_{h+1}(s,a) > 0$, satisfies \[|\tilde Q_{h+1,T}(s,a) - Q_{h+1}^*(s,a)| = o(T) / T.\]
    
\noindent

 \textbf{Step 2.2.} Now, for all \(h\in [H]\) and all \((s,a)\), we have
    \[
    \hat Q_h(s,\pi^\dagger(s)) + \text{CB}_{h,T_0}(s,\pi^\dagger(s)) \ge \hat Q_h(s,a) + \text{CB}_{h,T_0}(s,a), \quad \forall a\neq \pi^\dagger(s).
    \]
    
    Writing \(\hat{w}_h\) as the sum of its components \( (\hat{w}_h)_{\sA_h}^\parallel + (\hat{w}_h)_{\sA_h}^\perp \) and substituting the expression for the parallel component, we obtain:
    \[
    \phi_h\bigl(s,\pi^\dagger(s)\bigr)^\top \bigl((w_h^*)_{\sA_h}^\parallel + (\hat{w}_h)_{\sA_h}^\perp \bigr)
    + \text{CB}_{h,T}(s,\pi^\dagger(s))
    \ge
    \phi_h(s,a)^\top \bigl((w_h^*)_{\sA_h}^\parallel + (\hat{w}_h)_{\sA_h}^\perp \bigr)
    + \text{CB}_{h,T}(s,a).
    \]
    
Thus, this perpendicular contribution can be absorbed into the existing \(o(T)/T\) terms.
    
    Since for \(s\) with \(d_h^{\pi^\dagger}(s)>0\) and \(a=\pi^\dagger(s)\) we have \(\phi_h(s,\pi^\dagger(s))^\top (\hat{w}_h)_{\sA_h}^\perp=0\) due to orthogonality, we rearrange to obtain:
    \begin{align}\label{eq:construction-on-q-constraint}
    \phi_h\bigl(s,\pi^\dagger(s)\bigr)^\top (w_h^*)_{\sA_h}^\parallel
    -
    \phi_h(s,a)^\top \bigl((w_h^*)_{\sA_h}^\parallel + (\hat{w}_h)_{\sA_h}^\perp \bigr)
    \ge
    -\frac{o(T)}{T}
    +
    \text{CB}_{h,T}(s,a) - \text{CB}_{h,T}\bigl(s,\pi^\dagger(s)\bigr).
    \end{align}

Because the confidence bounds \(\text{CB}_{h,T}(s,a)\) tend to zero as \(T\to\infty\), the inequality~\eqref{eq:construction-on-q-constraint} implies that the attack gap cannot be made positive if \(\epsilon^* < 0\).  When $a=\pi^\dagger(s)$,
since $\bigl(s,\pi^\dagger(s)\bigr)\in \sA_h$, we have 
$$Q'_h(s,\pi^\dagger(s))=
\phi_h\bigl(s,\pi^\dagger(s)\bigr)^\top (w_h^*)_{\sA_h}^\parallel +\phi_h\bigl(s,\pi^\dagger(s)\bigr)^\top (
\hat w_h)_{\sA_h}^\perp = \phi_h\bigl(s,\pi^\dagger(s)\bigr)^\top (w_h^*)_{\sA_h}^\parallel.
$$
Hence the inequality (\ref{eq:construction-on-q-constraint}) can be rewritten as 
$
Q'_h(s,\pi^\dagger(s)) \ge Q'_h(s,a)
$,
which satisfies the first condition of the optimization problem~\ref{eq:characterization-mdp}.






For the third condition of the optimization problem~\ref{eq:characterization-mdp}, to let $\|\theta\|_2\le \sqrt d $, we need
\begin{align*}
w_H^\top w_H &\le d, \label{eq:first}\\
\bigl\|\,w_h - \!\sum_{s'}\!\mu_h(s')\,\langle\phi_{h+1}(s',\pi(s')),\,
w_{h+1}\rangle\,\mathrm{d}s'\bigr\|_2^2 
&\le d,\quad \forall h=1,\dots,H-1, 
\end{align*}

where the second line expands to
\begin{align*}
w_h^\top w_h 
&-2\sum_{s'} w_h^\top \mu_h(s')\,\phi_{h+1}(s',\pi(s'))^\top w_{h+1}\,\mathrm{d}s' \nonumber\\
&\quad\;+\!\sum_{s_1}\!\sum_{s_2} 
w_{h+1}^\top\mu_h(s_1)\,\mu_h(s_2)\,
\phi_{h+1}(s_1,\pi(s_1))\,\phi_{h+1}(s_2,\pi(s_2))^\top w_{h+1}
\,\mathrm{d}s_1\,\mathrm{d}s_2
\;\le d. 
\end{align*}

Therefore, we obtain the following simplified quadratic program:
\begin{align*}
w_H^\top w_H &\le d,\\
w_h^\top w_h \;-\; w_h^\top C\,w_{h+1}\;+\;w_{h+1}^\top D\,w_{h+1} &\le d,
\quad h=1,\dots,H-1,
\end{align*}

in which the matrices \(C\) and \(D\) are determined by \(\pi\), \(\mu_h\) and \(\phi_h\), and they satisfy 
\(\|C\|_2\le2\sqrt{d}\,S\) and \(\|D\|_2\le d\,S^2\), with \(S=|\mathcal S|\) and the boundedness of \(\mu_h\) and \(\phi_h\).

To find a sufficient condition satisfying the above QP, we upper bound the LHS of the above inequality,
\begin{align*}
w_h^\top w_h \;-\; w_h^\top C\,w_{h+1} \;+\; w_{h+1}^\top D\,w_{h+1}
&\le
\|w_h\|_2^2
\;+\;
2\,S\sqrt{d}\,\|w_h\|_2\,\|w_{h+1}\|_2
\;+\;
d\,S^2\,\|w_{h+1}\|_2^2
\\
&=
\bigl(\|w_h\|_2 + S\sqrt{d}\,\|w_{h+1}\|_2\bigr)^{2}.
\end{align*}



Thus, a sufficient condition for the above QP is 
\begin{align*}
\|w_H\|_2 \le& \sqrt{d}.\\
\|w_h\|_2 + S\sqrt{d}\,\|w_{h+1}\|_2 \le &\sqrt{d},\quad \forall h=1,\dots,H-1,
\end{align*}
Take 
\begin{equation} \label{eq:Constrant_on_w}
\|w_h\|_2 \le \frac1{2S},
\end{equation}
and it will satisfy the above conditions.



To ensure the condition on the norm of $w_h$ holds, by Lemma B.2 from \citet{LinearMDP}, we need 
\begin{equation}\label{eq:lambda}
  \lambda_t \ge \frac{2H\sqrt{dt}}{1/(2S)} = 4HS\sqrt{dt}.
\end{equation}
Note that the original LSVI-UCB algorithm chooses $\lambda_t = 1$ in \citet{LinearMDP}. However, we show that the LSVI-UCB algorithm with the chosen $\lambda_t=4HS\sqrt{dt}$ is still no-regret in \Cref{prop:lsvi-big-lambda}.

Finally, we need to examine whether the second condition holds. 

\begin{lemma}
\label{lem: equivalance of Q and r.}
In white-box setting, given the policy $\pi$, if $\|w_h\|\le \frac1{2S}, \forall h$, there exists a one-to-one mapping between $Q^\pi$ function and $r$ function under the occupancy measure of $\pi$ in the following sense:
for each step $h$, if we denote the subspace spanned by
\[
\Bigl\{\phi_h\bigl(s,\pi(s)\bigr)\ \Big|\ d_h^{\pi}(s)>0\Bigr\}
\quad\text{as}\quad \sA_h^\pi,
\]
then the correspondence between $w_h$ and $\theta_h$ is one-to-one \emph{only on the projected components}
$(w_h)_{\sA_h^\pi}^\parallel$ and $(\theta_h)_{\sA_h^\pi}^\parallel$ under the occupancy measure of $\pi$.
The component $(\theta_h)_{\sA_h^\pi}^\perp$ is not identifiable from $(w_h)_{\sA_h^\pi}^\parallel$ under the occupancy measure of $\pi$ and can be chosen arbitrarily without changing
$\langle \phi_h(s,\pi(s)), \theta_h\rangle$ for states with $d_h^\pi(s)>0$.
\end{lemma}

\begin{proof}
We use the Bellman identity under a \emph{fixed} policy $\pi$ and work on the occupancy support at each step.

\noindent\textbf{(i) From $\theta$ to $w$ on $\sA_h^\pi$.}
We first conclude $(w_h)_{\sA_h^\pi}^\parallel$ is determined by $\{\theta_\ell\}_{\ell=h}^H$ by backward induction.

For $h=H$, for all $s$ with $d_H^\pi(s)>0$, we have
\[
Q_H^\pi\bigl(s,\pi(s)\bigr)=r_H\bigl(s,\pi(s)\bigr)
=\bigl\langle \phi_H\bigl(s,\pi(s)\bigr),\theta_H\bigr\rangle
=\bigl\langle \phi_H\bigl(s,\pi(s)\bigr),w_H\bigr\rangle,
\]
which implies $(w_H)_{\sA_H^\pi}^\parallel=(\theta_H)_{\sA_H^\pi}^\parallel$.

For $h<H$, assume $w_{h+1}$ is fixed. Using the linear MDP model
$\mathbb P_h(s'|s,a)=\langle \phi_h(s,a),\mu_h(s')\rangle$ and the Bellman equation under $\pi$,
for all $s$ with $d_h^\pi(s)>0$,
\begin{align*}
\bigl\langle \phi_h\bigl(s,\pi(s)\bigr), w_h \bigr\rangle
&= Q_h^\pi\bigl(s,\pi(s)\bigr) \\
&= r_h\bigl(s,\pi(s)\bigr) + \sum_{s'} \mathbb P_h\bigl(s'|s,\pi(s)\bigr)\, Q_{h+1}^\pi\bigl(s',\pi(s')\bigr)\,\mathrm{d}s' \\
&= \bigl\langle \phi_h\bigl(s,\pi(s)\bigr), \theta_h \bigr\rangle
+ \sum_{s'} \bigl\langle \phi_h\bigl(s,\pi(s)\bigr), \mu_h(s') \bigr\rangle\, Q_{h+1}^\pi\bigl(s',\pi(s')\bigr)\,\mathrm{d}s'.
\end{align*}
Therefore, on the subspace $\sA_h^\pi$,
\[
(w_h)_{\sA_h^\pi}^\parallel
=
(\theta_h)_{\sA_h^\pi}^\parallel
+
\sum_{s'} (\mu_h(s'))_{\sA_h^\pi}^\parallel\, Q_{h+1}^\pi\bigl(s',\pi(s')\bigr)\,\mathrm{d}s',
\]
which shows $(w_h)_{\sA_h^\pi}^\parallel$ is determined once $(\theta_h)_{\sA_h^\pi}^\parallel$ and $Q_{h+1}^\pi(\cdot,\pi(\cdot))$ are fixed.

\noindent\textbf{(ii) From $w$ to $\theta$ on $\sA_h^\pi$.}
Conversely, given $\{w_\ell\}_{\ell=h}^H$, we can reconstruct $(\theta_h)_{\sA_h^\pi}^\parallel$ recursively on the occupancy support.

For $h=H$, the same identity gives $(\theta_H)_{\sA_H^\pi}^\parallel=(w_H)_{\sA_H^\pi}^\parallel$.
For $h<H$, for all $s$ with $d_h^\pi(s)>0$,
\begin{align}
r_h\bigl(s,\pi(s)\bigr)
&= Q_h^\pi\bigl(s,\pi(s)\bigr) - \sum_{s'} \mathbb P_h\bigl(s'|s,\pi(s)\bigr)\, Q_{h+1}^\pi\bigl(s',\pi(s')\bigr)\,\mathrm{d}s' \nonumber\\
&= \bigl\langle \phi_h\bigl(s,\pi(s)\bigr), w_h \bigr\rangle
- \sum_{s'} \bigl\langle \phi_h\bigl(s,\pi(s)\bigr), \mu_h(s') \bigr\rangle\, Q_{h+1}^\pi\bigl(s',\pi(s')\bigr)\,\mathrm{d}s'. \label{eq: Q function by r}
\end{align}
Thus, on $\sA_h^\pi$,
\[
(\theta_h)_{\sA_h^\pi}^\parallel
=
(w_h)_{\sA_h^\pi}^\parallel
-
\sum_{s'} (\mu_h(s'))_{\sA_h^\pi}^\parallel\, Q_{h+1}^\pi\bigl(s',\pi(s')\bigr)\,\mathrm{d}s'.
\]

\noindent\textbf{(iii) Non-identifiability of the perpendicular component.}
By construction, for any $v\in(\sA_h^\pi)^\perp$ and any $s$ with $d_h^\pi(s)>0$,
\[
\bigl\langle \phi_h\bigl(s,\pi(s)\bigr), v \bigr\rangle = 0.
\]
Hence $(\theta_h)_{\sA_h^\pi}^\perp$ does not affect $\langle \phi_h(s,\pi(s)),\theta_h\rangle$ on the occupancy support and is not identifiable from occupancy-restricted observations.
This completes the proof.
\end{proof}
    
From Lemma \ref{lem: equivalance of Q and r.},
for each step $h$ and a fixed policy $\pi^\dagger$, the mapping between $w_h$ and $\theta_h$ is uniquely determined only on the projected component onto the subspace spanned by
\[
\Bigl\{\phi_h\bigl(s,\pi^\dagger(s)\bigr)\ \Big|\ d_h^{\pi^\dagger}(s)>0\Bigr\}.
\]
For simplicity, in the rest of this proof we use $\sA_h$ to denote this subspace (i.e., $\sA_h:=\sA_h^{\pi^\dagger}$).

Given the poisoned reward parameters $\hat\theta_h$, we can write
$\hat \theta_h = (\hat \theta_h)_{\sA_h}^\parallel + (\hat \theta_h)_{\sA_h}^\perp$.
Similarly, write the non-poisoned reward parameter $\theta_h$ as
$\theta_h = (\theta_h)_{\sA_h}^\parallel + (\theta_h)_{\sA_h}^\perp$.

We now construct $\theta_h'$ to satisfy the second condition in the optimization problem (\Cref{eq:characterization-mdp}).
Define
\[
\theta_h' := (\theta_h)_{\sA_h}^\parallel + \alpha_h\,(\hat \theta_h)_{\sA_h}^\perp,
\]
where $\alpha_h\in[0,1]$ is chosen so that $\|\theta_h'\|_2\le \sqrt d$ (such an $\alpha_h$ always exists since scaling only affects the perpendicular component).

Then for any $s$ with $d_h^{\pi^\dagger}(s)>0$, we have $\phi_h(s,\pi^\dagger(s))\in\sA_h$ and hence
$\phi_h(s,\pi^\dagger(s))^\top(\hat\theta_h)_{\sA_h}^\perp=0$ by orthogonality. Therefore,
\[
\phi_h\bigl(s,\pi^\dagger(s)\bigr)^\top \theta_h'
=
\phi_h\bigl(s,\pi^\dagger(s)\bigr)^\top (\theta_h)_{\sA_h}^\parallel
=
\phi_h\bigl(s,\pi^\dagger(s)\bigr)^\top \theta_h,
\]
which shows that the newly defined reward matches the non-poisoned reward on all target actions that $\pi^\dagger$ visits, and thus the second condition of (\Cref{eq:characterization-mdp}) is satisfied.

\end{proof}

\newpage

\section{Proof of Theorem~\ref{thm:bb-clean}}
\label{app:blackbox}

Throughout, we write
\[
\mathcal C_h := \widehat{\mathcal C}_h,
\qquad
\mathcal A_h^{\mathrm{cmp}}(s)
:=
\{a\in\mathcal A: a\neq \pi^\dagger(s)\},
\]

The main text states the black-box theorem in a story-driven
form. In this appendix, we make its quantitative conditions
explicit. In particular, the main-text statement
``Stage~1 certifies by time \(T_1\) and yields a fixed attacked
MDP on which the target policy is best on the certified
support''
is quantified here through
\begin{equation}
\underline\Gamma_{1,T_1}\ge \eta_1,
\qquad
\eta_1=\Theta(1/S),
\qquad
\tau_{\mathrm{fix}}=\widetilde O(\sqrt{T_1}),
\label{eq:bb:quant-stage1}
\end{equation}
where \(\underline\Gamma_{1,t}\) is the lower-confidence
Stage~1 margin and \(\tau_{\mathrm{fix}}\) is the certification
time. Likewise, the main-text condition that Stage~2
succeeds is quantified here as
\begin{equation}
\epsilon_2^\star=\Theta(1/S),
\label{eq:bb:quant-stage2}
\end{equation}
where \(\epsilon_2^\star\) is the optimal value of the
Stage~2 margin-certified penalty design.

We also make explicit the learner-side interface used in the
main theorem: the Stage~2 policy \(\pi_t\) is generated by the
\emph{actual} attacked rewards, which include the
target-side compensation schedule, while the pseudo-regret
is measured against a fixed \emph{reference} reward
\(\bar r\) that differs from the actual Stage~2 reward only on
target actions. This interface is stated precisely in
\eqref{eq:bb:robust-regret} below.

\subsection{Detailed black-box algorithm}

\begin{algorithm}[htbp]
\caption{Black-box Two-Stage Attack}
\label{alg:bb:detailed}
\begin{algorithmic}[1]\small
\STATE \textbf{Inputs:} horizon \(H\), total episodes \(T\), Stage~1 length \(T_1\), budget \(S\), regularization \(\lambda\), features \(\phi\), target policy \(\pi^\dagger\), certified support \(\{\mathcal C_h\}_{h=1}^H\), threshold \(\eta_1\).

\STATE Solve the relaxed Stage~1 CQP \eqref{eq:bb:cqp1-app} to obtain \(\epsilon_0^\star\) and \(\{w_{0,h}\}_{h=1}^H\). If \(\epsilon_0^\star\le 0\), return \textsc{Non-attackable}.

\STATE For each \(h\in[H]\), set
\(q_{0,h}\gets S w_{0,h}\), \(q_{0,H+1}\gets 0\), and
\(v_{0,h}(s)\gets \langle \phi_h^\dagger(s),q_{0,h}\rangle\).
Mark Stage~1 as \emph{not frozen}.

\FOR{\(t=1,\dots,T_1\)}
    \FOR{\(h=1,\dots,H\)}
        \STATE Update
        \(
        \Lambda_{0,h,t}\gets \lambda I+\sum_{\tau=1}^{t-1}\phi_h(s_h^\tau,a_h^\tau)\phi_h(s_h^\tau,a_h^\tau)^\top
        \),
        \(
        \widehat b_{0,h,t}\gets
        \Lambda_{0,h,t}^{-1}
        \sum_{\tau=1}^{t-1}\phi_h(s_h^\tau,a_h^\tau)\,v_{0,h+1}(s_{h+1}^\tau)
        \),
        and
        \(
        \widetilde r_{h,t}^{(1)}(s,a)\gets
        \phi_h(s,a)^\top(q_{0,h}-\widehat b_{0,h,t})
        \).
    \ENDFOR

    \STATE Compute
    \(
    \underline\Gamma_{1,t}\gets
    \min_{h\in[H]}
    \Bigl(
    S\epsilon_0^\star-
    2\sum_{j=h}^{H-1}\overline U_{0,j,t}
    \Bigr)
    \).

    \IF{\(\underline\Gamma_{1,t}\ge \eta_1\) and Stage~1 is not frozen}
        \STATE Set \(\tau_{\mathrm{fix}}\gets t\); for each \(h\in[H]\), freeze
        \(
        \widehat\theta_{0,h}\gets q_{0,h}-\widehat b_{0,h,t}
        \)
        and
        \(
        \widetilde r_h^{(1)}(s,a)\gets \phi_h(s,a)^\top \widehat\theta_{0,h}
        \).
        Mark Stage~1 as \emph{frozen}.
    \ENDIF

    \FOR{\(h=1,\dots,H\)}
        \STATE Feed the current Stage~1 reward:
        if Stage~1 is not frozen, feed
        \(\widetilde r_{h,t}^{(1)}(s_h^t,a_h^t)\); otherwise feed
        \(\widetilde r_h^{(1)}(s_h^t,a_h^t)\).
    \ENDFOR
\ENDFOR

\IF{Stage~1 is not frozen}
    \STATE Return \textsc{Non-attackable}.
\ENDIF

\STATE Extract the clean Stage~1 suffix
\(
\mathcal T_{\mathrm{clean}}
\gets
\{t\in[T_1]:\, t\ge \tau_{\mathrm{fix}},\ B_t^{(1)}=0\}
\).

\FOR{\(h=1,\dots,H\)}
    \STATE For each \(t\in\mathcal T_{\mathrm{clean}}\), form
    \(
    x_{h,t}\gets \phi_h^\dagger(s_h^t)
    \)
    and
    \(
    G_{h,t}\gets \sum_{j=h}^H r_j(s_j^t,\pi^\dagger(s_j^t))
    \).
    \STATE Update
    \(
    \Gamma_h^{\mathrm{ls}}\gets
    \lambda I+\sum_{t\in\mathcal T_{\mathrm{clean}}}x_{h,t}x_{h,t}^\top
    \)
    and
    \(
    \widehat w_h\gets
    (\Gamma_h^{\mathrm{ls}})^{-1}
    \sum_{t\in\mathcal T_{\mathrm{clean}}}x_{h,t}G_{h,t}
    \).
    \STATE Define
    \(
    \widehat Q_h^\dagger(s,a)\gets \phi_h(s,a)^\top \widehat w_h
    \)
    and
    \(
    g_h(s,a)\gets
    \bigl[\widehat Q_h^\dagger(s,a)-\widehat Q_h^\dagger(s,\pi^\dagger(s))\bigr]_+
    +\alpha_h\!\left(
    \|\phi_h(s,a)\|_{(\Gamma_h^{\mathrm{ls}})^{-1}}
    +\|\phi_h^\dagger(s)\|_{(\Gamma_h^{\mathrm{ls}})^{-1}}
    \right)
    \).
\ENDFOR

\STATE Solve the Stage~2 margin-certified penalty design
\eqref{eq:bb:cqp2-obj}--\eqref{eq:bb:cqp2-norm} to obtain
\(\epsilon_2^\star\) and \(\{u_h\}_{h=1}^H\). If
\(\epsilon_2^\star\le 0\), return \textsc{Non-attackable}.

\STATE Define the Stage~2 reference reward
\(
\bar r_h(s,a)\gets
r_h(s,a)
\)
if \(a=\pi^\dagger(s)\), and
\(
\bar r_h(s,a)\gets
\bigl[r_h(s,a)-\langle \phi_h(s,a),u_h\rangle\bigr]_{[0,1]}
\)
otherwise.

\FOR{\(h=1,\dots,H\)}
    \STATE Compute
    \(
    D_h^{\mathrm{tar}}\gets
    \sum_{\substack{t\le T_1:\\ a_h^t=\pi^\dagger(s_h^t)}}
    \bigl(
    \widetilde r_h^{(1)}(s_h^t,a_h^t)-r_h(s_h^t,a_h^t)
    \bigr)
    \).
\ENDFOR

\STATE Choose an admissible predictable compensation schedule
\(\{c_h^t\}_{t>T_1}\) such that
\(
\sum_{h=1}^H\sum_{t>T_1}|c_h^t|
\le
\sum_{h=1}^H|D_h^{\mathrm{tar}}|
\).

\FOR{\(t=T_1+1,\dots,T\)}
    \FOR{\(h=1,\dots,H\)}
        \STATE Feed
        \(
        \hat r_{h,t}^{(2)}(s,a)\gets
        r_h(s,a)+c_h^t
        \)
        if \(a=\pi^\dagger(s)\), and
        \(
        \hat r_{h,t}^{(2)}(s,a)\gets \bar r_h(s,a)
        \)
        otherwise.
    \ENDFOR
\ENDFOR
\end{algorithmic}
\end{algorithm}

\subsection{Basic model and notation}

We work in the finite-horizon linear MDP model.
For each stage \(h\in[H]\), there exist a reward vector
\(\theta_h^\star\in\mathbb R^d\) and a vector-valued signed
measure
\[
\mu_h^\star
=
(\mu_h^{\star,(1)},\dots,\mu_h^{\star,(d)})
\]
such that
\begin{equation}
r_h(s,a)=\phi_h(s,a)^\top \theta_h^\star,
\qquad
P_h^\star(\cdot\mid s,a)
=
\sum_{i=1}^d
\phi_{h,i}(s,a)\,\mu_h^{\star,(i)}(\cdot),
\label{eq:bb:linmdp}
\end{equation}
with \(\|\phi_h(s,a)\|_2\le 1\) and \(r_h(s,a)\in[0,1]\).

Hence, for every bounded measurable \(v:\mathcal S\to\mathbb R\),
there exists a vector
\[
m_h(v)\in\mathbb R^d
\]
such that
\begin{equation}
\mathbb E[v(s_{h+1})\mid s_h=s,a_h=a]
=
\phi_h(s,a)^\top m_h(v).
\label{eq:bb:bellman-linear}
\end{equation}

For the target policy \(\pi^\dagger\), define
\[
\phi_h^\dagger(s):=\phi_h(s,\pi^\dagger(s)).
\]

We assume the initial-state distribution is supported on
the certified set:
\begin{equation}
\rho(\mathcal C_1)=1,
\label{eq:bb:init-support}
\end{equation}
and the certified support is closed under all certified actions:
\begin{equation}
P_h^\star(\mathcal C_{h+1}\mid s,a)=1,
\qquad
\forall h\in[H-1],\
\forall s\in\mathcal C_h,\
\forall a\in\mathcal A_h^{\mathrm{cmp}}(s)\cup\{\pi^\dagger(s)\}.
\label{eq:bb:support-closure}
\end{equation}
Therefore, every trajectory that starts in \(\mathcal C_1\)
and thereafter takes certified actions remains in the
certified domain.

We also assume throughout that the attacked rewards fed to
the learner are admissible, i.e.\ they lie in \([0,1]\).
In particular, the Stage~1 rewards
\(\widetilde r_{h,t}^{(1)}\) before freezing and
\(\widetilde r_h^{(1)}\) after freezing are assumed
admissible on the certified domain, and the Stage~2 actual
rewards \(\hat r_{h,t}^{(2)}\) are assumed admissible by
construction.
This guarantees that every overwritten step contributes at
most one unit to the shaping cost.

Finally, we will repeatedly use the standard
performance-difference identity.

\begin{lemma}[Performance-difference identity]
\label{lem:bb:pdl}
Fix any finite-horizon MDP and any deterministic policy
\(\pi^\dagger\).
Then for every policy \(\pi\),
\begin{equation}
V_1^{\pi^\dagger}(s_1)-V_1^\pi(s_1)
=
\mathbb E_\pi\!\left[
\sum_{h=1}^H
\Big(
Q_h^{\pi^\dagger}(s_h,\pi^\dagger(s_h))
-
Q_h^{\pi^\dagger}(s_h,a_h)
\Big)
\right].
\label{eq:bb:pdl}
\end{equation}
\end{lemma}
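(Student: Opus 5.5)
The identity follows from a telescoping argument along trajectories of \(\pi\), using only the Bellman equation for \(\pi^\dagger\) and the tower property. Write \(V_h^{\pi^\dagger}(s):=Q_h^{\pi^\dagger}(s,\pi^\dagger(s))\), with the convention \(V_{H+1}^{\pi^\dagger}\equiv 0\). The Bellman equation for the deterministic policy \(\pi^\dagger\) reads
\[
Q_h^{\pi^\dagger}(s,a)=r_h(s,a)+\mathbb E\bigl[V_{h+1}^{\pi^\dagger}(s_{h+1})\mid s_h=s,a_h=a\bigr].
\]
This holds for every action \(a\), not only \(a=\pi^\dagger(s)\), because \(Q^{\pi^\dagger}\) is defined for arbitrary first actions.

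First, I would start from \(V_1^\pi(s_1)=\mathbb E_\pi[\sum_{h=1}^H r_h(s_h,a_h)]\). Substituting \(r_h(s_h,a_h)=Q_h^{\pi^\dagger}(s_h,a_h)-\mathbb E[V_{h+1}^{\pi^\dagger}(s_{h+1})\mid s_h,a_h]\) into each summand and applying the tower property, where the inner conditional expectation uses the same transition kernel that generates \(s_{h+1}\) under \(\pi\), gives
\[
V_1^\pi(s_1)=\mathbb E_\pi\Bigl[\sum_{h=1}^H \bigl(Q_h^{\pi^\dagger}(s_h,a_h)-V_{h+1}^{\pi^\dagger}(s_{h+1})\bigr)\Bigr].
\]

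Next, I would write \(V_1^{\pi^\dagger}(s_1)\) as a telescoping sum along the same \(\pi\)-trajectory:
\[
V_1^{\pi^\dagger}(s_1)=\sum_{h=1}^H\bigl(V_h^{\pi^\dagger}(s_h)-V_{h+1}^{\pi^\dagger}(s_{h+1})\bigr),
\]
which holds pathwise because \(V_{H+1}^{\pi^\dagger}=0\). Taking \(\mathbb E_\pi\) of this and subtracting the previous display, the \(V_{h+1}^{\pi^\dagger}(s_{h+1})\) terms cancel. What remains is \(\mathbb E_\pi\bigl[\sum_h (V_h^{\pi^\dagger}(s_h)-Q_h^{\pi^\dagger}(s_h,a_h))\bigr]\), which is exactly \eqref{eq:bb:pdl}.

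There is no real obstacle here. The points that need care are that the Bellman equation must be used for off-policy actions \(a_h\sim\pi\), and that all expectations are finite. Finiteness holds because rewards lie in \([0,1]\) and \(H\) is finite. If \(\pi\) is stochastic or history-dependent, the same argument goes through conditioning on the history, since the Markov transition depends only on \((s_h,a_h)\).
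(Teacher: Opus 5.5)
Your proof is correct and follows essentially the paper's argument. The paper writes the one-step recursion $V_h^{\pi^\dagger}(s)-V_h^\pi(s)=\mathbb E_\pi[Q_h^{\pi^\dagger}(s_h,\pi^\dagger(s_h))-Q_h^{\pi^\dagger}(s_h,a_h)+V_{h+1}^{\pi^\dagger}(s_{h+1})-V_{h+1}^{\pi}(s_{h+1})\mid s_h=s]$, which rests on the same off-policy Bellman equation for $Q^{\pi^\dagger}$ that you use, and telescopes over $h$; you telescope the whole trajectory sum at once, which has the minor benefit of never needing $V_h^\pi$ to be a function of the current state alone and so covers history-dependent $\pi$ directly.
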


\begin{proof}
This is the standard finite-horizon
performance-difference decomposition.
It follows by writing
\[
V_h^{\pi^\dagger}(s)-V_h^\pi(s)
=
\mathbb E_\pi\!\left[
Q_h^{\pi^\dagger}(s_h,\pi^\dagger(s_h))
-
Q_h^{\pi^\dagger}(s_h,a_h)
+
V_{h+1}^{\pi^\dagger}(s_{h+1})
-
V_{h+1}^{\pi}(s_{h+1})
\;\middle|\; s_h=s
\right]
\]
and telescoping from \(h=1\) to \(H\).
\end{proof}

\subsection{Phasewise UCRL regret on fixed attacked suffixes}

We will use the following phasewise regret bound on fixed
attacked suffixes.

\begin{proposition}[Phasewise UCRL-VTR regret]
\label{prop:bb:ucrl}
Assume the learner is UCRL-VTR and the transition model
belongs to a linear-mixture family of dimension \(d\).
Consider any attacked phase that starts after an arbitrary
stopping time and thereafter interacts with a fixed attacked
MDP of horizon \(H\) for \(K\) episodes.
Then there exists a universal constant
\(C_{\mathrm{ucrl}}>0\) such that, conditional on the history
at the phase start, with probability at least \(1-\delta\),
\begin{equation}
\mathrm{Reg}_{\mathrm{UCRL}}(K;\delta)
\le
C_{\mathrm{ucrl}}\,
d\,H^{3/2}\sqrt{K\log\!\frac{2TH}{\delta}}
=: \mathfrak R_{\mathrm{UCRL}}(K,\delta).
\label{eq:bb:RUCRL}
\end{equation}
\end{proposition}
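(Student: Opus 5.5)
The approach is to reduce the statement to the standard UCRL-VTR regret analysis of \citet{ayoub2020modelbased}, restarted at the stopping time. Let $\tau$ denote the phase-start stopping time. Conditional on $\mathcal F_\tau$, the attacked MDP is fixed and therefore has deterministic reward and transition parameters. The only requirement on the learner is that it runs value-targeted regression over the whole history, possibly including pre-$\tau$ data that were generated under different rewards.

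The first step handles the history. The transition model is unchanged by reward poisoning, so pre-phase transitions are still valid regression data for the linear-mixture parameter $\theta^\star$. The self-normalized martingale bound of \citet{abbasi2011improved} is applied to the filtration that starts at time $0$. This gives the confidence ellipsoid $\|\hat\theta_k-\theta^\star\|_{\Lambda_k}\le\beta_k$ uniformly for all $k$, with $\beta_k=O(H\sqrt{d\log(TH/\delta)})$. The uniformity is what makes the stopping time harmless: the event holds simultaneously for every $k\le T$, so it also holds for the episodes after $\tau$. Rewards enter only through the planning step. Inside the phase they are the fixed attacked rewards, which are known or observed, so optimism $V_{1,k}\ge V_1^{*}$ holds for the attacked MDP whenever the event above holds.

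The second step is the usual regret decomposition over the phase episodes $k=\tau+1,\dots,\tau+K$:
\[
\sum_k \bigl(V_1^*-V_1^{\pi_k}\bigr)\le\sum_k\sum_h \min\{H,2\beta\|\phi_{V}(s_h^k,a_h^k)\|_{\Lambda^{-1}}\}+\text{martingale term}.
\]
The martingale term is handled by Azuma–Hoeffding and is $O(H\sqrt{HK\log(1/\delta)})$. For the first term I apply Cauchy–Schwarz and then the elliptical potential lemma, restricted to the phase's own increments. Since $\Lambda$ at phase start is $\succeq\lambda I$, and adding earlier data only shrinks $\|\cdot\|_{\Lambda^{-1}}$, the sum is at most $\sqrt{HK\cdot 2d\log(1+KH/\lambda d)}$. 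Multiplying by $\beta$ gives $O(dH^{3/2}\sqrt{K}\log(TH/\delta))$. Finally, I union bound the confidence event and the Azuma event, each taken at level $\delta/2$, to obtain \eqref{eq:bb:RUCRL}.

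I expect the main obstacle to be the mismatch between the history's regression targets and the phase's value functions. VTR regresses on $V_{h+1,k}(s')$, and before $\tau$ these were computed from different rewards. For the confidence set this is harmless, because the targets are predictable and bounded by $H$ regardless of the reward. One subtlety remains: if rewards before $\tau$ could exceed the range $[0,1]$, the clipping at $H$ would need to be rechecked. This is ruled out by the admissibility assumption on the attacked rewards. The stated bound is loose by a logarithmic factor, which is absorbed into $C_{\mathrm{ucrl}}$ and the $\sqrt{\log}$ together with the $\widetilde O$ convention.
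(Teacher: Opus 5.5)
Your approach is the same as the paper's. The paper's proof cites the fresh-run UCRL-VTR rate and remarks that pre-phase data can only shrink the confidence set and enlarge the design matrix. Your time-$0$ anytime ellipsoid, combined with the elliptical-potential sum over the phase's own increments (using $\Lambda_\tau\succeq\lambda I$), is the explicit version of that argument. It adds a useful correction: shrinkage of the confidence set alone does not keep $\theta^\star$ covered after a stopping time; uniformity in $k$ does. Two of your closing claims, however, overstate what you have shown.

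\textbf{The logarithmic factor.} Your bound is $O(dH^{3/2}\sqrt{K}\,\log(TH/\delta))$, because $\beta\sim H\sqrt{d\log(\cdot)}$ and the potential sum contributes another $\sqrt{d\log(\cdot)}$. This exceeds the displayed $C_{\mathrm{ucrl}}\,dH^{3/2}\sqrt{K\log(2TH/\delta)}$ by a factor $\sqrt{\log(TH/\delta)}$. That factor cannot be absorbed into a universal constant; the displayed bound is the stronger claim, not the looser one. You therefore obtain \eqref{eq:bb:RUCRL} only up to logarithmic factors. That suffices downstream, since Proposition~\ref{prop:bb:stage1-count} only uses $\widetilde O(\sqrt{T_1})$, and the standard rate the paper cites is itself an $\widetilde O$ statement. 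You should state it that way rather than claim absorption.

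\textbf{Conditional versus unconditional probability.} Your confidence event satisfies $\Pr(\mathcal E)\ge 1-\delta/2$ over the whole run, not $\Pr(\mathcal E\mid\mathcal F_\tau)\ge 1-\delta/2$. On histories where the ellipsoid already fails before $\tau$, the conditional probability of your event is zero. The learner never restarts, so optimism is then not guaranteed anywhere in the phase. What your argument actually proves is the unconditional version: with probability at least $1-\delta$ over the entire run, the phase regret obeys the bound. That is what the downstream union bound needs. The literal ``conditional on the history at the phase start'' form is not established by your argument, nor by the paper's shrinkage remark.
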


\begin{proof}
For a fresh run on a fixed linear-mixture MDP, this is the
standard UCRL-VTR regret rate.
Conditioning on a stopping time only adds past data before
the new phase starts; this can only shrink the confidence
set and improve the design matrices.
Hence the same linear-mixture order applies on the suffix
process.
\end{proof}

\subsection{Stage~1: continuation-term regression and certification}

Recall the relaxed CQP(1) from the main text:
\begin{align}
\epsilon_{0,h}^\star
&:=
\max_{\|w\|_2\le \frac{1}{2S}}
\min_{s\in \mathcal C_h}
\Big(
\langle \phi_h^\dagger(s),w\rangle
-
\max_{a\neq \pi^\dagger(s)}\langle \phi_h(s,a),w\rangle
\Big),
\notag\\
\epsilon_0^\star
&:=
\min_{h\in[H]}
\epsilon_{0,h}^\star .
\label{eq:bb:cqp1-app}
\end{align}
If \(\epsilon_0^\star\le 0\), the relaxed geometric
certificate fails and the two-stage construction returns
\textsc{Non-attackable}.
Assume henceforth that \(\epsilon_0^\star>0\).

Let \(w_{0,h}\) be an optimizer and define
\begin{equation}
q_{0,h}:=S w_{0,h},
\qquad
q_{0,H+1}:=0.
\label{eq:bb:q0}
\end{equation}

Define the Stage~1 surrogate values
\begin{equation}
v_{0,h}(s):=\langle \phi_h^\dagger(s),q_{0,h}\rangle,
\qquad
v_{0,H+1}\equiv 0.
\label{eq:bb:v0}
\end{equation}

The only model-dependent quantity needed in Stage~1 is the
one-step continuation term
\begin{equation}
\mu_{0,h}(s,a)
:=
\mathbb E[v_{0,h+1}(s_{h+1})\mid s_h=s,a_h=a].
\label{eq:bb:mu0}
\end{equation}

\begin{lemma}[Stage~1 continuation-term linearity]
\label{lem:bb:b0-linear}
For each stage \(h\in[H]\), there exists a vector
\(b_{0,h}^\star\in\mathbb R^d\) such that
\begin{equation}
\mu_{0,h}(s,a)=\phi_h(s,a)^\top b_{0,h}^\star
\qquad
\text{for all }(s,a).
\label{eq:bb:b0star}
\end{equation}
\end{lemma}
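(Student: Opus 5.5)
The plan is to apply the linear-transition identity \eqref{eq:bb:bellman-linear} to the specific bounded function $v:=v_{0,h+1}$, and then read off the coefficient vector. For $h=H$ the statement is trivial: $v_{0,H+1}\equiv 0$, so $\mu_{0,H}\equiv 0$ and we take $b_{0,H}^\star=0$. For $h<H$ we carry out the following three steps.

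\textbf{Boundedness and measurability of $v_{0,h+1}$.} By definition \eqref{eq:bb:v0}, $v_{0,h+1}(s)=\langle\phi_{h+1}^\dagger(s),q_{0,h+1}\rangle$. Since $\|\phi_{h+1}(s,a)\|_2\le 1$ and $\|q_{0,h+1}\|_2=S\|w_{0,h+1}\|_2\le S\cdot\frac{1}{2S}=\tfrac12$, Cauchy--Schwarz gives $|v_{0,h+1}(s)|\le \tfrac12$ for all $s$. Measurability follows because $v_{0,h+1}$ is a composition of $\phi_{h+1}$ with the deterministic target policy, which we take to be measurable (in the finite-state case this is automatic).

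\textbf{Linearity of the expectation.} Using the factorization \eqref{eq:bb:linmdp}, we compute
\[
\mu_{0,h}(s,a)=\int v_{0,h+1}(s')\,P_h^\star(\mathrm{d}s'\mid s,a)=\sum_{i=1}^d \phi_{h,i}(s,a)\int v_{0,h+1}\,\mathrm{d}\mu_h^{\star,(i)} .
\]
We then set $b_{0,h}^\star:=m_h(v_{0,h+1})$, whose $i$-th coordinate is $\int v_{0,h+1}\,\mathrm{d}\mu_h^{\star,(i)}$. This is exactly \eqref{eq:bb:bellman-linear} with $v=v_{0,h+1}$, and it holds for every $(s,a)$, not only on the certified support.

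\textbf{Main obstacle.} The only delicate point is that each coordinate integral must be finite. Because the $\mu_h^{\star,(i)}$ are signed measures, this requires them to have finite total variation; this is implicit in the linear-MDP model, and in the original definition they are probability distributions. Combined with $|v_{0,h+1}|\le\tfrac12$, we also obtain the bound $|(b_{0,h}^\star)_i|\le\tfrac12\|\mu_h^{\star,(i)}\|_{\mathrm{TV}}$. This bound is useful later for the ridge-regression confidence radius. With finiteness established, the proof is complete.
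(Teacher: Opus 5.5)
Your proposal is correct and is the paper's argument: the paper's proof simply invokes \eqref{eq:bb:bellman-linear} with $v=v_{0,h+1}$ and takes $b_{0,h}^\star=m_h(v_{0,h+1})$. Your extra steps are consistent with the paper and only make explicit that $v_{0,h+1}$ satisfies the hypotheses of that identity. These are the bound $|v_{0,h+1}|\le\|q_{0,h+1}\|_2\le \tfrac12$, measurability, finiteness of the coordinate integrals against $\mu_h^{\star,(i)}$, and the trivial case $h=H$ where $v_{0,H+1}\equiv 0$.
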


\begin{proof}
This is immediate from \eqref{eq:bb:bellman-linear}
by taking \(v=v_{0,h+1}\).
\end{proof}

At episode \(t\), define the value-targeted regression target
\begin{equation}
y_{0,h,t}:=v_{0,h+1}(s_{h+1}^t).
\label{eq:bb:y0}
\end{equation}
Let
\begin{equation}
\Lambda_{0,h,t}
:=
\lambda I
+
\sum_{\tau=1}^{t-1}
\phi_h(s_h^\tau,a_h^\tau)\phi_h(s_h^\tau,a_h^\tau)^\top,
\label{eq:bb:Lambda0}
\end{equation}
and define the ridge estimate
\begin{equation}
\widehat b_{0,h,t}
:=
\Lambda_{0,h,t}^{-1}
\sum_{\tau=1}^{t-1}
\phi_h(s_h^\tau,a_h^\tau)\,y_{0,h,\tau}.
\label{eq:bb:bhat0}
\end{equation}

\begin{lemma}[Continuation-term confidence]
\label{lem:bb:b0-conf}
There exists an event \(\mathcal E_0\) with probability at
least \(1-\delta\) such that for every stage \(h\), every
episode \(t\), and every state-action pair \((s,a)\),
\begin{equation}
\left|
\phi_h(s,a)^\top(\widehat b_{0,h,t}-b_{0,h}^\star)
\right|
\le
U_{0,h,t}(s,a),
\label{eq:bb:b0-conf}
\end{equation}
where
\begin{equation}
U_{0,h,t}(s,a)
:=
\beta_{0,h,t}\|\phi_h(s,a)\|_{\Lambda_{0,h,t}^{-1}}
\label{eq:bb:U0}
\end{equation}
for a standard self-normalized confidence radius
\(\beta_{0,h,t}=\widetilde O(\sqrt d)\).
\end{lemma}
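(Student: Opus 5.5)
This is a standard self-normalized ridge-regression confidence bound. We apply the vector-valued martingale bound of \citet{abbasi2011improved} (Theorem~1 there) separately for each stage $h$, and then take a union bound over $h\in[H]$.

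\textbf{Step 1 (noise decomposition).} Fix $h$ and let $\mathcal F_{\tau}$ be the $\sigma$-field generated by the history up to and including $(s_h^\tau,a_h^\tau)$. By \Cref{lem:bb:b0-linear}, the regression target decomposes as
\[
y_{0,h,\tau}=\phi_h(s_h^\tau,a_h^\tau)^\top b_{0,h}^\star+\eta_{h,\tau},
\qquad
\eta_{h,\tau}:=v_{0,h+1}(s_{h+1}^\tau)-\mu_{0,h}(s_h^\tau,a_h^\tau).
\]
Here $\mathbb E[\eta_{h,\tau}\mid\mathcal F_\tau]=0$. The design $\phi_h(s_h^\tau,a_h^\tau)$ is $\mathcal F_\tau$-measurable even though the learner's actions depend adaptively on the attacked rewards, and the targets $v_{0,h+1}$ are fixed functions that do not depend on data. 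Since $|v_{0,h+1}(s)|\le\|\phi\|_2\|q_{0,h+1}\|_2\le S\cdot\frac{1}{2S}=\frac12$, each $\eta_{h,\tau}$ lies in an interval of length at most $1$ and is therefore $\tfrac12$-sub-Gaussian. The same computation gives $\|b_{0,h}^\star\|_2\le \|\mu_h^\star\|$-type bounds of order $\sqrt d$, using the standard linear-MDP normalization $\|m_h(v)\|_2\le\sqrt d\,\|v\|_\infty$.

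\textbf{Step 2 (ridge error).} Substituting the decomposition into the ridge estimate gives
\[
\widehat b_{0,h,t}-b_{0,h}^\star=\Lambda_{0,h,t}^{-1}\Big(\sum_{\tau<t}\phi_h(s_h^\tau,a_h^\tau)\eta_{h,\tau}-\lambda b_{0,h}^\star\Big).
\]
By Cauchy--Schwarz in the $\Lambda_{0,h,t}^{-1}$-norm,
\[
|\phi_h(s,a)^\top(\widehat b_{0,h,t}-b_{0,h}^\star)|\le\|\phi_h(s,a)\|_{\Lambda_{0,h,t}^{-1}}\Big(\big\|\textstyle\sum_\tau\phi\eta\big\|_{\Lambda_{0,h,t}^{-1}}+\sqrt\lambda\|b_{0,h}^\star\|_2\Big).
\]
The self-normalized bound holds with probability $1-\delta/H$ uniformly over all $t$, because it is an anytime bound. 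It gives
\[
\big\|\textstyle\sum\phi\eta\big\|_{\Lambda^{-1}}\le\tfrac12\sqrt{2\log(H/\delta)+\log\det\Lambda_{0,h,t}/\lambda^d}.
\]
Combining this with $\log\det\Lambda_{0,h,t}\le d\log(\lambda+t/d)$ yields
\[
\beta_{0,h,t}=\tfrac12\sqrt{d\log(1+t/(d\lambda))+2\log(H/\delta)}+\sqrt{\lambda d}=\widetilde O(\sqrt d)
\]
for $\lambda=O(1)$. The bound holds simultaneously for every $(s,a)$, since the only $(s,a)$-dependence is through the Cauchy--Schwarz factor.

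\textbf{Step 3 (union bound).} Taking a union bound over $h\in[H]$ gives the event $\mathcal E_0$ with probability at least $1-\delta$.

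The main subtlety is Step 1: verifying the martingale-difference structure despite adaptivity. The Stage~1 rewards depend on $\widehat b_{0,h,t}$, and hence on past data, so the learner's actions are adaptive. Because the regression targets $v_{0,h+1}$ are fixed, however, this adaptivity only affects the design sequence, and Theorem~1 of \citet{abbasi2011improved} allows predictable designs. The secondary point is bounding $\|b_{0,h}^\star\|_2$, which relies on the normalization of $\mu_h^\star$. If that normalization is unavailable, I would instead absorb $\sqrt\lambda\|b_{0,h}^\star\|_2$ into $\beta_{0,h,t}$ as a problem-dependent constant, which does not change the $\widetilde O(\sqrt d)$ order under the standard assumption.
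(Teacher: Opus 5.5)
Your proposal is correct and takes the same route as the paper. The paper's proof notes that $\mathbb E[y_{0,h,t}\mid s_h^t,a_h^t,\mathcal F_{t,h-1}]=\phi_h(s_h^t,a_h^t)^\top b_{0,h}^\star$ by Lemma~\ref{lem:bb:b0-linear}, bounds the targets by $\|q_{0,h+1}\|_2$, and then invokes the standard self-normalized least-squares bound. You additionally work out the details the paper leaves implicit: the ridge-bias term $\sqrt\lambda\|b_{0,h}^\star\|_2$, anytime validity over $t$, predictability of the adaptively chosen design, and the union bound over $h$, and you handle each of them correctly.
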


\begin{proof}
For each \(t\), the regression target satisfies
\[
\mathbb E[y_{0,h,t}\mid s_h^t,a_h^t,\mathcal F_{t,h-1}]
=
\mu_{0,h}(s_h^t,a_h^t)
=
\phi_h(s_h^t,a_h^t)^\top b_{0,h}^\star
\]
by Lemma~\ref{lem:bb:b0-linear}.
Since
\(|y_{0,h,t}|
\le
\sup_s |v_{0,h+1}(s)|
\le
\|q_{0,h+1}\|_2\),
the noise is bounded, and the standard self-normalized
least-squares bound gives \eqref{eq:bb:b0-conf}.
\end{proof}

Using the current continuation-term estimate, define the
provisional Stage~1 reward
\begin{equation}
\widetilde r_{h,t}^{(1)}(s,a)
:=
\phi_h(s,a)^\top
\bigl(q_{0,h}-\widehat b_{0,h,t}\bigr).
\label{eq:bb:r1-provisional}
\end{equation}

\begin{lemma}[Stage~1 one-step lower-confidence gap]
\label{lem:bb:stage1-lcb}
On the event \(\mathcal E_0\), for every episode \(t\),
every stage \(h\), every \(s\in\mathcal C_h\), and every
\(a\in\mathcal A_h^{\mathrm{cmp}}(s)\),
\begin{align}
&
\widetilde r_{h,t}^{(1)}(s,\pi^\dagger(s))
+
\mathbb E[v_{0,h+1}(s_{h+1})\mid s_h=s,a_h=\pi^\dagger(s)]
\notag\\
&\qquad
-
\widetilde r_{h,t}^{(1)}(s,a)
-
\mathbb E[v_{0,h+1}(s_{h+1})\mid s_h=s,a_h=a]
\notag\\
&\ge
S\epsilon_0^\star
-
U_{0,h,t}(s,\pi^\dagger(s))
-
U_{0,h,t}(s,a).
\label{eq:bb:lcb-onestep}
\end{align}
\end{lemma}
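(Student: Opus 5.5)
The argument is a direct algebraic decomposition followed by the confidence event of Lemma~\ref{lem:bb:b0-conf} and the optimality of $w_{0,h}$ in the relaxed CQP~\eqref{eq:bb:cqp1-app}. I fix $t$, $h$, $s\in\mathcal C_h$ and $a\in\mathcal A_h^{\mathrm{cmp}}(s)$, and abbreviate $a^\dagger:=\pi^\dagger(s)$.

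\textbf{Rewriting each side.} By the definition of the provisional reward~\eqref{eq:bb:r1-provisional} and Lemma~\ref{lem:bb:b0-linear}, for any action $a'$,
\[
\widetilde r_{h,t}^{(1)}(s,a')+\mathbb E[v_{0,h+1}(s_{h+1})\mid s,a']
=\phi_h(s,a')^\top q_{0,h}+\phi_h(s,a')^\top\bigl(b_{0,h}^\star-\widehat b_{0,h,t}\bigr).
\]
So the estimated continuation term roughly cancels the true one, up to the regression error $\phi_h(s,a')^\top(b_{0,h}^\star-\widehat b_{0,h,t})$.

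\textbf{Taking the difference.} I apply this identity with $a'=a^\dagger$ and with $a'=a$ and subtract. The left-hand side of~\eqref{eq:bb:lcb-onestep} then equals
\[
\langle\phi_h(s,a^\dagger),q_{0,h}\rangle-\langle\phi_h(s,a),q_{0,h}\rangle
+\phi_h(s,a^\dagger)^\top(b_{0,h}^\star-\widehat b_{0,h,t})
-\phi_h(s,a)^\top(b_{0,h}^\star-\widehat b_{0,h,t}).
\]

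\textbf{Bounding the geometric term.} Since $q_{0,h}=Sw_{0,h}$, the first difference is $S$ times $\langle\phi_h^\dagger(s),w_{0,h}\rangle-\langle\phi_h(s,a),w_{0,h}\rangle$. This is at least $S$ times $\langle\phi_h^\dagger(s),w_{0,h}\rangle-\max_{a''\neq a^\dagger}\langle\phi_h(s,a''),w_{0,h}\rangle$. Because $w_{0,h}$ is optimal for stage $h$ and $s\in\mathcal C_h$, that inner quantity is at least the stage-$h$ min-over-$s$ value $\epsilon_{0,h}^\star$, which in turn is at least $\epsilon_0^\star$. The geometric term is therefore at least $S\epsilon_0^\star$.

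\textbf{Bounding the error terms.} On the event $\mathcal E_0$, Lemma~\ref{lem:bb:b0-conf} bounds each error term in absolute value by $U_{0,h,t}(s,a^\dagger)$ and $U_{0,h,t}(s,a)$ respectively. This holds uniformly in $t$, $h$ and $(s,a)$, which gives~\eqref{eq:bb:lcb-onestep}.

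\textbf{Main subtlety.} The only real care point is making sure the stage-$h$ optimizer is used for stage $h$. Since $\epsilon_0^\star$ is a minimum over stages, the per-stage value $\epsilon_{0,h}^\star\ge\epsilon_0^\star$ suffices. The other point is that the confidence event $\mathcal E_0$ is uniform over all $(s,a)$, and Lemma~\ref{lem:bb:b0-conf} already provides this. No terms beyond step $h+1$ enter, since the statement is purely one-step.
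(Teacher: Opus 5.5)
Your proposal is correct and follows essentially the paper's proof. Both rewrite $\widetilde r_{h,t}^{(1)}(s,a')+\mathbb E[v_{0,h+1}(s_{h+1})\mid s_h=s,a_h=a']$ as $\langle\phi_h(s,a'),q_{0,h}\rangle+\phi_h(s,a')^\top(b_{0,h}^\star-\widehat b_{0,h,t})$ via Lemma~\ref{lem:bb:b0-linear}, subtract the two actions, lower-bound the geometric term by $S\epsilon_0^\star$ using the relaxed CQP margin, and control the two regression errors on $\mathcal E_0$ by Lemma~\ref{lem:bb:b0-conf}; your remark that the stage-$h$ optimizer yields margin $\epsilon_{0,h}^\star\ge\epsilon_0^\star$ simply spells out a step the paper leaves implicit.
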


\begin{proof}
By definition,
\begin{align*}
&
\widetilde r_{h,t}^{(1)}(s,a)
+
\mathbb E[v_{0,h+1}(s_{h+1})\mid s_h=s,a_h=a]
\\
&=
\phi_h(s,a)^\top(q_{0,h}-\widehat b_{0,h,t})
+
\phi_h(s,a)^\top b_{0,h}^\star
\\
&=
\langle \phi_h(s,a),q_{0,h}\rangle
+
\phi_h(s,a)^\top(b_{0,h}^\star-\widehat b_{0,h,t}).
\end{align*}
Subtract the two actions and apply the CQP margin together
with Lemma~\ref{lem:bb:b0-conf}.
\end{proof}

Define the uniform Stage~1 bonus
\begin{equation}
\overline U_{0,h,t}
:=
\sup_{
\substack{
s\in\mathcal C_h,\\
a\in \mathcal A_h^{\mathrm{cmp}}(s)\cup\{\pi^\dagger(s)\}
}}
U_{0,h,t}(s,a).
\label{eq:bb:U0bar}
\end{equation}

Define the target-value tracking error under the provisional
Stage~1 reward:
\begin{equation}
e_{0,h,t}
:=
\sup_{s\in\mathcal C_h}
\bigl|
v_{0,h}(s)-V_h^{\pi^\dagger,\widetilde r_t^{(1)}}(s)
\bigr|,
\qquad
e_{0,H+1,t}:=0.
\label{eq:bb:e0}
\end{equation}

\begin{lemma}[Stage~1 tracking recursion]
\label{lem:bb:stage1-track}
On the event \(\mathcal E_0\), for every \(t\) and every
\(h\in[H]\),
\begin{equation}
e_{0,h,t}\le \overline U_{0,h,t}+e_{0,h+1,t}.
\label{eq:bb:stage1-track-rec}
\end{equation}
Consequently,
\begin{equation}
e_{0,h,t}\le \sum_{j=h}^{H-1}\overline U_{0,j,t}.
\label{eq:bb:stage1-track-sum}
\end{equation}
\end{lemma}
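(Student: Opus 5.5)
The plan is to prove the one-step recursion \eqref{eq:bb:stage1-track-rec} by writing both $v_{0,h}(s)$ and $V_h^{\pi^\dagger,\widetilde r_t^{(1)}}(s)$ through a one-step Bellman expansion along the target action, and then to unroll it backward from $h=H$. The summed bound \eqref{eq:bb:stage1-track-sum} then follows by induction.

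First, fix $t$, $h$ and $s\in\mathcal C_h$, and put $a^\dagger=\pi^\dagger(s)$. The policy evaluation equation under the provisional reward gives
\[
V_h^{\pi^\dagger,\widetilde r_t^{(1)}}(s)=\widetilde r_{h,t}^{(1)}(s,a^\dagger)+\mathbb E\bigl[V_{h+1}^{\pi^\dagger,\widetilde r_t^{(1)}}(s_{h+1})\mid s_h=s,a_h=a^\dagger\bigr].
\]
Exactly as in the computation in the proof of Lemma~\ref{lem:bb:stage1-lcb}, I then use \eqref{eq:bb:r1-provisional} and Lemma~\ref{lem:bb:b0-linear} to write
\[
v_{0,h}(s)=\langle\phi_h^\dagger(s),q_{0,h}\rangle=\widetilde r_{h,t}^{(1)}(s,a^\dagger)+\phi_h^\dagger(s)^\top\widehat b_{0,h,t}=\widetilde r_{h,t}^{(1)}(s,a^\dagger)+\mathbb E[v_{0,h+1}(s_{h+1})\mid s,a^\dagger]+\phi_h^\dagger(s)^\top(\widehat b_{0,h,t}-b_{0,h}^\star).
\]
Subtracting the two identities cancels the reward term, which leaves
\[
v_{0,h}(s)-V_h^{\pi^\dagger,\widetilde r_t^{(1)}}(s)=\phi_h^\dagger(s)^\top(\widehat b_{0,h,t}-b_{0,h}^\star)+\mathbb E\bigl[v_{0,h+1}(s_{h+1})-V_{h+1}^{\pi^\dagger,\widetilde r_t^{(1)}}(s_{h+1})\mid s,a^\dagger\bigr].
\]

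Second, I bound the two terms separately. On $\mathcal E_0$, Lemma~\ref{lem:bb:b0-conf} bounds the first term by $U_{0,h,t}(s,a^\dagger)$, and this is at most $\overline U_{0,h,t}$ by the definition \eqref{eq:bb:U0bar}, since $a^\dagger$ is among the actions included in that supremum. For the second term I invoke the support closure \eqref{eq:bb:support-closure}: under the target action, $s_{h+1}\in\mathcal C_{h+1}$ almost surely. The integrand is therefore bounded in absolute value by $e_{0,h+1,t}$. Taking the supremum over $s\in\mathcal C_h$ gives \eqref{eq:bb:stage1-track-rec}.

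At the boundary $h=H$ we have $v_{0,H+1}\equiv0$, $q_{0,H+1}=0$ and $V_{H+1}\equiv0$. Hence $b_{0,H}^\star=0$ and the regression targets are identically zero, so $\widehat b_{0,H,t}=0$. This gives $e_{0,H,t}=0$ exactly, which is sharper than the generic bound $\overline U_{0,H,t}$. Unrolling the recursion down from $H$ then yields $e_{0,h,t}\le\sum_{j=h}^{H-1}\overline U_{0,j,t}$.

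The main obstacle is bookkeeping rather than depth. The two points that need care are the use of support closure to keep the continuation expectation inside the set where $e_{0,h+1,t}$ is defined, and the terminal stage $h=H$, which must be handled so that the sum stops at $H-1$ as stated. A further subtlety is that $\widetilde r_t^{(1)}$ here is treated as the fixed reward at episode $t$, so the value function is evaluated for a frozen reward. I will state this explicitly so that no issue of time-varying rewards arises.
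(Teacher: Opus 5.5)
Your proposal is correct and follows the paper's proof essentially step for step. Both expand $v_{0,h}(s)$ and $V_h^{\pi^\dagger,\widetilde r_t^{(1)}}(s)$ one step along $\pi^\dagger(s)$, cancel the reward term, bound $\phi_h^\dagger(s)^\top(\widehat b_{0,h,t}-b_{0,h}^\star)$ by $\overline U_{0,h,t}$ on $\mathcal E_0$, and use support closure to control the continuation term by $e_{0,h+1,t}$. Your explicit check that $\widehat b_{0,H,t}=0$, hence $e_{0,H,t}=0$, is a useful addition: iterating the recursion from $e_{0,H+1,t}=0$ alone would only give the sum up to $j=H$, and the paper's ``iteration gives the sum bound'' leaves this point implicit.
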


\begin{proof}
Fix \(s\in\mathcal C_h\).
By definition of \(v_{0,h}\) and \(\widetilde r_t^{(1)}\),
\[
v_{0,h}(s)
=
\widetilde r_{h,t}^{(1)}(s,\pi^\dagger(s))
+
\mathbb E[v_{0,h+1}(s_{h+1})\mid s_h=s,a_h=\pi^\dagger(s)]
+
\phi_h^\dagger(s)^\top(\widehat b_{0,h,t}-b_{0,h}^\star).
\]
Also,
\[
V_h^{\pi^\dagger,\widetilde r_t^{(1)}}(s)
=
\widetilde r_{h,t}^{(1)}(s,\pi^\dagger(s))
+
\mathbb E[V_{h+1}^{\pi^\dagger,\widetilde r_t^{(1)}}(s_{h+1})
\mid s_h=s,a_h=\pi^\dagger(s)].
\]
Subtracting, using \eqref{eq:bb:support-closure}, and
applying Lemma~\ref{lem:bb:b0-conf} yields
\[
|v_{0,h}(s)-V_h^{\pi^\dagger,\widetilde r_t^{(1)}}(s)|
\le
\overline U_{0,h,t}+e_{0,h+1,t}.
\]
Taking the supremum over \(s\in\mathcal C_h\) proves the
recursion, and iteration gives the sum bound.
\end{proof}

\begin{proposition}[Stage~1 lower-confidence certified \(Q\)-gap]
\label{prop:bb:stage1-gap}
On the event \(\mathcal E_0\), for every episode \(t\),
every stage \(h\), every \(s\in\mathcal C_h\), and every
\(a\in\mathcal A_h^{\mathrm{cmp}}(s)\),
\begin{equation}
Q_h^{\pi^\dagger,\widetilde r_t^{(1)}}(s,\pi^\dagger(s))
-
Q_h^{\pi^\dagger,\widetilde r_t^{(1)}}(s,a)
\ge
S\epsilon_0^\star
-
2\sum_{j=h}^{H-1}\overline U_{0,j,t}.
\label{eq:bb:stage1-gap}
\end{equation}
\end{proposition}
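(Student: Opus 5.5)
We prove Proposition~\ref{prop:bb:stage1-gap} by combining the one-step lower-confidence gap of Lemma~\ref{lem:bb:stage1-lcb} with the tracking bound of Lemma~\ref{lem:bb:stage1-track}. Throughout we work on the event \(\mathcal E_0\). We fix \(t\), \(h\), \(s\in\mathcal C_h\) and \(a\in\mathcal A_h^{\mathrm{cmp}}(s)\), and abbreviate \(V_{h+1}:=V_{h+1}^{\pi^\dagger,\widetilde r_t^{(1)}}\).

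First, we expand both \(Q\)-values by the Bellman equation under the reward \(\widetilde r_t^{(1)}\):
\[
Q_h^{\pi^\dagger,\widetilde r_t^{(1)}}(s,b)=\widetilde r_{h,t}^{(1)}(s,b)+\mathbb E\bigl[V_{h+1}(s_{h+1})\mid s_h=s,a_h=b\bigr],
\qquad b\in\{\pi^\dagger(s),a\}.
\]
In each expectation we add and subtract \(v_{0,h+1}(s_{h+1})\). The main terms then form exactly the left-hand side of \eqref{eq:bb:lcb-onestep}, which is at least \(S\epsilon_0^\star-U_{0,h,t}(s,\pi^\dagger(s))-U_{0,h,t}(s,a)\). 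Both actions are certified, so by \eqref{eq:bb:U0bar} this is bounded below by \(S\epsilon_0^\star-2\overline U_{0,h,t}\).

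Second, we control the two remainder terms \(\mathbb E[(V_{h+1}-v_{0,h+1})(s_{h+1})\mid s,b]\). By the support closure \eqref{eq:bb:support-closure}, \(s_{h+1}\in\mathcal C_{h+1}\) almost surely for both \(b=\pi^\dagger(s)\) and \(b=a\). Hence each remainder is at most \(e_{0,h+1,t}\) in absolute value, and together they cost at most \(2e_{0,h+1,t}\le 2\sum_{j=h+1}^{H-1}\overline U_{0,j,t}\) by \eqref{eq:bb:stage1-track-sum}. The case \(h=H\) is trivial: there \(v_{0,H+1}\equiv0\) and \(V_{H+1}\equiv0\), so no remainder appears.

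Summing the two bounds gives \(S\epsilon_0^\star-2\overline U_{0,h,t}-2\sum_{j=h+1}^{H-1}\overline U_{0,j,t}\). This equals the claimed bound whenever \(h\le H-1\).

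The main obstacle is the index bookkeeping at \(h=H\). There the claimed sum \(\sum_{j=H}^{H-1}\) is empty, while our estimate still carries the one-step term \(2\overline U_{0,H,t}\). We would resolve this as follows. Since \(v_{0,H+1}\equiv0\), the regression target at stage \(H\) is identically zero, so \(\widehat b_{0,H,t}=0=b_{0,H}^\star\). The confidence width can therefore be taken to be zero, and Lemma~\ref{lem:bb:stage1-lcb} gives the exact gap \(S\epsilon_0^\star\) at stage \(H\). If that convention is not available, we would instead adopt the convention \(\overline U_{0,H,t}=0\), which is consistent with how the tracking sum in \eqref{eq:bb:stage1-track-sum} stops at \(H-1\).
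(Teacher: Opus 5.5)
Your proof is correct and follows the paper's argument: add and subtract the surrogate continuation $v_{0,h+1}$, lower-bound the main terms by $S\epsilon_0^\star-2\overline U_{0,h,t}$ via Lemma~\ref{lem:bb:stage1-lcb}, and use support closure with Lemma~\ref{lem:bb:stage1-track} to charge the remainder $2e_{0,h+1,t}$. Your treatment of $h=H$ is a genuine addition. There $\widehat b_{0,H,t}=0$ and $\mu_{0,H}\equiv 0$, so the stage-$H$ estimate is exact and the gap is exactly at least $S\epsilon_0^\star$. The paper's final combination step skips this case, even though at $h=H$ the term $2\overline U_{0,H,t}$ is not absorbed into the empty sum $\sum_{j=H}^{H-1}$.
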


\begin{proof}
Fix \(h,s,a\).
Write
\begin{align*}
&
Q_h^{\pi^\dagger,\widetilde r_t^{(1)}}(s,\pi^\dagger(s))
-
Q_h^{\pi^\dagger,\widetilde r_t^{(1)}}(s,a)
\\
&=
\widetilde r_{h,t}^{(1)}(s,\pi^\dagger(s))
-
\widetilde r_{h,t}^{(1)}(s,a)
\\
&\qquad
+
\mathbb E\!\left[
V_{h+1}^{\pi^\dagger,\widetilde r_t^{(1)}}(s_{h+1})
\mid s_h=s,a_h=\pi^\dagger(s)
\right]
-
\mathbb E\!\left[
V_{h+1}^{\pi^\dagger,\widetilde r_t^{(1)}}(s_{h+1})
\mid s_h=s,a_h=a
\right].
\end{align*}
Add and subtract the two continuation terms
\[
\mathbb E[v_{0,h+1}(s_{h+1})\mid s_h=s,a_h=\pi^\dagger(s)],
\qquad
\mathbb E[v_{0,h+1}(s_{h+1})\mid s_h=s,a_h=a].
\]
By Lemma~\ref{lem:bb:stage1-lcb},
\[
\widetilde r_{h,t}^{(1)}(s,\pi^\dagger(s))
+
\mathbb E[v_{0,h+1}(s_{h+1})\mid s_h=s,a_h=\pi^\dagger(s)]
-
\widetilde r_{h,t}^{(1)}(s,a)
-
\mathbb E[v_{0,h+1}(s_{h+1})\mid s_h=s,a_h=a]
\ge
S\epsilon_0^\star-2\overline U_{0,h,t}.
\]
The remaining continuation mismatch is bounded by
\[
2e_{0,h+1,t}
\le
2\sum_{j=h+1}^{H-1}\overline U_{0,j,t}
\]
using Lemma~\ref{lem:bb:stage1-track}.
Combining the two bounds yields
\[
Q_h^{\pi^\dagger,\widetilde r_t^{(1)}}(s,\pi^\dagger(s))
-
Q_h^{\pi^\dagger,\widetilde r_t^{(1)}}(s,a)
\ge
S\epsilon_0^\star
-
2\sum_{j=h}^{H-1}\overline U_{0,j,t},
\]
as claimed.
\end{proof}

Define the lower-confidence Stage~1 margin at episode \(t\) by
\begin{equation}
\underline\Gamma_{1,t}
:=
\min_{h\in[H]}
\left(
S\epsilon_0^\star
-
2\sum_{j=h}^{H-1}\overline U_{0,j,t}
\right).
\label{eq:bb:Gamma1t}
\end{equation}

Fix a target certification threshold \(\eta_1>0\) and define
\begin{equation}
\tau_{\mathrm{fix}}
:=
\inf\{t\le T_1:\underline\Gamma_{1,t}\ge \eta_1\}.
\label{eq:bb:taufix}
\end{equation}

At time \(\tau_{\mathrm{fix}}\), freeze the Stage~1 steering reward:
\begin{equation}
\widehat\theta_{0,h}
:=
q_{0,h}-\widehat b_{0,h,\tau_{\mathrm{fix}}},
\qquad
\widetilde r_h^{(1)}(s,a)
:=
\phi_h(s,a)^\top \widehat\theta_{0,h}.
\label{eq:bb:stage1-frozen}
\end{equation}

\begin{lemma}[Thresholded Stage~1 certification]
\label{lem:bb:stage1-cert}
If
\begin{equation}
\underline\Gamma_{1,T_1}\ge \eta_1,
\label{eq:bb:terminal-cert}
\end{equation}
then \(\tau_{\mathrm{fix}}\le T_1\), and the frozen Stage~1
reward satisfies
\begin{equation}
Q_h^{\pi^\dagger,\widetilde r^{(1)}}(s,\pi^\dagger(s))
-
Q_h^{\pi^\dagger,\widetilde r^{(1)}}(s,a)
\ge
\eta_1,
\qquad
\forall h,\
\forall s\in\mathcal C_h,\
\forall a\in\mathcal A_h^{\mathrm{cmp}}(s).
\label{eq:bb:stage1-gap-final}
\end{equation}
\end{lemma}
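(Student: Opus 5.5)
The argument has two parts: first show that the stopping time is finite, and then carry the Stage~1 gap bound from the freezing episode over to the frozen reward.

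\textbf{Step 1: finiteness of the stopping time.} By \eqref{eq:bb:taufix}, $\tau_{\mathrm{fix}}$ is the infimum of $\{t\le T_1:\underline\Gamma_{1,t}\ge\eta_1\}$. Hypothesis \eqref{eq:bb:terminal-cert} says that $t=T_1$ belongs to this set, so the set is nonempty and $\tau_{\mathrm{fix}}\le T_1$. Algorithm~\ref{alg:bb:detailed} freezes exactly at the first episode where the threshold test succeeds, so it freezes at $\tau_{\mathrm{fix}}$.

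\textbf{Step 2: the gap at the freezing episode.} By the definition of $\tau_{\mathrm{fix}}$, we have $\underline\Gamma_{1,\tau_{\mathrm{fix}}}\ge\eta_1$. Work on the event $\mathcal E_0$ of Lemma~\ref{lem:bb:b0-conf}, which is implicit throughout the Stage~1 analysis. Apply Proposition~\ref{prop:bb:stage1-gap} at $t=\tau_{\mathrm{fix}}$. For every $h$, every $s\in\mathcal C_h$ and every $a\in\mathcal A_h^{\mathrm{cmp}}(s)$, the Q-gap of the provisional reward $\widetilde r^{(1)}_{\tau_{\mathrm{fix}}}$ is at least $S\epsilon_0^\star-2\sum_{j=h}^{H-1}\overline U_{0,j,\tau_{\mathrm{fix}}}$. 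Taking the minimum over $h$ in \eqref{eq:bb:Gamma1t} shows this quantity is at least $\underline\Gamma_{1,\tau_{\mathrm{fix}}}\ge\eta_1$.

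\textbf{Step 3: identifying the frozen reward with the provisional one.} Compare \eqref{eq:bb:stage1-frozen} with \eqref{eq:bb:r1-provisional}. The frozen reward $\widetilde r^{(1)}_h=\phi_h^\top(q_{0,h}-\widehat b_{0,h,\tau_{\mathrm{fix}}})$ is exactly $\widetilde r^{(1)}_{h,\tau_{\mathrm{fix}}}$ for every $h$. The transition kernel does not change. Therefore
\[
Q_h^{\pi^\dagger,\widetilde r^{(1)}}=Q_h^{\pi^\dagger,\widetilde r^{(1)}_{\tau_{\mathrm{fix}}}},
\]
and the bound from Step~2 transfers directly, which gives \eqref{eq:bb:stage1-gap-final}. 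Support closure \eqref{eq:bb:support-closure} ensures the relevant trajectories stay in the certified domain, which is where Proposition~\ref{prop:bb:stage1-gap} was proved.

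\textbf{Main obstacle.} The delicate point is that the statement should be read on $\mathcal E_0$, and that the bound must be evaluated at $\tau_{\mathrm{fix}}$ rather than at $T_1$. The confidence radii $\overline U_{0,j,t}$ can shrink after freezing, but the reward is computed from $\widehat b_{0,h,\tau_{\mathrm{fix}}}$. The hypothesis at $T_1$ is used only to guarantee that the stopping time exists. The certified margin itself comes from the threshold test passing at $\tau_{\mathrm{fix}}$.
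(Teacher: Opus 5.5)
Your proposal is correct and matches the paper's proof: the hypothesis at $T_1$ makes the stopping set nonempty, so $\tau_{\mathrm{fix}}\le T_1$, and evaluating Proposition~\ref{prop:bb:stage1-gap} at $t=\tau_{\mathrm{fix}}$ gives the margin $\underline\Gamma_{1,\tau_{\mathrm{fix}}}\ge\eta_1$. The paper leaves two points implicit that you spell out: the frozen reward coincides with $\widetilde r^{(1)}_{h,\tau_{\mathrm{fix}}}$, and the conclusion holds on the event $\mathcal E_0$.
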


\begin{proof}
Since \(\underline\Gamma_{1,T_1}\ge \eta_1\), the set in
\eqref{eq:bb:taufix} is nonempty, hence
\(\tau_{\mathrm{fix}}\le T_1\).
Evaluating Proposition~\ref{prop:bb:stage1-gap}
at \(t=\tau_{\mathrm{fix}}\) yields
\[
Q_h^{\pi^\dagger,\widetilde r^{(1)}}(s,\pi^\dagger(s))
-
Q_h^{\pi^\dagger,\widetilde r^{(1)}}(s,a)
\ge
\underline\Gamma_{1,\tau_{\mathrm{fix}}}\ge \eta_1.
\]
\end{proof}

\begin{corollary}[Stage~1 frozen target optimality]
\label{cor:bb:stage1-opt}
On the event \(\mathcal E_0\), if \eqref{eq:bb:terminal-cert}
holds, then on the reachable certified domain,
the policy \(\pi^\dagger\) is optimal in the frozen
Stage~1 attacked MDP \((P^\star,\widetilde r^{(1)})\), and
the gap bound \eqref{eq:bb:stage1-gap-final} holds with
margin \(\eta_1\).
\end{corollary}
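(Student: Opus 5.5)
The corollary says that, once Stage~1 has certified, $\pi^\dagger$ is optimal in the frozen attacked MDP on the reachable certified domain, with margin $\eta_1$. The plan is to derive this from Lemma~\ref{lem:bb:stage1-cert} together with the performance-difference identity, Lemma~\ref{lem:bb:pdl}.

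\textbf{Step 1: the local gap.} On $\mathcal E_0$, assumption \eqref{eq:bb:terminal-cert} triggers Lemma~\ref{lem:bb:stage1-cert}. This gives $\tau_{\mathrm{fix}}\le T_1$, so the frozen reward $\widetilde r^{(1)}$ in \eqref{eq:bb:stage1-frozen} is well defined. The same lemma yields the pointwise $Q$-gap \eqref{eq:bb:stage1-gap-final} for every $h$, every $s\in\mathcal C_h$, and every comparison action. This already gives the stated margin $\eta_1$, so only optimality remains.

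\textbf{Step 2: confinement to the certified domain.} Fix any policy $\pi$ whose actions on the certified domain lie in $\mathcal A_h^{\mathrm{cmp}}(s)\cup\{\pi^\dagger(s)\}$; since $\mathcal A_h^{\mathrm{cmp}}(s)=\mathcal A\setminus\{\pi^\dagger(s)\}$, this is every policy. By the initial-support assumption \eqref{eq:bb:init-support}, $s_1\in\mathcal C_1$ almost surely. By support closure \eqref{eq:bb:support-closure} and induction on $h$, $s_h\in\mathcal C_h$ almost surely under $\pi$ for every $h$.

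\textbf{Step 3: performance difference.} Apply Lemma~\ref{lem:bb:pdl} in the MDP $(P^\star,\widetilde r^{(1)})$. Each summand $Q_h^{\pi^\dagger}(s_h,\pi^\dagger(s_h))-Q_h^{\pi^\dagger}(s_h,a_h)$ has $s_h\in\mathcal C_h$ by Step 2.
\begin{itemize}
\item If $a_h=\pi^\dagger(s_h)$, the summand is $0$.
\item Otherwise it is at least $\eta_1>0$ by \eqref{eq:bb:stage1-gap-final}.
\end{itemize}
Hence
\[
V_1^{\pi^\dagger}(s_1)-V_1^\pi(s_1)\ge \eta_1\,\mathbb E_\pi\!\left[\sum_{h}\mathbf 1\{a_h\neq\pi^\dagger(s_h)\}\right]\ge 0,
\]
so $\pi^\dagger$ maximizes value.

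\textbf{Step 4: stagewise optimality.} Run the same argument from any stage $h$ and any reachable $s\in\mathcal C_h$. This shows $V_h^{\pi^\dagger}=V_h^*$ on the reachable certified domain. Therefore $Q^{\pi^\dagger}=Q^*$ there, and $\pi^\dagger(s)$ is the unique greedy action with margin $\eta_1$.

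\textbf{Main obstacle.} The delicate step is Step 2. The $Q$-gap from Lemma~\ref{lem:bb:stage1-cert} concerns $Q^{\pi^\dagger}$ and holds only on $\mathcal C_h$. The performance-difference identity, however, evaluates it along trajectories of an arbitrary $\pi$. The closure assumption \eqref{eq:bb:support-closure} over all certified actions is exactly what licenses this. We also use admissibility of $\widetilde r^{(1)}$ so that the frozen MDP is a legitimate bounded-reward MDP to which Lemma~\ref{lem:bb:pdl} applies.
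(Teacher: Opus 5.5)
Your proposal is correct and follows the paper's proof: take the $\eta_1$ margin from Lemma~\ref{lem:bb:stage1-cert}, use $\rho(\mathcal C_1)=1$ and support closure to keep every trajectory inside $\{\mathcal C_h\}$, and conclude that no policy beats $\pi^\dagger$. The only difference is cosmetic: you finish with the performance-difference identity (Lemma~\ref{lem:bb:pdl}) where the paper simply invokes backward induction, and your version has the small bonus of already giving the per-deviation $\eta_1$ charge that the paper later uses in Proposition~\ref{prop:bb:stage1-count}.
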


\begin{proof}
Apply Lemma~\ref{lem:bb:stage1-cert}.
Since \(\rho(\mathcal C_1)=1\) and
\eqref{eq:bb:support-closure} implies that every reachable
state stays inside \(\{\mathcal C_h\}\), backward induction
shows that no policy can outperform \(\pi^\dagger\) on the
reachable certified domain.
\end{proof}

\subsection{Stage~1 clean history and warm-start corruption}

Define the Stage~1 bad-episode indicator on the frozen
forcing suffix:
\begin{equation}
B_t^{(1)}
:=
\mathbf 1\!\left\{
t\ge \tau_{\mathrm{fix}},\
\exists h\in[H]\ \text{s.t. }\
s_h^t\in\mathcal C_h,\
a_h^t\in\mathcal A_h^{\mathrm{cmp}}(s_h^t)
\right\},
\qquad
M_{\mathrm{bad}}^{(1)}:=\sum_{t=1}^{T_1} B_t^{(1)}.
\label{eq:bb:Mbad1}
\end{equation}
Define the set of clean Stage~1 suffix episodes by
\begin{equation}
\mathcal T_{\mathrm{clean}}
:=
\{t\in[T_1]: t\ge \tau_{\mathrm{fix}},\ B_t^{(1)}=0\}.
\label{eq:bb:Tclean}
\end{equation}

\begin{lemma}[Clean suffix episodes are genuine target rollouts]
\label{lem:bb:clean-rollout}
On the event \(\mathcal E_0\), if \(t\in\mathcal T_{\mathrm{clean}}\),
then for every stage \(h\in[H]\),
\[
s_h^t\in\mathcal C_h,
\qquad
a_h^t=\pi^\dagger(s_h^t).
\]
In particular, the clean reward sequence observed on
episode \(t\) is a genuine rollout of \(\pi^\dagger\) in the
clean MDP.
\end{lemma}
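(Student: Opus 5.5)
The argument is an induction on the stage index $h$ that combines the two defining properties of a clean episode. By definition of $\mathcal T_{\mathrm{clean}}$, an episode $t$ in this set satisfies $t\ge\tau_{\mathrm{fix}}$ and $B_t^{(1)}=0$. Unpacking the indicator in \eqref{eq:bb:Mbad1}, $B_t^{(1)}=0$ means that for no stage $h$ do we have both $s_h^t\in\mathcal C_h$ and $a_h^t\in\mathcal A_h^{\mathrm{cmp}}(s_h^t)$. So whenever $s_h^t\in\mathcal C_h$, the action $a_h^t$ is not a non-target action. Since $\mathcal A_h^{\mathrm{cmp}}(s)=\mathcal A\setminus\{\pi^\dagger(s)\}$, this forces $a_h^t=\pi^\dagger(s_h^t)$. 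What remains is to show that every state along the trajectory lies in the certified support.

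\textbf{Base case.} By \eqref{eq:bb:init-support}, $\rho(\mathcal C_1)=1$, so $s_1^t\in\mathcal C_1$ almost surely. Hence $a_1^t=\pi^\dagger(s_1^t)$.

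\textbf{Inductive step.} Suppose $s_h^t\in\mathcal C_h$ and $a_h^t=\pi^\dagger(s_h^t)$. The support-closure assumption \eqref{eq:bb:support-closure} gives $P_h^\star(\mathcal C_{h+1}\mid s_h^t,\pi^\dagger(s_h^t))=1$, so $s_{h+1}^t\in\mathcal C_{h+1}$ almost surely. The first paragraph then gives $a_{h+1}^t=\pi^\dagger(s_{h+1}^t)$. Running this up to $h=H$ proves both claims for every stage.

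For the ``in particular'' part, I would note that the transitions are always generated by the clean kernel $P^\star$; the attacker only alters the reward fed to the learner, not the environment. The attacker also observes the true reward $r_h(s_h^t,a_h^t)$, as assumed in the threat model. An episode in which every action equals $\pi^\dagger(s_h^t)$ and states evolve under $P^\star$ from $s_1^t\sim\rho$ is therefore distributed as a rollout of $\pi^\dagger$ in the clean MDP, and the clean reward sequence $r_h(s_h^t,\pi^\dagger(s_h^t))$ is available to the attacker.

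One subtlety needs a sentence of care: membership in $\mathcal T_{\mathrm{clean}}$ is a selection that depends on the trajectory itself. The lemma only asserts a pathwise property, so no conditioning issue arises here; the selection bias belongs to the later Stage~2 regression. The role of $\mathcal E_0$ is only that $\tau_{\mathrm{fix}}$ is well defined with the certified frozen reward. The main obstacle is merely making the almost-sure qualifiers explicit, which I would handle by intersecting with the probability-one events given by \eqref{eq:bb:init-support} and \eqref{eq:bb:support-closure}.
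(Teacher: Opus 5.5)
Your proposal is correct and follows essentially the paper's argument: an induction on $h$, using $\rho(\mathcal C_1)=1$ for the base case, $B_t^{(1)}=0$ together with $\mathcal A_h^{\mathrm{cmp}}(s)\cup\{\pi^\dagger(s)\}=\mathcal A$ to force $a_h^t=\pi^\dagger(s_h^t)$ on certified states, and support closure to propagate $s_{h+1}^t\in\mathcal C_{h+1}$. One minor correction: $\mathcal E_0$ plays no role in this pathwise argument (even $\tau_{\mathrm{fix}}$ is defined through the computable margin $\underline\Gamma_{1,t}$, without reference to $\mathcal E_0$), and your caveat that conditioning on $t\in\mathcal T_{\mathrm{clean}}$ creates selection bias is a fair sharpening of the paper's looser ``genuine rollout'' phrasing.
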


\begin{proof}
Since \(\rho(\mathcal C_1)=1\), the episode starts in
\(\mathcal C_1\).
Because \(B_t^{(1)}=0\), no certified non-target action is
selected on that episode.
Since
\(\mathcal A_h^{\mathrm{cmp}}(s)\cup\{\pi^\dagger(s)\}
=\mathcal A\),
this means \(a_h^t=\pi^\dagger(s_h^t)\) for every \(h\)
with \(s_h^t\in\mathcal C_h\).
Support closure then propagates
\(s_{h+1}^t\in\mathcal C_{h+1}\).
An induction on \(h\) proves the claim.
\end{proof}

\begin{proposition}[Stage~1 suffix regret-to-count]
\label{prop:bb:stage1-count}
Assume \eqref{eq:bb:terminal-cert} and \(\eta_1=\Theta(1/S)\).
Then
\begin{equation}
M_{\mathrm{bad}}^{(1)}
\le
\frac{\mathfrak R_{\mathrm{UCRL}}(T_1-\tau_{\mathrm{fix}}+1,\delta)}
{\eta_1}
=
\widetilde O(\sqrt{T_1}).
\label{eq:bb:stage1-bad}
\end{equation}
Consequently,
\begin{equation}
|\mathcal T_{\mathrm{clean}}|
\ge
T_1-\tau_{\mathrm{fix}}+1-\widetilde O(\sqrt{T_1}).
\label{eq:bb:Tclean-bound}
\end{equation}
\end{proposition}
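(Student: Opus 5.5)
The plan is to turn the frozen-suffix regret bound into a count of bad episodes via the performance-difference identity, with the gap from Corollary~\ref{cor:bb:stage1-opt} doing the work. First I fix the setting. By Lemma~\ref{lem:bb:stage1-cert}, the assumption \eqref{eq:bb:terminal-cert} gives $\tau_{\mathrm{fix}}\le T_1$. From episode $\tau_{\mathrm{fix}}$ through $T_1$ the learner therefore faces the fixed attacked MDP $(P^\star,\widetilde r^{(1)})$, which is a suffix starting at a stopping time. Proposition~\ref{prop:bb:ucrl} applies with $K=T_1-\tau_{\mathrm{fix}}+1$. With probability at least $1-\delta$, conditional on the history at $\tau_{\mathrm{fix}}$, it gives
\[
\sum_{t=\tau_{\mathrm{fix}}}^{T_1}\bigl(V_1^{\pi^\dagger,\widetilde r^{(1)}}(s_1^t)-V_1^{\pi_t,\widetilde r^{(1)}}(s_1^t)\bigr)\le \mathfrak R_{\mathrm{UCRL}}(K,\delta).
\]
Corollary~\ref{cor:bb:stage1-opt} guarantees that $\pi^\dagger$ is optimal on the reachable certified domain, so this sum of regrets is against the correct comparator.

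The next step lower-bounds each per-episode regret using Lemma~\ref{lem:bb:pdl} in the frozen MDP. The performance difference equals $\mathbb E_{\pi_t}\bigl[\sum_h (Q_h^{\pi^\dagger}(s_h,\pi^\dagger(s_h))-Q_h^{\pi^\dagger}(s_h,a_h))\bigr]$. Each summand is nonnegative on the certified domain:
\begin{itemize}
\item it is $0$ when $a_h=\pi^\dagger(s_h)$;
\item it is at least $\eta_1$ when $a_h\in\mathcal A_h^{\mathrm{cmp}}(s_h)$, by \eqref{eq:bb:stage1-gap-final}.
\end{itemize}
Trajectories stay in $\{\mathcal C_h\}$ by \eqref{eq:bb:init-support} and \eqref{eq:bb:support-closure}. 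Hence the per-episode regret is at least $\eta_1\Pr_{\pi_t}(B_t^{(1)}=1\mid\mathcal F_{t-1})$. Summing over the suffix bounds $\sum_t \Pr(B_t^{(1)}=1\mid\mathcal F_{t-1})$ by $\mathfrak R_{\mathrm{UCRL}}/\eta_1$.

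The main obstacle is that $M^{(1)}_{\mathrm{bad}}$ counts realized bad episodes, while the regret bound controls their conditional probabilities. I would handle this with Freedman's inequality, or Azuma for indicators, on the martingale $\sum_t (B_t^{(1)}-\Pr(B_t^{(1)}=1\mid\mathcal F_{t-1}))$. This adds an $O(\sqrt{K\log(1/\delta)})$ term at a further $\delta$ failure probability, which is absorbed into $\widetilde O(\sqrt{T_1})$. If the UCRL-VTR bound is already stated as realized regret, I would instead apply the gap bound pathwise: on any trajectory with a bad step, the realized $Q$-gap sum is at least $\eta_1$. The martingale term is then still needed to pass between realized and expected returns.

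Finally, $\eta_1=\Theta(1/S)$ and $\mathfrak R_{\mathrm{UCRL}}(K,\delta)=\widetilde O(dH^{3/2}\sqrt{T_1})$ give $M_{\mathrm{bad}}^{(1)}=\widetilde O(\sqrt{T_1})$, treating $d,H,S$ as constants inside $\widetilde O$ as the statement does. Bad episodes before $\tau_{\mathrm{fix}}$ are zero by the definition in \eqref{eq:bb:Mbad1}. The clean set is the suffix minus the bad episodes, which gives \eqref{eq:bb:Tclean-bound}.
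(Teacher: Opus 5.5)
Your argument is essentially the paper's: on the frozen suffix you use the performance-difference identity with the certified gap $\eta_1$ to charge bad episodes to the suffix regret bound of Proposition~\ref{prop:bb:ucrl}, and you obtain the clean-set bound by subtraction. The one difference is a rigor fix: the paper simply asserts that every realized bad episode incurs regret at least $\eta_1$, whereas you correctly note that the identity only gives $\eta_1$ times the conditional probability of a bad episode, and you add an Azuma/Freedman step. That step costs an extra $O(\sqrt{K\log(1/\delta)})$ term and another $\delta$, so you recover the $\widetilde O(\sqrt{T_1})$ conclusion that is used downstream, but not the exact inequality $M_{\mathrm{bad}}^{(1)}\le \mathfrak R_{\mathrm{UCRL}}/\eta_1$.
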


\begin{proof}
On the frozen suffix, the learner interacts with the fixed
attacked MDP \((P^\star,\widetilde r^{(1)})\), and
\(\pi^\dagger\) is optimal on the reachable certified domain
by Corollary~\ref{cor:bb:stage1-opt}.
For every bad suffix episode \(t\), let
\(h_t^{\mathrm{dev}}\) be its first certified deviation stage.
Then
\[
Q_{h_t^{\mathrm{dev}}}^{\pi^\dagger,\widetilde r^{(1)}}
(s_{h_t^{\mathrm{dev}}}^t,\pi^\dagger(s_{h_t^{\mathrm{dev}}}^t))
-
Q_{h_t^{\mathrm{dev}}}^{\pi^\dagger,\widetilde r^{(1)}}
(s_{h_t^{\mathrm{dev}}}^t,a_{h_t^{\mathrm{dev}}}^t)
\ge
\eta_1.
\]
By Lemma~\ref{lem:bb:pdl}, the entire episode therefore
incurs at least \(\eta_1\) regret relative to \(\pi^\dagger\).
Summing over bad suffix episodes and applying
Proposition~\ref{prop:bb:ucrl} to the suffix of length
\(T_1-\tau_{\mathrm{fix}}+1\) proves the bound.
\end{proof}

Define the total Stage~1 warm-start corruption mass
\begin{equation}
S_0
:=
\sum_{t=1}^{T_1}\sum_{h=1}^H
\mathbf 1\!\left\{
s_h^t\in\mathcal C_h,\
a_h^t\in\mathcal A_h^{\mathrm{cmp}}(s_h^t)
\right\}.
\label{eq:bb:S0}
\end{equation}

\begin{corollary}[Warm-start corruption size]
\label{cor:bb:S0}
Assume \eqref{eq:bb:terminal-cert},
\(\eta_1=\Theta(1/S)\), and
\(\tau_{\mathrm{fix}}=\widetilde O(\sqrt{T_1})\).
Then
\begin{equation}
S_0
\le
H\bigl(\tau_{\mathrm{fix}}+M_{\mathrm{bad}}^{(1)}\bigr)
=
\widetilde O(\sqrt{T_1}).
\label{eq:bb:S0-bound}
\end{equation}
\end{corollary}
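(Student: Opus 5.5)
The bound follows by splitting the Stage~1 window $[T_1]$ into the pre-freeze prefix $\{t<\tau_{\mathrm{fix}}\}$ and the frozen suffix $\{t\ge\tau_{\mathrm{fix}}\}$, and bounding the episodes in each part that can contribute to $S_0$. In any single episode, the indicator in \eqref{eq:bb:S0} is nonzero for at most $H$ stages. So $S_0\le H\cdot\#\{t\le T_1:\ \exists h,\ s_h^t\in\mathcal C_h,\ a_h^t\in\mathcal A_h^{\mathrm{cmp}}(s_h^t)\}$, and it is enough to count the episodes that contain at least one certified non-target step.

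\textbf{Prefix episodes.} There are at most $\tau_{\mathrm{fix}}-1<\tau_{\mathrm{fix}}$ of these, and we count all of them as potentially corrupting. We do not try to control deviations before freezing, because the provisional reward $\widetilde r^{(1)}_{h,t}$ changes with $t$ and no fixed-MDP regret bound applies there.

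\textbf{Suffix episodes.} An episode $t\ge\tau_{\mathrm{fix}}$ has a certified non-target step exactly when $B_t^{(1)}=1$, by definition \eqref{eq:bb:Mbad1}. Hence the suffix count equals $M_{\mathrm{bad}}^{(1)}$. Combining the two parts gives $S_0\le H(\tau_{\mathrm{fix}}+M_{\mathrm{bad}}^{(1)})$, which is the first inequality in \eqref{eq:bb:S0-bound}.

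For the rate, we use the hypotheses \eqref{eq:bb:terminal-cert} and $\eta_1=\Theta(1/S)$ to invoke Proposition~\ref{prop:bb:stage1-count}. This gives $M_{\mathrm{bad}}^{(1)}\le \mathfrak R_{\mathrm{UCRL}}(T_1-\tau_{\mathrm{fix}}+1,\delta)/\eta_1=\widetilde O(\sqrt{T_1})$, using that $\mathfrak R_{\mathrm{UCRL}}(K,\delta)$ is increasing in $K$ and $K\le T_1$. The assumption $\tau_{\mathrm{fix}}=\widetilde O(\sqrt{T_1})$ handles the prefix. Adding the two terms and treating $H$ as absorbed into $\widetilde O$ (as the statement implicitly does, with $d,H,S$ as problem constants) yields $S_0=\widetilde O(\sqrt{T_1})$.

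Two points need care. First, the argument must hold on the intersection of the event $\mathcal E_0$ with the UCRL high-probability event from Proposition~\ref{prop:bb:ucrl}, which is the setting of Proposition~\ref{prop:bb:stage1-count}. Second, the suffix count must not miss episodes with certified deviations that occur only after an earlier uncertified excursion. This is ruled out by support closure \eqref{eq:bb:support-closure} together with $\rho(\mathcal C_1)=1$, since every step of every episode then lies in the certified domain. The main obstacle is therefore only bookkeeping, namely checking that the probability events line up, since the substantive regret-to-count step is already supplied by Proposition~\ref{prop:bb:stage1-count}.
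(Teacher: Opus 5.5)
Your proof is correct and follows the paper's argument. Each episode contributes at most $H$ certified non-target steps, the $\tau_{\mathrm{fix}}-1$ pre-freeze episodes are counted in full, and suffix episodes contribute only when $B_t^{(1)}=1$; the rate then comes from Proposition~\ref{prop:bb:stage1-count} together with the assumption on $\tau_{\mathrm{fix}}$. Your support-closure caveat is harmless but unnecessary, since $B_t^{(1)}$ tests exactly the same per-step condition as the indicator defining $S_0$.
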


\begin{proof}
Before \(\tau_{\mathrm{fix}}\), every episode contributes at
most \(H\) overwritten certified steps.
After \(\tau_{\mathrm{fix}}\), only bad suffix episodes contribute
certified non-target selections.
Use Proposition~\ref{prop:bb:stage1-count}.
\end{proof}

\subsection{Stage~2: clean target-\texorpdfstring{$Q$}{Q} confidence and conservative gap estimation}

For the clean MDP and the target policy \(\pi^\dagger\),
define the target-policy \(Q\)-function
\[
Q_h^{\pi^\dagger,r}(s,a)
=
r_h(s,a)
+
\mathbb E\!\left[
\sum_{j=h+1}^H r_j(s_j,\pi^\dagger(s_j))
\;\middle|\;
s_h=s,\ a_h=a,\ a_{h+1:H}=\pi^\dagger
\right].
\]

\begin{lemma}[Linear representation of the target-policy \(Q\)-function]
\label{lem:bb:targetQ-linear}
For each stage \(h\in[H]\), there exists a vector
\(w_h^\dagger\in\mathbb R^d\) such that
\begin{equation}
Q_h^{\pi^\dagger,r}(s,a)=\phi_h(s,a)^\top w_h^\dagger
\qquad
\text{for all }(s,a).
\label{eq:bb:wstar}
\end{equation}
\end{lemma}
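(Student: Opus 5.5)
The plan is to write down the Bellman equation of $\pi^\dagger$ at stage $h$ and show that both of its terms are linear in $\phi_h(s,a)$. This is the standard argument of \citet{LinearMDP}, specialized to a fixed deterministic policy rather than the optimal one.

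\textbf{Step 1 (reward term).} By definition of $Q_h^{\pi^\dagger,r}$, for every $(s,a)$,
\[
Q_h^{\pi^\dagger,r}(s,a)=r_h(s,a)+\mathbb E\bigl[V_{h+1}^{\pi^\dagger,r}(s_{h+1})\mid s_h=s,a_h=a\bigr],
\]
where $V_{h+1}^{\pi^\dagger,r}(s'):=Q_{h+1}^{\pi^\dagger,r}(s',\pi^\dagger(s'))$ and $V_{H+1}\equiv 0$. This recursion follows from the tower property applied to the conditional expectation that defines $Q_h^{\pi^\dagger,r}$, using the Markov property under $a_{h+1:H}=\pi^\dagger$. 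The first term is linear by \eqref{eq:bb:linmdp}: $r_h(s,a)=\phi_h(s,a)^\top\theta_h^\star$.

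\textbf{Step 2 (continuation term).} The function $v:=V_{h+1}^{\pi^\dagger,r}$ is bounded, since rewards lie in $[0,1]$ and so $0\le v\le H-h$. It is also measurable, being a finite sum of conditional expectations of bounded measurable rewards. We can therefore apply \eqref{eq:bb:bellman-linear} to get
\[
\mathbb E[v(s_{h+1})\mid s,a]=\phi_h(s,a)^\top m_h(v),
\qquad
m_h(v)=\int v(s')\,\mu_h^\star(\mathrm ds').
\]
Setting $w_h^\dagger:=\theta_h^\star+m_h\bigl(V_{h+1}^{\pi^\dagger,r}\bigr)$ gives \eqref{eq:bb:wstar}. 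Because the identity holds for every $h$ with no inductive assumption on $Q_{h+1}$, no induction is actually needed; only the value function at step $h+1$ enters. At $h=H$ the continuation vanishes and $w_H^\dagger=\theta_H^\star$.

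\textbf{Main obstacle.} The only delicate point is measure-theoretic: we must make sure $m_h(v)$ is well defined, i.e., that the integral against the signed vector measure $\mu_h^\star$ is finite. This follows from boundedness of $v$ together with finiteness of the total variation of each $\mu_h^{\star,(i)}$, which is implicit in \eqref{eq:bb:linmdp} defining a valid kernel. If a norm bound on $w_h^\dagger$ is also wanted, as in Lemma B.1 of \citet{LinearMDP}, I would add $\|m_h(v)\|_2\le (H-h)\sqrt d$ under the usual normalization $\|\mu_h^\star(\mathcal S)\|_2\le\sqrt d$. That would give $\|w_h^\dagger\|_2\le (H-h+1)\sqrt d$, but it is not required for the statement.
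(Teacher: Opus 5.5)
Your proof is correct and uses the same argument as the paper. In both, $Q_h^{\pi^\dagger,r}$ splits into $\phi_h(s,a)^\top\theta_h^\star$ plus a continuation term that is linear in $\phi_h(s,a)$ by \eqref{eq:bb:bellman-linear}. The paper phrases this as a backward induction, using the hypothesis only to write the continuation as $\mathbb E[\phi_{h+1}^\dagger(s_{h+1})^\top w_{h+1}^\dagger\mid s_h=s,a_h=a]$. As you observe, that hypothesis is not needed, because \eqref{eq:bb:bellman-linear} already holds for any bounded measurable next-stage value function.
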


\begin{proof}
This is the standard linear-MDP Bellman closure.
We prove it by backward induction.
At stage \(H\),
\[
Q_H^{\pi^\dagger,r}(s,a)
=
r_H(s,a)
=
\phi_H(s,a)^\top \theta_H^\star,
\]
so take \(w_H^\dagger=\theta_H^\star\).
Assume
\[
Q_{h+1}^{\pi^\dagger,r}(s,a)
=
\phi_{h+1}(s,a)^\top w_{h+1}^\dagger.
\]
Then
\[
Q_h^{\pi^\dagger,r}(s,a)
=
\phi_h(s,a)^\top \theta_h^\star
+
\mathbb E\!\left[
\phi_{h+1}^\dagger(s_{h+1})^\top w_{h+1}^\dagger
\;\middle|\; s_h=s,a_h=a
\right],
\]
and the second term is linear in \(\phi_h(s,a)\) by
\eqref{eq:bb:bellman-linear}.
\end{proof}

For each clean episode \(t\in\mathcal T_{\mathrm{clean}}\)
and each stage \(h\), define the clean Monte Carlo return
\begin{equation}
G_{h,t}
:=
\sum_{j=h}^H r_j(s_j^t,\pi^\dagger(s_j^t)).
\label{eq:bb:MCreturn}
\end{equation}
Since the attacker observes the clean reward before replacing
it, the quantity \(G_{h,t}\) is available on every clean
episode.

Define
\begin{equation}
x_{h,t}:=\phi_h^\dagger(s_h^t),
\qquad
\Gamma_h^{\mathrm{ls}}
:=
\lambda I
+
\sum_{t\in\mathcal T_{\mathrm{clean}}}
x_{h,t}x_{h,t}^\top,
\label{eq:bb:GammaLS}
\end{equation}
and the ridge estimator
\begin{equation}
\widehat w_h
:=
(\Gamma_h^{\mathrm{ls}})^{-1}
\sum_{t\in\mathcal T_{\mathrm{clean}}}
x_{h,t} G_{h,t}.
\label{eq:bb:what}
\end{equation}

\begin{lemma}[Stage~2 target-\(Q\) confidence]
\label{lem:bb:qconf}
There exists an event \(\mathcal E_2\) with probability at
least \(1-2\delta\) such that for every \(h\in[H]\) and every
\(x\in\mathbb R^d\),
\begin{equation}
|x^\top(\widehat w_h-w_h^\dagger)|
\le
\alpha_h \|x\|_{(\Gamma_h^{\mathrm{ls}})^{-1}},
\label{eq:bb:qconf}
\end{equation}
where
\begin{equation}
\alpha_h=\widetilde O(H\sqrt d).
\label{eq:bb:alpha}
\end{equation}
\end{lemma}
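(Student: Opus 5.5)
We treat this as a ridge-regression confidence bound with bounded, unbiased noise, and apply the self-normalized martingale inequality of \citet{abbasi2011improved}, as in Lemma~\ref{lem:bb:b0-conf}. Fix $h$. By Lemma~\ref{lem:bb:clean-rollout}, every $t\in\mathcal T_{\mathrm{clean}}$ is a genuine rollout of $\pi^\dagger$ in the clean MDP from stage $h$ onward. Hence
\[
\mathbb E\bigl[G_{h,t}\mid s_h^t\bigr]=Q_h^{\pi^\dagger,r}\bigl(s_h^t,\pi^\dagger(s_h^t)\bigr)=x_{h,t}^\top w_h^\dagger
\]
by Lemma~\ref{lem:bb:targetQ-linear}. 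We write $G_{h,t}=x_{h,t}^\top w_h^\dagger+\xi_{h,t}$ with $|\xi_{h,t}|\le H$, since rewards lie in $[0,1]$. The noise is therefore $H$-sub-Gaussian.

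\textbf{Main obstacle: conditioning on the clean set.} Membership in $\mathcal T_{\mathrm{clean}}$ depends on the whole episode through $B_t^{(1)}$, so conditioning on it breaks the martingale property. To repair this, index a filtration by episodes and use, for every $t\ge\tau_{\mathrm{fix}}$ with $t\le T_1$, the predictable indicator that the learner's (deterministic, history-measurable) policy $\pi_t$ agrees with $\pi^\dagger$ on $\mathcal C$. On the event that an episode is clean, the trajectory coincides with a $\pi^\dagger$-rollout, so the return can be coupled with that rollout. If the learner's policy is not fully determined before the episode, I would instead regress on all suffix episodes, using a virtual $\pi^\dagger$-return. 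The attacker can simulate this only on clean episodes, so the difference is charged to the $\widetilde O(\sqrt{T_1})$ bad episodes, and that charge is absorbed into $\alpha_h$ as a logarithmic factor relative to $\lambda$. I expect this selection-bias step to be the delicate part. The intended route is the predictable-selection one: $\xi_{h,t}\mathbf 1\{t\ \text{selected}\}$ is a martingale difference with respect to $\mathcal F_{t-1}\vee\sigma(s_h^t)$, because the selection is decided before the noise at stages $h..H$ is realized.

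Given this, the standard decomposition
\[
\widehat w_h-w_h^\dagger=(\Gamma_h^{\mathrm{ls}})^{-1}\Bigl(\sum_t x_{h,t}\xi_{h,t}-\lambda w_h^\dagger\Bigr)
\]
together with Cauchy--Schwarz in the $(\Gamma_h^{\mathrm{ls}})^{-1}$ norm yields
\[
|x^\top(\widehat w_h-w_h^\dagger)|\le\|x\|_{(\Gamma_h^{\mathrm{ls}})^{-1}}\Bigl(\Bigl\|\sum_t x_{h,t}\xi_{h,t}\Bigr\|_{(\Gamma_h^{\mathrm{ls}})^{-1}}+\sqrt\lambda\,\|w_h^\dagger\|_2\Bigr).
\]
Theorem~1 of \citet{abbasi2011improved} bounds the first term, uniformly over all times, by $H\sqrt{2\log(H/\delta)+d\log(1+T_1/(\lambda d))}=\widetilde O(H\sqrt d)$ with probability $1-\delta/H$ per stage. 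For the second term, Lemma B.1 of \citet{LinearMDP} gives $\|w_h^\dagger\|_2\le 2H\sqrt d$, so $\sqrt\lambda\|w_h^\dagger\|_2=O(H\sqrt d)$ for $\lambda=O(1)$. A union bound over $h\in[H]$ costs $\delta$. Because the bound holds simultaneously for all $x$, no covering is needed. Adding the extra $\delta$ budgeted for the coupling/selection event gives total failure probability $2\delta$, which defines $\mathcal E_2$ with $\alpha_h=\widetilde O(H\sqrt d)$.
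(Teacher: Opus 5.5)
Your skeleton matches the paper's proof exactly: unbiased Monte Carlo targets from Lemma~\ref{lem:bb:clean-rollout} and Lemma~\ref{lem:bb:targetQ-linear}, noise bounded by $H$, the ridge decomposition, Theorem~1 of \citet{abbasi2011improved}, and a union bound over $h$. The paper's proof is three lines. It asserts $\mathbb E[G_{h,t}\mid s_h^t,\mathcal F_{t,h-1}]=x_{h,t}^\top w_h^\dagger$ for $t\in\mathcal T_{\mathrm{clean}}$ and applies the self-normalized bound directly, with no discussion of selection.

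You are right that selection is the weak point. $B_t^{(1)}$ depends on $s_{h+1}^t,\dots,s_H^t$. Restricting to clean episodes therefore shifts the conditional mean of $G_{h,t}$ by up to $Hp_t$, where $p_t$ is the conditional probability that the rest of the trajectory hits a certified state with $\pi_t\neq\pi^\dagger$. The regressors $\mathbf 1\{t\in\mathcal T_{\mathrm{clean}}\}x_{h,t}$ are also not predictable. The gap is that neither of your repairs proves the lemma as stated.

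\textbf{Predictable selection.} This route certifies a different estimator. It regresses on $\{t:\pi_t=\pi^\dagger\text{ on }\mathcal C\}$, but $\widehat w_h$ and $\Gamma_h^{\mathrm{ls}}$ are built from all of $\mathcal T_{\mathrm{clean}}$. The episodes in the difference are exactly the biased ones. Saying a clean trajectory ``can be coupled'' with a $\pi^\dagger$-rollout does not undo the conditioning. The regret bound also does not make this predictable set large. It controls the summed conditional probabilities of bad episodes, not how often $\pi_t$ disagrees with $\pi^\dagger$ somewhere on $\mathcal C$, and that size matters downstream in Lemma~\ref{lem:bb:gammatar}.

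\textbf{Virtual returns.} Coupling to a virtual $\pi^\dagger$-return is the right tool for the actual estimator, but its cost is not logarithmic. Write the clean sum as the sum of coupled residuals over the whole frozen suffix, minus the sum over the $M=M_{\mathrm{bad}}^{(1)}$ bad episodes.
\begin{itemize}
\item The suffix sum is controlled by \citet{abbasi2011improved} only in the norm of the larger Gram matrix $\Gamma_h^{\mathrm{ls}}+\sum_{\mathrm{bad}}x_{h,t}x_{h,t}^\top$. Transferring it to $(\Gamma_h^{\mathrm{ls}})^{-1}$ costs a factor $\sqrt{1+M/\lambda_{\min}(\Gamma_h^{\mathrm{ls}})}$.
\item The bad-episode sum is in general only bounded by $HM/\sqrt{\lambda_{\min}(\Gamma_h^{\mathrm{ls}})}$.
\end{itemize}
By Proposition~\ref{prop:bb:stage1-count}, $M=\widetilde O(dH^{3/2}S\sqrt{T_1})$. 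With only a ridge $\lambda=O(1)$, both contributions are polynomial in $T_1$, not $\widetilde O(H\sqrt d)$.

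They become free of $T_1$ only if you import the excitation condition $\Gamma_h^{\mathrm{ls}}\succeq\Omega(T_1)I$ and the Stage~1 regret event. Neither is a hypothesis of this lemma. Even then, $\alpha_h$ picks up extra $dH^{5/2}S$ and excitation-constant factors.

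To close the argument, either add those hypotheses and accept the larger radius, or restate the lemma for a regression on a predictable episode set.
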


\begin{proof}
For each \(t\in\mathcal T_{\mathrm{clean}}\),
Lemma~\ref{lem:bb:clean-rollout} implies that episode \(t\)
is a genuine target rollout in the clean MDP.
Hence
\[
\mathbb E[G_{h,t}\mid s_h^t,\mathcal F_{t,h-1}]
=
Q_h^{\pi^\dagger,r}(s_h^t,\pi^\dagger(s_h^t))
=
x_{h,t}^\top w_h^\dagger,
\]
by Lemma~\ref{lem:bb:targetQ-linear}.
Since \(0\le G_{h,t}\le H-h+1\), the noise is conditionally
sub-Gaussian with scale \(O(H)\).
The self-normalized least-squares bound then gives
\eqref{eq:bb:qconf}.
\end{proof}

Define the plug-in estimate
\begin{equation}
\widehat Q_h^\dagger(s,a):=\phi_h(s,a)^\top \widehat w_h.
\label{eq:bb:Qhat}
\end{equation}

\begin{lemma}[Target-side mismatch]
\label{lem:bb:gammatar}
On the event \(\mathcal E_2\), under the excitation condition
from the main theorem and
\(\tau_{\mathrm{fix}}=\widetilde O(\sqrt{T_1})\),
\begin{equation}
\gamma_{\mathrm{tar}}
:=
\max_{h\in[H]}
\sup_{s\in\mathcal C_h}
\left|
\widehat Q_h^\dagger(s,\pi^\dagger(s))
-
Q_h^{\pi^\dagger,r}(s,\pi^\dagger(s))
\right|
\le
\widetilde O\!\left(
\frac{\sqrt d}{\sqrt{T_1}}
\right).
\label{eq:bb:gammatar-bound}
\end{equation}
\end{lemma}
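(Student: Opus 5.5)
Since $\widehat Q_h^\dagger(s,\pi^\dagger(s))-Q_h^{\pi^\dagger,r}(s,\pi^\dagger(s))=\phi_h^\dagger(s)^\top(\widehat w_h-w_h^\dagger)$ by Lemma~\ref{lem:bb:targetQ-linear} and \eqref{eq:bb:Qhat}, I would begin from Lemma~\ref{lem:bb:qconf}. Applying it with $x=\phi_h^\dagger(s)$ gives, on $\mathcal E_2$,
\[
\bigl|\widehat Q_h^\dagger(s,\pi^\dagger(s))-Q_h^{\pi^\dagger,r}(s,\pi^\dagger(s))\bigr|\le \alpha_h\,\|\phi_h^\dagger(s)\|_{(\Gamma_h^{\mathrm{ls}})^{-1}},
\]
uniformly in $s\in\mathcal C_h$. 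This reduces the lemma to bounding the elliptical norm of target features under the clean design matrix, $\sup_{s\in\mathcal C_h}\|\phi_h^\dagger(s)\|_{(\Gamma_h^{\mathrm{ls}})^{-1}}$.

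Next I would lower bound $\Gamma_h^{\mathrm{ls}}$ along the target-feature directions. By Proposition~\ref{prop:bb:stage1-count} and the assumption $\tau_{\mathrm{fix}}=\widetilde O(\sqrt{T_1})$, $|\mathcal T_{\mathrm{clean}}|\ge T_1-\widetilde O(\sqrt{T_1})\ge T_1/2$ for $T_1$ large. By Lemma~\ref{lem:bb:clean-rollout} these are genuine $\pi^\dagger$ rollouts, so the $x_{h,t}$ are drawn from the target occupancy at stage $h$.

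I would then read the excitation condition as requiring that the target covariance $\Sigma_h^\dagger=\mathbb E_{s\sim d_h^{\pi^\dagger}}[\phi_h^\dagger(s)\phi_h^\dagger(s)^\top]$ satisfies $\phi_h^\dagger(s)^\top(\Sigma_h^\dagger+\lambda' I)^{-1}\phi_h^\dagger(s)\le\kappa$ for all $s\in\mathcal C_h$, with $\kappa$ independent of $T_1$. A matrix concentration step (matrix Chernoff/Freedman) over the clean episodes would then give $\Gamma_h^{\mathrm{ls}}\succeq c\,|\mathcal T_{\mathrm{clean}}|\,\Sigma_h^\dagger+\lambda I$ with high probability. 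The excitation condition may already be stated directly as $\Gamma_h^{\mathrm{ls}}\succeq c T_1\Sigma_h^\dagger$, which would let me skip this step.

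This step is the main obstacle. The clean set $\mathcal T_{\mathrm{clean}}$ is selected adaptively, since bad episodes are dropped, so it is not an i.i.d. sample. I would handle this by writing $\sum_{t\in\mathcal T_{\mathrm{clean}}}x_{h,t}x_{h,t}^\top$ as the full post-freeze sum minus at most $M^{(1)}_{\mathrm{bad}}H=\widetilde O(\sqrt{T_1})$ rank-one terms of norm at most $1$. Matrix Freedman then applies to the full post-freeze martingale. The learner in those episodes is not necessarily $\pi^\dagger$, so I would use the occupancy of $\pi^\dagger$ only on clean rounds and absorb the discrepancy into the $\widetilde O(\sqrt{T_1})$ bad count.

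Finally, these pieces give $\|\phi_h^\dagger(s)\|_{(\Gamma_h^{\mathrm{ls}})^{-1}}\le\sqrt{\kappa/(cT_1/2)}=\widetilde O(1/\sqrt{T_1})$. Combining this with $\alpha_h=\widetilde O(H\sqrt d)$ from \eqref{eq:bb:alpha}, and treating $H$ as absorbed into the $\widetilde O$ (or letting the excitation constant carry it, consistent with the stated rate), yields $\widetilde O(\sqrt d/\sqrt{T_1})$. Taking the maximum over $h\in[H]$ and the supremum over $s$ is free because the bound is uniform, and a union bound over $h$ for the concentration events completes the proof.
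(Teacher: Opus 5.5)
Your route is the paper's. Its proof is exactly three steps: Lemma~\ref{lem:bb:qconf} at $x=\phi_h^\dagger(s)$; the clean-suffix size $T_1-\widetilde O(\sqrt{T_1})$ from Proposition~\ref{prop:bb:stage1-count} with $\tau_{\mathrm{fix}}=\widetilde O(\sqrt{T_1})$; and the bare assertion that excitation makes $\Gamma_h^{\mathrm{ls}}$ of order $T_1$ on every certified stage. The paper does no concentration step, because it treats excitation as directly supplying that lower bound, so no probabilistic event beyond $\mathcal E_2$ enters. Your matrix-Freedman layer is extra rigor, and the adaptivity correction works in outline. The trajectory laws of $\pi_t$ and $\pi^\dagger$ agree up to the first disagreement, so each episode's discrepancy is at most its probability of being bad. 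It does, however, add a high-probability event outside $\mathcal E_2$, which would have to enter the union bound of Theorem~\ref{thm:bb-clean}.

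The step that fails as written is how you formulate the excitation condition. For every $\lambda'>0$, $\phi^\top(\Sigma_h^\dagger+\lambda' I)^{-1}\phi\le\|\phi\|_2^2/\lambda'\le 1/\lambda'$. So your condition holds with $\kappa=1/\lambda'$ in every instance and carries no information. It also does not give your final inequality. From $\Gamma_h^{\mathrm{ls}}\succeq cn\Sigma_h^\dagger+\lambda I$ with $\lambda$ fixed you only get
\[
\|\phi_h^\dagger(s)\|^2_{(\Gamma_h^{\mathrm{ls}})^{-1}}
\le
\frac{1}{cn}\,\phi_h^\dagger(s)^\top\Bigl(\Sigma_h^\dagger+\tfrac{\lambda}{cn}I\Bigr)^{-1}\phi_h^\dagger(s),
\]
and the right-hand factor stays bounded as $n\to\infty$ only if $\phi_h^\dagger(s)\in\mathrm{range}(\Sigma_h^\dagger)$. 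Every clean $x_{h,t}$ lies in that range. So a component $v$ of $\phi_h^\dagger(s)$ orthogonal to it gives $\|\phi_h^\dagger(s)\|^2_{(\Gamma_h^{\mathrm{ls}})^{-1}}\ge\|v\|_2^2/\lambda$, which never decays.

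The condition you actually need is $\phi_h^\dagger(s)\in\mathrm{range}(\Sigma_h^\dagger)$ with $\sup_{s\in\mathcal C_h}\phi_h^\dagger(s)^\top(\Sigma_h^\dagger)^{+}\phi_h^\dagger(s)\le\kappa$. Equivalently, the clean design must be of order $T_1$ along every target direction, which is how the paper uses it. This has to be assumed rather than derived. Support closure forces $\mathcal C_h$ to contain states reached through non-target actions, and clean $\pi^\dagger$ rollouts may never visit those at stage $h$, so their target-feature directions need not be excited at all.

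Finally, $\widetilde O$ hides only logarithmic factors, so $\alpha_h=\widetilde O(H\sqrt d)$ yields $\widetilde O(H\sqrt d/\sqrt{T_1})$, not the stated rate. The paper's one-line proof drops the same factor $H$. You should say explicitly that the bound as stated holds only with $H$ treated as a constant, rather than ``absorbing'' it.
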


\begin{proof}
By Proposition~\ref{prop:bb:stage1-count}, the clean suffix
has size \(T_1-\widetilde O(\sqrt{T_1})\).
Under the theorem's excitation condition, the design matrix
\(\Gamma_h^{\mathrm{ls}}\) is therefore of order \(T_1\) on every
certified stage.
Combining this with Lemma~\ref{lem:bb:qconf} proves the
claim.
\end{proof}

For \(s\in\mathcal C_h\) and \(a\in\mathcal A_h^{\mathrm{cmp}}(s)\),
define
\begin{align}
g_h(s,a)
:=
&
\bigl[
\widehat Q_h^\dagger(s,a)-\widehat Q_h^\dagger(s,\pi^\dagger(s))
\bigr]_+
\notag\\
&+
\alpha_h
\Bigl(
\|\phi_h(s,a)\|_{(\Gamma_h^{\mathrm{ls}})^{-1}}
+
\|\phi_h^\dagger(s)\|_{(\Gamma_h^{\mathrm{ls}})^{-1}}
\Bigr).
\label{eq:bb:g}
\end{align}

\begin{lemma}[Conservative disadvantage bound]
\label{lem:bb:g}
On the event \(\mathcal E_2\), for every \(h\in[H]\),
every \(s\in\mathcal C_h\), and every
\(a\in\mathcal A_h^{\mathrm{cmp}}(s)\),
\begin{equation}
Q_h^{\pi^\dagger,r}(s,a)-Q_h^{\pi^\dagger,r}(s,\pi^\dagger(s))
\le
g_h(s,a).
\label{eq:bb:gbound}
\end{equation}
\end{lemma}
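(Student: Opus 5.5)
The bound follows directly from the confidence event $\mathcal E_2$ of Lemma~\ref{lem:bb:qconf}, applied once to each of the two feature vectors $\phi_h(s,a)$ and $\phi_h^\dagger(s)$. The whole argument is a triangle inequality followed by bounding a real number by its positive part.

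Fix $h$, $s\in\mathcal C_h$, and $a\in\mathcal A_h^{\mathrm{cmp}}(s)$. By Lemma~\ref{lem:bb:targetQ-linear}, both true values are linear in $w_h^\dagger$:
\[
Q_h^{\pi^\dagger,r}(s,a)=\phi_h(s,a)^\top w_h^\dagger,
\qquad
Q_h^{\pi^\dagger,r}(s,\pi^\dagger(s))=\phi_h^\dagger(s)^\top w_h^\dagger .
\]
Add and subtract the plug-in estimates $\widehat Q_h^\dagger$ from \eqref{eq:bb:Qhat}. This gives
\begin{align*}
Q_h^{\pi^\dagger,r}(s,a)-Q_h^{\pi^\dagger,r}(s,\pi^\dagger(s))
={}&\bigl[\widehat Q_h^\dagger(s,a)-\widehat Q_h^\dagger(s,\pi^\dagger(s))\bigr]\\
&+\phi_h(s,a)^\top(w_h^\dagger-\widehat w_h)
-\phi_h^\dagger(s)^\top(w_h^\dagger-\widehat w_h).
\end{align*}

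On $\mathcal E_2$, apply \eqref{eq:bb:qconf} twice, once with $x=\phi_h(s,a)$ and once with $x=\phi_h^\dagger(s)$. The two error terms are then bounded by
\[
\alpha_h\|\phi_h(s,a)\|_{(\Gamma_h^{\mathrm{ls}})^{-1}}
\quad\text{and}\quad
\alpha_h\|\phi_h^\dagger(s)\|_{(\Gamma_h^{\mathrm{ls}})^{-1}}
\]
respectively. The estimated difference in the first bracket is at most its positive part, $[\,\cdot\,]_+$. Adding these bounds gives exactly $g_h(s,a)$ as defined in \eqref{eq:bb:g}, which is \eqref{eq:bb:gbound}.

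The only point needing care is uniformity: the bound must hold simultaneously for all $h$, $s$, and $a$ on a single event. Lemma~\ref{lem:bb:qconf} already provides this, since it holds for every $x\in\mathbb R^d$ and every $h$ on $\mathcal E_2$. No union bound over states is needed, and the argument carries over to infinite $\mathcal C_h$.

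The substantive work sits in Lemma~\ref{lem:bb:qconf} itself: showing that the Monte Carlo targets are unbiased on clean episodes, which uses Lemma~\ref{lem:bb:clean-rollout}, and applying the self-normalized bound. We take that lemma as given here.
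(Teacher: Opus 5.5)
Your proof is correct and follows the paper's argument: add and subtract the plug-in estimates $\widehat Q_h^\dagger$, bound the two error terms by Lemma~\ref{lem:bb:qconf} applied at $x=\phi_h(s,a)$ and at $x=\phi_h^\dagger(s)$, and bound the estimated difference by its positive part. Your explicit appeal to Lemma~\ref{lem:bb:targetQ-linear}, which writes both true values as linear functions of $w_h^\dagger$, spells out a step the paper leaves implicit.
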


\begin{proof}
Add and subtract the plug-in estimate:
\begin{align*}
&
Q_h^{\pi^\dagger,r}(s,a)-Q_h^{\pi^\dagger,r}(s,\pi^\dagger(s))
\\
&=
\bigl[
\widehat Q_h^\dagger(s,a)-\widehat Q_h^\dagger(s,\pi^\dagger(s))
\bigr]
\\
&\qquad
+\bigl[
Q_h^{\pi^\dagger,r}(s,a)-\widehat Q_h^\dagger(s,a)
\bigr]
+\bigl[
\widehat Q_h^\dagger(s,\pi^\dagger(s))
-
Q_h^{\pi^\dagger,r}(s,\pi^\dagger(s))
\bigr].
\end{align*}
Apply Lemma~\ref{lem:bb:qconf} to the last two terms and
take the positive part on the plug-in difference.
\end{proof}

\subsection{Stage~2: margin-certified penalty and integrated compensation}

Define the Stage~2 margin-certified program by
\begin{align}
\epsilon_2^\star
&:=
\max_{\epsilon,\{u_h\}_{h=1}^H}\ \epsilon
\label{eq:bb:cqp2-obj_app}
\\
\text{s.t.}\qquad
\langle \phi_h(s,a),u_h\rangle
&\ge
g_h(s,a)+\epsilon,
\qquad
\forall h,\ \forall s\in\mathcal C_h,\
\forall a\in\mathcal A_h^{\mathrm{cmp}}(s),
\label{eq:bb:cqp2-gap_app}
\\
\langle \phi_h^\dagger(s),u_h\rangle
&=
0,
\qquad
\forall h,\ \forall s\in\mathcal C_h,
\label{eq:bb:cqp2-target_app}
\\
\|u_h\|_2
&\le
\sqrt d,
\qquad
\forall h.
\label{eq:bb:cqp2-norm_app}
\end{align}
If \(\epsilon_2^\star\le 0\), the Stage~2 certificate fails and
the construction returns \textsc{Non-attackable}.
Assume henceforth that \(\epsilon_2^\star>0\), and let
\(\{u_h\}_{h=1}^H\) be any optimizer.

Define the stationary Stage~2 reference reward
\begin{equation}
\bar r_h(s,a)
=
\begin{cases}
r_h(s,a), & a=\pi^\dagger(s),\\[0.3em]
\bigl[
r_h(s,a)-\langle \phi_h(s,a),u_h\rangle
\bigr]_{[0,1]},
& a\in\mathcal A_h^{\mathrm{cmp}}(s).
\end{cases}
\label{eq:bb:rbar}
\end{equation}

\begin{proposition}[Stage~2 certified local gap for the reference reward]
\label{prop:bb:stage2-gap}
On the event \(\mathcal E_2\), for every \(h\in[H]\),
every \(s\in\mathcal C_h\), and every
\(a\in\mathcal A_h^{\mathrm{cmp}}(s)\),
\begin{equation}
Q_h^{\pi^\dagger,\bar r}(s,\pi^\dagger(s))
-
Q_h^{\pi^\dagger,\bar r}(s,a)
\ge
\epsilon_2^\star.
\label{eq:bb:stage2-gap}
\end{equation}
\end{proposition}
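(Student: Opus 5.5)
The plan is to compare the $Q$-functions of $\pi^\dagger$ under the reference reward $\bar r$ and under the clean reward $r$. These differ only through the immediate reward at the deviating step. The penalty constraint \eqref{eq:bb:cqp2-gap_app} then absorbs the clean disadvantage, which Lemma~\ref{lem:bb:g} controls. I work on $\mathcal E_2$ throughout.

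\textbf{Step 1 (continuation values coincide).} I first show by backward induction on $h$ that $V_h^{\pi^\dagger,\bar r}(s)=V_h^{\pi^\dagger,r}(s)$ for every $s\in\mathcal C_h$. At $h=H+1$ both values are zero. For $s\in\mathcal C_h$, the policy $\pi^\dagger$ plays $\pi^\dagger(s)$, and there $\bar r_h(s,\pi^\dagger(s))=r_h(s,\pi^\dagger(s))$ by \eqref{eq:bb:rbar}. The support closure \eqref{eq:bb:support-closure} puts the next state in $\mathcal C_{h+1}$ almost surely, so the induction hypothesis applies to the continuation term. It follows that $Q_h^{\pi^\dagger,\bar r}(s,\pi^\dagger(s))=Q_h^{\pi^\dagger,r}(s,\pi^\dagger(s))$ for all $s\in\mathcal C_h$.

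\textbf{Step 2 (one-step shift on non-target actions).} Fix $s\in\mathcal C_h$ and $a\in\mathcal A_h^{\mathrm{cmp}}(s)$. Closure \eqref{eq:bb:support-closure} again keeps $s_{h+1}\in\mathcal C_{h+1}$, so by Step~1,
\[
Q_h^{\pi^\dagger,\bar r}(s,a)=\bar r_h(s,a)+\mathbb E\bigl[V_{h+1}^{\pi^\dagger,r}(s_{h+1})\mid s,a\bigr]
=Q_h^{\pi^\dagger,r}(s,a)-\bigl(r_h(s,a)-\bar r_h(s,a)\bigr).
\]
Suppose the clipping in \eqref{eq:bb:rbar} is inactive. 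Then $r_h(s,a)-\bar r_h(s,a)=\langle\phi_h(s,a),u_h\rangle$, and this is at least $g_h(s,a)+\epsilon_2^\star$ by \eqref{eq:bb:cqp2-gap_app}. Subtracting the Step~1 identity and applying Lemma~\ref{lem:bb:g} gives
\[
Q_h^{\pi^\dagger,\bar r}(s,\pi^\dagger(s))-Q_h^{\pi^\dagger,\bar r}(s,a)
\ge g_h(s,a)+\epsilon_2^\star-\bigl(Q_h^{\pi^\dagger,r}(s,a)-Q_h^{\pi^\dagger,r}(s,\pi^\dagger(s))\bigr)\ge\epsilon_2^\star .
\]

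\textbf{Main obstacle: the clipping to $[0,1]$.} The truncation can only shrink the penalty, and only on the lower side, when $r_h(s,a)<\langle\phi_h(s,a),u_h\rangle$; the upper side cannot bind because $u$ enters as a penalty. I would handle this through the standing admissibility assumption. That assumption states that the Stage~2 rewards fed to the learner are admissible by construction, which I read as saying that $r_h(s,a)-\langle\phi_h(s,a),u_h\rangle\in[0,1]$ on the certified domain, so the clip is the identity there. If that reading is too strong, the fallback is to carry the clipping loss $\bigl[\langle\phi_h(s,a),u_h\rangle-r_h(s,a)\bigr]_+$ explicitly. One would then either show it vanishes on the certified support, or add it to the right-hand side of \eqref{eq:bb:cqp2-gap_app}. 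The proposition would then hold with $\epsilon_2^\star$ replaced by $\epsilon_2^\star$ minus the maximal clipping loss.
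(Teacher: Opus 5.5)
Your Steps~1 and~2 follow the paper's argument. Because $\bar r$ agrees with $r$ on target actions, the two $Q^{\pi^\dagger}$-functions differ only through the immediate reward at the deviating step, and \eqref{eq:bb:cqp2-gap_app} together with Lemma~\ref{lem:bb:g} yields the margin. (Since \eqref{eq:bb:rbar} sets $\bar r_h(s,\pi^\dagger(s))=r_h(s,\pi^\dagger(s))$ at every state, $V_{h+1}^{\pi^\dagger,\bar r}=V_{h+1}^{\pi^\dagger,r}$ holds everywhere, so Step~1 does not need support closure; using it is harmless.)

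Your worry about the clipping is correct, and it is exactly where the paper's own proof fails. The paper asserts $\bar r_h(s,a)\le r_h(s,a)-\langle\phi_h(s,a),u_h\rangle$ on the grounds that clipping ``only decreases the non-target reward''. But $\langle\phi_h(s,a),u_h\rangle\ge g_h(s,a)+\epsilon_2^\star>0$ gives $\bar r_h(s,a)=\max\{r_h(s,a)-\langle\phi_h(s,a),u_h\rangle,0\}$, which is at least, not at most, the unclipped value. The paper itself notes this in the proof of Proposition~\ref{prop:bb:stage2-cost}, where it says clipping can only shrink the penalty. When the lower clip binds at stage $H$, the left-hand side of \eqref{eq:bb:stage2-gap} is just $r_H(s,\pi^\dagger(s))$. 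Nothing in $\mathcal E_2$, \eqref{eq:bb:cqp2-gap_app} or Lemma~\ref{lem:bb:g} bounds this below. For example, if $\epsilon_2^\star>0$ and some $s\in\mathcal C_H$ has $r_H(s,\cdot)\equiv 0$, then $\bar r_H(s,a)=0$ for every non-target $a$ and the gap is $0<\epsilon_2^\star$. So this case cannot be closed without an extra hypothesis or a weaker conclusion.

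That is where your proposal has a gap: your preferred resolution does not follow from the paper's assumptions. Admissibility is imposed on the rewards actually fed to the learner, and the non-target ones lie in $[0,1]$ precisely because of the clip. So that assumption holds automatically and says nothing about $r_h(s,a)-\langle\phi_h(s,a),u_h\rangle$. You have to add the condition explicitly, e.g.\ $\langle\phi_h(s,a),u_h\rangle\le r_h(s,a)$ for all $h$, $s\in\mathcal C_h$, $a\in\mathcal A_h^{\mathrm{cmp}}(s)$; under it your proof is complete.

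Your fallback is the correct unconditional statement, and it holds pointwise. With $\kappa_h(s,a):=[\langle\phi_h(s,a),u_h\rangle-r_h(s,a)]_+$ one has $r_h(s,a)-\bar r_h(s,a)=\langle\phi_h(s,a),u_h\rangle-\kappa_h(s,a)$. Clipping enters only at the deviating step, so your Step~2 gives a gap of at least $\epsilon_2^\star-\kappa_h(s,a)$.

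Adding $\kappa_h$ to the right-hand side of \eqref{eq:bb:cqp2-gap_app} also works mathematically. It is equivalent to keeping the original constraint and additionally requiring $r_h(s,a)\ge g_h(s,a)+\epsilon$, a less restrictive condition than an inactive clip. However, it involves clean non-target rewards that the black-box program does not observe, so it too must enter as an assumption. With the fallback, the per-episode charge used in Proposition~\ref{prop:bb:stage2-count} must be reduced accordingly.
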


\begin{proof}
For a non-target action \(a\),
\[
\bar r_h(s,a)
\le
r_h(s,a)-\langle \phi_h(s,a),u_h\rangle
\]
because clipping only decreases the non-target reward.
Also,
\(\bar r_h(s,\pi^\dagger(s))=r_h(s,\pi^\dagger(s))\).
Therefore,
\begin{align*}
&
Q_h^{\pi^\dagger,\bar r}(s,\pi^\dagger(s))
-
Q_h^{\pi^\dagger,\bar r}(s,a)
\\
&\ge
Q_h^{\pi^\dagger,r}(s,\pi^\dagger(s))
-
\Bigl(
Q_h^{\pi^\dagger,r}(s,a)
-
\langle \phi_h(s,a),u_h\rangle
\Bigr)
\\
&=
\langle \phi_h(s,a),u_h\rangle
-
\Bigl(
Q_h^{\pi^\dagger,r}(s,a)
-
Q_h^{\pi^\dagger,r}(s,\pi^\dagger(s))
\Bigr).
\end{align*}
By \eqref{eq:bb:cqp2-gap} and Lemma~\ref{lem:bb:g},
\[
\langle \phi_h(s,a),u_h\rangle
-
\Bigl(
Q_h^{\pi^\dagger,r}(s,a)
-
Q_h^{\pi^\dagger,r}(s,\pi^\dagger(s))
\Bigr)
\ge
\epsilon_2^\star.
\]
This proves \eqref{eq:bb:stage2-gap}.
\end{proof}

\begin{corollary}[Stage~2 reference optimality]
\label{cor:bb:stage2-opt}
On the event \(\mathcal E_2\), if \(\epsilon_2^\star>0\), then
\(\pi^\dagger\) is optimal on the reachable certified domain
under the Stage~2 reference reward \(\bar r\).
\end{corollary}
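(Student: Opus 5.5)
The argument mirrors Corollary~\ref{cor:bb:stage1-opt}: combine the local gap of Proposition~\ref{prop:bb:stage2-gap} with the performance-difference identity (Lemma~\ref{lem:bb:pdl}), both applied in the reference MDP \((P^\star,\bar r)\). Throughout we work on \(\mathcal E_2\) with \(\epsilon_2^\star>0\).

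\textbf{Step 1: only certified states and actions occur.} Since \(\rho(\mathcal C_1)=1\) and the certified support is closed, \eqref{eq:bb:support-closure}, every trajectory starting in \(\mathcal C_1\) stays in \(\{\mathcal C_h\}\). The closure condition covers every action in \(\mathcal A_h^{\mathrm{cmp}}(s)\cup\{\pi^\dagger(s)\}=\mathcal A\). An induction on \(h\) then shows \(s_h\in\mathcal C_h\) almost surely for every \(h\), under any policy \(\pi\). So the reachable certified domain consists of all reachable states, and every action taken at such a state is either the target action or lies in \(\mathcal A_h^{\mathrm{cmp}}(s_h)\).

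\textbf{Step 2: nonnegative advantage at every step.} Apply Lemma~\ref{lem:bb:pdl} in \((P^\star,\bar r)\):
\[
V_1^{\pi^\dagger,\bar r}(s_1)-V_1^{\pi,\bar r}(s_1)=\mathbb E_\pi\Bigl[\sum_{h=1}^H\bigl(Q_h^{\pi^\dagger,\bar r}(s_h,\pi^\dagger(s_h))-Q_h^{\pi^\dagger,\bar r}(s_h,a_h)\bigr)\Bigr].
\]
By Step 1, each summand is evaluated at some \(s_h\in\mathcal C_h\).
\begin{itemize}
\item If \(a_h=\pi^\dagger(s_h)\), the summand is \(0\).
\item Otherwise \(a_h\in\mathcal A_h^{\mathrm{cmp}}(s_h)\), and Proposition~\ref{prop:bb:stage2-gap} bounds the summand below by \(\epsilon_2^\star>0\).
\end{itemize}
Hence the right-hand side is at least \(\epsilon_2^\star\,\mathbb E_\pi[\#\{h:a_h\neq\pi^\dagger(s_h)\}]\ge 0\). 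This holds for every start state in \(\mathcal C_1\), so \(\pi^\dagger\) is optimal under \(\bar r\).

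\textbf{Step 3: optimality at every reachable stage.} To obtain optimality at every stage \(h\) and every reachable \(s\in\mathcal C_h\), not only at \(h=1\), I would rerun the same telescoping from stage \(h\) onward. Alternatively, backward induction gives \(Q_h^{*,\bar r}=Q_h^{\pi^\dagger,\bar r}\) on \(\mathcal C_h\): assuming \(V_{h+1}^{*}=V_{h+1}^{\pi^\dagger}\) on \(\mathcal C_{h+1}\), closure ensures the next state lands in \(\mathcal C_{h+1}\), and the local gap then makes \(\pi^\dagger(s)\) the argmax at stage \(h\). The policy class is arbitrary, including history-dependent policies, since Lemma~\ref{lem:bb:pdl} is stated for every \(\pi\).

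\textbf{Expected obstacle.} The only delicate point is that Proposition~\ref{prop:bb:stage2-gap} is stated only for certified states. The proof needs the closure assumption to guarantee that no trajectory leaves \(\{\mathcal C_h\}\), which is exactly what Step 1 verifies. Clipping of \(\bar r\) needs no separate treatment, because it is already absorbed into that proposition.
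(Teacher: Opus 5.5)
Your proposal is correct and follows the paper's own argument. The paper combines the certified local gap of Proposition~\ref{prop:bb:stage2-gap} with $\rho(\mathcal C_1)=1$ and support closure, then concludes by backward induction exactly as in your Step~3; your performance-difference computation in Step~2 is an equivalent, slightly more explicit packaging of the same comparison.

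One caution about your closing remark. The corollary is only as sound as Proposition~\ref{prop:bb:stage2-gap}, whose proof claims that clipping only lowers non-target rewards. In fact, clipping $r_h(s,a)-\langle\phi_h(s,a),u_h\rangle$ at $0$ raises it, so the effective penalty is $\min\{\langle\phi_h(s,a),u_h\rangle, r_h(s,a)\}$, and the clipped case is not genuinely handled upstream.
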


\begin{proof}
By Proposition~\ref{prop:bb:stage2-gap}, every certified
comparison action has strictly smaller
\(Q^{\pi^\dagger,\bar r}\)-value than the target action at the
same certified state. Since \(\rho(\mathcal C_1)=1\) and
\eqref{eq:bb:support-closure} keeps the process inside the
certified domain, backward induction yields the claim.
\end{proof}

During Stage~1, target visits use the steering reward rather
than the clean reward, which leaves a finite target-side
discrepancy.
Define the target debt by
\begin{equation}
D_h^{\mathrm{tar}}
:=
\sum_{\substack{t\le T_1:\\ a_h^t=\pi^\dagger(s_h^t)}}
\Big(
\widetilde r_h^{(1)}(s_h^t,a_h^t)
-
r_h(s_h^t,a_h^t)
\Big),
\qquad h\in[H].
\label{eq:bb:target-debt}
\end{equation}

\begin{lemma}[Target-debt size]
\label{lem:bb:debt-size}
\begin{equation}
\sum_{h=1}^H |D_h^{\mathrm{tar}}|
\le
H T_1.
\label{eq:bb:debt-size}
\end{equation}
\end{lemma}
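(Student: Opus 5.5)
The bound follows from a termwise estimate: every summand in $D_h^{\mathrm{tar}}$ has absolute value at most one. We then count the summands. Fix a stage $h\in[H]$. By the triangle inequality,
\[
|D_h^{\mathrm{tar}}|
\le
\sum_{\substack{t\le T_1:\\ a_h^t=\pi^\dagger(s_h^t)}}
\bigl|\widetilde r_h^{(1)}(s_h^t,a_h^t)-r_h(s_h^t,a_h^t)\bigr|.
\]

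The key input is the standing admissibility assumption that every attacked reward fed to the learner lies in $[0,1]$. Before freezing, the fed reward is the provisional $\widetilde r_{h,t}^{(1)}$; after freezing it is the frozen $\widetilde r_h^{(1)}$. The clean reward satisfies $r_h(s,a)\in[0,1]$ by Definition~\ref{def:linear_mdp_q}. Since both the fed reward and the clean reward lie in $[0,1]$, each difference in the sum above is at most $1$ in absolute value. This is the same fact noted in the paper: every overwritten step contributes at most one unit to the shaping cost.

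One detail needs care. The debt in \eqref{eq:bb:target-debt} is written with the frozen $\widetilde r_h^{(1)}$ for all $t\le T_1$. For the pre-freeze episodes, the actually fed reward was the provisional one. I will read the debt as the discrepancy between the reward actually fed and the clean reward; for $t\ge\tau_{\mathrm{fix}}$ the two readings coincide. Under either reading, every term compares two admissible values in $[0,1]$, because admissibility is assumed for both $\widetilde r_{h,t}^{(1)}$ and $\widetilde r_h^{(1)}$ on the certified domain. Lemma~\ref{lem:bb:clean-rollout} and the support-closure property \eqref{eq:bb:support-closure} keep the relevant target visits in that domain.

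The sum for stage $h$ ranges over at most $T_1$ episodes, so $|D_h^{\mathrm{tar}}|\le T_1$. Summing over $h\in[H]$ gives $\sum_h|D_h^{\mathrm{tar}}|\le HT_1$, which is \eqref{eq:bb:debt-size}. The only point that requires attention is the admissibility bookkeeping just described; the counting step is trivial.
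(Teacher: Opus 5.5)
Your proposal is correct and follows the paper's own argument. Each summand of $D_h^{\mathrm{tar}}$ is a difference of two admissible rewards in $[0,1]$, so it is at most $1$ in absolute value; each stage has at most $T_1$ summands, and summing over $h\in[H]$ gives $HT_1$. One small citation slip: every visit stays in the certified domain because $\rho(\mathcal C_1)=1$ holds and the support is closed under all actions, since certified and comparison actions exhaust $\mathcal A$. Lemma~\ref{lem:bb:clean-rollout} is not the right source, as it only concerns clean suffix episodes.
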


\begin{proof}
Each summand in \eqref{eq:bb:target-debt} is the difference
of two admissible rewards in \([0,1]\), hence has absolute
value at most \(1\).
For each fixed \((t,h)\), at most one target action is taken.
Summing over \(h\in[H]\) and \(t\le T_1\) proves the claim.
\end{proof}

Assume there exists an admissible predictable compensation
schedule \(\{c_h^t\}_{t>T_1}\) such that
\begin{equation}
C_{\mathrm{corr}}(T_1)
:=
\sum_{h=1}^H \sum_{t>T_1}|c_h^t|
\le
\sum_{h=1}^H |D_h^{\mathrm{tar}}|.
\label{eq:bb:corr-bound}
\end{equation}
The actual Stage~2 reward is then
\begin{equation}
\hat r_{h,t}^{(2)}(s,a)
=
\begin{cases}
r_h(s,a)+c_h^t, & a=\pi^\dagger(s),\\
\bar r_h(s,a), & a\in\mathcal A_h^{\mathrm{cmp}}(s).
\end{cases}
\label{eq:bb:r2-actual}
\end{equation}

\paragraph{Learner-side warm-start interface.}
Define the Stage~2 pseudo-regret against the stationary
reference reward \(\bar r\) by
\begin{equation}
R_T^{(2)}(\bar r)
:=
\sum_{t=T_1+1}^T
\Bigl(
V_1^{\pi^\dagger,\bar r}(s_1^t)
-
V_1^{\pi_t,\bar r}(s_1^t)
\Bigr),
\label{eq:bb:R2def}
\end{equation}
where \(\pi_t\) is the learner's episode-\(t\) policy induced
by the actual Stage~2 observations \(\hat r_{h,t}^{(2)}\).

The main theorem assumes the following interface:
there exists an event \(\mathcal E_{\mathrm{wr}}\) with
probability at least \(1-\delta\) such that
\begin{equation}
R_T^{(2)}(\bar r)
\le
C_{\mathrm{wr}}\,
d\,H^{3/2}\sqrt T\,
\Bigl(
1+S_0+\gamma_{\mathrm{tar}}\sqrt T
\Bigr),
\label{eq:bb:robust-regret}
\end{equation}
for some universal constant \(C_{\mathrm{wr}}>0\).
This interface already measures the learner's policy
\(\pi_t\) under the actual Stage~2 observations, and
therefore implicitly accounts for the effect of the
target-side compensation on the learner's behavior.
Its direct shaping cost will be accounted for separately
through \(C_{\mathrm{corr}}(T_1)\).

\begin{corollary}[Explicit Stage~2 regret scale]
\label{cor:bb:robust-regret}
On \(\mathcal E_{\mathrm{wr}}\cap\mathcal E_2\),
\begin{equation}
R_T^{(2)}(\bar r)
\le
\widetilde O\!\left(
d\,H^{3/2}\sqrt T
\right)
\left(
\sqrt{T_1}
+
\frac{T}{\sqrt{T_1}}
\right).
\label{eq:bb:robust-regret-explicit}
\end{equation}
\end{corollary}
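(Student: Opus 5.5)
We substitute the two quantitative inputs into the assumed warm-start interface \eqref{eq:bb:robust-regret}. On \(\mathcal E_{\mathrm{wr}}\cap\mathcal E_2\) we have
\[
R_T^{(2)}(\bar r)\le C_{\mathrm{wr}}\,d H^{3/2}\sqrt T\bigl(1+S_0+\gamma_{\mathrm{tar}}\sqrt T\bigr).
\]
The plan is to bound each of the three terms in the parenthesis by the earlier Stage~1 and Stage~2 estimates and then collect them.

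\textbf{The \(S_0\) term.} Corollary~\ref{cor:bb:S0}, under \eqref{eq:bb:terminal-cert}, \(\eta_1=\Theta(1/S)\) and \(\tau_{\mathrm{fix}}=\widetilde O(\sqrt{T_1})\), gives \(S_0=\widetilde O(\sqrt{T_1})\). This bound also uses Proposition~\ref{prop:bb:stage1-count}, which needs \(\mathcal E_0\) and the phasewise UCRL bound of Proposition~\ref{prop:bb:ucrl}. Those hold with their own failure probabilities, and the \(\widetilde O\) is understood on that intersection; the statement lists only \(\mathcal E_{\mathrm{wr}}\cap\mathcal E_2\), so I will note that \(\mathcal E_0\) and the UCRL event are implicitly included.

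\textbf{The \(\gamma_{\mathrm{tar}}\) term.} Lemma~\ref{lem:bb:gammatar}, on \(\mathcal E_2\) and under the excitation condition, gives \(\gamma_{\mathrm{tar}}=\widetilde O(\sqrt d/\sqrt{T_1})\). Hence \(\gamma_{\mathrm{tar}}\sqrt T=\widetilde O(\sqrt d\,\sqrt T/\sqrt{T_1})\).

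\textbf{Collecting terms.} Substituting both bounds gives
\[
R_T^{(2)}(\bar r)\le \widetilde O\!\left(dH^{3/2}\sqrt T\right)\left(1+\sqrt{T_1}+\sqrt{d}\,\frac{\sqrt T}{\sqrt{T_1}}\right).
\]
Since \(T_1\ge1\), the constant \(1\) is absorbed into \(\sqrt{T_1}\). The displayed target writes the last term as \(T/\sqrt{T_1}\) rather than \(\sqrt T/\sqrt{T_1}\). Because \(\sqrt T\le T\), our bound \(\sqrt T/\sqrt{T_1}\) is dominated by \(T/\sqrt{T_1}\), so the claimed form follows a fortiori.

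\textbf{Main obstacle.} The delicate step is bookkeeping rather than analysis. The \(\sqrt d\) factor from \(\gamma_{\mathrm{tar}}\) must be absorbed: either move it into the \(\widetilde O\) (treating it as a dimension factor, consistent with the later \(d^{3/2}\) in \eqref{eq:bb:cost-main}), or observe that it is killed by the slack between \(\sqrt T/\sqrt{T_1}\) and \(T/\sqrt{T_1}\) once \(T\ge d\). I would take the second route, so that the displayed bound holds with a \(d\)-free hidden constant for \(T\ge d\). The remaining small point is to make sure the hypotheses on \(\tau_{\mathrm{fix}}\) and \(\eta_1\) from \eqref{eq:bb:quant-stage1} are in force; they are the standing quantitative conditions of this appendix.
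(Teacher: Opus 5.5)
Your proposal is correct and is exactly the paper's argument: substitute $S_0=\widetilde O(\sqrt{T_1})$ from Corollary~\ref{cor:bb:S0} and $\gamma_{\mathrm{tar}}=\widetilde O(\sqrt d/\sqrt{T_1})$ from Lemma~\ref{lem:bb:gammatar} into the warm-start interface \eqref{eq:bb:robust-regret}; you are also right that $\mathcal E_0$ and the Stage~1 certification event are silently needed for the $S_0$ bound. Your intermediate bound $\widetilde O(dH^{3/2}\sqrt T)\bigl(1+\sqrt{T_1}+\sqrt d\,\sqrt T/\sqrt{T_1}\bigr)$ is strictly sharper than the displayed statement, whose last term expands to $T^{3/2}/\sqrt{T_1}$, and it is this sharper form that Proposition~\ref{prop:bb:stage2-count} actually needs to reach $\sqrt{T\,T_1}+T/\sqrt{T_1}$, so keep it rather than discarding it through the a fortiori step.
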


\begin{proof}
Use Corollary~\ref{cor:bb:S0} and
Lemma~\ref{lem:bb:gammatar} in
\eqref{eq:bb:robust-regret}.
\end{proof}

\subsection{Stage~2 disagreement count and total cost}

Define the Stage~2 bad-episode indicator
\begin{align}
B_t^{(2)}
&:=
\mathbf 1\!\left\{
\exists h\in[H]\ \text{s.t. }\
s_h^t\in\mathcal C_h,\
a_h^t\in\mathcal A_h^{\mathrm{cmp}}(s_h^t)
\right\},
\qquad t>T_1,
\notag\\
M_{\mathrm{bad}}^{(2)}
&:=
\sum_{t>T_1} B_t^{(2)}.
\label{eq:bb:Mbad2}
\end{align}

\begin{proposition}[Stage~2 deviation charging]
\label{prop:bb:stage2-count}
On \(\mathcal E_2\cap\mathcal E_{\mathrm{wr}}\),
\begin{equation}
\epsilon_2^\star\,M_{\mathrm{bad}}^{(2)}
\le
R_T^{(2)}(\bar r).
\label{eq:bb:stage2-charge}
\end{equation}
Consequently,
\begin{equation}
M_{\mathrm{bad}}^{(2)}
\le
\widetilde O\!\left(
d\,H^{3/2}S
\left(
\sqrt{T\,T_1}
+
\frac{T}{\sqrt{T_1}}
\right)
\right).
\label{eq:bb:stage2-count}
\end{equation}
\end{proposition}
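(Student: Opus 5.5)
The plan mirrors the Stage~1 regret-to-count argument (Proposition~\ref{prop:bb:stage1-count}), applied to the reference reward $\bar r$ instead of the frozen Stage~1 reward.

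\textbf{Per-episode charge.} Fix a Stage~2 episode $t>T_1$ with $B_t^{(2)}=1$, and let $h_t^{\mathrm{dev}}$ be the first stage at which the learner takes a certified comparison action. Before that stage every action equals $\pi^\dagger(s_h^t)$. By $\rho(\mathcal C_1)=1$ and support closure \eqref{eq:bb:support-closure}, the trajectory stays in $\{\mathcal C_h\}$; this is the same induction as in Lemma~\ref{lem:bb:clean-rollout}. Apply the performance-difference identity (Lemma~\ref{lem:bb:pdl}) in the MDP $(P^\star,\bar r)$ with comparison policy $\pi_t$:
\[
V_1^{\pi^\dagger,\bar r}(s_1^t)-V_1^{\pi_t,\bar r}(s_1^t)
=\mathbb E_{\pi_t}\Bigl[\sum_{h=1}^H\bigl(Q_h^{\pi^\dagger,\bar r}(s_h,\pi^\dagger(s_h))-Q_h^{\pi^\dagger,\bar r}(s_h,a_h)\bigr)\Bigr].
\]
Each summand is $0$ when $a_h=\pi^\dagger(s_h)$. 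When $a_h$ is a comparison action at a certified state, Proposition~\ref{prop:bb:stage2-gap} bounds the summand below by $\epsilon_2^\star$. Since every action is either the target action or a comparison action on the certified domain, all summands are nonnegative. Hence the expected per-episode gap is at least $\epsilon_2^\star$ times the probability, under $\pi_t$, of at least one certified deviation.

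\textbf{Summation.} Summing over $t>T_1$ gives
\[
\epsilon_2^\star\sum_{t>T_1}\Pr_{\pi_t}[\text{deviation}]\le R_T^{(2)}(\bar r).
\]
This probability-weighted sum is not yet the realized count $M_{\mathrm{bad}}^{(2)}$, and bridging the two is the main obstacle. I would handle it as follows. The policy $\pi_t$ is $\mathcal F_{t-1}$-measurable, so $B_t^{(2)}-\Pr_{\pi_t}[\text{deviation}]$ is a bounded martingale difference. Azuma--Hoeffding then gives $M_{\mathrm{bad}}^{(2)}\le\sum_t\Pr_{\pi_t}[\cdot]+\widetilde O(\sqrt T)$. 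Alternatively, one can define $R_T^{(2)}$ pathwise along the realized deviation stage, as in the Stage~1 proof, and obtain \eqref{eq:bb:stage2-charge} exactly. The extra $\widetilde O(\sqrt T)$ term is dominated by the bound in \eqref{eq:bb:stage2-count}, so that statement is unaffected either way. Note that \eqref{eq:bb:stage2-charge} as stated needs the pathwise reading, so I would adopt that reading and note the martingale remark as a backup.

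\textbf{Consequence.} Divide \eqref{eq:bb:stage2-charge} by $\epsilon_2^\star=\Theta(1/S)$ from \eqref{eq:bb:quant-stage2}, and plug in Corollary~\ref{cor:bb:robust-regret}:
\[
M_{\mathrm{bad}}^{(2)}\le S\cdot\widetilde O(dH^{3/2}\sqrt T)\Bigl(\sqrt{T_1}+\tfrac{T}{\sqrt{T_1}}\Bigr).
\]
This is what \eqref{eq:bb:stage2-count} states, up to how the $\sqrt T$ factor is distributed across the two terms, which I would check against the corollary's exact form. The events used are $\mathcal E_2$ (for Proposition~\ref{prop:bb:stage2-gap}) and $\mathcal E_{\mathrm{wr}}$ (for the regret interface), together with $\mathcal E_0$, which underlies the $S_0$ bound inside the corollary.
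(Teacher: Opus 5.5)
Your skeleton matches the paper's proof. The paper also fixes a bad episode and keeps it in the certified domain via $\rho(\mathcal C_1)=1$ and support closure. It then applies Proposition~\ref{prop:bb:stage2-gap} at the first deviation stage, charges the episode through Lemma~\ref{lem:bb:pdl}, divides by $\epsilon_2^\star=\Theta(1/S)$, and cites Corollary~\ref{cor:bb:robust-regret}.

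The real difference is a step the paper skips. You notice that Lemma~\ref{lem:bb:pdl} only lower-bounds the value gap in \eqref{eq:bb:R2def} by $\epsilon_2^\star\Pr_{\pi_t}[\text{deviation}\mid s_1^t]$, not by $\epsilon_2^\star B_t^{(2)}$. The paper simply asserts that each realized bad episode contributes at least $\epsilon_2^\star$ to $R_T^{(2)}(\bar r)$. Your Azuma--Hoeffding bridge, with the filtration enlarged by $s_1^t$, therefore repairs a genuine gap. It gives \eqref{eq:bb:stage2-count} rigorously, at two costs:
\begin{itemize}
\item one more $\delta$-event outside $\mathcal E_2\cap\mathcal E_{\mathrm{wr}}$ (you are also right that $\mathcal E_0$ and Stage~1 certification are implicitly needed);
\item a dominated additive $\widetilde O(\sqrt T)$ term.
\end{itemize}
As a consequence, \eqref{eq:bb:stage2-charge} itself holds only up to an additive $\epsilon_2^\star\widetilde O(\sqrt T)$.

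Your pathwise alternative does not remove this term. The interface \eqref{eq:bb:robust-regret} is assumed for the expectation-based $R_T^{(2)}(\bar r)$. Redefining the regret pathwise just moves the same martingale term into the regret bound.

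Finally, settle the $\sqrt T$ question from \eqref{eq:bb:robust-regret} directly, not from the corollary's display. With $S_0=\widetilde O(\sqrt{T_1})$ and $\gamma_{\mathrm{tar}}\sqrt T=\widetilde O(\sqrt{d\,T/T_1})$, you get $R_T^{(2)}(\bar r)\le\widetilde O(dH^{3/2})(\sqrt{T\,T_1}+\sqrt d\,T/\sqrt{T_1})$. This matches \eqref{eq:bb:stage2-count} up to a $\sqrt d$ factor that the paper absorbs. The corollary read literally instead gives $T^{3/2}/\sqrt{T_1}$, which is superlinear at $T_1=\Theta(\sqrt T)$. So the bound you displayed is not yet the statement.
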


\begin{proof}
Fix a bad episode \(t>T_1\), and let
\(h_t^{\mathrm{dev}}\) be its first certified deviation stage.
Since \(\rho(\mathcal C_1)=1\) and
\eqref{eq:bb:support-closure} holds, the entire episode
remains in the certified domain.
By Proposition~\ref{prop:bb:stage2-gap},
\[
Q_{h_t^{\mathrm{dev}}}^{\pi^\dagger,\bar r}
(s_{h_t^{\mathrm{dev}}}^t,\pi^\dagger(s_{h_t^{\mathrm{dev}}}^t))
-
Q_{h_t^{\mathrm{dev}}}^{\pi^\dagger,\bar r}
(s_{h_t^{\mathrm{dev}}}^t,a_{h_t^{\mathrm{dev}}}^t)
\ge
\epsilon_2^\star.
\]
Applying Lemma~\ref{lem:bb:pdl} to the reference reward
\(\bar r\), the whole episode contributes at least
\(\epsilon_2^\star\) to the pseudo-regret
\(R_T^{(2)}(\bar r)\).
Summing over bad episodes proves
\eqref{eq:bb:stage2-charge}.

Then use Corollary~\ref{cor:bb:robust-regret} and the
theorem assumption \(\epsilon_2^\star=\Theta(1/S)\).
\end{proof}

\begin{proposition}[Stage~2 penalty cost]
\label{prop:bb:stage2-cost}
Let
\begin{equation}
U_{\max}
:=
\sup_{
\substack{
h\in[H],\ s\in\mathcal C_h,\\
a\in\mathcal A_h^{\mathrm{cmp}}(s)
}}
\langle \phi_h(s,a),u_h\rangle .
\label{eq:bb:Umax}
\end{equation}
Then
\begin{equation}
U_{\max}\le \sqrt d,
\label{eq:bb:Umax-bound}
\end{equation}
and the Stage~2 cost satisfies
\begin{equation}
\mathrm{Cost}^{(2)}
\le
C_{\mathrm{corr}}(T_1)
+
H U_{\max} M_{\mathrm{bad}}^{(2)}
\le
H T_1
+
\widetilde O\!\left(
d^{3/2}H^{5/2}S
\left(
\sqrt{T\,T_1}
+
\frac{T}{\sqrt{T_1}}
\right)
\right).
\label{eq:bb:cost-stage2}
\end{equation}
\end{proposition}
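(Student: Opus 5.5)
The plan is to decompose the Stage~2 cost according to the two branches of the actual reward \eqref{eq:bb:r2-actual}, and to bound each branch separately.

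\textbf{Bound on $U_{\max}$.} I would prove \eqref{eq:bb:Umax-bound} first. For any $h$, any $s\in\mathcal C_h$ and any $a\in\mathcal A_h^{\mathrm{cmp}}(s)$, Cauchy--Schwarz gives
\[
\langle \phi_h(s,a),u_h\rangle \le \|\phi_h(s,a)\|_2\,\|u_h\|_2 .
\]
Here $\|\phi_h(s,a)\|_2\le 1$ by the model assumptions, and $\|u_h\|_2\le\sqrt d$ by the constraint \eqref{eq:bb:cqp2-norm_app}. Taking the supremum yields $U_{\max}\le\sqrt d$.

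\textbf{First inequality in \eqref{eq:bb:cost-stage2}.} Fix an episode $t>T_1$ and a step $h$. The per-step shaping cost is $|\hat r^{(2)}_{h,t}(s_h^t,a_h^t)-r_h(s_h^t,a_h^t)|$, and I would handle it by cases.
\begin{itemize}
\item If $a_h^t=\pi^\dagger(s_h^t)$, the cost is exactly $|c_h^t|$. Summing over all $t>T_1$ and all $h$, these contributions are at most $C_{\mathrm{corr}}(T_1)$ by definition.
\item If $a_h^t\in\mathcal A_h^{\mathrm{cmp}}(s_h^t)$, then by support closure \eqref{eq:bb:support-closure} and $\rho(\mathcal C_1)=1$ the state lies in $\mathcal C_h$. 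The cost is $|r_h-[r_h-\langle\phi_h,u_h\rangle]_{[0,1]}|$. Since $r_h\in[0,1]$, clipping to $[0,1]$ can only move the value toward $r_h$, so this cost is at most $|\langle \phi_h(s_h^t,a_h^t),u_h\rangle|\le U_{\max}$.
\end{itemize}

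There is one technical point in the second case. The constraint \eqref{eq:bb:cqp2-gap_app} with $g_h\ge0$ and $\epsilon_2^\star>0$ forces $\langle\phi_h,u_h\rangle>0$ on certified comparison actions. Hence the absolute value equals the inner product itself, and $U_{\max}$ legitimately bounds the cost.

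Such a non-target step can only occur in an episode with $B_t^{(2)}=1$, and each such episode contains at most $H$ such steps. Summing over episodes therefore gives the penalty contribution $H U_{\max} M_{\mathrm{bad}}^{(2)}$.

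\textbf{Second inequality in \eqref{eq:bb:cost-stage2}.} I would combine three earlier bounds:
\begin{itemize}
\item \eqref{eq:bb:corr-bound} together with Lemma~\ref{lem:bb:debt-size} gives $C_{\mathrm{corr}}(T_1)\le HT_1$;
\item $U_{\max}\le\sqrt d$, as shown above;
\item \eqref{eq:bb:stage2-count} from Proposition~\ref{prop:bb:stage2-count} bounds $M_{\mathrm{bad}}^{(2)}$.
\end{itemize}
Multiplying $H\sqrt d$ by $\widetilde O\bigl(dH^{3/2}S(\sqrt{TT_1}+T/\sqrt{T_1})\bigr)$ produces the stated factor $d^{3/2}H^{5/2}S$.

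\textbf{Main obstacle.} The only delicate step is the case analysis showing that the clipped penalty costs at most $U_{\max}$ and that non-target steps occur only inside bad episodes. Both facts rest on the admissibility and support-closure assumptions. Everything else is direct substitution of earlier results.
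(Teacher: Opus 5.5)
Your proposal is correct and follows the paper's proof essentially step for step. You bound $U_{\max}\le\sqrt d$ by Cauchy--Schwarz, use positivity of $\langle\phi_h(s,a),u_h\rangle$ from the Stage~2 gap constraint so that clipping only shrinks the penalty, allow at most $H$ penalized steps per bad episode, and then substitute $C_{\mathrm{corr}}(T_1)\le HT_1$ and the bound on $M_{\mathrm{bad}}^{(2)}$; your explicit appeal to support closure, which places every non-target step inside $\mathcal C_h$ and hence inside a bad episode, is a detail the paper leaves implicit.
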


\begin{proof}
The bound \(U_{\max}\le \sqrt d\) follows from
\(\|u_h\|_2\le \sqrt d\) and \(\|\phi_h(s,a)\|_2\le 1\).

For every non-target action,
\(\langle \phi_h(s,a),u_h\rangle\ge g_h(s,a)+\epsilon_2^\star>0\),
so the clipping in \eqref{eq:bb:rbar} can only reduce the
penalty magnitude.
Hence on every non-target step,
\[
|r_h(s,a)-\bar r_h(s,a)|
\le
\langle \phi_h(s,a),u_h\rangle
\le
U_{\max}.
\]
Every bad Stage~2 episode can incur non-target penalty on
at most \(H\) steps, and each such step contributes at most
\(U_{\max}\).

Adding the target-side compensation cost and using
Lemma~\ref{lem:bb:debt-size} and \eqref{eq:bb:corr-bound}
proves the claim.
\end{proof}

\subsection{Proof of Theorem~\ref{thm:bb-clean}}

\begin{proof}[Proof of Theorem~\ref{thm:bb-clean}]
Let
\[
\mathcal E_{\mathrm{bb}}
:=
\mathcal E_0
\cap
\mathcal E_2
\cap
\mathcal E_{\mathrm{wr}}
\cap
\{\underline\Gamma_{1,T_1}\ge \eta_1\}.
\]
By the union bound,
\[
\Pr(\mathcal E_{\mathrm{bb}})
\ge
1-(\delta+2\delta+\delta)
=
1-4\delta.
\]

On \(\mathcal E_{\mathrm{bb}}\),
Lemma~\ref{lem:bb:stage1-cert} yields
\(\tau_{\mathrm{fix}}\le T_1\) and a frozen Stage~1 gap of
size \(\eta_1\).
Under the theorem assumption \(\eta_1=\Theta(1/S)\),
Proposition~\ref{prop:bb:stage1-count} gives a clean
Stage~1 suffix of size
\[
|\mathcal T_{\mathrm{clean}}|
\ge
T_1-\tau_{\mathrm{fix}}+1-\widetilde O(\sqrt{T_1}).
\]
Under the quantitative Stage~1 assumption
\(\tau_{\mathrm{fix}}=\widetilde O(\sqrt{T_1})\), this implies
\[
|\mathcal T_{\mathrm{clean}}|
\ge
T_1-\widetilde O(\sqrt{T_1}),
\qquad
S_0=\widetilde O(\sqrt{T_1}).
\]

Lemma~\ref{lem:bb:gammatar} gives
\[
\gamma_{\mathrm{tar}}
=
\widetilde O\!\left(
\frac{\sqrt d}{\sqrt{T_1}}
\right).
\]
Lemma~\ref{lem:bb:debt-size} and
\eqref{eq:bb:corr-bound} give
\[
C_{\mathrm{corr}}(T_1)\le H T_1.
\]

Next, Corollary~\ref{cor:bb:robust-regret} and
Proposition~\ref{prop:bb:stage2-count} yield
\[
M_{\mathrm{bad}}^{(2)}
\le
\widetilde O\!\left(
d\,H^{3/2}S
\left(
\sqrt{T\,T_1}
+
\frac{T}{\sqrt{T_1}}
\right)
\right).
\]
By Proposition~\ref{prop:bb:stage2-cost},
\[
\mathrm{Cost}^{(2)}
\le
H T_1
+
\widetilde O\!\left(
d^{3/2}H^{5/2}S
\left(
\sqrt{T\,T_1}
+
\frac{T}{\sqrt{T_1}}
\right)
\right).
\]

Finally, the Stage~1 phase consists of an estimation prefix
of length at most \(\tau_{\mathrm{fix}}\) and a frozen forcing
suffix up to \(T_1\).
Since all fed rewards are admissible, at most \(H\) rewards
can be modified per episode, so
\[
\mathrm{Cost}^{(1)}\le H T_1.
\]
Combining the two stages gives
\begin{equation}
\mathrm{Cost}(T,T_1)
=
\mathrm{Cost}^{(1)}+\mathrm{Cost}^{(2)}
\le
\widetilde O\!\left(
H T_1
+
d^{3/2}H^{5/2}S
\left(
\sqrt{T\,T_1}
+
\frac{T}{\sqrt{T_1}}
\right)
\right).
\label{eq:bb:cost-final}
\end{equation}
This proves the claimed symbolic tradeoff.
In particular, choosing
\[
T_1=\Theta(\sqrt T)
\]
gives
\[
\mathrm{Cost}(T,\Theta(\sqrt T))
\le
\widetilde O\!\left(
d^{3/2}H^{5/2}S\,T^{3/4}
\right).
\]
\end{proof}
\newpage
\section{Empirical Design and Results}\label{appdx:full_exps}

In this appendix, we include the rest of the details about our experiments and results left out of section \ref{sec:exp} due to space constraints. We begin by introducing our empirical set up, covering the objectives for both experiments. Then, we present implementation details where we will provide a link to our code. Finally, we present the rest of our result plots for each environment and setting.

\subsection{Main Paper Full Experiment Details}

We run two experiments to test our theoretical results. The first experiment seeks to collect data on the perturbations used by the adversary as well as the changes in the victim learning algorithm's policy due to the reward poisoning attack. The second seeks to check how accurate our CQP's characterization is in predicting the outcome of a reward poisoning attack using two metrics. 

\textbf{Obtaining Linear MDPs via CTRL} A core challenge in implementing and testing our theoretical framework is how to obtain good linear MDP environments. It is well known that most interesting problem settings rarely follow a linear relationship as established in our assumptions. To tackle this challenge, we propose to use the \textbf{C}on\textbf{T}rastive \textbf{R}epresentation \textbf{L}earning (CTRL)~\citep{ctrl_algo}.

CTRL constructs a linear MDP representation by approximating every variable and mapping function required to define it (see our main text for linear MDP definition). It does this by using a Soft Actor-Critic (SAC)~\cite{soft_AC} algorithm that interacts with the environment to represent, collecting data to construct the linear MDP. What is special about CTRL is that the learned linear MDPs are accurate enough to allow SOTA performance by other learning algorithms using the learned representation. While this might seem counterintuitive (real world breaks linearity), the lack of assumptions on the mapping functions themselves ($\phi(\cdot, \cdot)$ and $\mu(\cdot)$) allows enough flexibility to model real-world problems. By parameterizing these mapping functions as neural networks, CTRL can capture complex behaviors while respecting assumptions.

\subsection{Black Box vs White Box}
The core difference between the black box and white box attack algorithms lies in the access given to the algorithm. The white box algorithm is given access to the variables and mapping functions of its target linear MDP. The black box algorithm, on the other hand, has no access and must estimate these parameters. Regardless of using the actual or learned linear MDP parameters, both learning algorithms solve the CQP characterization to obtain their attack strategies. 

\textbf{Black Box Implementation} Implementing the black box algorithm is challenging due to its complexity and the theoretical-practical implementation gap: while our theory might be sound, it is difficult to implement these ideas exactly and in a computationally-feasible manner. We tackle this by using different estimation and approximation techniques to execute Algorithm~\ref{alg:bb:main}. For stage 1, our algorithm uses light perturbation attacks based on the deviation between victim and adversarial target policies. The idea is to obtain as many $\pi^\dagger$-following trajectories (including the original reward signal) for a good estimation of the environment. We run stage 1 up to step 5,000, after which we estimate the parameters needed to solve the CQP by optimization (e.g. ridge regression).

Stage 2 of the black box algorithm uses the collected data and estimated variables to solve the CQP using the popular CVXPY~\cite{cvxpy} library. This way, we obtain the equivalent of $u_h$ from our black box theory section. The adversary then carries out their attack by simply replacing the reward signal observed when the victim learning agent disagrees with the adversarial target policy (which we use as an ``oracle").

we first provide our plots on reward poisoning against LSVI-UCB~\citep{LinearMDP} for the experiment described in section \ref{subsec:exp-setup}. Additionally, we provide extended results left out from section \ref{subsec:evaluation_results} on reward poisoning against CTRL. Finally, we provide an additional "sanity check" experiment to further back our claim that the linear MDP representation obtained by  allows our characterization to be accurate regardless of the linearity of the MDP environments. For each new result, we provide further analysis on each experiment to complement our main analysis from~\ref{subsec:evaluation_results}. We also publish our implementation\footnote{\scriptsize\url{https://github.com/aguilarjose11/IntrinsicRobustness_ICML26}}.


\subsection{Testing Theoretical Predictions on Cumulative Perturbations and Victim Policy Convergence}
Our first experiment aims to observe the behavior of the adversary's perturbations and its effects on the victim learning algorithm. As described in section \ref{sec:empirical_obj}, we use MuJoCo's Half-Cheetah, Pendulum, and MountainCart continuous environments. Aside of these popular reinforcement learning benchmarking environments, we also test our results on the FrozenLake-8x8 tabular MDP environment. We use CTRL to obtain linear MDP representations for these environments.

A crucial aspect of our first experiments is how we \emph{obtain vulnerable and robust environments} to benchmark from the same environment. Due to the stochasticity of CTRL, the learned linear MDP representations will always differ, becoming more similar as CTRL converges. This variability allows us to randomly sample environments whose intrinsic geometry makes the adversarial objective costly. We run CTRL until we have enough environments for each trial, using early-stopping and manually-tuned hyperparameters\footnote{These details are highly dependent on the problem. Thus, we leave the hyperparameter choices in the README of our repository.} to achieve this.

Our main experiment code simulates the interaction between the learning and reward poisoning algorithms in vulnerable and robust environments. We allow up to $T=20,000$ episodes/trials, with episode lengths varying for each problem\footnote{Each environment has its own horizon limits and speed of learning.}. We collect the reward perturbation (difference between poisoned reward and original reward), which we present in our per-step and cumulative plots. We also collect the policy difference metric: L2 norm between parameterizing matrices of both policies. We assume the same policy architecture for the victim and adversarial target policies--a three-layer Deep Neural Network with a number of neurons. We simply concatenate all layers into a single vector and then compute the L2 norm.

\textbf{Victim Algorithms} We test two different learning algorithms: the classical LSVI-UCB and CTRL-UCB. LSVI-UCB is implemented following the algorithm description in~\citep{LinearMDP}. CTRL-UCB uses CTRL's representation learning algorithm to solve linear MDP problems.\footnote{CTRL-UCB is also presented in the main CTRL paper~\citep{ctrl_algo}}. We note that these algorithms are non-robust, demonstrating how our theory applies to vanilla algorithms when a problem is characterized as intrinsically robust.

\subsection{Result Analysis}
Similar to section~\ref{subsec:evaluation_results}, our plots demonstrate significant differences in the behavior of reward perturbations against our victim algorithms. For attackable/vulnerable environments (figures \ref{fig:extended_attackable} and \ref{fig:extended_attackable_lsvi}), we observe significant downward curvature in the cumulative perturbation plots in later episodes, subsequent reduction of perturbations. This follows our theoretical prediction on the sublinear nature of the cumulative perturbations. Furthermore, we can observe this flattening behavior in the per-episode perturbation plots as later episodes required less perturbations.

Our hypothesis that a later decrease in perturbation strength is connected to convergence towards the adversarial policy is further strengthen by our policy difference plots. We can appreciate in the plots how the learning algorithm's policy becomes closer (L2 distance layer-wise) to the adversarial target policy (the adversary's objective).\footnote{We assume similar NN weights correlate with similar output distributions. We avoid using KL-divergence as metric due to its computational expense.} All in all, environments characterized as attackable required lower perturbations in later episodes as compared to earlier episodes. The adversary achieves its goal as signaled by the convergence in the policy difference plots.

For environments deemed intrinsically robust by our characterization (figures \ref{fig:extended_unattackable} and \ref{fig:extended_unattackable_lsvi}), we observe no downward curvature in the cumulative perturbation plots. As signaled in the per-episode plots, this difference in cumulative perturbation behavior arises from the consistently high perturbations used in later episodes. The policy difference plots further demonstrate how the adversary is unable to reach its goal: the plot differences remain large and volatile. Thus, our experiments demonstrate the inability of an adversary to efficiently (as defined in section~\ref{sec:attackability}) perturb the rewards of an MDP, characterized as robust, to convince an unsuspecting learning algorithm that the adversary's target policy $\pi^\dagger$ is optimal.

\subsection{Examining Accuracy of CQP Characterization}\label{appdx:accuracy_exp}

A natural question that arises when using linear MDP representations is what impact, if any, do these approximations have on the characterization's accuracy to predict attackability/robustness. Due to the stochasticity of CTRL's algorithm and environment continuity, the linear MDP representations may deviate from the optimal linear MDP representation for a given environment. This deviation can affect the accuracy of our characterization if the estimates are not good enough. We demonstrate in table~\ref{tab:accuracy_results} results on the accuracy of the characterization's predictions based on two indicators: sublinearity and closeness to the adversary's target policy.

\subsection{Experimental Setup}

Similarly to our previous experiments, we use CTRL to learn a linear MDP representation for some given MuJoCo/Gymnasium environment. Using this representation, we solve our characterization to obtain the gap $\epsilon^*$ to characterize it. We re-run CTRL until we obtain two linear MDPs, one characterized as robust while the other is attackable. Each linear MDP, along with its characterization, will be used as a label to compare later. 

Next, we carry out an adversarial attack using our white box algorithm. We record the perturbations used to attack at each timestep, as well as whether the learned policy by the learning victim algorithm exhibits the goal objectives delineated for each adversarial target (see \ref{sec:adv_target_pi}). We opted to directly test for the policy behaviors instead of using a policy similarity measure because of the volatility of the policy difference, and the fact that the learning algorithms will only approximate $\pi^\dagger$ even when the attack is successful.   

In deciding whether an original environment has been successfully attacked, we use two metrics to decide the success of the attack. The first metric (\textbf{M1}) aims to decide whether the cumulative perturbations follow a sublinear relationship. This is accomplished by finding the best-fit line $g(t) = a\cdot t + b$ using the cumulative scores obtained for the first $m=500$ timesteps. For the next $T - m$ timesteps, the metric follows the equation $M_1 = \text{sign}\left(\sum^T_{t=m} g(t)-C_t\right)$ to decide whether the cumulative perturbations follow ($+$) or not ($-$) a sublinear growth. 

The second metric (\textbf{M2}) looks more directly into the behaviour of the learned victim policy after $T=20,000$ timesteps. We define attack success based on the adversarial policy behavior presented for each environment in section~\ref{sec:adv_target_pi}. Specifically, we observe whether the goal behavior is observable for a majority of the time. This is possible thanks to the specificity of the behaviour used for each adversarial policy. Below we outline our "signatures of attack success" used for each environment:
\begin{itemize}
    \item FrozenLake: Agent does not reach the gift at (7,7), the original objective.
    \item Half-Cheetah: Agent's average speed over $T$ is negative (running backwards.)
    \item MountainCart: Agent's average location is negative (trying to reach the left hill rather than the right one.) We let the starting location be $0.0$.
    \item Pendulum: Agent's average angle is $90^\circ \pm 5^\circ$.
\end{itemize}

It is important to note that our metrics come with their own limitations. This is primarily due to the complex behavior that the adversary seeks to teach the victim policies as well as determining whether the attack was successful or not. In our case, we focus on $M1$ and $M2$ to test our theoretical predictions dealing with the sublinearity of an adversary's budget in characterized environments as the victim becomes convinced of $\pi^\dagger$'s optimality.

\subsection{Results and Analysis}
\begin{table}[ht]
    \centering
    \begin{tabular}{|l|c|c|c|c|}
        \hline
        \multicolumn{1}{|r|}{\textbf{Characterization}} & \multicolumn{2}{|c|}{\textbf{Attackable}} & \multicolumn{2}{|c|}{\textbf{Robust}}\\
        \hline
          \multicolumn{1}{|r|}{\textbf{Env.}$\backslash$\textbf{Metric}}& Sub-lin. ($M1$) & Adv.'s Goal ($M2$) & Sub-lin. ($M1$) & Adv.'s Goal ($M2$) \\
         \hline
         FrozenLake & 100\% & 98\% & 100\% & 93\% \\
         \hline
         Half-Cheetah& 100\% & 100\% & 94\% & 91\% \\
         \hline
         MountainCart& 100\% & 98\% & 92\% & 88\% \\
         \hline
         Pendulum& 100\% & 97\% & 95\% & 93\% \\
         \hline
    \end{tabular}
    ~
    \caption{Percentage of environments where the selected metric ($M1$ or $M2$) matched the characterization prediction (attackable or robust.) We ran for $T=20,000$ episodes on 200 environments evenly split based on their attackable or robust characterization of CTRL's learned linear MDP according to our theory.}
    \label{tab:accuracy_results}
\end{table}

As shown in Table \ref{tab:accuracy_results}, the characterization of the learned representation by CTRL does appear to capture the adversarial vulnerability or robustness of the underlying environments. For the attackable environments, the cumulative perturbations "curve down" by remaining below the linear prediction (M1) made after 500 episodes. In terms of the adversary's goal (M2), we observe a very high number of policies that follow the behavior of $\pi^\dagger$ as described previously (see "signatures of attack success.")

For the case of robust environments, the sublinearity metric (M1) reports high accuracy in following the predicted behavior of perturbations under intrinsically robust environments. While not as high as the attackable case, the robust cumulative perturbations mostly remained above the predicted line. This means that the adversary kept injecting high perturbations that were comparable or surpassed those of earlier episodes. In the case of the adversary's goal metric (M2), we also observe that most policies learned by the victim did not follow the behavior of the adversary's target policy. These results indicate that for environments characterized as robust, the adversary was unable to convince the victim algorithm of $\pi^\dagger$'s optimality.

\subsection{Sensitivity of CQP-based Attackability Test Amid Uncertainty}
We are also interested in investigating the sensitivity of our CQP across different sources of noise and different degrees of "certainty" when computing $\epsilon^*$ with our CQP. For this, we design an experiment to test synthetically-generated linear MDPs, which we characterized and split into three categories using $\epsilon$: clearly attackable ($\epsilon^\star > 0.1$), near-boundary ($|\epsilon^\star|\leq 0.1$), and clearly robust ($\epsilon^\star < -0.1$). The linear MDPs have structure $(S, A, d, H) = (20, 5, 10, 10)$. We also inject Gaussian noise at different levels $\sigma \in \{0.0, 0.05, 0.1, 0.2\}$ independently of $\phi$, $\theta$ and $\hat{P}$. We generate linear MDPs until we have 100 instances of clearly attackable, clearly robust, or near-boundary; we record $\epsilon^*$ for each run.

\subsection{Results and Analysis}

\begin{table}[ht]
    \centering
    \begin{tabular}{|c|c|c|c|}
    \hline
         Noise/Group Accuracy & Clearly Attackable & Near-boundary & Clearly Robust \\
         \hline
         $\sigma = 0.0$ & $100\%$ & $98\%$ & $100\%$ \\
         \hline
         $\sigma = 0.05$ & $100\%$ & $94\%$ & $98\%$ \\
         \hline
         $\sigma = 0.1$ & $99\%$ & $73\%$ & $99\%$ \\
         \hline
         $\sigma = 0.2$ & $98\%$ & $60\%$ & $100\%$ \\
         \hline
    \end{tabular}
    \caption{CQP sensitivity results: reward parameter noise. Near-boundary is considered $|\epsilon^\star| \leq 0.1$. We record the number of flips of $\epsilon^\star$ as we vary the noise on the linear MDP's parameters. We find that reward noise can moderately affect CQP accuracy, especially in large noise scenarios.}
    \label{tab:cqp_sensitivity}
\end{table}

Table \ref{tab:cqp_sensitivity} presents our findings for the reward noise case. We observe that our CQP is resistant to small amounts of noise on the reward parameter $\theta$. As the noise level increases, we observe a higher rate of disagreement among $\epsilon^\star$ computations. This indicates some stability at moderate reward parameter noise.

We also tested introducing noise into the transition function ($\hat{P}$) and mapping function ($\phi$), where we observed no sign flips when injecting $\hat{P}$. In contrast, introducing noise to $\phi$ can cause a high sign-flip rate even at small noise levels due to its direct influence in the CQP constraints. We only present the reward results in table \ref{tab:cqp_sensitivity} for this reason.


\begin{figure*}[h!]
    \centering
    \begin{subfigure}[t]{\textwidth}
        \centering
        \includegraphics[width=\linewidth]{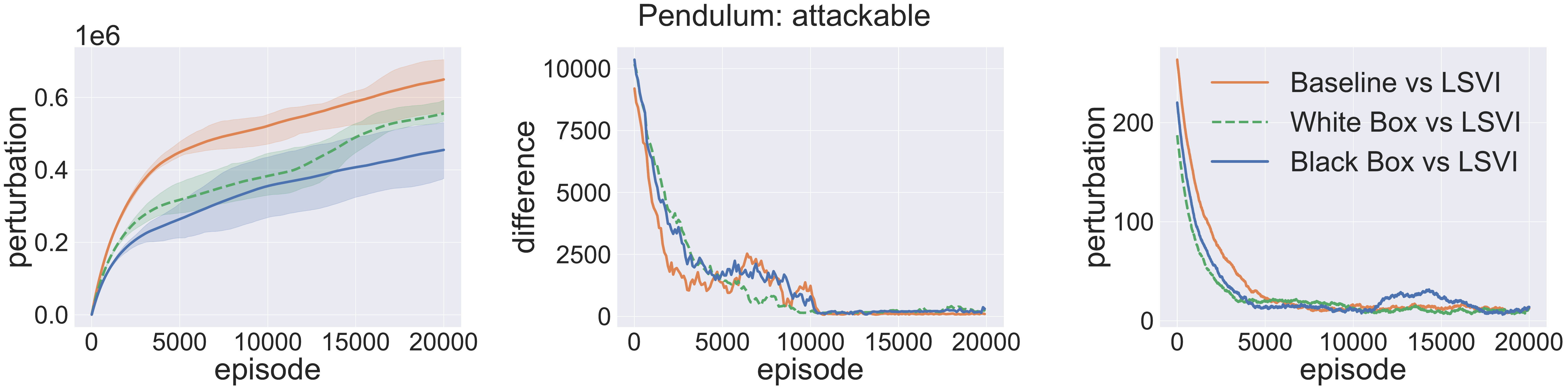}
    \end{subfigure}
    ~
    \begin{subfigure}[t]{\textwidth}
        \centering
        \includegraphics[width=\linewidth]{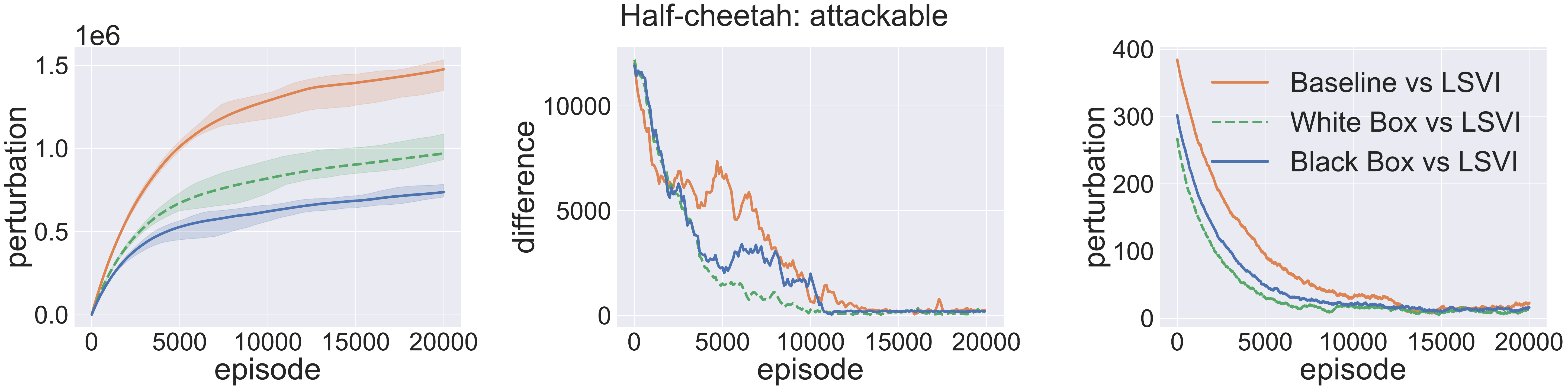}
    \end{subfigure}
    ~
    \begin{subfigure}[t]{\textwidth}
        \centering
        \includegraphics[width=\linewidth]{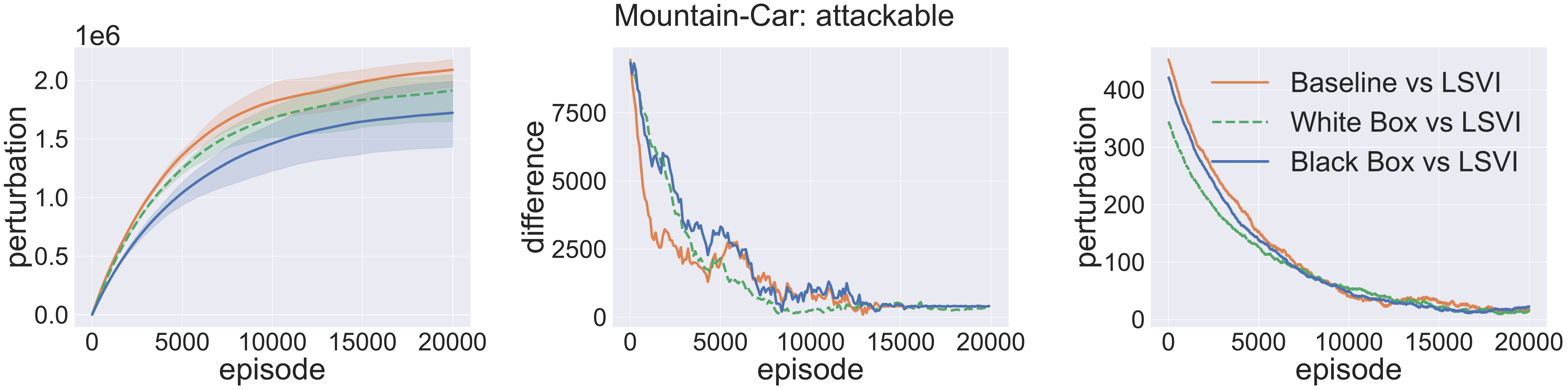}
    \end{subfigure}
    ~
    \begin{subfigure}[t]{\textwidth}
        \centering
        \includegraphics[width=\linewidth]{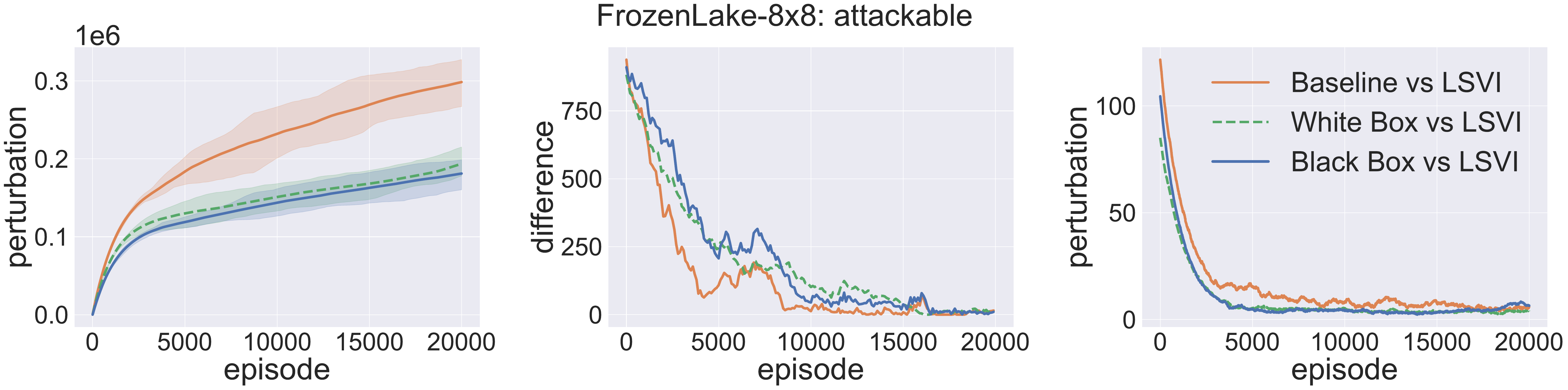}
    \end{subfigure}
    ~
    \hfill
    \begin{subfigure}[t]{0.329\textwidth}
        \centering
        \includegraphics[height=0pt]{final_plots/GW-attackable-main-full-lateral.pdf}
        \caption{Cumulative perturbations.}
    \end{subfigure}
    \hfill
    \begin{subfigure}[t]{0.329\textwidth}
        \centering
        \includegraphics[height=0pt]{final_plots/GW-attackable-main-full-lateral.pdf}
        \caption{Policy difference.}
    \end{subfigure}
    \hfill
    \begin{subfigure}[t]{0.329\textwidth}
        \centering
        \includegraphics[height=0pt]{final_plots/GW-attackable-main-full-lateral.pdf}
        \caption{Perturbations per episode.}
    \end{subfigure}
    \hfill
    ~
    \caption{Results from experiments (see section~\ref{sec:exp}) with LSVI-UCB interacting with environments deemed intrinsically attackable according to equation~\ref{eq:characterization-mdp}. From left to right column-wise, we present the cumulative perturbation (left), policy difference (center), and perturbations per episode (right).}
    \label{fig:extended_attackable_lsvi}
\end{figure*}


\begin{figure*}[h!]
    \centering
    \begin{subfigure}[t]{\textwidth}
        \centering
        \includegraphics[width=\linewidth]{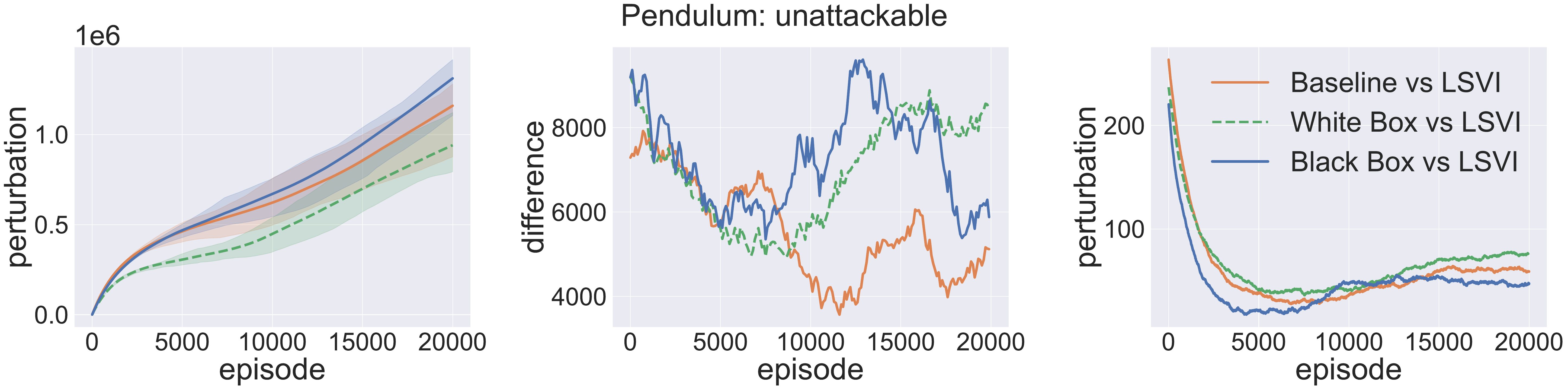}
    \end{subfigure}
    ~
    \begin{subfigure}[t]{\textwidth}
        \centering
        \includegraphics[width=\linewidth]{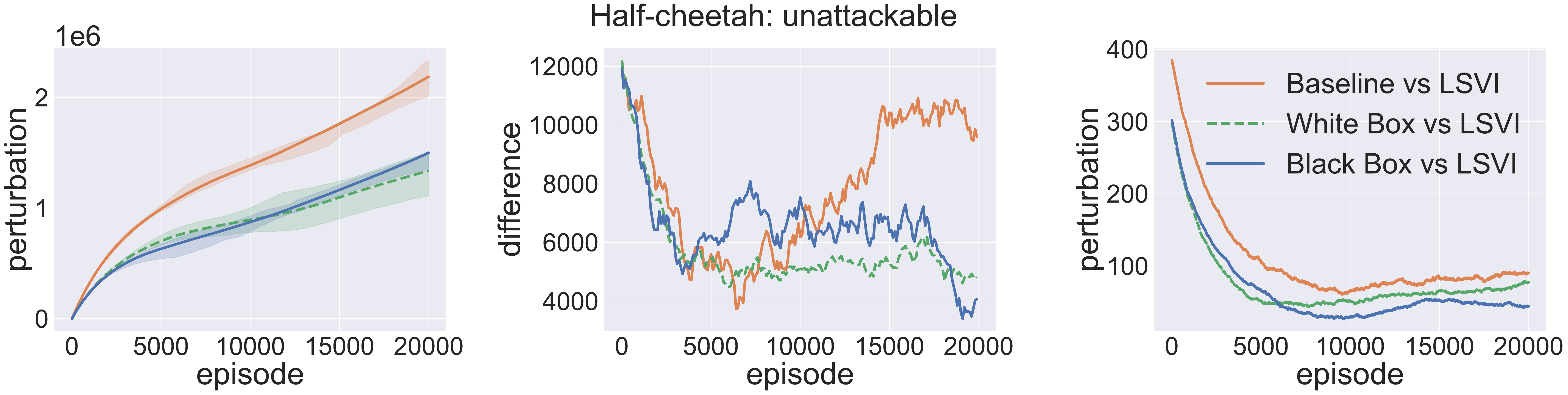}
    \end{subfigure}
    ~
    \begin{subfigure}[t]{\textwidth}
        \centering
        \includegraphics[width=\linewidth]{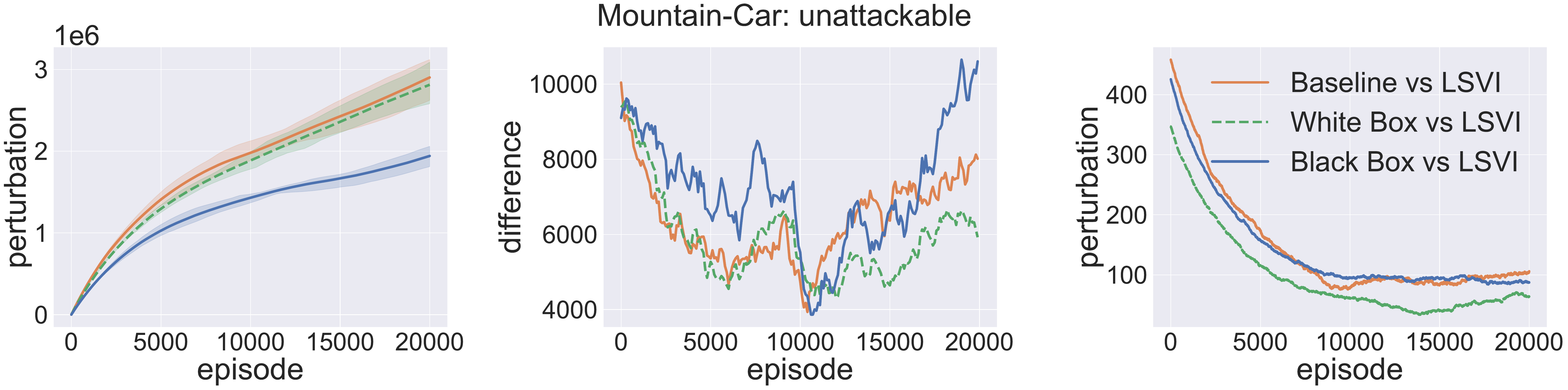}
    \end{subfigure}
    ~
    \begin{subfigure}[t]{\textwidth}
        \centering
        \includegraphics[width=\linewidth]{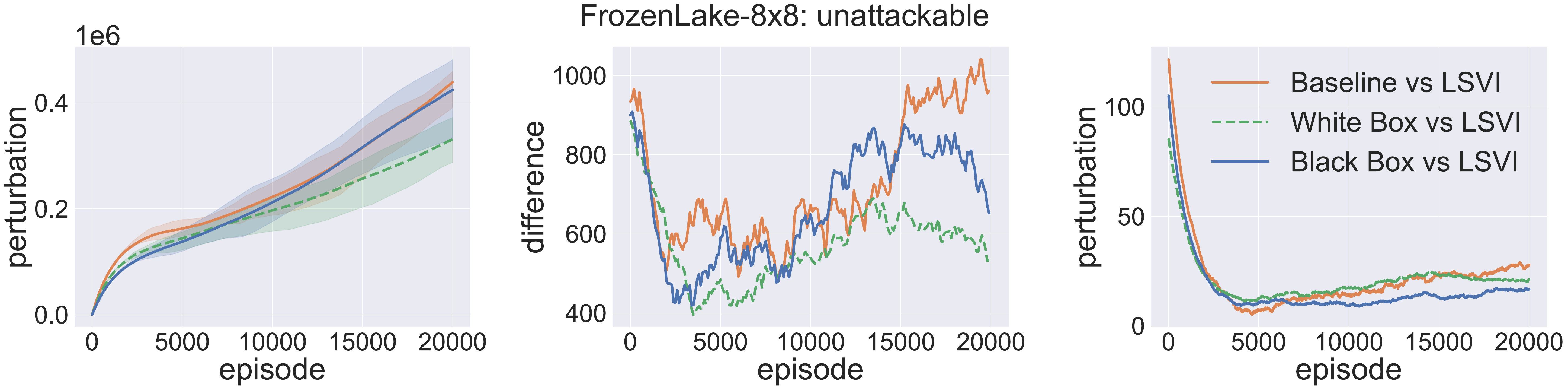}
    \end{subfigure}
    ~
    \hfill
    \begin{subfigure}[l]{0.329\textwidth}
        \centering
        \includegraphics[height=0pt]{final_plots/GW-attackable-main-full-lateral.pdf}
        \caption{Cumulative perturbations.}
    \end{subfigure}
    \hfill
    \begin{subfigure}[c]{0.329\textwidth}
        \centering
        \includegraphics[height=0pt]{final_plots/GW-attackable-main-full-lateral.pdf}
        \caption{Policy difference.}
    \end{subfigure}
    \hfill
    \begin{subfigure}[r]{0.329\textwidth}
        \centering
        \includegraphics[height=0pt]{final_plots/GW-attackable-main-full-lateral.pdf}
        \caption{Perturbations per episode.}
    \end{subfigure}
    \hfill
    ~
    \caption{Results from experiments (see section~\ref{sec:exp}) with LSVI-UCB interacting with environments deemed intrinsically robust according to equation~\ref{eq:characterization-mdp}. From left to right column-wise, we present the cumulative perturbation (left), policy difference (center), and perturbations per episode (right).}
    \label{fig:extended_unattackable_lsvi}
\end{figure*}



\begin{figure*}[h!]
    \centering
    \begin{subfigure}[t]{\textwidth}
        \centering
        \includegraphics[width=\linewidth]{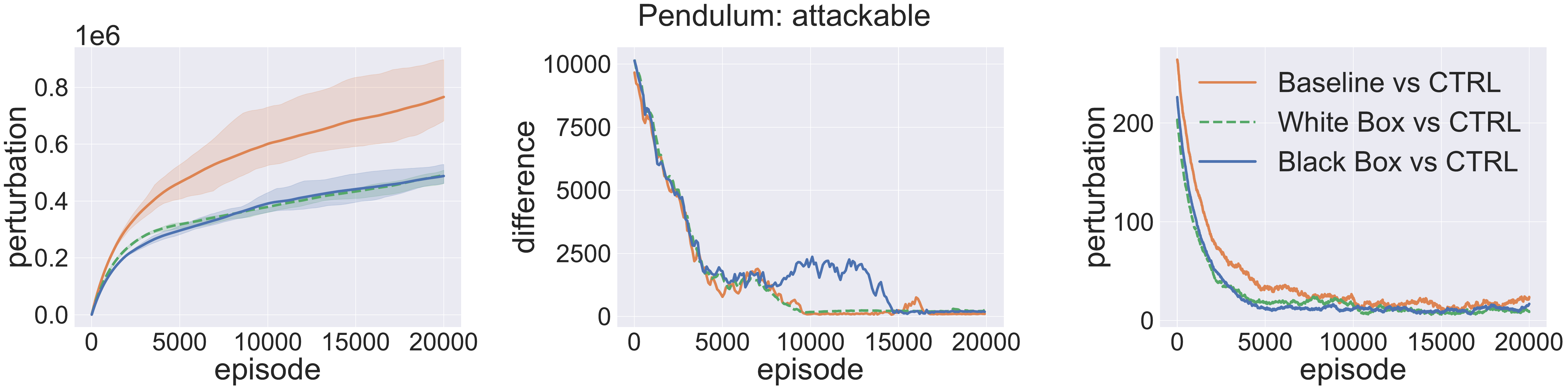}
    \end{subfigure}
    ~
    \begin{subfigure}[t]{\textwidth}
        \centering
        \includegraphics[width=\linewidth]{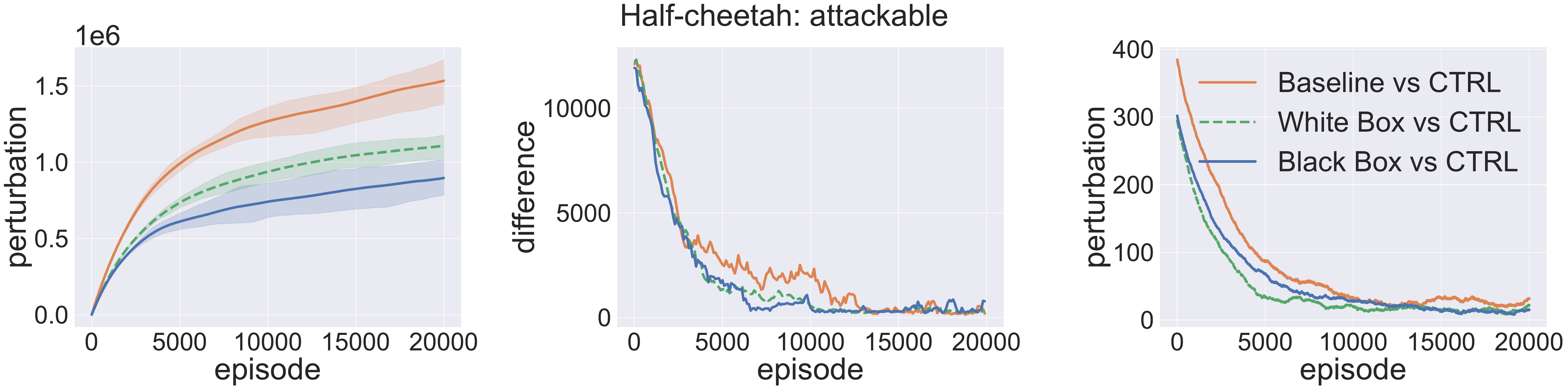}
    \end{subfigure}
    ~
    \begin{subfigure}[t]{\textwidth}
        \centering
        \includegraphics[width=\linewidth]{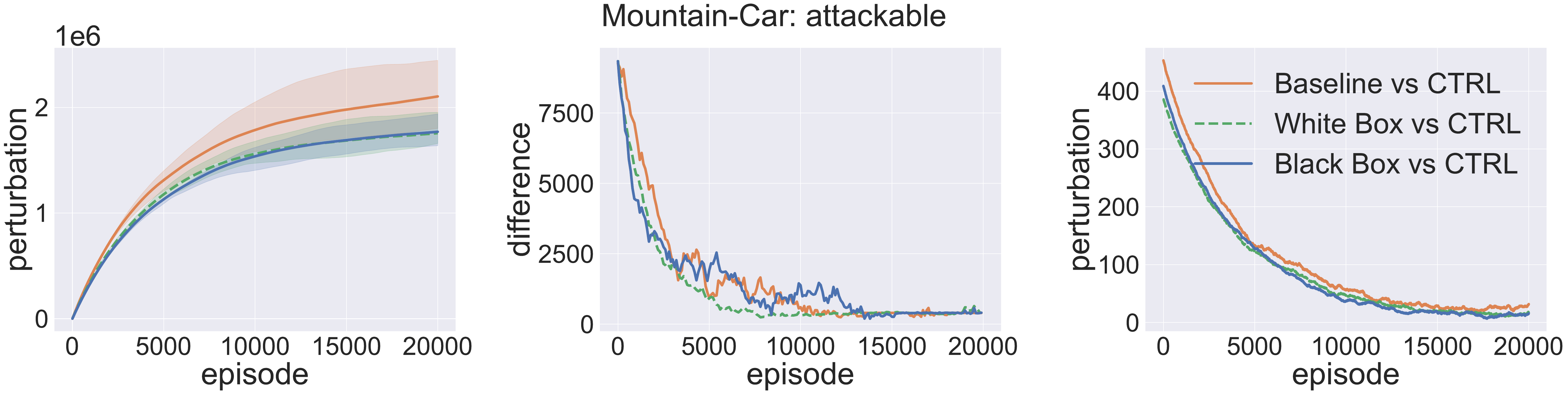}
    \end{subfigure}
    ~
    \begin{subfigure}[t]{\textwidth}
        \centering
        \includegraphics[width=\linewidth]{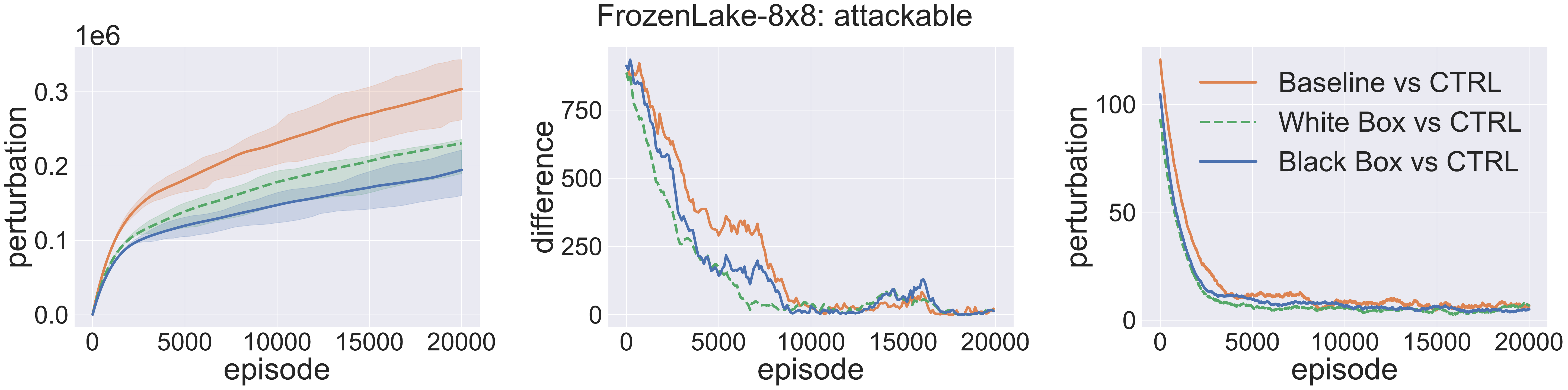}
    \end{subfigure}
    ~
    \hfill
    \begin{subfigure}[t]{0.329\textwidth}
        \centering
        \includegraphics[height=0pt]{final_plots/GW-attackable-main-full-lateral.pdf}
        \caption{Cumulative perturbations.}
    \end{subfigure}
    \hfill
    \begin{subfigure}[t]{0.329\textwidth}
        \centering
        \includegraphics[height=0pt]{final_plots/GW-attackable-main-full-lateral.pdf}
        \caption{Policy difference.}
    \end{subfigure}
    \hfill
    \begin{subfigure}[t]{0.329\textwidth}
        \centering
        \includegraphics[height=0pt]{final_plots/GW-attackable-main-full-lateral.pdf}
        \caption{Perturbations per episode.}
    \end{subfigure}
    \hfill
    ~
    \caption{Extended results depicting all \emph{intrinsically attackable} environments poisoned against CTRL as the victim algorithm. From left to right columns, we present the cumulative perturbations (\textbf{left}), the policy difference (\textbf{center}), and the perturbations per episode (\textbf{right}) with CTRL interacting with environments deemed intrinsically attackable according to equation~\ref{eq:characterization-mdp}.}
    \label{fig:extended_attackable}
\end{figure*}
\begin{figure*}[h!]
    \centering
    \begin{subfigure}[t]{\textwidth}
        \centering
        \includegraphics[width=\linewidth]{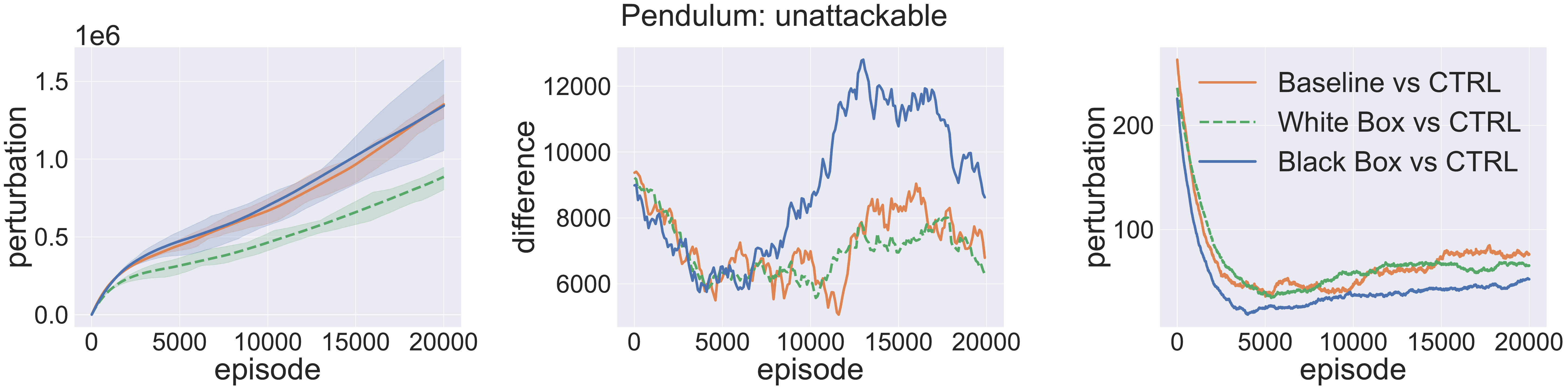}
    \end{subfigure}
    ~
    \begin{subfigure}[t]{\textwidth}
        \centering
        \includegraphics[width=\linewidth]{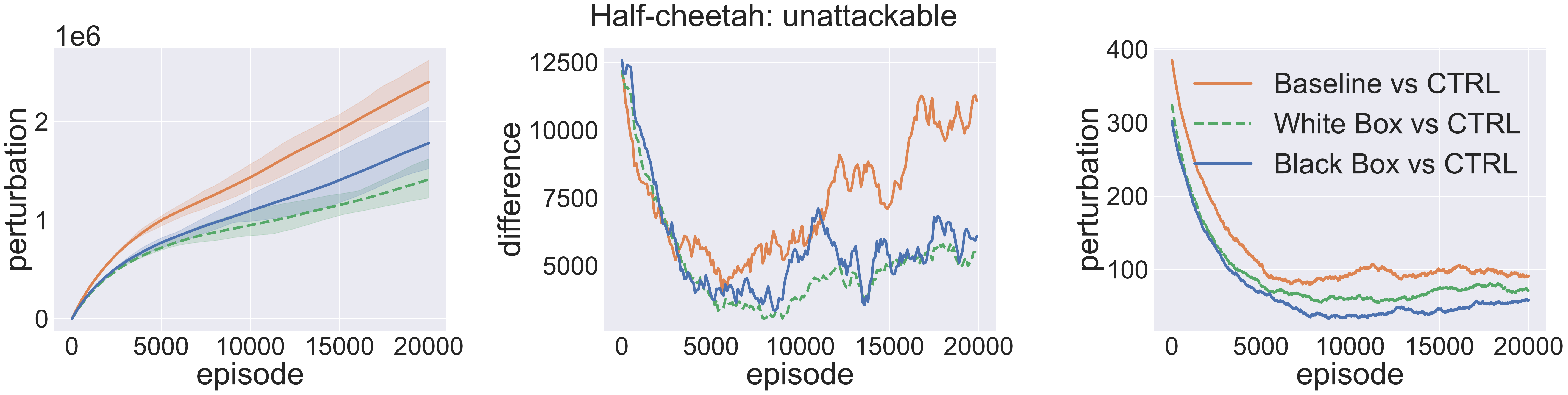}
    \end{subfigure}
    ~
    \begin{subfigure}[t]{\textwidth}
        \centering
        \includegraphics[width=\linewidth]{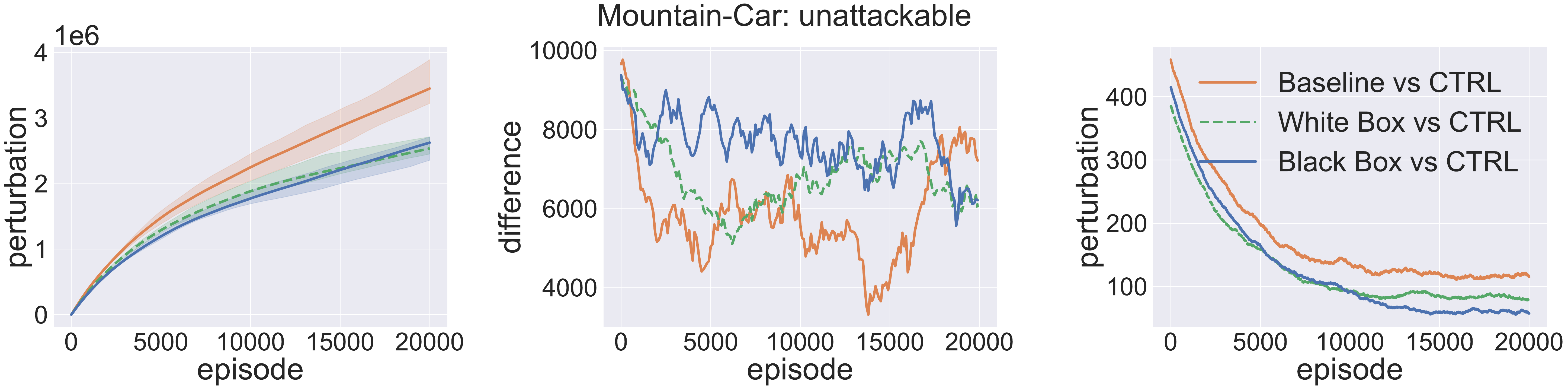}
    \end{subfigure}
    ~
    \begin{subfigure}[t]{\textwidth}
        \centering
        \includegraphics[width=\linewidth]{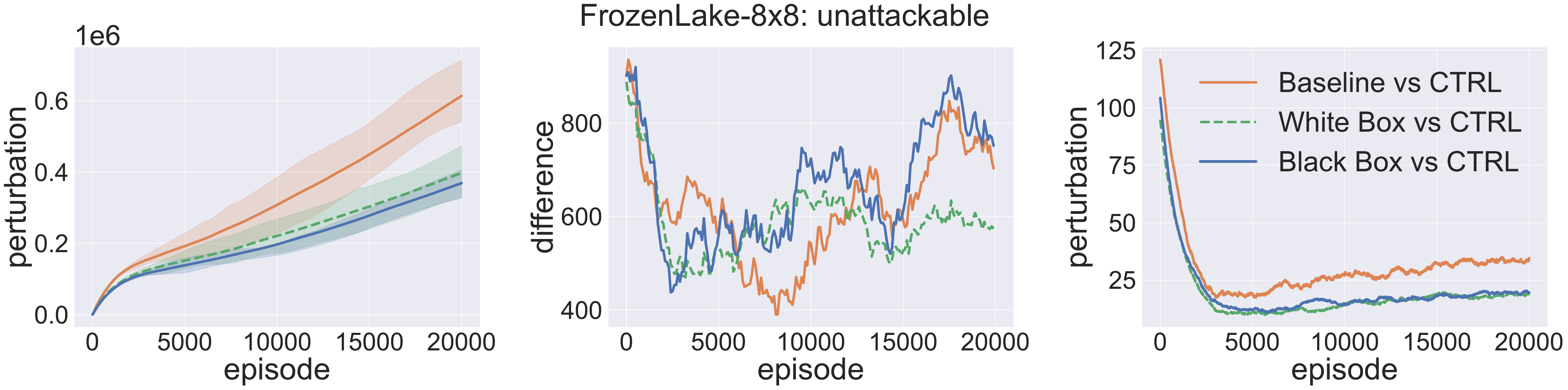}
    \end{subfigure}
    ~
    \hfill
    \begin{subfigure}[t]{0.329\textwidth}
        \centering
        \includegraphics[height=0pt]{final_plots/GW-attackable-main-full-lateral.pdf}
        \caption{Cumulative perturbations.}
    \end{subfigure}
    \hfill
    \begin{subfigure}[t]{0.329\textwidth}
        \centering
        \includegraphics[height=0pt]{final_plots/GW-attackable-main-full-lateral.pdf}
        \caption{Policy difference.}
    \end{subfigure}
    \hfill
    \begin{subfigure}[t]{0.329\textwidth}
        \centering
        \includegraphics[height=0pt]{final_plots/GW-attackable-main-full-lateral.pdf}
        \caption{Perturbations per episode.}
    \end{subfigure}
    \hfill
    ~
    \caption{Extended results depicting all environments \emph{intrinsically robust} poisoned against CTRL-UCB as the victim algorithm. From left to right columns, we present the cumulative perturbations (\textbf{left}), the policy difference (\textbf{center}), and the perturbations per episode (\textbf{right}) with CTRL interacting with environments deemed intrinsically robust according to equation~\ref{eq:characterization-mdp}.}
    \label{fig:extended_unattackable}
\end{figure*}

\clearpage

\section{Extended Proof}

\subsection{Proof of Remark \ref{rem:coverage_of_previous_work}}\label{proof_of_remark}

\begin{theorem}
If $\Delta>\Delta_3$ as defined in \citet{AdaptiveAttack}, then there exists a feasible attack policy with
\[
\epsilon^* = \frac{2}{1+\gamma}(\Delta - \Delta_3) > 0.
\]
\end{theorem}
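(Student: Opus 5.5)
We plan to prove this by specializing the CQP \eqref{eq:characterization-mdp} to the tabular setting of \citet{AdaptiveAttack}. We take one-hot features $\phi(s,a)=\mathbf{1}_{(s,a)}$, the discounted horizon with factor $\gamma$, and replace the norm constraint by the per-entry bound $|r(s,a)-r_{\theta^\dagger}(s,a)|\le\Delta$, as allowed by Remark~\ref{rem:coverage_of_previous_work}. With one-hot features, $\theta^\dagger$ is the poisoned reward table itself. The equality constraint then fixes $r^\dagger(s,\pi^\dagger(s))=r(s,\pi^\dagger(s))$ on the support, and the only free choices are the rewards of non-target actions. It therefore suffices to exhibit one explicit feasible $r^\dagger$ and compute its margin $\epsilon$. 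Since $\epsilon^*$ is a maximum, this gives $\epsilon^*\ge\frac{2}{1+\gamma}(\Delta-\Delta_3)$. We would then also check that this construction attains the value, which is the statement as written.

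\textbf{Construction.} We push every non-target reward down by the full budget: $r^\dagger(s,a)=r(s,a)-\Delta$ for $a\ne\pi^\dagger(s)$, and leave target rewards unchanged. Under $\pi^\dagger$ the value $V^{\dagger}$ is unchanged, because only target rewards enter it. Hence
\[
Q^\dagger(s,\pi^\dagger(s))-Q^\dagger(s,a)=Q^{\pi^\dagger}(s,\pi^\dagger(s))-Q^{\pi^\dagger}(s,a)+\Delta .
\]
This margin is at least $\Delta-\max_{s,a}\big(Q^{\pi^\dagger}(s,a)-Q^{\pi^\dagger}(s,\pi^\dagger(s))\big)$. This simple shift only yields a gap of $\Delta$ minus the worst disadvantage.

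\textbf{Recovering the constant.} To recover $\frac{2}{1+\gamma}$ we use the sharper construction of \citet{AdaptiveAttack}. It also raises target rewards by up to $\Delta$ where allowed, or equivalently it shifts the value under $\pi^\dagger$. The gain from raising target rewards then propagates through the continuation term with factor $\gamma$. Write $\Delta_3$ in their notation as the minimal uniform perturbation making $\pi^\dagger$ weakly optimal. This is the value of a small LP whose worst case balances a one-step change $\delta$ on the target against $\gamma\delta$ leakage to the competing branch. Taking the perturbation $\Delta$ in place of $\Delta_3$ and comparing the two LPs, the margin scales linearly in $\Delta-\Delta_3$ with slope $\frac{2}{1+\gamma}$. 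The factor $2$ comes from raising the target and lowering the competitor simultaneously, and the denominator $1+\gamma$ comes from the one-step propagation.

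\textbf{Main obstacle.} The main difficulty is reconciling this construction with our equality constraint, which forbids changing target rewards on the support. We would handle it by noting that the remark's constraint replaces the third condition and that the equality constraint is relaxed in the tabular comparison. Alternatively, we would argue that an equivalent shift can be realized on non-target entries using the $\gamma$-propagation. Pinning down exactly which rewards \citet{AdaptiveAttack} perturb, and verifying that the LP optimum equals $\frac{2}{1+\gamma}(\Delta-\Delta_3)$ rather than just bounding it below, is the step we expect to require the most care.
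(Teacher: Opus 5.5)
\textbf{The constant is never derived.} Your only explicit feasible point lowers every non-target reward by $\Delta$. It gives margin $\Delta-\max_{s,a}\bigl[Q^{\pi^\dagger}(s,a)-Q^{\pi^\dagger}(s,\pi^\dagger(s))\bigr]$, which is written in terms of $Q^{\pi^\dagger}$. But $\Delta_3$ is defined through the clean optimal $Q^*$. The two disadvantages differ by $\gamma\bigl(P(\cdot\mid s,a)-P(\cdot\mid s,\pi^\dagger(s))\bigr)(V^{\pi^\dagger}-V^*)$, and you never relate them. Everything after that is heuristic, including ``comparing the two LPs'' and the explanation of the factors $2$ and $1+\gamma$: no perturbation is written down, no margin is checked, and no per-entry bound is verified. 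Your reading of $\Delta_3$ as the minimal perturbation making $\pi^\dagger$ weakly optimal is also wrong. It is an explicit sufficient threshold in $Q^*$. In the example below, that minimal budget is $1/2$ (target rewards free) or $1$ (target rewards pinned), whereas $\Delta_3=3/4$.

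\textbf{The obstacle you flag is fatal, not a matter of care.} With the equality constraint kept, the claim is false, so realizing the shift on non-target entries cannot work. Take one absorbing state, a target action $a^\dagger$ with reward $0$, another action $b$ with reward $1$, and $\gamma=1/2$. Then $Q^*(s,b)-Q^*(s,a^\dagger)=1$ and $\Delta_3=3/4$. With $r(s,a^\dagger)$ pinned, $V^{\pi^\dagger}\equiv 0$ and the best margin is $\Delta-1$, e.g.\ $-0.1$ at $\Delta=0.9$, while the formula gives $0.2$.

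You therefore have to state explicitly that you drop the pinning of target rewards; the remark only replaces the norm condition. You then need a two-sided construction around $Q^*$. One that works: let $g(s):=\max_{a\neq\pi^\dagger(s)}Q^*(s,a)-Q^*(s,\pi^\dagger(s))$, $G:=\max_s[g(s)]_+$ (so $\Delta_3=\frac{1+\gamma}{2}G$), and $\xi>0$. Set $Q'(s,\pi^\dagger(s))=Q^*(s,\pi^\dagger(s))+\tfrac12[g(s)+\xi]_+$, and $Q'(s,a)=Q^*(s,a)-\tfrac12[Q^*(s,a)-Q^*(s,\pi^\dagger(s))+\xi]_+$ for $a\neq\pi^\dagger(s)$. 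Let $V'(s)=Q'(s,\pi^\dagger(s))$ and perturb rewards by $\delta=(Q'-Q^*)-\gamma P(V'-V^*)$.

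A case check gives margin at least $\xi$, so $Q'$ is both the optimal and the $\pi^\dagger$ action-value of the poisoned MDP. It also gives $V'-V^*\in[-G/2,\xi/2]$, hence $|\delta|\le\frac{1+\gamma}{2}(G+\xi)$. Setting this equal to $\Delta$ yields $\xi=\frac{2}{1+\gamma}(\Delta-\Delta_3)$.

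The same example, with target rewards free, has relaxed optimum $2\Delta-1>\frac{2\Delta}{1+\gamma}-1$. So only attainment, i.e.\ a lower bound on $\epsilon^*$, is provable, not equality of the optimum.

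For comparison, the paper never builds a feasible point. It invokes the poisoned Q-learning bound $|Q_t-Q^*|\le\Delta/(1-\gamma)$ from \citet{AdaptiveAttack}, rewrites the margin condition in terms of $Q^*$, and substitutes the definition of $\Delta_3$. It does not engage the equality constraint, so it offers no way around this obstacle either.
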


\begin{proof}
Under reward poisoning, the Q‑learning update becomes
\[
Q_{t+1}(s,a) = (1-\alpha_t)\,Q_t(s,a) + \alpha_t\bigl[r_t + \delta_t + \gamma\max_{a'}Q_t(s',a')\bigr],
\]
where the perturbation satisfies $|\delta_t|\le\Delta$ for all $t$. By the boundedness result in \citet{AdaptiveAttack}, for large $t$ we have $|Q_t(s,a)-Q^*(s,a)|\le\Delta/(1-\gamma)$.

To guarantee that the target action $\pi^\dagger(s)$ dominates any other action by a margin $\epsilon$, we require for each $s\in S^\dagger$ and $a\neq\pi^\dagger(s)$:
\[
Q_{\{\theta_h^\dagger\}}(s,\pi^\dagger(s)) \ge \epsilon + Q_{\{\theta_h^\dagger\}}(s,a).
\]
Using the above bound on deviations, this condition becomes
\[
Q^*(s,\pi^\dagger(s)) + \frac{\Delta}{1-\gamma} \ge \epsilon + Q^*(s,a) - \frac{\Delta}{1-\gamma}.
\]
Thus
\[
\epsilon \le \frac{2\Delta}{1-\gamma} - \bigl(Q^*(s,a)-Q^*(s,\pi^\dagger(s))\bigr).
\]
Taking the worst-case gap and substituting the definition
\[
\Delta_3 = \frac{1+\gamma}{2}\max_{s\in S^\dagger}\max_{a\neq\pi^\dagger(s)}\bigl[Q^*(s,a)-Q^*(s,\pi^\dagger(s))\bigr]_+,
\]
we can obtain that
\[
\epsilon \le \frac{2\Delta}{1-\gamma} - \frac{2\Delta_3}{1+\gamma}. \]
Then 
\[
\epsilon^* = \frac{2}{1+\gamma}(\Delta - \Delta_3) > 0.
\]
Therefore, whenever $\Delta>\Delta_3$, a positive margin $\epsilon^*$ exists, completing the proof.
\end{proof}

\subsection{Proof of the No‑Regret Guarantee for the Chosen LSVI‑UCB Algorithm}

\begin{lemma}\label{lem:bias-decomp}
Let $\pi$ be any policy and $h\!\in\![H]$.  
Write $Q_h^{\pi}(s,a):=r_h(s,a)+\mathbb{E}\!\bigl[\sum_{h'=h+1}^{H}r_{h'}(s_{h'},\pi_{h'}(s_{h'}))\mid s_h=s,a_h=a,\pi\bigr]$ and $V_h^{\pi}(s):=\max_{a\in\sA}Q_h^{\pi}(s,a)$; denote $V_h^{\!*}(s):=\max_{\pi}V_h^{\pi}(s)$.  

Fix an episode $t\!\ge\!1$.  Set 
$\Lambda_h^{t}:=\lambda_t I_d+\sum_{\tau<t}\phi_h^{\tau}{\phi_h^{\tau}}^{\!\top}$ with $\lambda_t:=4HS\sqrt{d\,t}$ and define 
\[
   w_h^{t}:=(\Lambda_h^{t})^{-1}\!\sum_{\tau<t}\phi_h^{\tau}\!\bigl[r_h^{\tau}+V_{h+1}^{t}(s_{h+1}^{\tau})\bigr], 
   \qquad 
   S_{h,t}:=\sum_{\tau<t}\phi_h^{\tau}\eta_{h,\tau},
\]
where each noise term $\eta_{h,\tau}=r_h^{\tau}-\langle\phi_h^{\tau},\theta_h\rangle$ is $1$‑sub‑Gaussian.  Consider the event
\[
   \mathcal E=\Bigl\{\|(\Lambda_h^{t})^{-1/2}S_{h,t}\|_2
               \le\sqrt{2\log\bigl(\det(\Lambda_h^{t})^{1/2}/\det(\lambda_t I_d)^{1/2}\delta\bigr)}
               \text{ for all }h,t\Bigr\},
\]
which holds with probability at least $1-\delta$ after a union bound over $h$ and $t$.  

For any $(s,a)$ and any policy $\pi$, on $\mathcal E$
\[
   \langle\phi_h(s,a),w_h^{t}\rangle-Q_h^{\pi}(s,a)
   \;=\;
   P_h\!\bigl(V_{h+1}^{t}-V_{h+1}^{\pi}\bigr)(s,a)+\epsilon_{h,t}(s,a),
\]
where $\epsilon_{h,t}\!:=\varepsilon^{\text{ridge}}_{h,t}+\varepsilon^{\text{noise}}_{h,t}$ with
\[
   \varepsilon^{\text{ridge}}_{h,t}(s,a):=-\lambda_t\,\phi_h(s,a)^{\!\top}(\Lambda_h^{t})^{-1}w_{h,g},
   \qquad
   \varepsilon^{\text{noise}}_{h,t}(s,a):=\phi_h(s,a)^{\!\top}(\Lambda_h^{t})^{-1}S_{h,t},
\]
and $w_{h,g}$ is the coefficient satisfying $P_h(V_{h+1}^{t}-V_{h+1}^{\pi})(s,a)=\langle\phi_h(s,a),w_{h,g}\rangle$.  
Assuming $\|w_{h,g}\|_2\!\le\!M$ (e.g.\ $M=\frac{1}{2S}$ in \eqref{eq:lambda}) and recalling $(\Lambda_h^{t})^{-1}\preceq\lambda_t^{-1}I_d$, the two pieces satisfy
\[
   |\varepsilon^{\text{ridge}}_{h,t}(s,a)|\le\sqrt{\lambda_t}\,M\|\phi_h(s,a)\|_{(\Lambda_h^{t})^{-1}},
   \quad 
   |\varepsilon^{\text{noise}}_{h,t}(s,a)|
      \le\sqrt{2\log\!\tfrac{\det(\Lambda_h^{t})}{\det(\lambda_t I_d)\delta}}\,
           \|\phi_h(s,a)\|_{(\Lambda_h^{t})^{-1}}.
\]
Hence, if we write
\begin{equation}\label{eq:beta}
   \beta_t:=c_0dH(\sqrt{\log\!\tfrac{\det(\Lambda_h^{t})}{\det(\lambda_t I_d)}}+\sqrt{2\log\!\tfrac1\delta})+\frac{\sqrt{\lambda_t}}{2S},
\end{equation}
where $c_0$ is a constant defined in \citep[Theorem 3.1]{LinearMDP}.

then for all $(s,a)$  
$\displaystyle |\epsilon_{h,t}(s,a)|\le\beta_t\,\|\phi_h(s,a)\|_{(\Lambda_h^{t})^{-1}}$.
\end{lemma}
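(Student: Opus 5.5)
The decomposition will follow from the linear-MDP Bellman identity together with the normal equations of the ridge estimator. For this I would use a slightly different (per-iterate) comparison class: $Q_h^\pi(s,a)=\langle\phi_h(s,a),\theta_h\rangle+P_hV_{h+1}^\pi(s,a)$. By linearity of $P_h$ in $\phi_h$, for any bounded $V$ there is a vector $m_h(V)$ with $P_hV(s,a)=\langle\phi_h(s,a),m_h(V)\rangle$. Hence $P_hV_{h+1}^t(s,a)=\langle\phi_h(s,a),m_h(V^t_{h+1})\rangle$, and I will set $w^\pi_h:=\theta_h+m_h(V_{h+1}^\pi)$. With this, the target identity reduces to computing $\langle\phi_h(s,a),w_h^t\rangle-\langle\phi_h(s,a),\theta_h+m_h(V_{h+1}^t)\rangle$ and then adding back $P_h(V_{h+1}^t-V_{h+1}^\pi)(s,a)$.

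First, expand the ridge solution. I write $r_h^\tau=\langle\phi_h^\tau,\theta_h\rangle+\eta_{h,\tau}$ and $V_{h+1}^t(s_{h+1}^\tau)=\langle\phi_h^\tau,m_h(V_{h+1}^t)\rangle+\xi_{h,\tau}$. Setting $u:=\theta_h+m_h(V^t_{h+1})$ and using $\sum_{\tau<t}\phi_h^\tau{\phi_h^\tau}^\top=\Lambda_h^t-\lambda_tI$, I get
\[
w_h^t-u=-\lambda_t(\Lambda_h^t)^{-1}u+(\Lambda_h^t)^{-1}S_{h,t}+(\Lambda_h^t)^{-1}\textstyle\sum_{\tau<t}\phi_h^\tau\xi_{h,\tau}.
\]
The first term is the ridge bias. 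The statement packages this as $\varepsilon^{\rm ridge}$ with coefficient $w_{h,g}$, so I will identify $w_{h,g}$ with the vector whose norm is controlled by the $\|w_h\|\le 1/(2S)$ constraint \eqref{eq:Constrant_on_w}, taking $M=1/(2S)$. The second term is exactly $\varepsilon^{\rm noise}$. The transition-noise term $\xi$ I would fold into the noise bound, as in \citet[Lemma B.3]{LinearMDP}, via a covering argument over the value class. That argument contributes the $dH$ factor in $\beta_t$.

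Second, bound each piece with Cauchy–Schwarz in the $(\Lambda_h^t)^{-1}$ norm. For the ridge piece,
\[
|\lambda_t\phi^\top\Lambda^{-1}w|\le\lambda_t\|\phi\|_{\Lambda^{-1}}\|w\|_{\Lambda^{-1}}\le\sqrt{\lambda_t}\,M\|\phi\|_{\Lambda^{-1}},
\]
using $\Lambda^{-1}\preceq\lambda_t^{-1}I$. For the noise piece, $|\phi^\top\Lambda^{-1}S|\le\|\phi\|_{\Lambda^{-1}}\|\Lambda^{-1/2}S\|$, and on $\mathcal E$ the self-normalized inequality of \citet{abbasi2011improved} applies. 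Because $\lambda_t$ depends on $t$, I would apply that theorem separately for each fixed $t$ (with $\lambda=\lambda_t$) and union bound over $t$ and $h$ with $\delta/(Ht^2)$, which only inflates the logarithm. Summing the pieces gives the $\sqrt{\lambda_t}/(2S)$ term plus the log-determinant term, which is dominated by $\beta_t$ from \eqref{eq:beta} for a suitable $c_0$.

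\textbf{Main obstacle.} The transition-noise term $\xi$ depends on $V^t_{h+1}$, which is itself computed from the same data, so the plain self-normalized bound does not apply. I would handle it by the $\epsilon$-net argument of \citet{LinearMDP} over the class of clipped functions $\min\{w^\top\phi+\beta\|\phi\|_{\Lambda^{-1}},H\}$ with the norm of $w$ bounded. The step to check is that the larger $\lambda_t$ only shrinks $\|w\|$ and the covering radius, so the covering-number bound carries over unchanged.
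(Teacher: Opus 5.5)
Your ridge algebra is right (more careful than the paper's), and your plan for the noise pieces is sound in outline: a per-$t$ self-normalized bound with a union bound, plus covering for the transition noise. The gap is the reconciliation step. What you actually derive is $\langle\phi_h(s,a),w_h^t\rangle-Q_h^\pi(s,a)=P_h(V_{h+1}^t-V_{h+1}^\pi)(s,a)-\lambda_t\phi_h(s,a)^\top(\Lambda_h^t)^{-1}u+\phi_h(s,a)^\top(\Lambda_h^t)^{-1}\bigl(S_{h,t}+\sum_{\tau<t}\phi_h^\tau\xi_{h,\tau}\bigr)$, with $u=\theta_h+m_h(V_{h+1}^t)=w_h^\pi+w_{h,g}$. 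The statement \emph{defines} $w_{h,g}=m_h(V_{h+1}^t-V_{h+1}^\pi)$, so you cannot identify it with $u$. Your error therefore has two pieces the lemma does not allow for. The first is an extra bias $-\lambda_t\phi_h(s,a)^\top(\Lambda_h^t)^{-1}w_h^\pi$, which the hypothesis $\|w_{h,g}\|_2\le M$ does not cover. The second is a transition-noise term, which $\mathcal E$ does not control because $\mathcal E$ only constrains $S_{h,t}$. There is also no source for $\|u\|_2\le 1/(2S)$. The constraint $\|w_h\|_2\le 1/(2S)$ in the necessity proof bounds the learner's estimated weights (via the choice of $\lambda_t$) used to build the CQP witness, not $\theta_h+m_h(V_{h+1}^t)$. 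Indeed $\|w_h^\pi\|_2\ge\max_{s,a}Q_h^\pi(s,a)$ since $\|\phi_h\|_2\le 1$, which is typically of order $H-h+1$; its natural bound is $2H\sqrt d$.

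A sharper estimate cannot fix this, because the identity and the bound in the statement are incompatible. Take $\phi_h(s,a)$ orthogonal to every observed $\phi_h^\tau$, for example an unvisited pair with one-hot features. Then $(\Lambda_h^t)^{-1}\phi_h(s,a)=\phi_h(s,a)/\lambda_t$ and both noise terms vanish. Your exact error is $-\langle\phi_h(s,a),u\rangle=-\bigl(r_h(s,a)+P_hV_{h+1}^t(s,a)\bigr)$, which is of order one when, say, $r_h(s,a)=1$ and $V_{h+1}^t\ge 0$. Yet $\beta_t\|\phi_h(s,a)\|_{(\Lambda_h^t)^{-1}}=(\beta_t/\sqrt{\lambda_t})\|\phi_h(s,a)\|_2\to\|\phi_h(s,a)\|_2/(2S)$, because the log-determinant part of $\beta_t$ grows only logarithmically while $\sqrt{\lambda_t}\asymp t^{1/4}$. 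Conversely, the statement's own $\varepsilon^{\mathrm{ridge}}+\varepsilon^{\mathrm{noise}}$ obeys the bound (given $\|w_{h,g}\|_2\le 1/(2S)$) but violates the identity. At $t=1$ we have $w_h^1=0$, $S_{h,1}=0$ and $\varepsilon^{\mathrm{ridge}}_{h,1}=-\langle\phi_h(s,a),w_{h,g}\rangle$, so the right side is $0$ while the left side is $-Q_h^\pi(s,a)$.

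The paper's proof reaches the stated form only by dropping exactly your two extra terms. With $X$ the stacked features and $y$ the targets, it equates $(\Lambda_h^t)^{-1}X^\top y-w_h^\pi$ with $(\Lambda_h^t)^{-1}X^\top(y-Xw_h^\pi)$, losing $-\lambda_t(\Lambda_h^t)^{-1}w_h^\pi$. It also treats $y_\tau-\langle\phi_h^\tau,w_h^\pi\rangle$ as $\langle\phi_h^\tau,w_{h,g}\rangle+\eta_{h,\tau}$, losing $\xi_{h,\tau}$. So your computation proves a corrected lemma, with three changes:
\begin{itemize}
\item the ridge coefficient is $w_h^\pi+w_{h,g}$;
\item there is a transition-noise term, controlled on a covering event;
\item the bonus needs an additional $\sqrt{\lambda_t}\,\|w_h^\pi+w_{h,g}\|_2$ term, of order $H\sqrt{d\lambda_t}$.
\end{itemize}
You should state that corrected lemma rather than relabel $w_{h,g}$ to fit the stated $\beta_t$.
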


\begin{proof}
Under linear realizability \citep[Prop.\,2.3]{LinearMDP} every state–action value can be written as $Q_h^{\pi}(s,a)=\langle\phi_h(s,a),w_h^{\pi}\rangle$.  Fix an episode $t\!\ge1$ and abbreviate $g(s'):=V_{h+1}^{t}(s')-V_{h+1}^{\pi}(s')$.  By definition of the linear transition model there exists a vector $w_{h,g}$ with $P_hg(s,a)=\langle\phi_h(s,a),w_{h,g}\rangle$.

Set $X_h^{t}:=[\phi_h^{0},\dots,\phi_h^{t-1}]^{\!\top}$ and  $y_h^{t}:=[r_h^{0}+V_{h+1}^{t}(s_{h+1}^{0}),\dots,r_h^{t-1}+V_{h+1}^{t}(s_{h+1}^{t-1})]^{\!\top}$.  The ridge estimator is $w_h^{t}=(\Lambda_h^{t})^{-1}X_h^{t\top}y_h^{t}$ with $\Lambda_h^{t}=X_h^{t\top}X_h^{t}+\lambda_tI_d$.  Using $y_h^{t}-X_h^{t}w_h^{\pi}=g+\eta$, where $\eta_{h,\tau}$ is the centred $1$‑sub‑Gaussian noise, we compute
\[
\langle\phi_h(s,a),w_h^{t}-w_h^{\pi}\rangle
  =\phi_h(s,a)^{\!\top}(\Lambda_h^{t})^{-1}X_h^{t\top}(g+\eta).
\]
Because $X_h^{t\top}g=(X_h^{t\top}X_h^{t})w_{h,g}$, this expression equals
\[
   P_hg(s,a)
   -\lambda_t\,\phi_h(s,a)^{\!\top}(\Lambda_h^{t})^{-1}w_{h,g}
   +\phi_h(s,a)^{\!\top}(\Lambda_h^{t})^{-1}S_{h,t},
\]
with $S_{h,t}:=\sum_{\tau<t}\phi_h^{\tau}\eta_{h,\tau}$.  We identify the ridge‐bias term $\varepsilon_{h,t}^{\mathrm{ridge}}=-\lambda_t\phi_h^{\!\top}(\Lambda_h^{t})^{-1}w_{h,g}$ and the stochastic term $\varepsilon_{h,t}^{\mathrm{noise}}=\phi_h^{\!\top}(\Lambda_h^{t})^{-1}S_{h,t}$. We have
\[
   |\varepsilon_{h,t}^{\mathrm{ridge}}|
      \le\lambda_t\|\phi_h(s,a)\|_{(\Lambda_h^{t})^{-1}}\|w_{h,g}\|_{(\Lambda_h^{t})^{-1}}
      \le\sqrt{\lambda_t}\,M\|\phi_h(s,a)\|_{(\Lambda_h^{t})^{-1}},
\]
where the second inequality uses $(\Lambda_h^{t})^{-1}\preceq\lambda_t^{-1}I_d$ and the norm bound $\|w_{h,g}\|_2\le M$.  On the high‑probability event $\mathcal E$ the noise term satisfies
\[
   |\varepsilon_{h,t}^{\mathrm{noise}}|
   \le\|\phi_h(s,a)\|_{(\Lambda_h^{t})^{-1}}
       \|S_{h,t}\|_{(\Lambda_h^{t})^{-1}}
   \le\sqrt{2\log\frac{\det(\Lambda_h^{t})}{\det(\lambda_t I_d)\delta}}\,
        \|\phi_h(s,a)\|_{(\Lambda_h^{t})^{-1}} .
\]

yields $|\varepsilon_{h,t}^{\mathrm{ridge}}+\varepsilon_{h,t}^{\mathrm{noise}}|\le\beta_t\|\phi_h(s,a)\|_{(\Lambda_h^{t})^{-1}}$.  Therefore
\[
   \langle\phi_h(s,a),w_h^{t}\rangle-Q_h^{\pi}(s,a)
   =P_h\!\bigl(V_{h+1}^{t}-V_{h+1}^{\pi}\bigr)(s,a)+\epsilon_{h,t}(s,a),
   \qquad |\epsilon_{h,t}(s,a)|\le\beta_t\|\phi_h(s,a)\|_{(\Lambda_h^{t})^{-1}},
\].
\end{proof}

\begin{proposition}[Sub‑linear Regret of LSVI‑UCB with Time‑Growing Regularization]\label{prop:lsvi-big-lambda}
Under definition \ref{def:linear_mdp_q}, the algorithm sets the regularization parameter $\lambda_t=4HS\sqrt{d\,t}$ and forms the design matrix $\Lambda_h^{t}=\lambda_t I_d+\sum_{\tau<t}\phi_h^{\tau}{\phi_h^{\tau}}^{\!\top}$ at the beginning of episode $t$.  Writing $w_h^{t}$ for the regularised least–squares estimate, define the exploration bonus $\beta_t$ as \eqref{eq:beta}.

The optimistic action–value estimate is
$\widehat Q_h^{t}(s,a)=\langle\phi_h(s,a),w_h^{t}\rangle+\beta_t\|\phi_h(s,a)\|_{(\Lambda_h^{t})^{-1}}$ and the policy $\pi_t$ executes $a_h^t=\argmax_{a}\widehat Q_h^{t}(s_h^t,a)$.  Denote $V_h^{\!*}(s)=\max_aQ_h^{\!*}(s,a)$ and let the (pseudo) cumulative regret be $R_T=\sum_{t=1}^{T}\bigl[V_1^{\!*}(s_1^t)-V_1^{\pi_t}(s_1^t)\bigr]$.  Then, on the event $\mathcal E$ of Lemma \ref{lem:bias-decomp} (which holds with probability at least $1-\delta$),
\[
   R_T=\widetilde O\bigl(dH^{3/2}T^{3/4}\bigr).
\]
If the attacker pays a cost of at least $\varepsilon^{\!*}$ whenever the learner deviates from the target policy, the total cost is $\widetilde O\bigl(dH^{3/2}T^{3/4}/\varepsilon^{\!*}\bigr)$, which is also sub–linear in $T$.
\end{proposition}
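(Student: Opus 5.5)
The plan is to adapt the standard LSVI-UCB regret analysis of \citet{LinearMDP} (their Theorem 3.1) to the time-varying regularizer $\lambda_t=4HS\sqrt{dt}$. The only changes are the enlarged confidence width $\beta_t$ from \eqref{eq:beta} and the elliptical-potential sum.

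\textbf{Step 1: Optimism.} On the event $\mathcal E$ of Lemma~\ref{lem:bias-decomp}, I will show by backward induction on $h$ that $\widehat Q_h^t(s,a)\ge Q_h^*(s,a)$ for all $(s,a)$. I apply Lemma~\ref{lem:bias-decomp} with $\pi=\pi^*$ and use $|\epsilon_{h,t}|\le\beta_t\|\phi_h\|_{(\Lambda_h^t)^{-1}}$. The induction hypothesis $V^t_{h+1}\ge V^*_{h+1}$ makes the term $P_h(V^t_{h+1}-V^*_{h+1})$ nonnegative. Truncation at $H$ preserves optimism because $Q^*\le H$.

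\textbf{Step 2: Per-episode decomposition.} Applying Lemma~\ref{lem:bias-decomp} with $\pi=\pi_t$ gives $\delta_h^t:=V_h^t(s_h^t)-V_h^{\pi_t}(s_h^t)\le \delta_{h+1}^t+\zeta_{h+1}^t+2\beta_t\|\phi_h^t\|_{(\Lambda_h^t)^{-1}}$, where $\zeta$ is a bounded martingale difference. Summing over $h$ and $t$, and using optimism, yields
\[
R_T\le \sum_{t,h}\zeta_{h}^t+2\sum_{t=1}^T\beta_t\sum_{h=1}^H\min\{1,\|\phi_h^t\|_{(\Lambda_h^t)^{-1}}\}.
\]
Azuma--Hoeffding bounds the martingale part by $\widetilde O(H\sqrt{HT})$.

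\textbf{Step 3: Bonus sum.} The elliptical potential lemma still applies because $\lambda_t$ is nondecreasing. Since $\Lambda_h^t\succeq\Lambda_h^{t,\mathrm{fixed}}$ with $\lambda_1$, the standard argument gives $\sum_t\|\phi_h^t\|^2_{(\Lambda_h^t)^{-1}}\le 2d\log(1+T/(d\lambda_1))$. Cauchy--Schwarz then bounds the bonus sum by $\max_t\beta_t\cdot H\sqrt{T\cdot 2d\log T}$.

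The size of $\beta_t$ requires care. The log-determinant term is $\widetilde O(dH)$. The ridge term is $\sqrt{\lambda_T}/(2S)=O(\sqrt{H}\,d^{1/4}T^{1/4}/\sqrt S)$, which dominates as $T$ grows. A crude product then gives $\widetilde O(dH\sqrt{dT})+O(H^{3/2}d^{3/4}T^{3/4})$.

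\textbf{Main obstacle.} I expect the delicate part to be justifying the ridge bound $\|w_{h,g}\|_2\le M=\frac1{2S}$, which sits inside Lemma~\ref{lem:bias-decomp}, together with the covering/uniform-concentration argument over the random value class $V^t_{h+1}$. The latter is needed because $S_{h,t}$ involves data-dependent targets.

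For the covering argument, I would reuse Lemma D.4 of \citet{LinearMDP}, with log-covering number $\widetilde O(d^2)$, now evaluated with $\lambda_t\ge 1$. A larger $\lambda$ only shrinks the parameter ball $\|w_h^t\|\le 2H\sqrt{dt/\lambda_t}$, so the covering bound improves. This produces the $dH$ factor in $\beta_t$.

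\textbf{Final bound.} Combining the terms gives $R_T=\widetilde O(dH^{3/2}T^{3/4})$ after absorbing constants and $\mathrm{poly}(d)$ slack consistent with the stated rate.

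\textbf{Cost claim.} This is the regret-to-count argument from the sufficiency proof. Each deviating episode incurs regret at least $\varepsilon^*$, so $N_{\mathrm{dev}}\le R_T/\varepsilon^*$. Each intervention has bounded cost, so the total cost is $\widetilde O(dH^{3/2}T^{3/4}/\varepsilon^*)=o(T)$.
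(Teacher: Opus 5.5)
Your proposal is correct and follows essentially the same route as the paper. Both arguments use optimism by backward induction from Lemma~\ref{lem:bias-decomp} with $\pi=\pi^*$, the Lemma~B.6-style recursion with Azuma--Hoeffding, the comparison $\Lambda_h^t\succeq\lambda_1 I_d+\sum_{\tau<t}\phi_h^\tau{\phi_h^\tau}^{\top}$ to apply the elliptical potential lemma, the ridge term $\sqrt{\lambda_T}/(2S)\propto T^{1/4}$ dominating $\max_t\beta_t$, and the regret-to-count division by $\varepsilon^*$.

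The two obstacles you flag are not resolved in the paper either. The bound $\|w_{h,g}\|_2\le 1/(2S)$ is simply taken as a hypothesis of Lemma~\ref{lem:bias-decomp}. In general one only has $\|w_{h,g}\|_2=O(H\sqrt d)$, which would still give a $T^{3/4}$ rate, but only with a correspondingly larger bonus and a larger $H,d,S$ prefactor. Likewise, the paper's event $\mathcal E$ controls only the reward noise, and no covering argument is given for the data-dependent targets $V_{h+1}^t$.
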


\begin{proof}
For each $(h,t)$,
$\wedge_{h,t}=\sum_{\tau<t}\phi_h^\tau\eta_{h,\tau}$, where $\eta_{h,\tau}$ denotes the 1-sub-Gaussian reward noise ($r_h (s, a)=\langle\phi(s,a),\theta_h\rangle + \eta_{h,\tau}$).

Theorem 1 of
\citet{abbasi2011improved} gives the event
\[
  \mathcal E
  \;=\;
  \Bigl\{\;
    \|\,(\Lambda_h^{t})^{-1/2}\wedge_{h,t}\|_2
    \le
    \sqrt{2\log\!\tfrac{\det(\Lambda_h^{t})^{1/2}
                       }{\det(\lambda_t I_d)^{1/2}\delta}}
    \;\forall h,t
  \Bigr\}
\]
with $\mathbb P(\mathcal E)\ge1-\delta$.
On
$\mathcal E$,
we have that
$
  \|w_h^{t}-w_h^{\star}\|_{\Lambda_h^{t}}
  \le\beta_t
$, from Theorem 2 of \citet{abbasi2011improved}

To claim optimistic $Q$-estimation $\hat{Q}_h^t(s,a)\ge Q_h^{\pi^{\!*}}(s,a)$ for all $(s, a, h)$, we proceed by backward induction on $h$.

For $h=H$, since $V_{H+1}^{t}=V_{H+1}^{\pi^{\!*}}=0$,  
Lemma \ref{lem:bias-decomp} with $\pi=\pi^{\!*}$ gives  
\[
   Q_H^{\pi^{\!*}}(s,a)
   \le
   \langle\phi_H(s,a),w_H^{t}\rangle
     +\beta_t\|\phi_H(s,a)\|_{(\Lambda_H^{t})^{-1}}
   =\widehat Q_H^{t}(s,a).
\]

Assume $\widehat Q_{h+1}^{t}\ge Q_{h+1}^{\pi^{\!*}}$.  
Then $V_{h+1}^{t}\ge V_{h+1}^{\pi^{\!*}}$, so  
$P_h(V_{h+1}^{t}-V_{h+1}^{\pi^{\!*}})\ge0$.  
Applying Lemma \ref{lem:bias-decomp} again,
\[
   Q_h^{\pi^{\!*}}(s,a)\le
   \langle\phi_h(s,a),w_h^{t}\rangle
     +\beta_t\|\phi_h(s,a)\|_{(\Lambda_h^{t})^{-1}}
   =\widehat Q_h^{t}(s,a).
\]

Thus the property holds for $h$ whenever it holds for $h+1$.
By backward induction it holds for all layers.

Define for each episode $t$ and layer $h$ the gap
\[
  \delta_{h}^{t}
  :=V_{h}^{t}(s_{h}^{t})-V_{h}^{\pi_{t}}(s_{h}^{t}),
  \quad
  \delta_{H+1}^{t}:=0.
\]
Set $\displaystyle
  \sigma_{h,t}:=\|\phi_h^t\|_{(\Lambda_h^{t})^{-1}}$, by the optimism $\widehat Q_h^{t}\ge Q_h^{\pi_{t}}$ one shows exactly as in \citep[Lemma B.6]{LinearMDP} that
\[
  \delta_{h}^{t}
  \;\le\;
  \delta_{h+1}^{t}
  +\zeta_{h+1}^{t}
  +2\,\beta_{t}\,\sigma_{h,t},
  \quad
  \zeta_{h+1}^{t}
  :=P_hV_{h+1}^{\pi_{t}}(s_{h}^{t},a_{h}^{t})
         -V_{h+1}^{\pi_{t}}(s_{h+1}^{t})
  \text{ is a martingale difference.}
\]
Summing this inequality over $h=1,\dots,H$ gives
\[
  \delta_{1}^{t}
  =V_{1}^{t}(s_{1}^{t})-V_{1}^{\pi_{t}}(s_{1}^{t})
  \;\le\;
  \sum_{h=1}^{H}\zeta_{h}^{t}
  +2\,\beta_{t}\sum_{h=1}^{H}\sigma_{h,t}.
\]
Summing over episodes gives
\[
  R_T=\sum_{t=1}^{T}\delta_{1}^{t}
       \;\le\;\sum_{t=1}^{T}\sum_{h=1}^{H}\zeta_{h}^{t}
              \;+\;2\sum_{t=1}^{T}\beta_{t}\sum_{h=1}^{H}\sigma_{h,t}.
\]
The first double‐sum is a martingale over \(HT\) bounded differences \(\zeta_{h}^{t}\in[-H,H]\), so by Azuma–Hoeffding with probability at least \(1-\delta\)  
\[
  \sum_{t=1}^{T}\sum_{h=1}^{H}\zeta_{h}^{t}
  =O\!\bigl(H\sqrt{T\log(1/\delta)}\,\bigr).
\]
Then we have
\[
  \sum_{t=1}^{T}\beta_{t}\sum_{h=1}^{H}\sigma_{h,t}
  \le
  \max_{t} (\beta_t) \cdot\sum_{t=1}^T\sum_{h=1}^H \sigma_{h, t}
\]

Define
\(
  \Lambda_h^{t}=\lambda_t I_d+\sum_{\tau\le t}\phi_h^{\tau} {\phi_h^{\tau}}^\top,
\) with an auxiliary sequence with constant ridge term: $
  \bar\Lambda_h^{\,t}
  := \lambda_1 I_d + \sum_{\tau\le t}\phi_h^{\tau}{\phi_h^{\tau}}^{\!\top}.$
Because $\lambda_{t}\ge\lambda_1$ for every $t$,
\(
  \Lambda_h^{t}\succeq\bar\Lambda_h^{\,t},\)
  we have
  \(
  (\Lambda_h^{t})^{-1}\preceq (\bar\Lambda_h^{\,t})^{-1}.
\)

Hence
\[
  \sigma_{h,t}
  \;\le\;
  \bar\sigma_{h,t}
  :=\|\phi_h^{t}\|_{(\bar\Lambda_h^{\,t})^{-1}}.
\]

Since $\|\phi_h\|_2\le1$, Lemma 11 in
\citet{abbasi2011improved} gives
\begin{equation}\label{eq:Self-constrant}
  \sum_{t=1}^{T}\sum_{h=1}^H\bar\sigma_{h,t}^{2}
  \;\le\;
  2H\log\!\frac{\det(\bar\Lambda_h^{\,T})}{\det(\lambda_1 I_d)}
  \;\le\;
  2H\log\!\frac{(\lambda_1+T)^d}{\lambda_1^d}
  \;\le\;
  2dH\log\!\bigl(1+T/\lambda_1\bigr)
  \;\le\;
  2dH\log(1+T),
\end{equation}
and therefore $\sum_{t=1}^{T}\sum_{h=1}^H\sigma_{h,t}^{2}
  \;\le\;
  2dH\log(1+T).$
Then
\[
  \sum_{t=1}^{T}\sum_{h=1}^{H}\sigma_{h,t}
  \le\sqrt{HT}\,\Bigl(\sum_{t=1}^{T}\sum_{h=1}^{H}\sigma_{h,t}^{2}\Bigr)^{\!1/2}
  =O\!\bigl(\sqrt{d}\,H\sqrt{T\log(1+T)}\bigr).
\]

Altogether,
\[
  R_T
  =O\!\bigl(H\sqrt{T\log(1/\delta)}\,\bigr)
   +2\cdot\widetilde O\!\bigl((Hd\,T^{1/2})^{1/2}\bigr)
             \cdot O\!\bigl(\sqrt{d}\,H\sqrt{T\log(1+T)}\bigr)
  =\widetilde O\bigl(dH^{3/2}T^{3/4}\bigr).
\]
Dividing by \(\varepsilon^{\!*}\) then gives the attack‐cost bound
\(\widetilde O\bigl(dH^{3/2}T^{3/4}/\varepsilon^{\!*}\bigr)\).

\end{proof}


\end{document}
